\def\eqref#1{equation~\ref{#1}}
\def\1{\bm{1}}
\DeclareMathAlphabet{\mathsfit}{\encodingdefault}{\sfdefault}{m}{sl}
\SetMathAlphabet{\mathsfit}{bold}{\encodingdefault}{\sfdefault}{bx}{n}
\DeclareMathOperator*{\argmax}{arg\,max}
\DeclareMathOperator*{\argmin}{arg\,min}
\newtheorem{example}{Example}
\newtheorem{theorem}{Theorem}
\newtheorem{corollary}{Corollary}
\newtheorem{proposition}{Proposition}
\newtheorem{appxdefinition}{Definition}[section]
\newtheorem{appxlemma}{Lemma}[section]
\newtheorem{appxtheorem}{Theorem}[section]
\newtheorem{appxremark}{Remark}[section]
\newtheorem{appxcorollary}{Corollary}[section]
\newtheorem{appxproposition}{Proposition}[section]
\title{Exploring the loss landscape of regularized neural networks via convex duality}
\author{Sungyoon Kim$^{1}$, Aaron Mishkin$^{2}$, Mert Pilanci$^{1}$\\
$^{1}$Department of Electrical Engineering, Stanford University\\
$^{2}$Department of Computer Science, Stanford University\\
\parbox{\linewidth}{\texttt{sykim777@stanford.edu, amishkin@cs.stanford.edu,\\pilanci@stanford.edu}}
}
\begin{document}
\maketitle

\begin{abstract}
We discuss several aspects of the loss landscape of regularized neural networks: the structure of stationary points, connectivity of optimal solutions, path with nonincreasing loss to arbitrary global optimum, and the nonuniqueness of optimal solutions, by casting the problem into an equivalent convex problem and considering its dual. Starting from two-layer neural networks with scalar output, we first characterize the solution set of the convex problem using its dual and further characterize all stationary points. With the characterization, we show that the topology of the global optima goes through a phase transition as the width of the network changes, and construct examples where the problem may have a continuum of optimal solutions. Finally, we show that the solution set characterization and connectivity results can be extended to different architectures, including two-layer vector-valued neural networks and parallel three-layer neural networks. 
\end{abstract}

%%%%%%%%%%%%%%%%%%%%%%%% Section 1. Introduction %%%%%%%%%%%%%%%%%%%%%%%%%

\section{Introduction}
\label{1.Intro}

%Understanding the global minima of a neural network training problem, either empirically or theoretically, has been important to understand how neural networks learn. One line of research discusses the overall shape of the loss landscape, i.e. showing the non-existence of spurious local minima, bad valleys, etc, in the hope of showing that gradient descent or its variants will always find global minima (see \cite{sun2020global} and references therein). A different line of research concentrates on the property of the globally optimal solution set itself, investigating the connectivity \cite{garipov2018loss}, \cite{nguyen2021note}, symmetry \cite{simsek2021geometry}, and certain mathematical structures that may emerge \cite{mishkin2023optimal}. These studies are not only theoretically important, but can also motivate practical algorithms that search over neural networks with the same optimal cost \textcolor{red}{\textbf{[Add citations]}}. Our paper falls into the second category: we investigate mathematical properties of the solution set of regularized neural networks.

Despite the nonconvex nature of neural networks, training them with local gradient methods finds nearly optimal parameters. Understanding the properties of the loss landscape is theoretically important, as it enables us to depict the learning dynamics of neural networks. For instance, many existing works prove that the loss landscape is ``benign" in some sense - i.e. they don't have spurious local minima, bad valleys, or decreasing path to infinity \cite{kawaguchi2016deep}, \cite{venturi2019spurious}, \cite{haeffele2017global}, \cite{sun2020global}, \cite{wang2021hidden}, \cite{liang2022revisiting}. Such characterization enlightens our intuition on why these networks are trained so well.

As part of understanding the loss landscape, understanding the structure of global optimum has gained much interest. An example is mode connectivity \cite{garipov2018loss}, where a simple curve connects two global optima in the set of optimal parameters. Another example is analyzing the permutation symmetry that a global optimum has \cite{simsek2021geometry}. Mathematically understanding the global optimum is important as it sheds light on the structure of the loss landscape. They can also motivate practical algorithms that search over neural networks with the same optimal cost \cite{ainsworth2022git}, \cite{mishkin2023optimal}, having practical motivations to study. 

We shape the loss landscape of regularized neural networks with ReLU activation, mainly analyzing mathematical properties of the global optimum, by considering its convex counterpart and leveraging the dual problem. Our work is inspired by the work of \cite{mishkin2023optimal}, where they characterize the optimal set and stationary points of a two-layer neural network with weight decay using the convex counterpart. They also introduce several important concepts such as the polytope characterization of the optimal solution set, minimal solutions, pruning a solution, and the optimal model fit. Expanding the idea of \cite{mishkin2023optimal}, we show a clear connection between the polytope characterization and the dual optimum. We further derive novel characters of the optimal set of neural networks, the loss landscape, and generalize the result to different architectures.

Finally, it is worth pointing out that regularization plays a central role in modern machine learning, including the training of large language models \cite{andriushchenko2023we}. Therefore, including regularization better reflects the training procedure in practice. 

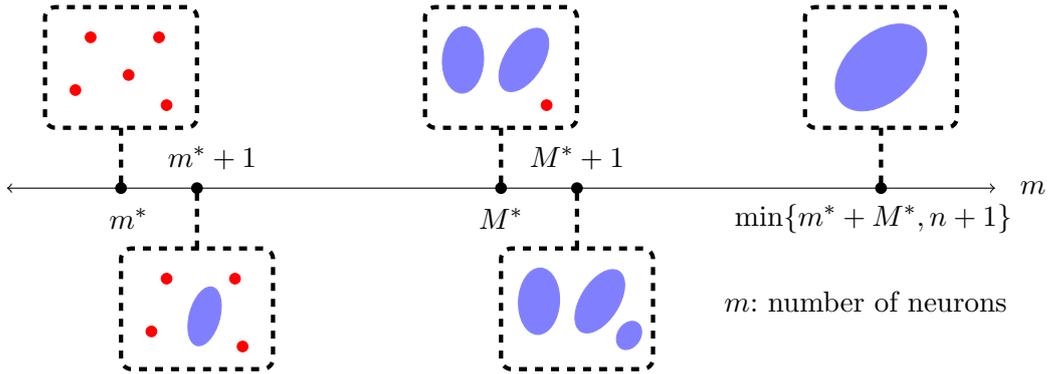
\begin{figure}[t!]
\centering
\begin{tikzpicture}[scale=0.9]
\draw[<->] (-7,0) -- (6,0);
\node at (-5.4,-0.4) {$m^{*}$};
\node at (-4.3,0.4) {$m^{*}+1$};
\node at (-0.5,-0.4) {$M^{*}$};
\node at (0.5,0.4) {$M^{*}+1$};
\node at (-7.5,0) {$m$};
\node at (4.3,-1.5) {$m$: number of neurons};
\node at (4.4,-0.4) {$\min\{m^{*}+M^{*}, n+1\}$};
\draw[fill=black] (-5.5,0) circle (2pt);
\draw[fill=black] (-4.5,0) circle (2pt);
\draw[fill=black] (-0.5,0) circle (2pt);
\draw[fill=black] (0.5,0) circle (2pt);
\draw[fill=black] (4.5,0) circle (2pt);
\draw [draw=black,ultra thick,dashed,rounded corners] (-6.5,0.8) rectangle (-4.5,2.4);
\draw[ultra thick, dashed] (-5.5,0.8) -- (-5.5,0);
\draw [draw=black,ultra thick,dashed,rounded corners] (3.5,0.8) rectangle (5.5,2.4);
\draw[ultra thick, dashed] (4.5,0.8) -- (4.5,0);
\draw [draw=black,ultra thick,dashed,rounded corners] (-1.5,0.8) rectangle (0.5,2.4);
\draw[ultra thick, dashed] (-0.5,0.8) -- (-0.5,0);
\draw [draw=black,ultra thick,dashed,rounded corners] (-0.5,-2.4) rectangle (1.5,-0.8);
\draw[ultra thick, dashed] (0.5,-0.8) -- (0.5,0);
\draw [draw=black,ultra thick,dashed,rounded corners] (-5.5,-2.4) rectangle (-3.5,-0.8);
e\draw[ultra thick, dashed] (-4.5,-0.8) -- (-4.5,0);

\draw[fill=red,color = red] (-5.9,2) circle (2pt);
\draw[fill=red,color = red] (-5.4,1.5) circle (2pt);
\draw[fill=red,color = red] (-5,2) circle (2pt);
\draw[fill=red,color = red] (-6.1,1.3) circle (2pt);
\draw[fill=red,color = red] (-4.9,1.1) circle (2pt);

\draw[fill=red,color = red] (-4.9,2-3.2) circle (2pt);
\draw[fill=blue,color = blue, opacity = 0.5, rotate around={-15:(-4.4,-1.7)}] (-4.4,1.5-3.2) ellipse (0.2 and 0.4);
\draw[fill=blue,color = red] (-4,2-3.2) circle (2pt);
\draw[fill=red,color = red] (-5.1,1.3-3.2) circle (2pt);
\draw[fill=red,color = red] (-3.9,1.1-3.2) circle (2pt);

\draw[fill=blue,color = blue, opacity = 0.5, rotate around={-3:(-4.4+3.4,1.7)}] (-4.4+3.4,1.7) ellipse (0.27 and 0.44);
\draw[fill=blue,color = blue, opacity = 0.5, rotate around={-32:(-4.4+4.2,1.7)}] (-4.4+4.2,1.7) ellipse (0.25 and 0.47);
\draw[fill=red,color = red] (-3.9+4,1.1) circle (2pt);

\draw[fill=blue,color = blue, opacity = 0.5, rotate around={-3:(-4.4+3.4+1,1.7-3.2)}] (-4.4+3.4+1,1.7-3.2) ellipse (0.27 and 0.44);
\draw[fill=blue,color = blue, opacity = 0.5, rotate around={-32:(-4.4+4.2+1,1.7-3.2)}] (-4.4+4.2+1,1.7-3.2) ellipse (0.25 and 0.47);
\draw[fill=blue,color = blue, opacity = 0.5, rotate around={-32:(-3.9+4+1,1.1-3)}] (-3.9+4.1+1,1.1-3) ellipse (0.15 and 0.2);
\draw[fill=blue,color = blue, opacity = 0.5, rotate around={-48:(4.5,1.6)}] (4.5,1.6) ellipse (0.45 and 0.7);
\end{tikzpicture}

\caption{\label{fig0:staircasecartoon} \textbf{A schematic that illustrates the staircase of connectivity.} This conceptual figure describes the topological change in solution sets as the number of neurons $m$ changes in a high-level manner. Connected components that are not singletons are shown as blue sets, whereas singletons are depicted as red dots. When $m = m^{*}$, there are only finitely many red dots. When $m \geq m^{*}+1$, there exists a connected component that is not a singleton, i.e. a blue set. When $m = M^{*}$, there exists a connected component which is a singleton, i.e. a red dot. When $m \geq M^{*}+1$, there is no red dot. At last, when $m \geq \min\{m^{*}+M^{*}, n+1\}$, there is a single blue set.}
\end{figure}

More importantly, adding regularization can change the qualitative behavior of the loss landscape and the global optimum \cite{wang2021hidden}: for example, there always exist infinitely many optimal solutions for the unregularized problem with ReLU activation due to positive homogeneity. However, regularizing the parameter weights breaks this tie and we may not have infinitely many optimal solutions. It is also possible to design the regularization for the loss landscape to satisfy certain properties such as no spurious local minima \cite{liang2022revisiting}, \cite{ge2017learning} or unique global optimum \cite{mishkin2023optimal}, \cite{boursier2023penalising}. Understanding the loss landscape of regularized neural networks is not only a more realistic setup but can also give novel theoretical properties that the unregularized problem does not have.

The specific findings we have for regularized neural networks are:

$\bullet$ \textbf{The optimal polytope:} We revisit the fact that the regularized neural network's convex reformulation has a polytope as an optimal set \cite{mishkin2023optimal}. We give a connection between the dual optimum and the polytope.

$\bullet$ \textbf{The staircase of connectivity:} For two-layer neural networks with scalar output, we give critical widths and phase transitional behavior of the optimal set as the width of the network $m$ changes. See \cref{fig0:staircasecartoon} for an abstract depiction of this phenomenon.
%%%%% Making the contributions more compact.
%For two-layer neural networks with scalar output, the set of optimal directions of first-layer weights are uniquely determined by the dual problem. Moreover, the solution set becomes a continuous image of a polytope, which we name the optimal polytope of the problem. The existence of optimal directions has been discussed \cite{wang2021hidden}, \cite{mishkin2023optimal}, but connection to the dual problem and proving uniqueness of the optimal directions of the dual problem is novel.

%%%%% Making the contributions more compact.
%We report two critical values, namely $m^{*}$ and $M^{*}$, which act as threshold values of the width that change the solution set's connectivity behavior. \cref{fig0:staircasecartoon} gives an abstract illustration of the staircase of connectivity. For a formal statement see \cref{2.2.Overview}, and for details on the phenomena and proof see \cref{4.ConStaircase}.

$\bullet$ \textbf{Nonunique minimum-norm interpolators:} We examine the problem in \cite{boursier2023penalising} and show that free skip connections (i.e., an unregularized linear neuron), bias in the training problem, and unidimensional data are all necessary to guarantee the uniqueness of the minimum-norm interpolator. We construct explicit examples where the solution is not unique in each case, inspired by the dual problem. In contrast to the previous perspectives \cite{boursier2023penalising}, \cite{joshi2023noisy}, our results imply that free skip connections may change the qualitative behavior of optimal solutions. Moreover, uniqueness does not hold in dimensions greater than one.

%%%%%%%%% Making the contributions more compact
%\cite{boursier2023penalising} discusses a particular training scheme, where we wish to interpolate a unidimensional dataset $\{(x_i, y_i)\}_{i=1}^{n} \in \mathbb{R}^{2n}$ with a two-layer ReLU neural network with bias and a skip connection. They prove that when the skip connection is not regularized and all weights in the two-layer ReLU neural network (including the bias) are regularized with $l_2$ regularization, the optimal interpolator is unique. A natural question then would be the possible generalization of the result. We show that free skip connections, the existence of bias, and unidimensional data are all necessary to guarantee uniqueness by constructing explicit counterexamples that have non-unique solutions when one of the conditions is eliminated. When there is no free skip connection we also construct a class of training problems that has non-unique solutions, which implies that the non-uniqueness of the solution is fairly generic when there is no free skip connection.

%2. We extend the result of \cite{boursier2023penalising} to different setups. We give a theoretical ``ablation study", showing that many different setups in the paper (unidimensional input, free skip connections, bias) are essential to guarantee the uniqueness of the optimal solution. Hence the result shows that maybe the free skip connection changes a lot - understanding the phenomena of how minimum-norm interpolators look like.

$\bullet$ \textbf{Generalizations:} We extend our results by providing a general description of solution sets of the cone-constrained group LASSO. The extensions include the existence of fixed first-layer weight directions for parallel deep neural networks, and connectivity of optimal sets for vector-valued neural networks with regularization.

The paper is organized as follows: after discussing related work (\cref{1.1.RelatedWork}) and notations (\cref{1.2.Notations}), we discuss the convex reformulation of neural networks as a preliminary in \cref{2.PR}. Then we discuss the case of two-layer neural networks with scalar output in \cref{3.SO}, starting from the optimal polytope characterization (\cref{3.1.OPTPOL}), the staircase of connectivity (\cref{3.2.STAIRCASE}), and construction of non-unique minimum-norm interpolators (\cref{3.3.Nonunique}). The possible generalizations are introduced in \cref{4.Gen}. Finally, we conclude the paper in \cref{5.Conclu}. Detailed explanations of the experiments and proofs are deferred to the appendix. 

%For details on the notations, experiments, and the proofs, see the appendix.

%%%%%%% Making the contributions more compact
%At last, we show that some of these results can be readily extended to different architectures using the optimal set of the convex reformulation of the original model \cite{pilanci2020neural}. We first introduce a general form for the solution set of convex reformulations of neural networks. Using the result, we extend some of the results in two-layer scalar neural networks to different setups. The extensions are: the existence of optimal polytopes for parallel deep neural networks, and connectivity of optimal sets for vector-valued neural networks with regularization.

\subsection{Related Work}
\label{1.1.RelatedWork}

\textbf{Convex Reformulations of Neural Networks} Starting from \cite{pilanci2020neural}, a series of works have concentrated in reformulating a neural network optimization problem to an equivalent convex problem and training neural networks to global optimality. It has been shown that many different existing neural network architectures with weight decay have such convex formulations, including vector-valued neural networks \cite{sahiner2020vector}, CNNs \cite{ergen2020training}, and parallel three-layer networks \cite{ergen2021global}. Furthermore, properties of the original nonconvex problem such as the characterization of all Clarke stationary points \cite{wang2021hidden}, and the polyhedral characterization of optimal set \cite{mishkin2023optimal} have also been discussed. 

\textbf{Connectivity of optimal sets of neural networks} Mode connectivity is an empirical phenomenon where the optimal parameters of neural networks are connected by simple curves of almost similar training/test accuracy \cite{garipov2018loss}. An intriguing phenomenon itself, it has given rise to theoretical analysis of the connectivity of optimal solutions: to name a few, \cite{kuditipudi2019explaining} introduces the concept of dropout stability to explain such phenomena, \cite{zhao2023understanding} uses group theory to understand the connected components of deep linear neural networks, and   \cite{akhtiamov2023connectedness} introduces theory from differential topology to understand mode connectivity. Permutation symmetry in the parameter space also plays an important role in understanding connectivity. \cite{simsek2021geometry} shows that assuming a unique global minimizer modulo permutations of a certain size, increasing the size of each layer by one connects all global optima. Unfortunately, their assumption does not hold in our case (\cref{8.AD}). A similar characterization is also done in \cite{brea2019weight}, where saddle points with permutation symmetry are connected. \cite{sharma2024simultaneous} further discusses different notions of linear connectivity modulo permutations. A different line of work concentrates on the connection between overparametrization and connectivity of solutions: the main insight here is that when the model is as large as the number of data, the solution set becomes connected \cite{nguyen2021note}, \cite{nguyen2021solutions}, \cite{nguyen2019connected}. \cite{cooper2021global} has a similar connection for overparametrized networks, where they characterize the dimension of the manifold of the optimal parameter space.

\textbf{Phase transitional behavior of the loss landscape} Here we introduce existing work in the literature that gives a characterization saying ``adding one more neuron can change the qualitative behavior of the loss landscape", hence having the notion of critical model sizes. We reiterate \cite{nguyen2021note} and \cite{simsek2021geometry}, where adding one neuron changes the connectivity behavior of the optimal set. \cite{liang2018adding} adds an exponential neuron, which is a specifically designed neuron, along with a specific regularization to eliminate all spurious local minimum. \cite{venturi2019spurious} has the idea of defining upper / lower intrinsic dimensions of the training problem in the unregularized case, and shows that the quantity is related to whether the training problem has no spurious valleys. \cite{li2022benefit} discusses a critical width $m^{*}$ where $m \geq m^{*}$, all suboptimal basins are eliminated for certain activation functions. They also discuss how $m^{*}$ is related with $n$, the number of data.

\textbf{Loss landscapes and optimal sets of regularized networks} \cite{freeman2016topology} discusses the loss landscape of the population loss along with a certain regularization, and proves the asymptotic connectivity of all sublevel sets as $m$ increases. \cite{bietti2022learning} also introduces an asymptotic landscape result for regularized networks. \cite{haeffele2017global} deduces the loss landscape of parallel neural networks with the lens of convex equivalent problem, and shows that when the width $m$ is larger than a certain threshold, there are no spurious local minima. \cite{kunin2019loss} analyzes regularized linear autoencoders and points out the discrete structure of critical points under some symmetries. \cite{bucarelli2024topological} bounds the Betti number of the sublevel set of the loss landscape for Pfaffian activations, discussing topological complexity of sublevel sets for both the unregularized and the regularized case. On the empirical side, \cite{yang2021taxonomizing} considers certain metrics to consider the mode connectivity and sharpness of the landscape of regularized neural networks, and indeed show that larger models tend to have more connected solutions. A few work design specific regularization to make the loss landscape benign, removing spurious local minima and decreasing paths to infinity \cite{ge2017learning}, \cite{liang2022revisiting}.

\textbf{Properties of unidimensional minimum-norm interpolators} Training minimum-norm interpolators for unidimensional data can lead to sparse interpolators \cite{parhi2023deep}. When we do not penalize the bias, \cite{savarese2019infinite} has an exact characterization of the interpolation problem in function space, and \cite{hanin2021ridgeless} completely characterizes the set of optimal interpolators. From the construction of optimal interpolators, it is natural that there exist problems with a continuum of infinitely many optimal interpolators. A recent work by \cite{nakhleh2024effects} extends this setup to vector-valued networks and shows almost-sure uniqueness of a minimum norm interpolator. On the other hand, \cite{boursier2023penalising} recently showed that when we penalize the bias with free skip connections, we have a unique optimal interpolator. Furthermore, under certain assumptions on the training data, the optimal interpolator is the sparsest. Empirically, it has been believed that having a free skip connection does not affect the behavior of the solution \cite{boursier2023penalising}, \cite{joshi2023noisy}. 
\subsection{Problem Setting and Notations}
\label{1.2.Notations}

We are interested in training a neural network with regularization and ReLU activation, namely the optimization problem
\begin{equation}
    \min_{\theta \in \mathbb{R}^{p}} L(f_{\theta}(X), y) + \beta \mathcal{R}(\theta).
\end{equation}
Here, $X \in \mathbb{R}^{n \times d}$ is the data matrix, $y \in \mathbb{R}^{n}$ is the label vector, $\theta \in \mathbb{R}^{p}$ the concatenation of all parameters of the neural network, $f_{\theta}$ the parametrization, $\beta > 0$ strength of the regularization, $L: \mathbb{R}^{n} \times \mathbb{R}^{n} \rightarrow \mathbb{R}$ the convex loss function, and $\mathcal{R}: \mathbb{R}^{p} \rightarrow \mathbb{R}$ the regularization.

We have two different objects of interest in the notion of optimal sets: the optimal solution set in parameter space and the set of optimal functions
\begin{equation}
\label{def:solutionset}
\Theta^{*} := \argmin_{\theta \in \mathbb{R}^{p}} L(f_{\theta}(X), y) + \beta \mathcal{R}(\theta) \subseteq \mathbb{R}^{p}, \quad \mathcal{F}^{*} := \{f_{\theta}\ |\ \theta \in \Theta^{*}\} \subseteq \mathcal{F},
\end{equation}
where $\mathcal{F}$ is the set of functions $f: \mathbb{R}^{d} \rightarrow \mathbb{R}$. The notion of optimal functions will mostly be discussed in \cref{3.3.Nonunique}, where we discuss minimum-norm interpolators. Note that $\Theta^{*}$ regards parameters with permutation symmetry as different parameters.

Next, we clarify the notion of connectivity in this paper. We say two points $x, y \in S$ is connected in $S$ if for two points $x, y \in S$, there exists a continuous function $f: [0,1] \rightarrow S$ that satisfies $f(0) = x, f(1) = y$. We say $S$ is connected if for any two points $x, y \in S$, $x$ and $y$ are connected in $S$. Also, an isolated point $x$ in $S$ means a point that has no continuous path from $x$ to $S - \{x\}$.

At last, we clarify the notations. the notation $1(\mathrm{condition}(A))$ is defined for a scalar, vector, or matrix that notes if the entrywise condition is met, the value is 1, and else 0. Note $[m] = \{1,2, \cdots, m\}$, $\lVert \cdot \rVert_2$ as the $l_2$ norm, $\lVert \cdot \rVert_F$ as the Frobenious norm, $(\cdot)_{+}$ as the ReLU function, and $\mathrm{diag}$ the diagonal matrix given a vector. By a hyperplane arrangement, we mean a diagonal matrix $\mathrm{diag}(1(Xh \geq 0))$ for a vector $h \in \mathbb{R}^{d}$. When we write $D_i$ for $i \in [P]$, we mean all possible hyperplane arrangements generated from the data matrix $X \in \mathbb{R}^{n \times d}$, hence $P$ means the number of all possible arrangement patterns. We also use the notation $\mathcal{K}_i = \{u\ |\ (2D_i - I)Xu \geq 0\}$ for $i \in [P]$ unless specified differently (in vector-valued networks we will). By $a \oplus b$, we mean the concatenation of two vectors(or matrices) $a$ and $b$: if $a \in \mathbb{R}^{m}$ and $b \in \mathbb{R}^{n}$, $a \oplus b \in \mathbb{R}^{m +n}$, $(a_i)_{i=1}^{p}$ denotes $a_1 \oplus a_2 \oplus \cdots a_p$. For matrices, the notation $A_{i \cdot}$ means the $i$-th row of $A$, $A_{\cdot i}$ means the $i$-th column of $A$, and for vector $v$, $v_{,k}$ denotes the $k$-th entry of $v$. We note the matrix inner product $\langle A, B\rangle_M = tr(A^{T}B)$. 

%%%%%%%%%%%%%%%% Section 2. Main Results %%%%%%%%%%%%%%%%%%%%%%%%%%

\section{Convex Reformulations}
\label{2.PR}

Our main proof strategy will be introducing an equivalent convex reformulation of the training problem first introduced in \cite{pilanci2020neural}. In this section, we demonstrate the concept by giving an example for two-layer scalar output networks with weight decay.  

%For variants (e.g. parallel deep networks), we will give the convex version when needed.

Consider the optimization problem in \eqref{eq:nonconvex_twolayer_opt},
\begin{equation}
\label{eq:nonconvex_twolayer_opt}
    p^{*}:= \min_{\{w_j, \alpha_j\}_{j=1}^{m}} L\left(\sum_{j=1}^{m} (Xw_j)_{+}\alpha_j, y\right) + \frac{\beta}{2} \sum_{j=1}^{m} \left(\lVert w_j \rVert_2^2 + \alpha_j^2\right).
\end{equation}
The variables $w_j \in \mathbb{R}^{d}, \alpha_j \in \mathbb{R}$ for $j \in [m]$. When the width $m$ of problem in \eqref{eq:nonconvex_twolayer_opt} 
satisfies $m \geq m^{*}$ for a critical threshold $m^{*} \leq n$, we have an equivalent convex problem given as a cone-constrained group LASSO, 
\begin{equation}
\label{eq:convex_twolayer_opt}
    p_{\mathrm{cvx}}^{*}:= \min_{\{u_i, v_i\}_{i=1}^{P},\ u_i, v_i \in \mathcal{K}_i} L\left(\sum_{i=1}^{P} D_iX(u_i - v_i), y\right) + \beta \sum_{i=1}^{P} \left(\lVert u_i \rVert_2 + \lVert v_i \rVert_2 \right).
\end{equation}
The intuition of convexification is constraining each variable at a certain convex cone so that the model looks linear in that region, and applying an appropriate scaling to deal with regularization. 

As an equivalent convex problem, the optimal values $p^{*}$ and $p_{\mathrm{cvx}}^{*}$ are equal. Moreover, from a solution $(u_i, v_i)_{i=1}^{P}$ of \eqref{eq:convex_twolayer_opt} satisfying $m = \sum_{i=1}^{P} 1(u_i \neq 0) + 1(v_i \neq 0)$, we can recover the solution of \eqref{eq:nonconvex_twolayer_opt} with $m$ neurons by a solution mapping $(w_i, \alpha_i) = (u_i/\sqrt{\lVert u_i \rVert_2}, \sqrt{\lVert u_i \rVert_2})$ for $i \in [a]$, $(w_{i+a}, \alpha_{i+a}) = (v_i/\sqrt{\lVert v_i \rVert_2}, -\sqrt{\lVert v_i \rVert_2})$ for $i \in [m-a]$, without loss of generality assuming $u_i \neq 0$ for $i \in [a]$ and $v_i \neq 0$ for $i \in [m-a]$.

The problem in \eqref{eq:nonconvex_twolayer_opt} has a convex dual given as 
\begin{equation}
\label{eq:dual_nonconvex_twolayer_opt}
d^{*}:= \max_{|\nu^{T}(Xu)_{+}| \leq \beta,\  \forall \lVert u \rVert_2 \leq 1} -L^{*}(\nu), 
\end{equation}
where $L^{*}$ is the convex conjugate of $L(\cdot, y)$ and $\nu$ denotes the dual variable. Note that strong duality holds and $p^{*} = p_{\mathrm{cvx}}^{*} = d^{*}$ is satisfied when $m \geq m^{*}$. Furthermore, we will see that the dual optimum $\nu^{*}$ determines the optimal set of both the convex problem in \eqref{eq:convex_twolayer_opt} and the original problem in \eqref{eq:nonconvex_twolayer_opt}.  

\section{Two-layer scalar output neural networks }
\label{3.SO}

\subsection{The optimal polytope}
\label{3.1.OPTPOL}

%\textbf{MP: Explain why such a result proving uniqueness and a convex polytope structure is interesting in non-convex optimization.}

We first describe the optimal set of the problem in \eqref{eq:convex_twolayer_opt} where $L$ is strictly convex. Note that the polytope characterization was first done in \cite{mishkin2023optimal}. Here, we emphasize the role of dual optimum in choosing the unique directions. To illustrate the solution set of \eqref{eq:convex_twolayer_opt}, we introduce the notion of an optimal model fit and further characterize it as a singleton. 
\begin{proposition}
\label{p3:uniquefit} \cite{mishkin2023optimal}
Let the optimal solution set of \eqref{eq:convex_twolayer_opt} as $\Theta^{*}$. If the loss function $L$ is strictly convex, the optimal model fit is unique, i.e. the set of optimal model fit 
\[
\mathcal{C}_y = \Big\{\sum_{i=1}^{P} D_iX(u_i^{*} - v_i^{*}) \ | \ (u_i^{*}, v_i^{*})_{i=1}^{P} \in \Theta^{*} \Big\} = \{y^{*}\}\ \ \mathrm{for \ some}\ \ y^{*} \in \mathbb{R}^{n}.
\]
\end{proposition}

The solution set of \eqref{eq:convex_twolayer_opt} is given as \cref{t1:optpolytope}. For a formal statement see \cref{appx:formal_optpolytope}.

\begin{theorem}
\label{t1:optpolytope}
(The Optimal Polytope, informal) Suppose $L$ is a strictly convex loss function. The directions of optimal parameters of the problem in \eqref{eq:convex_twolayer_opt}, noted as $\bar{u}_i, \bar{v}_i$, are uniquely determined from the dual optimum $\nu^{*}$. Moreover, the solution set of \eqref{eq:convex_twolayer_opt} is the polytope,
\begin{equation}
\label{eq:OptimalPolytope}
\mathcal{P}^{*}_{\nu^{*}} := \left\{(c_i\bar{u}_i, d_i\bar{v}_i)_{i=1}^{P} \ | \ c_i, d_i \geq 0 \quad \forall i \in [P], \quad \sum_{i=1}^{P} D_iX\bar{u}_i c_i - D_iX\bar{v}_i d_i = y^{*}\right\} \subseteq \mathbb{R}^{2dP},
\end{equation}
for the unique optimal model fit $y^{*}$ defined in \cref{p3:uniquefit}.
\end{theorem}
Note that $\mathcal{P}^{*}_{\nu^{*}}$ is invariant under different choices of $\nu^{*}$, because they all correspond to the solution set of \eqref{eq:convex_twolayer_opt}. Hence, we use $\mathcal{P}^{*}$ for simplicity. For a geometric intuition of $\nu^{*}$, see \cref{8.AD}.

\cref{t1:optpolytope} implies that \eqref{eq:convex_twolayer_opt} has a unique direction for each $u_i, v_i$ where $i \in [P]$, which is determined by solving the dual problem. The intuition for this fact is quite clear: when we assume there exist two different solutions $(u_i, v_i)_{i=1}^{P}$ and $(u_i', v_i')_{i=1}^{P}$ where $u_i$ and $u_i'$ are not colinear for some $i \in [P]$, $((u_i+u_i')/2, (v_i+v_i')/2)_{i=1}^{P}$ has a strictly smaller objective because $L$ is strictly convex and $\lVert a \rVert_2 + \lVert b \rVert_2 \geq \lVert a + b \rVert_2$ with equality only when $a$ and $b$ are colinear. However, \cref{t1:optpolytope} implies further, that for any conic combination of such vectors $D_iX\bar{u}_i$ and $-D_iX\bar{v}_i$ that sum up to $y^{*}$, it becomes an optimal solution of \eqref{eq:convex_twolayer_opt}. 

Another implication of \cref{t1:optpolytope} is that for all stationary points of \eqref{eq:nonconvex_twolayer_opt}, there exists a finite set of possible first-layer weight directions. For a formal statement see \cref{appx:localpolytope}.

\begin{corollary}
\label{c1:localpolytope}
Denote the set of Clarke stationary points of \eqref{eq:nonconvex_twolayer_opt} as $\Theta_{C}$. The set of directions of the stationary point 
$\bigcup_{j=1}^{m} \Big\{w_j/\lVert w_j \rVert_2 \ |\ (w_i, \alpha_i)_{i=1}^{m} \in \Theta_{C},\ w_j \neq 0 \Big\}$
is finite, and is determined by the dual optimum of subsampled convex problems.
\end{corollary}

The result follows from using the fact proven in \cite{ergen2023convex}, where all stationary points of \eqref{eq:nonconvex_twolayer_opt} are characterized by the global minimizer of the subsampled convex program that has the same structure with \eqref{eq:convex_twolayer_opt}. The implication shows that not only the global minimum but the stationary points of \eqref{eq:nonconvex_twolayer_opt} also have a structure that is related to the convex problem.

\subsection{The staircase of connectivity}
\label{3.2.STAIRCASE}

One significance of this characterization is that when $m \geq m^{*}$, we can relate the optimal solution set of the nonconvex problem in \eqref{eq:nonconvex_twolayer_opt} with the subsets of \eqref{eq:OptimalPolytope} with certain cardinality constraints. Specifically, the cardinality-constrained set
\begin{equation}
\label{eq:cardconstpolytope}
\mathcal{P}^{*}(m) := \left\{(u_i, v_i)_{i=1}^{P} \ | \ (u_i, v_i)_{i=1}^{P} \in \mathcal{P}^{*}, \sum_{i=1}^{P} 1(u_i \neq 0) + 1(v_i \neq 0) \leq m \right\} \subseteq \mathbb{R}^{2dP},
\end{equation}
will determine the solution set of \eqref{eq:nonconvex_twolayer_opt} , namely $\Theta^{*}(m)$, when $m \geq m^{*}$. We write $\Theta^{*}(m)$ to emphasize the dependency of $m$, since we illustrate a phase-transitional behavior as $m$ changes. For a formal definition of $\Theta^{*}(m)$ see \cref{pf4}. The cardinality constraint is the main reason behind the staircase of connectivity: if $m$ were to be unbounded, the optimal set would be a single connected polytope. However, as $m$ becomes smaller, certain regions in the polytope are not reachable, possibly becoming disconnected. 

Our proof strategy is first observing phase transitional behaviors in the cardinality-constrained set $\mathcal{P}^{*}(m)$, and linking the connectivity behavior of $\mathcal{P}^{*}(m)$ and $\Theta^{*}(m)$ with appropriate solution mappings (\cref{def8:Defpsi}, \cref{def9:Defphi}). Aside from the proof of \cref{t2:staircasecon}, the machinery we develop can potentially be applied to extend other topological properties of $\mathcal{P}^{*}(m)$ to $\Theta^{*}(m)$. 

\cref{t2:staircasecon} states the staircase of connectivity informally. For a formal statement and a precise definition of critical widths see \cref{appx:formal_staircasecon}. Note that we get rid of the trivial case where $\mathcal{P}^{*} = \{(0,0)_{i=1}^{P}\}$ by assuming $(w_i, \alpha_i)_{i=1}^{m} \neq (0,0)_{i=1}^{m} \in \Theta^{*}(m)$ exists for some $m$ (\cref{prop:notrivcase}).
\begin{theorem}
\label{t2:staircasecon}
(The staircase of connectivity, informal) Denote the optimal solution set of \eqref{eq:nonconvex_twolayer_opt} in parameter space as $\Theta^{*}(m) \subseteq \mathbb{R}^{(d+1)m}$. Suppose $L$ is a strictly convex loss function and there exists $(w_i, \alpha_i)_{i=1}^{m} \neq (0,0)_{i=1}^{m} \in \Theta^{*}(m)$ for some $m$. We have critical widths $m^{*}, M^{*}$ that determine the phase transitional behavior of the solution set. Specifically, as $m$ changes, we have that when

\begin{enumerate}[(i)]
    \item $m = m^{*}$, $\Theta^{*}(m)$ is a finite set. Hence, all solutions are disconnected to each other.
    \item $m \geq m^{*}+1$, there exists $A \neq A' \in \Theta^{*}(m)$ and a path in $\Theta^{*}(m)$ connecting them.
    \item $m = M^{*}$, $\Theta^{*}(m)$ is not a connected set. Moreover, there exists an isolated point in $\Theta^{*}(m)$.
    \item $m \geq M^{*}+1$, permutations of the solution are connected with no isolated points in $\Theta^{*}(m)$. 
    \item $m \geq \min\{m^{*}+M^{*}, n+1\}$, the set $\Theta^{*}(m)$ is connected.
\end{enumerate}

\end{theorem}

\cref{fig0:staircasecartoon} demonstrates \cref{t2:staircasecon} at a conceptual level. When $m = m^{*}$, the solution set has a discrete structure. One way to see the fact is that when $m = m^{*}$, the solutions are vertices of the polytope $\mathcal{P}^{*}$, hence they have a discrete and isolated structure. When $m \geq m^{*} + 1$, we have a trivial ``splitting" operation that connects two solutions with $m^{*}$ nonzero first-layer weights, which leads to the existence of a ``blue set"(a connected component with infinitely many solutions) in \cref{fig0:staircasecartoon}. When $m = M^{*}$, the solution having linearly independent first-layer weights with maximum cardinality corresponds to the isolated point in $\Theta^{*}(M^{*})$. When $m \geq M^{*} + 1$, on the other hand, any solution is connected with permutations of the same solution. The proof follows from first creating a zero slot in the first layer weights and using the zero slot to permute. The idea of the proof is identical to that of \cite{simsek2021geometry}, though the details differ. At last, when $m \geq \min\{m^{*}+M^{*}, n+1\}$, the whole set is connected: $m^{*}+M^{*}$ is obtained from first transforming the solution to have linearly independent first-layer weights and interpolating the solution with minimum cardinality. $n+1$ follows from the fact that $\mathcal{P}^{*}(n+1)$ is connected, which needs a more sophisticated argument. For details see the proof in \cref{pf4}. Note that there exists algorithms that can exactly compute these critical widths \cref{appx:remarkalgo}.

%Recall that in \cite{nguyen2021note}, it is proved that the solution set is connected for $m = n+1$ in the unregularized case. The proof strategy of \cite{nguyen2021note} is first creating a zero entry in the second layer and changing the corresponding first layer weight arbitrarily. If the network is unregularized this is possible because the change in the corresponding first layer weight where the second layer weight is 0 will not change the model fit, hence the optimality. However, when we have regularization, such transformation is impossible as the optimal parameters have identical first and second layer weight norms. Therefore, we need a novel argument to show that $\mathcal{P}^{*}(n+1)$ is connected, extending \cite{nguyen2021note} to regularized networks in a nontrivial manner.

From \cite{haeffele2017global}, we know that when $m \geq n+1$ we have that all local minima are global (Theorem 2, \cite{haeffele2017global}) and moreover we have a path with non-increasing objective to a global optimum starting from any point \cite{vidal2022optimization}. With \cref{t2:staircasecon}-(v) and this fact, we can construct path with non-increasing objective from any point to any global minimum. The path is clear: starting from any point, use the path in \cite{vidal2022optimization}. Then use \cref{t2:staircasecon}-(v) to move to any global minimum.

\begin{corollary}
\label{c1:valley_landscape}
Consider the problem in \eqref{eq:nonconvex_twolayer_opt} with $m \geq n+1$. For any parameter $\theta= (w_i, \alpha_i)_{i=1}^{m}$, there is a continuous path from $\theta$ to any global minimizer $\theta^{*}$ with nonincreasing loss.
\end{corollary}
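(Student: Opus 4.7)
The plan is to concatenate two paths: a descent path from $\theta$ to some global minimizer, and a constant-loss path within $\Theta^{*}(m)$ from that global minimizer to the given $\theta^{*}$. The second path is immediate from \cref{t2:staircasecon}(v): since $m \geq n+1 \geq \min\{m^{*}+M^{*}, n+1\}$, the optimal set $\Theta^{*}(m)$ is path-connected, and any path inside $\Theta^{*}(m)$ has constant loss equal to the optimal value. Hence the entire burden of the proof is to produce the first path.

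First, I would run a negative (Clarke) subgradient flow of the objective in \eqref{eq:nonconvex_twolayer_opt} starting at $\theta$. Since the weight-decay term $\frac{\beta}{2}\sum_j(\lVert w_j \rVert_2^2 + \alpha_j^2)$ is coercive and $L$ is continuous, every sublevel set of the objective is compact. Therefore the subgradient flow exists for all $t \geq 0$, produces a continuous path $t \mapsto \theta(t)$ along which the objective is nonincreasing, and has limit points that are Clarke stationary. The characterization of Clarke stationary points from \cite{wang2021hidden} (together with \cref{c1:localpolytope}) gives a concrete handle on these limits.

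Second, I would invoke Theorem 2 of \cite{haeffele2017global}, which asserts that for $m \geq n+1$ every local minimum of \eqref{eq:nonconvex_twolayer_opt} is a global minimum. If the limit $\theta_1$ reached above is a local minimum, then $\theta_1 \in \Theta^{*}(m)$ and I can immediately splice in the connectivity path from \cref{t2:staircasecon}(v). If $\theta_1$ is a non-minimum Clarke stationary point (a saddle), I would argue that a small perturbation into a strict descent direction exists, resume the subgradient flow from the perturbed point, and iterate. Since the objective strictly decreases at each restart and the initial sublevel set is compact, this procedure must ultimately land in $\Theta^{*}(m)$, with the perturbations made arbitrarily small so that the concatenated path is continuous and the loss remains nonincreasing.

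The main obstacle is formalizing this escape-from-saddle step in the nondifferentiable ReLU setting: one must exhibit a continuous descent direction at every non-minimum Clarke stationary point and ensure the perturb-and-flow procedure terminates in $\Theta^{*}(m)$ in a loss-nonincreasing manner. A clean way to finesse this is via a Kurdyka--{\L}ojasiewicz style convergence result for the subgradient method on semi-algebraic objectives, or by replacing gradient flow with a discrete descent on a finite perturbation grid, as in standard ``no bad valleys'' proofs. Once this technical step is in place, the corollary follows by simply joining the descent path to the connectivity path from \cref{t2:staircasecon}(v).
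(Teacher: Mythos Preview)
Your approach is essentially the same as the paper's: the paper's proof also concatenates a nonincreasing-loss path from $\theta$ to some local (hence global, by \cite{haeffele2017global}) minimum with a constant-loss path inside $\Theta^{*}(m)$ supplied by the connectivity result in \cref{t2:staircasecon}(v). The paper is in fact briefer than you on the first leg---it simply asserts ``we know the existence of a continuous path with nonincreasing loss from any point to any local minimum'' and defers the escape-from-saddle issue to \cite{haeffele2017global}, so your more detailed discussion of subgradient flow and saddle escape is, if anything, more careful than what the paper spells out.
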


Moreover, \cref{c1:valley_landscape} implies the connectivity of all sublevel sets when $m \geq n+1$. This extends the result of \cite{nguyen2019connected} to regularized networks. For a more formal statement see \cref{appx:connected_sublevel_sets}.

\begin{corollary}
\label{c2:connected_sublevel_sets}
(Informal) Consider the problem in \eqref{eq:nonconvex_twolayer_opt} with $m \geq n+1$. All sublevel sets of the loss function is connected.
\end{corollary}

\cref{example:staircasecon} demonstrates \cref{t2:staircasecon} for a toy optimization problem, which is solving a regularized regression problem with two data points.
\begin{example}
\label{example:staircasecon}

\begin{figure}[t]
\centering
\begin{tikzpicture}[scale=0.8]
\draw[<->] (-6.2,0) -- (6.2,0);
\node at (-7,0) {$m$};
\node at (-4.5,-0.4) {$m = 1$};
\node at (-0,-0.4) {$m = 2$};
\node at (4.5,-0.4) {$m = 3$};
\node at (-4.5, 1.8) {\includegraphics[width = 0.26\textwidth]{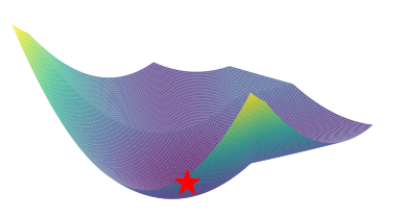}};
\node at (-4.5, -2.25) {\includegraphics[width = 0.23\textwidth]{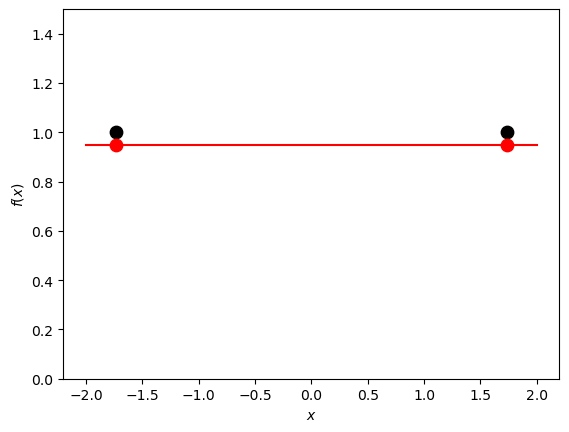}};
%\node at (-1+0.9/1.41421-1.6,0.4+0.9/1.41421+0.1) {$(\frac{1}{2\sqrt{2}},\frac{1}{2\sqrt{2}})$};
%\draw [color = red, -] (-1+0.9/1.41421,0.4+0.9/1.41421) -- (-1+0.9/1.41421-0.7,0.4+0.9/1.41421+0.1);
%\node at (0.8,2.2) {$b_1^2+b_2^2 = 1$};
\node at (0, 2) {\includegraphics[width = 0.25\textwidth]{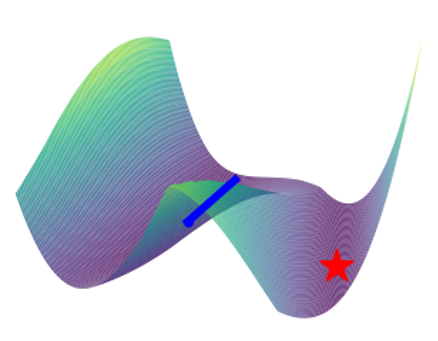}};
\node at (-0, -2.25) {\includegraphics[width = 0.23\textwidth]{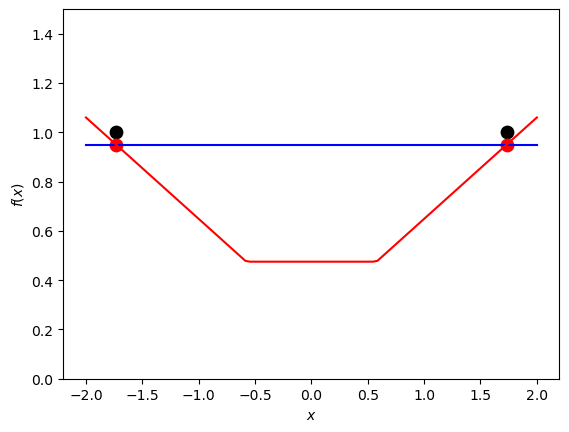}};
\node at (4.5, 2) {\includegraphics[width = 0.26\textwidth]{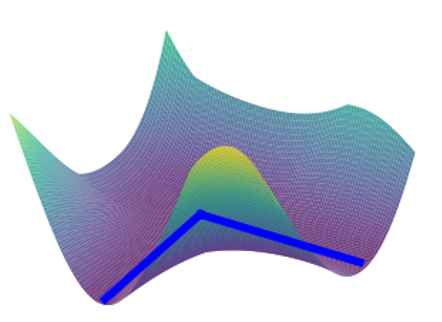}};
\node at (4.5, -2.25) {\includegraphics[width = 0.23\textwidth]{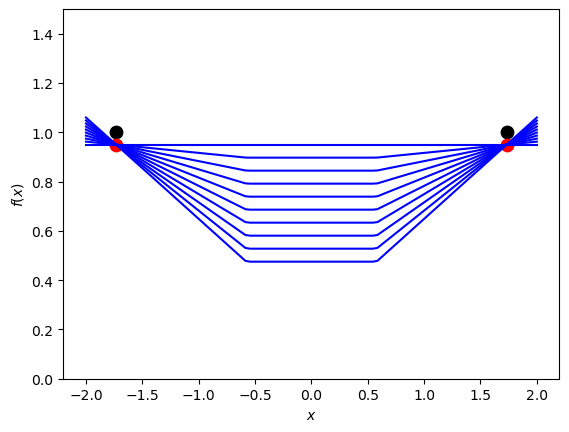}};
\end{tikzpicture}
\caption{\label{fig2:example} \textbf{Staircase of connectivity for a toy example.} The figures above the horizontal line show the toy problem's loss landscape as the width $m$ changes. The red star denotes a single optimal solution while the blue line denotes a continuum of optimal solutions. The figures below the horizontal line show the corresponding optimal functions. The red/blue functions correspond to the functions parametrized by the red/blue sets in the loss landscape. Note that when $m = 3 = \min\{m^{*}+M^{*}, n+1\}$, there exists a continuous deformation from one solution to another.}
\end{figure}

(Demonstrating the staircase of connectivity for a toy example) Consider the dataset $\{(x_i, y_i)\}_{i=1}^{2} = \{(-\sqrt{3}, 1), (\sqrt{3}, 1)\}$ and the regularized regression problem with bias
\[
\min_{\{(\theta_i, a_i, b_i)\}_{i=1}^{m}} \frac{1}{2} \sum_{j=1}^{2} \left(\sum_{i=1}^{m} (a_ix_j + b_i)_{+}\theta_i - y_j\right)^2 + \frac{\beta}{2} \sum_{i=1}^{m} (\theta_i^2 + a_i^2 + b_i^2).
\]
In \cref{fig2:example}, we plot the loss landscape and the corresponding optimal functions when $\beta = 0.1$ for $m = 1, 2, 3$. 
%From the loss landscape plot, we can see that the connectivity of the global optimum changes as $m$ increases: when $m = 1 = m^{*}$, the optimal set is discrete (here it is a singleton), whereas when $m = 2 = m^{*}+1, M^{*}$, there both exists a connected component with infinitely many solutions and a single isolated solution. At last, when $m = 3 = M^{*}+m^{*} = M^{*} + 1$ the solution set is connected. 

The upper half of \cref{fig2:example} illustrates how the loss landscape looks near the global minima, and visualizes the optimal solution set for $m = 1, 2, 3$. The lower half of \cref{fig2:example} shows the optimal learned function for $m = 1, 2, 3$. The black dots are the datapoints, the red dots correspond to the optimal model fit $y^{*}$, and the red/blue functions correspond to the functions parametrized by the red/blue sets in the loss landscape, respectively.

When $m = 2$, two different functions are shown. This is because the connected component with infinitely many solutions emerges from the split of a single neuron corresponding to the same optimal function in $\mathcal{F}^{*}$. When $m = 3$, we have a sequence of functions that continuously deform from one to another with the same cost. For details on the solution set of the training problem, parameterization of optimal functions, and how the loss landscape is visualized, see \cref{detailtoy}.

%The plot of functions 
\end{example}

\subsection{Non-unique optimal interpolators}
\label{3.3.Nonunique}

%Now we give another application of the optimal set characterization in \cref{t1:optpolytope}. Our non-uniqueness results mainly consist of counterexamples, but the construction of such counterexample follows from the optimal set characterization and properties of the dual problem. One important remark is that these problems do not have a width constraint, following the notion of \cite{boursier2023penalising}. Also, the notion of non-uniqueness is discussed over $\mathcal{F}^{*}$, the set of optimal interpolators.

In this section, we will see how the dual problem can be used to construct specific problem instances that have non-unique interpolators. There are three different setups of interest. First, the minimum-norm interpolation problem with free skip connection and regularized bias (where we denote as \textbf{SB} (Skip connection; Bias)) refers to the problem in \eqref{eq:freeskip_bias}, namely
\begin{equation}
\label{eq:freeskip_bias}
\min_{m, \{a_i, b_i \theta_i\}_{i=0}^{m}} \sum_{i=1}^{m} \lVert a_i \rVert_2^2 + b_i^2 + \theta_i^2, \quad \mathrm{subject} \ \ \mathrm{to} \quad Xa_0 + b_0 1 + \sum_{i=1}^{m} (Xa_i + b_i 1)_{+}\theta_i = y.
\end{equation}
The parameters satisfy $a_i \in \mathbb{R}^{d}$ and $b_i, \theta_i \in \mathbb{R}$ for $i \in [m] \cup \{0\}$, and $1 \in \mathbb{R}^{n}$ is a vector of ones. The term ``free skip connection" arises, as we have a skip connection, i.e. the linear neuron $a_0$, that is not regularized. Next, we discuss the minimum-norm interpolation problem without free skip connections and regularized bias (\textbf{NSB}: No-Skip; Bias), which is the training problem in \eqref{eq:freeskip_bias} with an additional constraint $a_0 = b_0 = 0$. At last we study the minimum-norm interpolation problem with free skip connections but without bias (\textbf{SNB}: Skip; No-Bias), which is the problem in \eqref{eq:freeskip_bias} with $b_i = 0$ for all $i \in [m] \cup \{0\}$. Also, note that the width $m$ is also optimized.

In \cite{boursier2023penalising}, it was proven that for unidimensional data, i.e. when $d = 1$, the set of all optimal functions $\mathcal{F}^{*}$ of \eqref{eq:freeskip_bias} is a singleton. When $d > 1$, it is not the case. This fact implies that to extend \cite{boursier2023penalising} to higher dimensions, we may need additional structures besides free skip connections.
\begin{proposition}
\label{p1:Nonunique}
When $X \in \mathbb{R}^{n \times 2}, y \in \mathbb{R}^{n}$, we have a dataset $(X, y)$ that has non-unique minimum-norm interpolator both for the \textbf{SB} and \textbf{SNB} problem in \eqref{eq:freeskip_bias}.
\end{proposition}

%Another fact that we may deduce is that if we get rid of free skip connections, the minimum-norm interpolator may not be unique \textbf{\textcolor{red}{[Add reference]}}. From \cref{example:staircasecon}, we can infer that it is indeed the case: when $\beta \rightarrow 0$, we will recover the minimum-norm interpolation problem, and when $m = 3$ there will be a continuum of minimum-norm interpolators. What is so special about the toy problem in \cref{example:staircasecon}? We reveal that a hidden geometrical structure lies in certain training problems having infinitely many optimal interpolators, and construct them with ideas from planar geometry.

When we have no free skip connections, i.e. the case of \textbf{NSB}, for $d = 1$ we have a class of data that has infinitely many optimal interpolators. The construction follows from making the dual problem $\max_{\lVert u \rVert \leq 1} |\nu^{T}(Xu)_{+}|$ have linearly dependent solutions by forcing $n+1$ optimal solutions. For a rigorous construction of $(X, y)$, see \cref{appx:Class_Nonunique}.

\begin{proposition}
\label{p2:Class_Nonunique}
(A class of training problems with infinitely many optimal interpolators, informal) Consider the \textbf{NSB} problem in \eqref{eq:freeskip_bias} with $d = 1$. For all $n \geq 2$, we can construct infinitely many different datasets $(X, y)$ having infinitely many minimum-norm interpolators.
\end{proposition}

In the following example, we give a geometric description of finding the dataset $(X, y)$ and the continuum of optimal interpolators for $n = 5$.
\begin{example}
\label{example2: Nonunique class demonstration} (Example of a class of non-unique optimal interpolators)
Consider the figure in \cref{fig3-1:construction}. Following the arrow, we can find $s_n, s_{n-1}, \cdots s_1$ defined in \cref{appx:Class_Nonunique}, and can find that each $s_i$ has norm 1, $s_n = [0,1]^{T}$ and $v_n = [\sqrt{3}/2, 1/2]^{T}$. For the example in \cref{fig3-1:construction}, the data $x$ and $y$ can be chosen as
\[
x = [\tan\frac{\pi}{3}, -\tan\frac{5\pi}{24}, -\tan \frac{7\pi}{24}, -\tan \frac{9\pi}{24}, -\tan\frac{11\pi}{24}]^{T},\quad y \simeq [94,29,24,20,20]^{T}.
\]
\cref{fig3-2:OptimalInterpolators_zoom} show the continuum of optimal interpolators. We can see that there are infinitely many interpolators with the same cost. A magnification of the range $x \in [-8, 0]$ is given to emphasize that the interpolators are indeed different. We can also see from \cref{fig3-3:Learned_GD_interp} that gradient descent learns the continuum of optimal interpolators. Here we set $m = 10$. For details on the formula of optimal interpolators, see \cref{detailnonunique}. 
\begin{figure}[H]
\centering
    \begin{subfigure}[b]{0.25\textwidth}
         \centering
         \includegraphics[width=\textwidth]{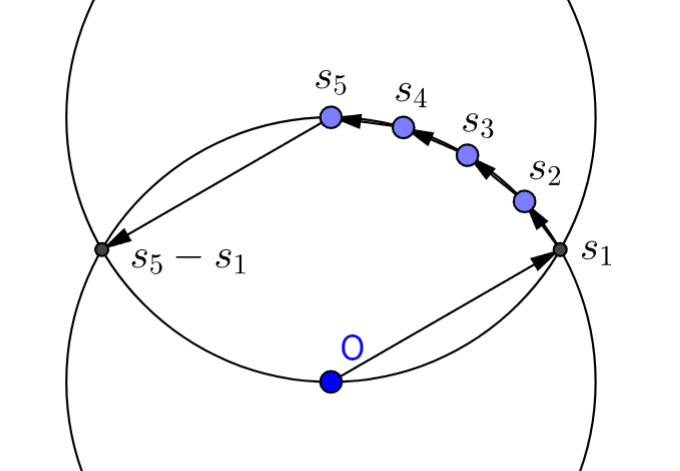}
         \caption{The planar geometry}
         \label{fig3-1:construction}
     \end{subfigure}
     \hspace{0.05\textwidth}
     \begin{subfigure}[b]{0.25\textwidth}
         \centering
         \includegraphics[width=\textwidth]{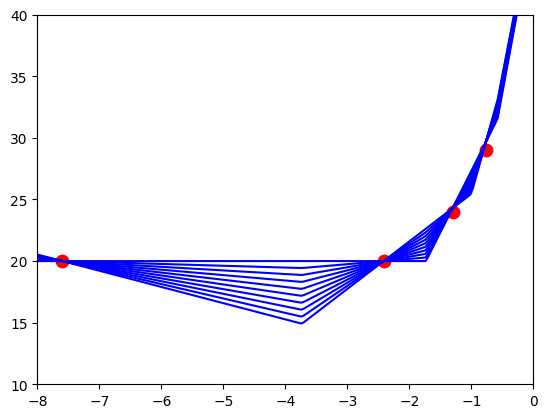}
         \caption{Zooming in $x \in [-8, 0]$}
         \label{fig3-2:OptimalInterpolators_zoom}
     \end{subfigure}
     \hspace{0.05\textwidth}
     \begin{subfigure}[b]{0.25\textwidth}
         \centering
         \includegraphics[width=\textwidth]{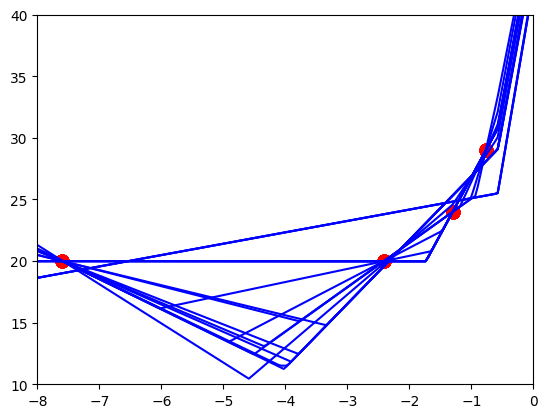}
         \caption{Trained interpolators}
         \label{fig3-3:Learned_GD_interp}
     \end{subfigure}
    \caption{\textbf{A demonstration of non-unique interpolators for $\textbf{n = 5}$.} \cref{fig3-1:construction} shows the geometric construction behind finding $v$ s proposed in \cref{p2:Class_Nonunique}. \cref{fig3-2:OptimalInterpolators_zoom} shows the continuum of optimal interpolators, and \cref{fig3-3:Learned_GD_interp} shows the learned interpolators trained by gradient descent.}
    \label{fig3:nonunique}
\end{figure}
\end{example}

\cref{p2:Class_Nonunique} demonstrates that understanding the optimal solution set with dual optimum enables us to enforce non-uniqueness to the solution set. Moreover, these examples are not constructed case-by-case, but from a geometric structure that is motivated by the object $\mathcal{Q}_X = \{(Xu)_{+}\ | \ \lVert u \rVert_2 \leq 1\}$, the convex set $\mathrm{Conv}(\mathcal{Q}_X \cup -\mathcal{Q}_X)$, and its supporting hyperplane.

Experimentally, the existence of free skip connections does not seem important in the behavior of the solution \cite{boursier2023penalising}, \cite{joshi2023noisy}. However, note that when there is no skip connection, there exists training problems where the minimum-norm interpolator has infinitely many solutions. Furthermore the interpolators in \cref{example:staircasecon} and \cref{example2: Nonunique class demonstration} may have $n$ breakpoints even with Assumption 1 in \cite{boursier2023penalising} - which can never be the sparsest interpolator. Hence, at least theoretically, free skip connection plays a significant role in guaranteeing the uniqueness and sparsity of the interpolator, along with penalizing the bias. Note that these different interpolators may have drastically different behavior for points not in the training set. For example, as $x \rightarrow \pm\infty$, the difference between any two different interpolators diverge.
\section{Generalizations}
\label{4.Gen}
In this section, we will extend our results from \cref{3.SO} to a more general training setup. We use the fact that for networks of sufficiently large width, training a neural network can be cast as a cone-constrained group LASSO problem \cite{mishkin2023optimal}. Analogous to \cref{t1:optpolytope}, we first derive the optimal set of a general cone-constrained group LASSO in \eqref{eq:general_cone_LASSO}:
\begin{equation}
\label{eq:general_cone_LASSO}
    \min_{\theta_i \in \mathcal{C}_i \cap \mathcal{V}_i, s_i \in \mathcal{D}_i} L(\sum_{i=1}^{P} A_i\theta_i + \sum_{i=1}^{Q} B_is_i, y) + \beta \sum_{i=1}^{P} \mathcal{R}_i(\theta_i).
\end{equation}
Here, $A_i, B_i \in \mathbb{R}^{n \times d}$, $\theta_i, s_i \in \mathbb{R}^{d}$, $y \in \mathbb{R}^{n}$, $\mathcal{C}_i, \mathcal{D}_i$ are proper cones, $\mathcal{R}_i$ is the regularization (which we assume to be a norm defined in a subspace $\mathcal{V}_i \subseteq \mathbb{R}^{d}$ and satisfy $\mathcal{V}_i \cap \mathcal{C}_i \neq \emptyset$), $\beta > 0$ is the regularization strength, and $L: \mathbb{R}^{n} \times \mathbb{R}^{n} \rightarrow \mathbb{R}$ is a convex but not necessarily strictly convex loss function. The assumption that $\mathcal{R}_i$ is a norm is natural because it will help the training problem find a simpler solution. \cref{eq:general_cone_LASSO} enables the analysis of many different training setups, including two-layer networks with free skip connections, interpolation, vector-valued outputs, and parallel deep networks of depth 3, extending the results in \cref{3.SO}. 

%The generality enables us to generalize the previous results in multiple ways. First, we extend the optimal polytope and staircase of connectivity for interpolation problems \cref{cp5:Interp_polytope}, \cref{cp6:staircase_connectivity_interpolation}, and also show the significance of free skip connections in the dual variable \cref{cp7:Freeskip_interp_polytope}. \\

%Then, we extend the results to more complicated architectures. We give a complete characterization of the optimal set of vector-valued neural networks \cref{cp9:vecval_optimalset}, namely the training problem in \eqref{eq:vectoropt}. This enables us to characterize a certain subset of the optimal set of a generic $L$ - layer neural network training problem with weight decay \cref{t7:local_Llayer}. Also, the characterization enables us to show that for sufficiently wide networks, the optimal set of vector-valued network with regularization is also connected \cref{t4:VecConn}. 

%At last, we give an elementary example of extending this result to deeper networks. We know the existence of a cone-constrained LASSO formulation for parallel three-layer networks, and we can apply the characterization in \cref{t3:GenOpt}. This enables us to extend the optimal polytope for deeper networks \cref{t5:DeepOptPol}. The characterization can be extended to deeper networks with tree structure \cite{zeger2024library}, and leave it for future work.

%\begin{corollary}
%\label{t6:LocalLandscape}
%(Local optimal set property for arbitrary deep neural networks) 
%\end{corollary}

\subsection{Description of the minimum-norm solution set}
\label{4.1.MINNORM_SOL}

The idea to derive the optimal set of \eqref{eq:general_cone_LASSO} is essentially the same as deriving the optimal set of \eqref{eq:convex_twolayer_opt}: we consider the dual problem and use strong duality to obtain the wanted result. The exact description of the optimal set is given as 
\begin{equation}
\label{eq:gen_opt_again}
\mathcal{P}^{*}_{\mathrm{gen}} = \Big\{(c_i\bar{\theta}_i)_{i=1}^{P} \oplus (s_i)_{i=1}^{Q} |\ c_i \geq 0, \sum_{i=1}^{P} c_iA_i\bar{\theta}_i + \sum_{i=1}^{Q} B_is_i \in \mathcal{C}_y, \bar{\theta}_i \in \bar{\Theta}_i, \langle B_i^{T}\nu^{*}, s_i \rangle = 0, s_i \in \mathcal{D}_i\Big\}.
\end{equation}
First, $\mathcal{C}_y$ is the set of optimal model fits which was defined at \cref{p3:uniquefit}. We have that $\bar{\theta}_i$ is contained in a certain set, which is an analogy of the optimal polytope in the direction of each variable is fixed. \cref{t1:optpolytope} is a special case where the set of optimal directions is a singleton. Finally, we have a constraint given to variables without regularization, which is also derived from the dual formulation. For a detailed derivation see \cref{t3:GenOpt}

%The exact definition $F(\mathcal{S}, v, -\beta, \langle , \rangle)$ (where in \cref{t3:GenOpt} it is simply written as $\bar{\Theta}_i$) is
%\[
%F(\mathcal{S}, v, -\beta, \langle , \rangle) = \{ u\ |\ u \in \mathcal{S}, \langle v, u \rangle = -\beta \}.
%\]
%Moreover, $\mathcal{S}_i := \{u\ |\mathcal{R}_i(u) \leq 1, \ u \in \mathcal{C}_i \}$, which is a direct extension of $\mathcal{S}_i$ defined in \cref{t1:optpolytope}. Due to the property of the optimal dual variable, the set $F(\mathcal{S}_i, A_i^{T}\nu^{*}, -\beta, \langle , \rangle)$ has an intuitive meaning where it acts as a set of points supported by the plane with normal vector $A_i^{T}\nu^{*}$. At last, the notion of $\mathrm{Zer}$ is defined as
%\[
%Zer(S) = \begin{cases} S \quad if \quad S \neq \emptyset\\ \{0\} \quad otherwise\end{cases}.
%\]
%Hence, we are choosing only the directions with values $\langle A_i^{T}\nu^{*}, s \rangle = -\beta$, and leaving the others to $0$, which is the direct analogy of \cref{t1:optpolytope}. 

%Overall, the expression in \eqref{eq:gen_opt_again} is a direct analogy of \cref{t1:optpolytope} for a wider range of training problems, with some more complicated notation, but the idea hidden it is identical to that of \cref{t1:optpolytope}.

Given the expression in \eqref{eq:gen_opt_again}, we can extend our results in \cref{t1:optpolytope} and \cref{t2:staircasecon} directly to the interpolation problem (\cref{cp5:Interp_polytope}, \cref{cp6:staircase_connectivity_interpolation}) . That is because for the interpolation problem, $\mathcal{C}_y$ is a singleton and the set $\bar{\Theta}_i$ is also a singleton. We can also find the optimal set characterization of the interpolation problem with free skip connections (\cref{cp7:Freeskip_interp_polytope}). Here, the dual variable has to satisfy $\langle X^{T}\nu^{*}, s \rangle = 0$ for all $s \in \mathbb{R}^{d}$, meaning $X^{T}\nu^{*} = 0$ is given as an additional constraint for the dual problem. The additional constraint is the main reason why we have qualitatively different behavior in uniqueness when we have free skip connections. 

\subsection{Vector-valued networks}
\label{4.2.Vectornn}

Now we turn to two-layer neural networks with vector-valued outputs, namely the problem
\begin{equation}
\label{eq:vectoropt}
\min_{(w_i, z_i)_{i=1}^{m}} \frac{1}{2} \lVert \sum_{i=1}^{m} (Xw_i)_{+}z_i^{T} - Y \rVert_F^2 + \frac{\beta}{2} \Big( \sum_{i=1}^{m} \lVert w_i \rVert_2^2 + \lVert z_i \rVert_2^2 \Big),
\end{equation}
where $w_i \in \mathbb{R}^{d \times 1}, z_i \in \mathbb{R}^{c \times 1}$ and $Y \in \mathbb{R}^{n \times c}$.
The vector-valued problem is known to have a convex reformulation \cite{sahiner2020vector}, which is given as 
\[
\min_{V_i} \frac{1}{2} \lVert \sum_{i=1}^{P} D_iXV_i - Y \rVert_2^2 + \beta \sum_{i=1}^{P} \lVert V_i \rVert_{\mathcal{K}_i,*}.
\]
The norm $\lVert V \rVert_{\mathcal{K}_i,*}$ is defined as
$
\lVert V \rVert_{\mathcal{K}_i,*}:=\min t \geq 0 \ \ \mathrm{s.t.}  \ \ V \in t\mathcal{K}_i,
$
for $\mathcal{K}_i = \mathrm{conv}\{ug^{T}\ | (2D_i - I)Xu \geq 0, \lVert ug^{T} \rVert_{*} \leq 1\}$, and $\mathcal{V}_i = \mathrm{span}\{ug^{T} | (2D_i - I)Xu \geq 0, g \in \mathbb{R}^{c}\}$.

The problem falls in our category of \eqref{eq:general_cone_LASSO}, and we can describe the optimal set of the problem completely. With appropriate solution maps, we can describe the optimal set of the nonconvex vector-valued problem in \eqref{eq:vectoropt} (\cref{cp9:vecval_optimalset}), and the same idea can be applied to describe a subset of the optimal solution set for deep networks (\cref{appx_t7}). A direct implication extends the loss landscape result in \cref{c1:valley_landscape} to vector-valued networks.

\begin{corollary}
\label{c3:valley_landscape_vector} Consider the problem in \eqref{eq:vectoropt} with $m \geq nc + 1$. For any $\theta := (w_i, z_i)_{i=1}^{m} \in \mathbb{R}^{(d+c)m}$, there exists a continuous path from $\theta$ to any global optimum $\theta^{*}$ with nonincreasing loss.
\end{corollary}

\subsection{Parallel deep neural networks}
\label{4.3.Deep}

Finally, we extend the characterization to deeper networks. Convex reformulations of parallel three-layer neural networks have been discussed \cite{ergen2021global}, \cite{wang2021parallel}. The specific training problem we are interested in is
\begin{equation}
\label{eq:parallel_depth3}
\min_{m, \{W_{1i}, w_{2i}, \alpha_i\}_{i=1}^{m}} \frac{1}{3} \left(\sum_{i=1}^{m} \lVert W_{1i} \rVert_F^3 + \lVert w_{2i} \rVert_2^3 + |\alpha_i|^3\right) \ \  \mathrm{s.t.} \   
\sum_{i=1}^{m} ((XW_{1i})_{+}w_{2i})_{+}\alpha_i = y.
\end{equation}
The size of each weights are $W_{1i} \in \mathbb{R}^{d \times m_1}, w_{2i} \in \mathbb{R}^{m_1}$, and $\alpha_i \in \mathbb{R}$. The dual problem of the convex reformulation can be understood as optimizing a linear function with cone and norm constraints, and we have analogous results of the optimal polytope. Specifically, we have the direction of the columns of first-layer weights as a set of finite vectors (\cref{t5:DeepOptPol}). The result suggests that our results are fairly generic, and could be generalized to other deep parallel architecture with appropriate parametrization. For detailed proof see \cref{pf6}.

\begin{theorem}
\label{t5:DeepOptPol}
Consider the training problem in \eqref{eq:parallel_depth3}. Then, there are only finitely many possible values of the direction of the columns of $W_{1i}^{*}$. Moreover, the directions are determined by solving the dual problem $\max_{\lVert W_1 \rVert_F \leq 1, \lVert w_2 \rVert_2 \leq 1} |(\nu^{*})^{T}((XW_1)_{+}w_2)_{+}|$ when $y \neq 0$.
\end{theorem}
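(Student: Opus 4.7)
The plan is to parallel the blueprint used for \cref{t1:optpolytope} but applied to the three-layer convex reformulation of \cite{ergen2021global,wang2021parallel}. Concretely, I would first rewrite \eqref{eq:parallel_depth3} as an instance of the cone-constrained group LASSO \eqref{eq:general_cone_LASSO}. The reformulation indexes variables by pairs $(D_i, D_j')$, where $D_i$ is a first-layer ReLU activation pattern of $X$ and $D_j'$ is a second-layer activation pattern of $D_i X u$ for some $u$; the associated cones $\mathcal{K}_{i,j}$ encode that both activation patterns are consistent, and the combined weight/Frobenius $\ell_3$-penalty on $(W_{1i}, w_{2i}, \alpha_i)$ is equivalent (after an AM-GM rescaling across the three factors) to a norm-like regularizer $\mathcal{R}_{i,j}$ on the convex variable.

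Next, I would form the Lagrangian dual. Since the convex problem is an instance of \eqref{eq:general_cone_LASSO} for which strong duality holds once $m$ is large enough (as in \cref{2.PR}), the dual reduces to $\max_{\nu} -L^\ast(\nu)$ subject to polar-norm constraints $\mathcal{R}_{i,j}^\circ(A_{i,j}^T \nu) \le \beta$ for every arrangement pair. Assembling these constraints over all $(i,j)$ and re-parameterizing through the original nonconvex variables $W_1, w_2$, the family of dual constraints collapses into the single condition $\max_{\lVert W_1 \rVert_F \le 1,\, \lVert w_2 \rVert_2 \le 1} |\nu^T ((XW_1)_+ w_2)_+| \le \beta$. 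This is the dual displayed in the statement.

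With this dual in hand, I would invoke the generic solution-set description \eqref{eq:gen_opt_again}: at any primal-dual optimum, complementary slackness forces every active group of convex variables to be supported on pairs $(W_1, w_2)$ that attain the dual maximum with value exactly $\beta$. Mapping back from the convex variables to the nonconvex parameterization via the scale-rearrangement used to derive the $\ell_3$-regularized formulation, each active branch $i$ in $(W_{1i}^*, w_{2i}^*, \alpha_i^*)$ must have its pair $(W_{1i}^*/\lVert W_{1i}^*\rVert_F, w_{2i}^*/\lVert w_{2i}^*\rVert_2)$ be a maximizer of the dual inner problem. A column-wise argument (each column of $W_{1i}^*$ is paired, via the corresponding entry of $w_{2i}^*$, with a single first-layer activation pattern on $X$ and a second-layer activation pattern on $X W_{1i}^* w_{2i}^*$) then isolates the direction of each column separately.

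Finally, I would argue finiteness of the candidate direction set. On each cell $\mathcal{K}_{i,j}$ the inner functional $\nu^{*T}((XW_1)_+ w_2)_+$ is bilinear, and maximizing a bilinear form over the product of unit balls of the Frobenius and $\ell_2$ norms yields an extreme point determined by the SVD of a matrix that depends only on $D_i, D_j', \nu^*$ and $X$. Since the number of sign patterns of both the first and second layers is finite (polynomial in $n$), only finitely many such extreme directions appear; these exhaust the admissible column directions of $W_{1i}^*$. The main obstacle will be the last bookkeeping step: the three-way splitting across columns of $W_{1i}$ in the nonconvex problem is not automatically the splitting induced by the convex reformulation, so one must verify that the dual inner maximization decouples enough across columns to promote a per-matrix statement into the claimed per-column statement, and that the extremal SVD argument survives in the boundary case where two or more singular values coincide.
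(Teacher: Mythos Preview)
Your overall architecture---recast \eqref{eq:parallel_depth3} as a cone-constrained group LASSO, take the dual, and use complementary slackness to pin each active branch to a maximizer of the inner dual problem---matches the paper. The gap is in your finiteness step. On a fixed activation cell the inner objective is \emph{not} the standard bilinear form $\langle M, W_1 w_2^T\rangle$ whose maximum over the Frobenius/$\ell_2$ ball product is a top singular vector pair. Each column of $W_1$ carries its own first-layer pattern $D_{ik}$, so the objective on a cell is $\sum_{k=1}^{m_1} a_k^\top (W_1)_{\cdot,k}\, w_{2,k}$ with column-dependent $a_k = X^\top D_{ik} D_j' \nu^*$. This is not governed by the SVD of any single matrix, and your worry about repeated singular values is a symptom of using the wrong tool rather than a genuine boundary case to handle.

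The paper resolves exactly the column-decoupling obstacle you flag, but by a change of variables rather than spectral reasoning: set $v_k = (W_1)_{\cdot,k}\, w_{2,k}$. A Cauchy--Schwarz argument shows that the image of the product constraint $\lVert W_1\rVert_F \le 1,\ \lVert w_2\rVert_2 \le 1$ is precisely the mixed-norm ball $\sum_k \lVert v_k\rVert_2 \le 1$. The inner problem then becomes \emph{linear} in $(v_k)$ over a convex cone intersected with this ball, and the standard averaging-and-rescaling argument (as in \cref{p4:OptimalDirections}) forces each nonzero $v_k$ to have a unique direction on each cell. Since $v_k$ is a positive multiple of $(W_1)_{\cdot,k}$, this yields the per-column statement directly, and finiteness follows because the cells are indexed by the finite set of tuples $\bigl(D_{i1},\dots,D_{i m_1}, s, D_j'\bigr)$---note the tuple of first-layer patterns, not a single $D_i$.
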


%%%%%%%%%%%%%%%%%%%%% Section 3. The optimal polytope %%%%%%%%%%%%%%%%%%%%

%%%%%%%%%%%%%%%%%%%%% Section 4. The staircase of connectivity %%%%%%%%%%%%%

%%%%%%%%%%%%% Section 5. Nonuniqueness of min-norm interpolators %%%%%%%%%%%

%%%%%%%%%%%%%% Section 6. General Description %%%%%%%%%%%%%%%%%%

%\section{Experiments}
\section{Conclusion}
\label{5.Conclu}

In this paper, we present an in-depth exploration of the loss landscape and the solution set of regularized neural networks. We start with a two-layer scalar neural network as the simplest case and demonstrate the properties of the set, including the existence of optimal directions, phase transition in connectivity, and non-uniqueness of minimum-norm interpolators. Then, we give a more general description on the optimal set of cone-constrained group LASSO and extend the previous results to a more general setup.

Our paper may be extended in multiple ways. One interesting problem that is left is, what is the right architecture to ensure the uniqueness of the minimum-norm interpolator for high dimensions, as free skip connection itself does not help. Another interesting problem is showing `almost sure uniqueness' of problem \eqref{eq:nonconvex_twolayer_opt}, up to permutations: intuitively, from the examples we show, it can be speculated that the solution of the dual problem $\max|\nu^{T}(Xu)_{+}|$ subject to $\lVert u \rVert \leq 1$ is unlikely to have ``too many" optimal solutions. Hence it is likely that the dataset that makes the minimum-norm interpolator non-unique is very small. We conjecture that the set will have measure 0 in $\mathbb{R}^{2n}$, and leave it for future work. At last, extending the optimal polytope/connectivity results to tree neural networks \cite{zeger2024library} with arbitrary depth could be a meaningful contribution.

%%%%%%%%%%%%%%%%%%%%%%%%%%%%%%%%%%%%%%%%%%%%%%%%%%%%%%%%%%%%%%%%%%%%%%%%%%%%
\section*{Reproducibility Statement}
The only randomness that occurs from our experiments are \cref{fig5-1:m=3_interpolator}, \cref{fig5-2:m=5_interpolator}, \cref{fig6-1:m=6_interpolator}, and \cref{fig6-2:m=10_interpolator}, where different initialization may lead to different learned interpolators. We set random seeds properly to make all results reproducible. We used a laptop to do the experiments, and provided the code to generate the figures. Code available at https://github.com/pilancilab/Loss-landscape-convex-duality

\section*{Acknowledgements}
This work was supported in part by the National Science Foundation (NSF) under CAREER award CCF-2236829, in part by the U.S. Army Research
Office Early Career Award W911NF-21-1-0242, and in part by the Office of Naval Research under Grant N00014-24-1-2164.

\bibliography{iclr2025_conference}
\bibliographystyle{iclr2025_conference}

\newpage
\appendix

\section*{Appendix}

\section{Details on the toy example in \cref{fig2:example}}
\label{detailtoy}

In this section, we give details of the toy example in \cref{fig2:example}. Specifically, we illustrate how the loss landscape is plotted, how the set of optimal solutions is derived, and present the models that are found by gradient descent.

\textbf{One important remark is that ``the figure does not directly imply the staircase of connectivity"} - the fact that two optimal solutions are disconnected in the visualization does not mean disconnectedness in the optimal solution, and vice versa. The figures are for the illustration of the phenomenon, not the proof.

The optimization problem that we consider is 
\[
\min_{\{(\theta_i, a_i, b_i)\}_{i=1}^{m}} \frac{1}{2} \sum_{j=1}^{2} \left(\sum_{i=1}^{m} (a_ix_j + b_i)_{+}\theta_i - y_j\right)^2 + \frac{\beta}{2} \sum_{i=1}^{m} (\theta_i^2 + a_i^2 + b_i^2),
\]
where $\{(x_i,y_i)\}_{i=1}^{2} = \{(-\sqrt{3}, 1), (\sqrt{3}, 1)\}$ and $\beta = 0.1$. When we write $X = \begin{bmatrix}-\sqrt{3}, 1 \\ \sqrt{3}, 1\end{bmatrix} \in \mathbb{R}^{2 \times 2}$, $y = [1,1]^{T}$, and the first layer weights as $U \in \mathbb{R}^{2 \times m}$, second layer weights $v = \mathbb{R}^{m}$, the optimization problem can also be written as
\[
\min_{U \in \mathbb{R}^{2 \times m}, v \in \mathbb{R}^{m}} \frac{1}{2} \lVert (XU)_{+}v - y \rVert_2^2 + \frac{\beta}{2} (\lVert U \rVert_F^2 + \lVert v \rVert_2^2).
\]

Let the objective be $L(U, v)$. Note that even when $m = 1$, there are three parameters, so it is impossible to plot the loss landscape in a three-dimensional plot. What we do is plot a certain section of the loss landscape, as done in \cite{li2018visualizing}, to demonstrate our result.

When $m = 1$, where $r = \sqrt{1 - 0.5\beta}$, we plot
\[
F(t, s) = L(\begin{bmatrix}t \\ s\end{bmatrix}, [r]),
\]
for $(t,s) \in [-1, 1] \times [-0.5,2]$. $t=0, s=r$ is the only optimum.

When $m = 2$, where $r = \sqrt{1 - 0.5\beta}$, we define 
\[
U_0 = \begin{bmatrix}0&0\\ r &0 \end{bmatrix} , \quad U_1 = \begin{bmatrix}\sqrt{3}r/(2\sqrt{2})& -\sqrt{3}r/(2\sqrt{2})\\ r/(2\sqrt{2}) & r/(2\sqrt{2}) \end{bmatrix}, \quad U_2 = \begin{bmatrix}0&0\\ 0 &r \end{bmatrix},
\]
\[
v_0 = \begin{bmatrix}r \\ 0 \end{bmatrix} , \quad v_1 = \begin{bmatrix} r/\sqrt{2} \\ r/\sqrt{2} \end{bmatrix}, \quad v_2 = \begin{bmatrix}0\\ r \end{bmatrix}.
\]
Then, we plot
\[
F(t, s) = L(\cos(t)U_0 + 2s(U_1 - U_0) + \sin(t)U_2, \cos(t)v_0 + 2s(v_1 - v_0) + \sin(t)v_2).
\]
for $(t,s) \in [-0.25, 0.6] \times [-0.5, 0.3]$. The optimal solutions here are $(t,s) = (0, 0.5)$ and the line $s=0, t \geq 0$.

When $m = 3$, where $r = \sqrt{1 - 0.5\beta}$, we define

\[
U_0 = \begin{bmatrix}0,0,0\\r,0,0\end{bmatrix}, U_1 = \begin{bmatrix}0 & \sqrt{3}r/(2\sqrt{2})& -\sqrt{3}r/(2\sqrt{2}) \\ 0 & r/(2\sqrt{2}) & r/(2\sqrt{2})\end{bmatrix}, U_2 = \begin{bmatrix}0,0,0\\0,r,0\end{bmatrix},
\]
\[
v_0 = \begin{bmatrix}r \\ 0 \\ 0\end{bmatrix} , \quad v_1 = \begin{bmatrix} 0 \\ r/\sqrt{2} \\ r/\sqrt{2} \end{bmatrix}, \quad v_2 = \begin{bmatrix}0\\ r \\ 0 \end{bmatrix}.
\]
Then, we plot
\[
F(t, s) = L(\cos(t)\cos(s)U_0 + \cos(t)\sin(s)U_1 + \sin(t)U_2, \cos(t)\cos(s)v_0 + \cos(t)\sin(s)v_1 + \sin(t)v_2).
\]
for $(t,s) \in [-0.5, 1] \times [-0.5, 1]$. The optimal solutions are $s = 0, t \geq 0$ and $t = 0, s \geq 0$.

The contour plot of the loss landscape can be found in  \cref{fig5:contour}. It clearly shows that the connectivity behavior of the optimal solution changes.
\begin{figure}[H]
    \begin{subfigure}[b]{0.32\textwidth}
         \centering
         \includegraphics[width=\textwidth]{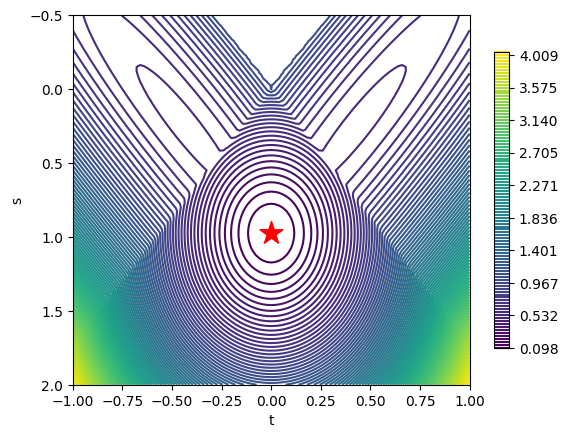}
         \caption{$m = 1$}
         \label{fig4-1:m=1_contour}
     \end{subfigure}
     \hfill
     \begin{subfigure}[b]{0.32\textwidth}
         \centering
         \includegraphics[width=\textwidth]{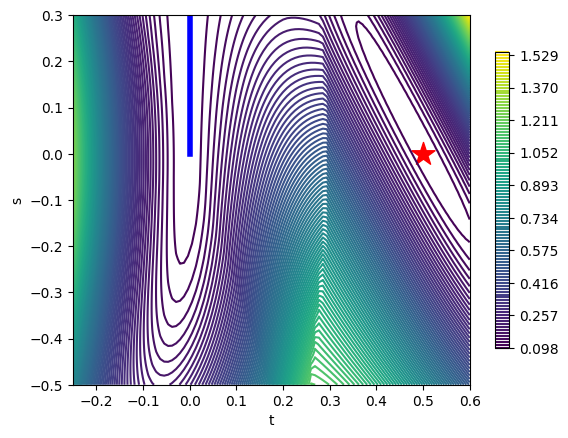}
         \caption{$m = 2$}
         \label{fig4-2:m=2_contour}
     \end{subfigure}
     \hfill
     \begin{subfigure}[b]{0.32\textwidth}
         \centering
         \includegraphics[width=\textwidth]{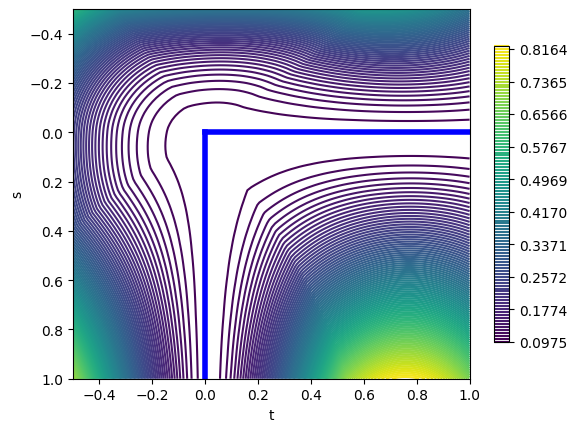}
         \caption{$m = 3$}
         \label{fig4-3:m=3_contour}
     \end{subfigure}
     \caption{ \label{fig5:contour} \textbf{A contour plot of the loss landscape} The three figures show the contour plot of the loss landscape shown in \cref{fig2:example}. We can see the staircase of connectivity more clearly.}
\end{figure}

\cref{fig6:learned_functions} gives what the gradient descent actually learns for the problem. We can see that gradient descent finds multiple optimal solutions, which verifies our claim that we have a continuum of optimal solutions. We present the case both when $m = 3$ and $m = 5$. When $m$ is increased, the model gets less stuck at local minima.

\begin{figure}[H]
    \begin{subfigure}[b]{0.45\textwidth}
         \centering
         \includegraphics[width=\textwidth]{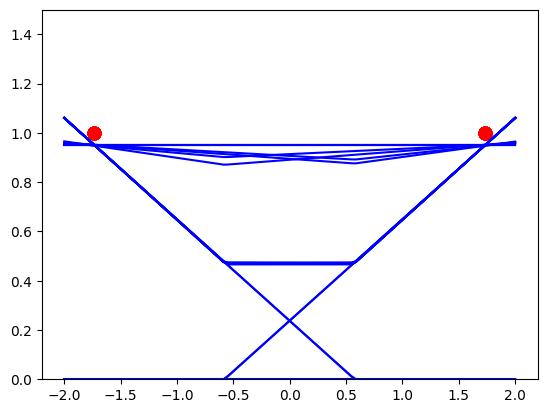}
         \caption{$m = 3$}
         \label{fig5-1:m=3_interpolator}
     \end{subfigure}
     \hfill
     \begin{subfigure}[b]{0.45\textwidth}
         \centering
         \includegraphics[width=\textwidth]{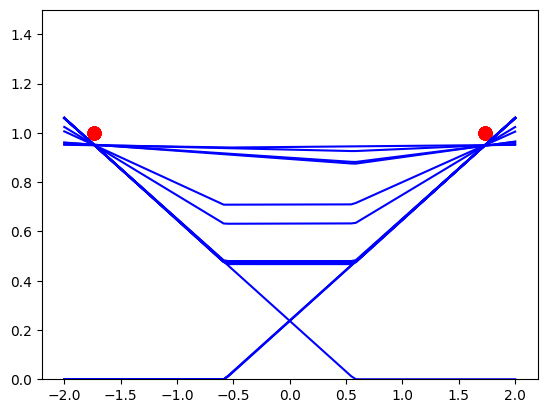}
         \caption{$m = 5$}
         \label{fig5-2:m=5_interpolator}
     \end{subfigure}
     \caption{\label{fig6:learned_functions} \textbf{Learned functions found by gradient descent} The two figures show what functions gradient descent learns for the toy problem in \cref{example:staircasecon}. For both cases in $m = 3$, $m = 5$, either gradient descent gets stuck at a local minimum or finds one of the optimal networks in the continuum of optimal solutions.}
\end{figure}

At last, we show that all optimal functions can be written as 
\[
f(x) = \sqrt{\kappa t} (\frac{\sqrt{3\kappa t}}{2}x + \frac{\sqrt{\kappa t}}{2})_{+} + \sqrt{\kappa t} (- \frac{\sqrt{3\kappa t}}{2}x + \frac{\sqrt{\kappa t}}{2})_{+} + \sqrt{\kappa (1-2t)} \left(\sqrt{\kappa (1-2t)}\right)_{+}, 
\]
where $\kappa = 1 - \beta/2$ and $t \in [0, 1/2]$. For $\nu^{T} = [1/2, 1/2]$, we know that $\max_{\lVert u \rVert_2 \leq 1} |\nu^{T}(Xu)_{+}| = 1$. Let's say the optimal model fit
\[
y^{*} = \sum_{i=1}^{m} (Xu_i)_{+}\alpha_i.
\]
Then,
\[
\langle \nu, y^{*} \rangle \leq \sum_{i=1}^{m} |\nu^{T}(X\frac{u_i}{\lVert u_i \rVert_2})| \lVert u_i \rVert_2 |\alpha_i| \leq \frac{1}{2} \Big(\sum_{i=1}^{m} \lVert u_i \rVert_2^2 + |\alpha_i|^2 \Big).
\]
This means the objective has a lower bound $\frac{1}{2} \lVert y^{*} - y \rVert_2^2 + \beta \langle \nu, y^{*} \rangle$, and minimum of the lower bound is attained when $y^{*} = [1-\beta/2, 1-\beta/2]^{T}$. Substitute to see that the lower bound of the objective is $\beta - \beta^{2}/4$, and when $u_1 = [0, \sqrt{1 - \beta/2}]^{T}$, $\alpha_1 = \sqrt{1 - \beta/2}$ we have a solution with cost $\beta - \beta^{2}/4$ hence $y^{*}$ is indeed optimal. $\nu^{*} = y - y^{*}$, and use \cref{t1:optpolytope} to find the complete solution set.
\newpage
\section{Details on the toy example in \cref{fig3:nonunique}}
\label{detailnonunique}

The construction of $x$ follows from \cref{p2:Class_Nonunique}. For the particular example we distribute the angles identically, hence we obtain the form in \cref{example2: Nonunique class demonstration}. The six optimal directions are
\[
\begin{bmatrix}\sqrt{3}/2\\1/2\end{bmatrix}, \begin{bmatrix}\sqrt{2}/2\\\sqrt{2}/2\end{bmatrix}, \begin{bmatrix}1/2\\\sqrt{3}/2\end{bmatrix}, \begin{bmatrix}\sqrt{6} - \sqrt{2}/4\\ \sqrt{6} + \sqrt{2}/4\end{bmatrix},
\begin{bmatrix}0\\1\end{bmatrix},
\begin{bmatrix}-\sqrt{3}/2\\1/2\end{bmatrix}.
\]
Let's note these optimal directions as $\bar{u}_1, \bar{u}_2, \cdots \bar{u}_6$. We construct $y$ as
\[
y = 20((X\bar{u}_1)_{+}+(X\bar{u}_3)_{+}+(X\bar{u}_5)_{+}),
\]
which is numerically very similar to $[94, 29, 24, 20, 20]$. Note that the class of optimal interpolators are
\begin{align*}
f(x) = (20-7.076t)(&[x,1] \cdot \bar{u}_1)_{+} + (13.1592t)([x,1] \cdot \bar{u}_2)_{+} + (20-13.1623t)([x,1] \cdot \bar{u}_3)_{+}\\
&+ (13.159t)([x,1] \cdot \bar{u}_4)_{+} + (20-7.081t)([x,1] \cdot \bar{u}_5)_{+} + t([x,1] \cdot \bar{u}_6)_{+},
\end{align*}
where $t \in [0, 1.5194]$ which all have the same optimal cost 60.

Similar to the experiment in \cref{detailtoy}, we give an example of the learned functions by gradient descent in \cref{fig7:gd_multipleinterp}. We set $\beta = 0.1$ and solve the regularized problem. Here we find multiple functions as optimal: and the important remark is that not all (as some do stuck at local minima), but there exists different interpolators with the same cost. 
\begin{figure}[H]
    \begin{subfigure}[b]{0.45\textwidth}
         \centering
         \includegraphics[width=\textwidth]{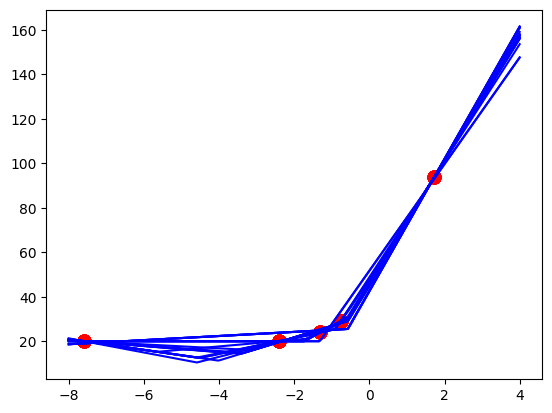}
         \caption{$m = 6$}
         \label{fig6-1:m=6_interpolator}
     \end{subfigure}
     \hfill
     \begin{subfigure}[b]{0.45\textwidth}
         \centering
         \includegraphics[width=\textwidth]{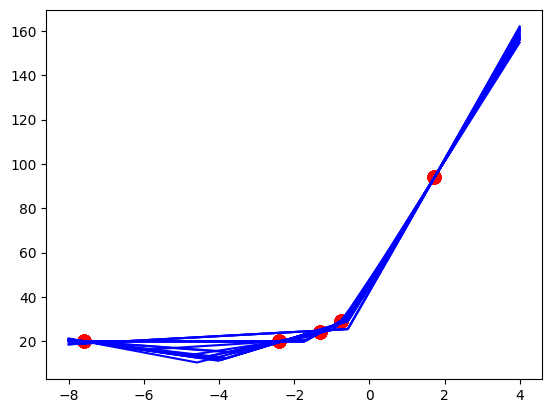}
         \caption{$m = 10$}
         \label{fig6-2:m=10_interpolator}
     \end{subfigure}
     \caption{\label{fig7:gd_multipleinterp} \textbf{Learned interpolators found by gradient descent} The two figures show what functions gradient descent learns for the toy problem in \cref{example2: Nonunique class demonstration}. We set $\beta = 0.1$ to approximately solve the minimum-norm interpolation problem. For both cases $m = 6$ and $m = 10$, either gradient descent gets stuck at a local minimum or finds one of the optimal networks in the continuum of optimal solutions.}
\end{figure}

\newpage
\section{Proofs in \cref{3.1.OPTPOL}}
\label{pf3}

In this section, we briefly discuss how \cref{t1:optpolytope} is derived and the intuition behind it. Consider the problem introduced in \eqref{eq:convex_twolayer_opt}, and write its optimal solution set as $\Theta^{*}$. To discuss the solution set, we first define the set of optimal model fits, which was first introduced in \cite{mishkin2023optimal}.

\begin{appxdefinition}
\label{def:modelfit}
\cite{mishkin2023optimal} The set of optimal model fits $\mathcal{C}_y$ is defined as
\[
\mathcal{C}_y = \Big\{\sum_{i=1}^{P} D_iX(u_i^{*} - v_i^{*}) \ | \ (u_i^{*}, v_i^{*})_{i=1}^{P} \in \Theta^{*} \Big\}.
\]
\end{appxdefinition}

When $L$ is strictly convex, which is the case for $l_2$ regression for instance, $\mathcal{C}_y$ becomes a singleton \cite{mishkin2023optimal}. 

\begin{appxproposition}
(\cref{p3:uniquefit} of the paper) \cite{mishkin2023optimal}
If the loss function $L$ is strictly convex, the optimal model fit is unique, i.e. for the set of optimal model fit 
\[
\mathcal{C}_y = \Big\{\sum_{i=1}^{P} D_iX(u_i^{*} - v_i^{*}) \ | \ (u_i^{*}, v_i^{*})_{i=1}^{P} \in \Theta^{*} \Big\},
\]
$\mathcal{C}_y = \{y^{*}\}$ for some $y^{*} \in \mathbb{R}^{n}$.
\end{appxproposition}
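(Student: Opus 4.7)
The plan is a standard strict-convexity argument applied to the convex reformulation in \eqref{eq:convex_twolayer_opt}, exploiting the fact that the model fit $\sum_{i=1}^{P} D_i X(u_i - v_i)$ depends linearly on the variables while the regularizer is convex and the feasible region $\prod_i (\mathcal{K}_i \times \mathcal{K}_i)$ is convex.

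First, I would suppose for contradiction that there exist two optima $(u_i^{(1)}, v_i^{(1)})_{i=1}^{P}$ and $(u_i^{(2)}, v_i^{(2)})_{i=1}^{P}$ in $\Theta^*$ whose model fits $y_1^* := \sum_i D_i X(u_i^{(1)} - v_i^{(1)})$ and $y_2^* := \sum_i D_i X(u_i^{(2)} - v_i^{(2)})$ differ. Let $p^*$ be the common optimal objective value. Then for any $\lambda \in (0,1)$, the interpolated point $(u_i^{(\lambda)}, v_i^{(\lambda)}) := \lambda(u_i^{(1)}, v_i^{(1)}) + (1-\lambda)(u_i^{(2)}, v_i^{(2)})$ lies in $\mathcal{K}_i \times \mathcal{K}_i$ for every $i$ because each $\mathcal{K}_i$ is a convex cone, so it is feasible for \eqref{eq:convex_twolayer_opt}.

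Next, I would evaluate the objective at $(u_i^{(\lambda)}, v_i^{(\lambda)})_i$. By linearity, its model fit is exactly $\lambda y_1^* + (1-\lambda) y_2^*$. Strict convexity of $L(\cdot, y)$ yields
\[
L(\lambda y_1^* + (1-\lambda) y_2^*, y) < \lambda L(y_1^*, y) + (1-\lambda) L(y_2^*, y),
\]
while convexity of the $\ell_2$ norm gives $\lVert u_i^{(\lambda)} \rVert_2 \leq \lambda \lVert u_i^{(1)} \rVert_2 + (1-\lambda)\lVert u_i^{(2)} \rVert_2$ and similarly for $v_i$. Summing these with the strict inequality on $L$, the objective at the interpolated point is strictly less than $\lambda p^* + (1-\lambda) p^* = p^*$, contradicting optimality of $p^*$. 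Hence $y_1^* = y_2^*$ and $\mathcal{C}_y$ is a singleton.

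I do not anticipate a real obstacle here: the only subtlety is verifying convexity of the feasible set (immediate since each $\mathcal{K}_i$ is convex) and that the strict inequality from $L$ survives aggregation with the weak inequalities from the regularizer (it does, because we are summing a strict inequality with non-strict ones). No structural facts about the dual or the polytope characterization are needed — this proposition is a prerequisite used in stating \cref{t1:optpolytope}, where the unique fit $y^*$ appears.
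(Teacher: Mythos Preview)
Your proposal is correct and essentially identical to the paper's own proof: the paper also takes two optima with distinct fits, forms the midpoint (you use a general $\lambda\in(0,1)$, they take $\lambda=1/2$), and combines the strict inequality from $L$ with the triangle inequality on the $\ell_2$ regularizer to contradict optimality. Your extra remark about feasibility via convexity of each $\mathcal{K}_i$ is the only addition, and it is of course correct.
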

\begin{proof}
Assume $y_1, y_2 \in \mathcal{C}_y$ and $y_1 \neq y_2$. Let $\sum_{i=1}^{P} D_iX(u_i - v_i) = y_1$ and  $\sum_{i=1}^{P} D_iX(u'_i - v'_i) = y_2$ for $(u_i, v_i)_{i=1}^{P}, (u_i', v_i')_{i=1}^{P} \in \Theta^{*}$. Think of $(\frac{u_i+u_i'}{2}, \frac{v_i+v_i'}{2})_{i=1}^{P} = \theta_{avg}$. The objective value of $\theta_{avg}$ is
\[
L(\frac{y_1+y_2}{2}, y) + \beta \sum_{i=1}^{P} \left(\lVert \frac{u_i+u_i'}{2} \rVert_2 + \lVert \frac{v_i+v_i'}{2} \rVert_2\right)
\]
which is strictly smaller than
\[
\frac{1}{2} \left(L(y_1, y) + \beta \left(\sum_{i=1}^{P} \lVert u_i \rVert_2 + \lVert v_i \rVert_2 \right) + L(y_2, y) + \beta \left(\sum_{i=1}^{P} \lVert u'_i \rVert_2 + \lVert v'_i \rVert_2 \right) \right).
\]
The strict inequality follows from the fact that $L$ is strictly convex. Contradiction follows, as we have found a parameter that has smaller objective value than the optimal cost.
\end{proof}

It is not necessary to characterize $\mathcal{C}_y$ as a singleton to derive the solution set itself (see \cref{4.1.MINNORM_SOL}). However, for the notion of the optimal polytope and its application to the staircase of connectivity, we will need that $\mathcal{C}_y$ is a singleton. 

Before proving the optimal polytope characterization, we show that the $\bar{u}_i$, $\bar{v}_i$ introduced in \cref{t1:optpolytope} can be uniquely determined by solving the given optimization problem. 

\begin{appxproposition}
\label{p4:OptimalDirections}
Consider the optimization problem
\[
\min_{u \in \mathcal{S}_i} \nu^{T}D_iXu, \quad \min_{u \in \mathcal{S}_i} -\nu^{T}D_iXu,
\]
where $\nu \in \mathbb{R}^{n}$ is an arbitary vector and $\mathcal{S}_i = \mathcal{K}_i \cap \{u \ | \ \lVert u \rVert_2 \leq 1\}$. If the optimal objective is nonzero, there exists a unique minimizer.
\end{appxproposition}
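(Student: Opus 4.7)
The plan is to exploit two features of $\mathcal{S}_i = \mathcal{K}_i \cap \{u : \lVert u \rVert_2 \leq 1\}$: the cone $\mathcal{K}_i$ is closed under positive scaling, and the Euclidean ball is \emph{strictly} convex. Because the objective $u \mapsto \nu^T D_i X u$ is linear and $\mathcal{S}_i$ is a closed convex subset of a compact ball, a minimizer exists; the question is uniqueness under the hypothesis that the optimal value is nonzero. I will handle only the first problem, since the second is obtained by flipping the sign of $\nu$.

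First I would show that any minimizer $u^{*}$ must satisfy $\lVert u^{*} \rVert_2 = 1$. Since $0 \in \mathcal{S}_i$ achieves objective value $0$, the assumption that the optimum is nonzero forces $\nu^T D_i X u^{*} < 0$. If $\lVert u^{*} \rVert_2 < 1$, then for $t = 1/\lVert u^{*} \rVert_2 > 1$ the scaled point $t u^{*}$ still lies in $\mathcal{K}_i$ (cone property) and in the unit ball (by construction), hence in $\mathcal{S}_i$, while $\nu^T D_i X (t u^{*}) = t\, \nu^T D_i X u^{*}$ is strictly smaller than $\nu^T D_i X u^{*}$, contradicting optimality.

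Next, for uniqueness, suppose $u_1 \neq u_2$ are two minimizers, both with unit norm by the previous step. Their convex combination $u_\lambda = \lambda u_1 + (1-\lambda) u_2$ for $\lambda \in (0,1)$ still belongs to $\mathcal{S}_i$ (convexity of $\mathcal{K}_i$ and of the unit ball) and still attains the optimum by linearity of the objective. However, strict convexity of the Euclidean norm gives $\lVert u_\lambda \rVert_2 < 1$. Applying the same scaling trick as before, the point $u_\lambda / \lVert u_\lambda \rVert_2 \in \mathcal{S}_i$ achieves objective value $\nu^T D_i X u_\lambda / \lVert u_\lambda \rVert_2$, which is strictly smaller than the optimum (since the optimum is negative and we divide by a number in $(0,1)$), again a contradiction.

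The main obstacle, which this approach sidesteps cleanly, is that $\mathcal{S}_i$ itself is not strictly convex: its boundary includes flat faces cut out by the constraint $(2D_i - I) X u \geq 0$, so one cannot conclude uniqueness purely from strict convexity of the feasible set. The two-step argument above routes around this by first localizing minimizers to the unit sphere (using positive homogeneity of both the cone and the linear objective, together with sign of the optimum), and then using strict convexity of the \emph{norm}, rather than of $\mathcal{S}_i$, to derive a contradiction.
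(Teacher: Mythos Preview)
Your proof is correct and follows essentially the same route as the paper's: first force any minimizer onto the unit sphere via the cone's scaling property and the negativity of the optimum, then average two hypothetical minimizers and rescale to contradict optimality. Your write-up is in fact a bit more careful, explicitly justifying that the nonzero optimum must be negative (since $0\in\mathcal{S}_i$) and commenting on why strict convexity of the norm, rather than of $\mathcal{S}_i$, is what drives the argument.
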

\begin{proof}
The problem is equivalent to 
\[
\min_{u \in \mathcal{S}_i} (w^{*})^{T}u,
\]

which is a linear program on a convex set. We write $w^{*} = \pm X^{T}D_i\nu^{*}$ for convenience. Let's say the optimal objective $p^{*} < 0$ and we have two minimizers $u_1^{*}$, $u_2^{*}$. The first thing to notice is that $\lVert u_1^{*} \rVert_2 = 1$. The reason is that when $\lVert u_1^{*} \rVert_2 < 1$, we can scale it to decrease the objective. Similarly, $\lVert u_2^{*} \rVert_2 = 1$. As they are two different minimizers, we know that 
$$
(w^{*})^{T}u_1^{*} = (w^{*})^{T}u_2^{*} = p^{*} = (w^{*})^{T}(\frac{u_1^{*} + u_2^{*}}{2}),
$$
and $\lVert u_1^{*} + u_2^{*} \rVert_2 < 2$ because $u_1^{*} \neq u_2^{*}$. Scale $(u_1^{*} + u_2^{*})/2$ to obtain contradiction that $u_1^{*}$ is the minimizer.
\end{proof}

\begin{appxtheorem}
\label{appx:formal_optpolytope}
(\cref{t1:optpolytope} of the paper) Suppose $L$ is a strictly convex loss function. The directions of optimal parameters of the problem in \eqref{eq:convex_twolayer_opt}, noted as $\bar{u}_i, \bar{v}_i$, are uniquely determined from the dual problem,
\[
\bar{u}_i = 
\argmin_{u \in \mathcal{S}_i} {\nu^{*}}^{T}D_iXu\ \ \mathrm{if} \ \  \min_{u \in \mathcal{S}_i} {\nu^{*}}^{T}D_iXu = -\beta,\ \  0 \ \   otherwise,
\]
\[
\bar{v}_i = 
\argmin_{v \in \mathcal{S}_i} -{\nu^{*}}^{T}D_iXv\ \ \mathrm{if} \ \  \min_{v \in \mathcal{S}_i} -{\nu^{*}}^{T}D_iXv = -\beta,\ \ 0 \ \   otherwise.
\]
where $\nu^{*}$ is any dual optimum that satisfies
\[
\nu^{*} = \argmax -L^{*}(\nu) \ \ \mathrm{subject} \ \ \mathrm{to}\ \  |\nu^{T}D_iXu| \leq \beta\lVert u \rVert_2 \ \ \forall u \in \mathcal{K}_i, i \in [P].
\]
Here, $D_i$s are all possible arrangements $diag(1(Xh \geq 0))$ for $i \in [P]$, $\mathcal{S}_i = \mathcal{K}_i \cap \{u\ | \ \lVert u \rVert_2 \leq 1\}$. Moreover, the solution set of \eqref{eq:convex_twolayer_opt} is given as a polytope,
\begin{equation}
\label{eq:polytope}
\mathcal{P}^{*}_{\nu^{*}} := \left\{(c_i\bar{u}_i, d_i\bar{v}_i)_{i=1}^{P} \ | \ c_i, d_i \geq 0 \quad \forall i \in [P], \quad \sum_{i=1}^{P} D_iX\bar{u}_i c_i - D_iX\bar{v}_i d_i = y^{*}\right\} \subseteq \mathbb{R}^{2dP},
\end{equation}
where $y^{*}$ is the unique optimal fit satisfying $\mathcal{C}_y = \{y^{*}\}$.
\end{appxtheorem}
\begin{proof}
Let's note $\Theta^{*}$ as the solution set of \eqref{eq:convex_twolayer_opt}. Also, fix $\nu^{*}$ to be any dual optimum. The directions $\bar{u}_i, \bar{v}_i$ are uniquely determined from \cref{p4:OptimalDirections}. Define $\mathcal{P}^{*}$ to be the set defined in \eqref{eq:polytope}: note the dependence of $\mathcal{P}^{*}$ has with $\nu^{*}$ (though we will see that for any choice of $\nu^{*}$, $\mathcal{P}^{*}_{\nu^{*}} = \Theta^{*}$ and the choice of $\nu^{*}$ does not matter).

We first show that $\Theta^{*} \subseteq \mathcal{P}^{*}_{\nu^{*}}$. Take a point $(u_i^{*}, v_i^{*})_{i=1}^{P} \in \Theta^{*}$. We first know that $\sum_{i=1}^{P} D_iX(u_i^{*} - v_i^{*}) = y^{*}$ from \cref{p3:uniquefit}. What we would like to do is showing the existence of $c_i, d_i$ that satisfies
\[
c_i \geq 0,\ \  u_i^{*} = c_i\bar{u}_i,\ \  d_i \geq 0,\ \  v_i^{*} = d_i\bar{v}_i, 
\]
where $\bar{u}_i, \bar{v}_i$ are
\[
\bar{u}_i = 
\argmin_{u \in \mathcal{S}_i} {\nu^{*}}^{T}D_iXu\ \ \mathrm{if} \ \  \min_{u \in \mathcal{S}_i} {\nu^{*}}^{T}D_iXu = -\beta,\ \  0 \ \   otherwise,
\]
\[
\bar{v}_i = 
\argmin_{v \in \mathcal{S}_i} -{\nu^{*}}^{T}D_iXv\ \ \mathrm{if} \ \  \min_{v \in \mathcal{S}_i} -{\nu^{*}}^{T}D_iXv = -\beta,\ \ 0 \ \   otherwise.
\]
Consider the Lagrangian
\[
\mathcal{L}((u_i, v_i)_{i=1}^{P}, z, \nu) = L(z, y) - \nu^{T}z + \sum_{i=1}^{P} (\beta \lVert u_i \rVert_2 + \nu^{T}D_iXu_i) + \sum_{i=1}^{P} (\beta \lVert v_i \rVert_2 - \nu^{T}D_iXv_i),
\]
where $u_i, v_i \in \mathcal{K}_i$. We can see that 
\[
\min_{u_i, v_i \in \mathcal{K}_i, z} \max_{\nu} \mathcal{L}((u_i, v_i)_{i=1}^{P}, z, \nu) = \max_{\nu} \min_{u_i, v_i \in \mathcal{K}_i, z} \mathcal{L}((u_i, v_i)_{i=1}^{P}, z, \nu),
\]
because $\nu$ is the dual variable that is only related to linear constraints. We can prove the fact rigorously by following the reasoning in \cite{boyd2004convex}. We prove the fact for completeness.

First, we define the set 
\[\mathcal{A} = \{(w - \sum_{i=1}^{P} D_iX(u_i-v_i), t) \ | \ u_i, v_i \in \mathcal{K}_i, L(w, y) + \beta \sum_{i=1}^{P} \lVert u_i \rVert_2 + \lVert v_i \rVert_2 \leq t\},
\]
where $\mathcal{A} \subseteq \mathbb{R}^{n} \times \mathbb{R}$. s $\mathcal{A}$ is a convex set. Now, denote the optimal value of problem in \eqref{eq:convex_twolayer_opt} as $p^{*}$. When we define 
\[
\mathcal{B} = \{(0,s) \ | \ s < p^{*}\},
\]
it is clear that $\mathcal{A} \cap \mathcal{B} = \emptyset$.\\
By the separating hyperplane theorem, there exists $(\Tilde{\nu},\Tilde{\mu}) \in \mathbb{R}^{n} \times \mathbb{R}$ which is nonzero, $\alpha$ such that
\[
(z,t) \in \mathcal{A} \Rightarrow \Tilde{\nu}^{T}z + \Tilde{\mu}t \geq \alpha \geq \Tilde{\mu}p^{*},
\]
and we also know $\Tilde{\mu} \geq 0$: else $t \rightarrow \infty$ and contradiction follows. If $\Tilde{\mu}>0$ we have $\mathcal{L}((u_i, v_i)_{i=1}^{P}, z, \Tilde{\nu}/\Tilde{\mu}) \geq p^{*}$, and strong duality follows. If $\Tilde{\mu} = 0$, we conclude that for all $(u_i, v_i)_{i=1}^{P}, z$ we have that $(\Tilde{\nu})^{T}(z - \sum_{i=1}^{P} D_iX(u_i - v_i)) \geq 0$, which is simply impossible. Hence, $\Tilde{\mu} > 0$ and strong duality holds.

Moreover, the dual problem 
\[
\max_{\nu} \min_{(u_i, v_i)_{i=1}^{P}, z} \mathcal{L}((u_i, v_i)_{i=1}^{P}, z, \nu)
\]
writes 
\[
\mathrm{maximize} -L^{*}(\nu) \ \ \mathrm{subject} \ \ \mathrm{to} \ \ \beta \lVert u \rVert_2 \geq |\nu^{T}D_iXu| \ \ \forall u \in \mathcal{K}_i, i \in [P].
\]
The reason is the following: suppose for some $\nu' \in \mathbb{R}^{n}$, there exists $u_i'$ that satisfies $u_i' \in \mathcal{K}_i$ and $\nu'^{T}D_iXu_i' + \beta \lVert u_i' \rVert_2 < 0$. As we can scale $t \rightarrow \infty$ for $tu_i'$ to see that for that $\nu'$, $\min_{(u_i, v_i)_{i=1}^{P}, z} \mathcal{L}((u_i, v_i)_{i=1}^{P}, z, \nu') = -\infty$. Hence, this $\nu'$ cannot be the dual optimum. This means we only need to see the $\nu$ that satisfies $\nu^{T}D_iXu + \beta \lVert u \rVert_2 \geq 0$ for all $u \in \mathcal{K}_i, i \in [P]$. Similarly, we only need to see $\nu$ that satisfies $-\nu^{T}D_iXu + \beta \lVert u \rVert_2 \geq 0$ for all $u \in \mathcal{K}_i, i \in [P]$. Hence, $\nu^{*}$ is the maximizer of
\[
\max_{\nu} \min_{z} L(z, y) - \nu^{T} z \ \ \ \ \mathrm{subject} \ \ \mathrm{to} \ \ \beta \lVert u \rVert_2 \geq |\nu^{T}D_iXu| \ \ \forall u \in \mathcal{K}_i, i \in [P],
\]
and the rest follows.

As strong duality holds, for any primal optimum $((u_i^{*}, v_i^{*})_{i=1}^{P},y^{*})$, the function $\mathcal{L}((u_i, v_i)_{i=1}^{P}, z, \nu^{*})$ attains minimum at $((u_i^{*}, v_i^{*})_{i=1}^{P},y^{*})$ \cite{boyd2004convex}. Note that $z^{*}$ is always $y^{*}$ due to \cref{p3:uniquefit}, and replaced by it.

Now, as 
\[
\mathcal{L}((u_i, v_i)_{i=1}^{P}, z, \nu^{*}) = L(z, y) - {\nu^{*}}^{T}z + \sum_{i=1}^{P} (\beta \lVert u_i \rVert_2 + {\nu^{*}}^{T}D_iXu_i) + \sum_{i=1}^{P} (\beta \lVert v_i \rVert_2 - {\nu^{*}}^{T}D_iXv_i),
\]
each $u_i^{*}$ becomes the minimizer of $\beta \lVert u \rVert_2 + {\nu^{*}}^{T} D_iXu$ subject to $u \in \mathcal{K}_i$ and each $v_i^{*}$ becomes the minimizer of $\beta \lVert u \rVert_2 - {\nu^{*}}^{T} D_iXu$ subject to $u \in \mathcal{K}_i$. Recall that $\nu^{*}$ is a vector that satisfies $ \beta \lVert u \rVert_2 \geq |\nu^{T}D_iXu| \ \ \forall u \in \mathcal{K}_i, i \in [P]$, and when $u = 0$, both $\beta \lVert u \rVert_2 + {\nu^{*}}^{T} D_iXu$ and $\beta \lVert u \rVert_2 - {\nu^{*}}^{T} D_iXu$ has function value 0. This implies that the minimum of both $\beta \lVert u \rVert_2 + {\nu^{*}}^{T} D_iXu$ and $\beta \lVert u \rVert_2 - {\nu^{*}}^{T} D_iXu$ subject to $u \in \mathcal{K}_i$ is 0 for all $i \in [P]$. As $((u_i^{*}, v_i^{*})_{i=1}^{P}, y^{*})$ minimizes $\mathcal{L}((u_i, v_i)_{i=1}^{P}, z, \nu^{*})$,
\[
\beta \lVert u_i^{*} \rVert_2 + {\nu^{*}}^{T} D_iXu_i^{*} = 0, \quad \beta \lVert v_i^{*} \rVert_2 - {\nu^{*}}^{T} D_iXv_i^{*} = 0.
\]
We will find $c_i \geq 0$, and finding $d_i$ will be identical. Let's divide into cases. \\
i) When $u_i^{*} = 0$, let $c_i = 0$ to find $c_i \geq 0$ that satisfies $u_i^{*} = c_i\bar{u}_i$.\\
ii) When $u_i^{*} \neq  0$, notice that
\[
\min_{u \in \mathcal{S}_i} {\nu^{*}}^{T}D_iXu = -\beta \neq 0,
\]
and the optimum is attained at $u_i^{*} / \lVert u_i^{*} \rVert_2$. To see this, recall that $(\nu^{*})^{T}D_iXu +\beta \lVert u \rVert_2 \geq 0$ and $(\nu^{*})^{T}D_iXu/\lVert u \rVert_2 \geq -\beta$ for all nonzero $u \in \mathcal{K}_i$, which implies that $\min_{u \in \mathcal{S}_i} (\nu^{*})^{T}D_iXu = -\beta$. Furthermore, by \cref{p4:OptimalDirections}, there exists a unique minimizer of the problem $\min_{u \in \mathcal{S}_i} (\nu^{*})^{T}D_iXu$, and $u_i^{*} / \lVert u_i^{*} \rVert_2 = \bar{u}_i$. Hence choosing $c_i = \lVert u_i^{*} \rVert_2$ gives $c_i \geq 0$ that satisfies $u_i^{*} = c_i\bar{u}_i$. Hence, we have found $c_i \geq 0$, $d_i \geq 0$ that satisfies $u_i^{*} = c_i\bar{u}_i, v_i^{*} = d_i\bar{v}_i$ and $\sum_{i=1}^{P} D_iX(u_i^{*} - v_i^{*}) =$$ \sum_{i=1}^{P} D_iX(c_i\bar{u}_i - d_i\bar{v}_i) = y^{*}$, meaning $(u_i^{*}, v_i^{*})_{i=1}^{P} \in \mathcal{P}^{*}$.
\newline\newline
Now, we show that $\mathcal{P}^{*}_{\nu^{*}} \subseteq \Theta^{*}$. Take an element $(c_i\bar{u}_i, d_i\bar{v}_i)_{i=1}^{P} \in \mathcal{P}^{*}_{\nu^{*}}$. It is clear that $c_i\bar{u}_i \in \mathcal{C}_i$, $d_i\bar{v}_i \in \mathcal{D}_i$. If $\bar{u}_i \neq 0$, we know that $(\nu^{*})^{T}D_iX\bar{u}_i = -\beta$. Similarly, if $\bar{v}_i \neq 0$, we know that $-(\nu^{*})^{T}D_iX\bar{v}_i = -\beta$. Also, if $\bar{u}_i, \bar{v}_i \neq 0$, $\lVert \bar{u}_i \rVert_2 = 1, \lVert \bar{v}_i \rVert_2 = 1$, see the proof of \cref{p4:OptimalDirections} why this holds. Now, let's calculate the objective of $(c_i\bar{u}_i, d_i\bar{v}_i)_{i=1}^{P}$. We know $\sum_{i=1}^{P} D_iX(c_i\bar{u}_i - d_i\bar{v}_i) = y^{*}$, hence the objective becomes
\[
L(y^{*}, y) + \beta \sum_{\bar{u}_i \neq 0} c_i + \beta \sum_{\bar{v}_i \neq 0} d_i,
\]
using $\lVert \bar{u}_i \rVert_2 = 1, \lVert \bar{v}_i \rVert_2 = 1$. Now, as $\sum_{i=1}^{P} D_iX(c_i\bar{u}_i - d_i\bar{v}_i) = y^{*}$, multiplying $(\nu^{*})^{T}$ on both sides gives $\sum_{\bar{u}_i \neq 0} c_i + \sum_{\bar{v}_i \neq 0} d_i = -\langle \nu^{*}, y^{*} \rangle / \beta$. Hence, the calculated objective becomes
\[
L(y^{*}, y) - \langle \nu^{*}, y^{*} \rangle,
\]
hence for all points in $\mathcal{P}^{*}_{\nu^{*}}$, the objective becomes constant. We already know that $\Theta^{*} \subseteq \mathcal{P}^{*}_{\nu^{*}}$. This means all points in $\mathcal{P}^{*}_{\nu^{*}}$ have the same optimal objective value, and $\mathcal{P}^{*}_{\nu^{*}} \subseteq \Theta^{*}$. This finishes the proof.
\end{proof}

\begin{appxcorollary}
\label{appx:localpolytope}
(\cref{c1:localpolytope} of the paper)
Consider the optimization problem in \eqref{eq:nonconvex_twolayer_opt}. Denote the set of Clarke stationary points of \eqref{eq:nonconvex_twolayer_opt} as $\Theta_{C}$. The set
\[
\bigcup_{j=1}^{m} \Big\{\frac{w_j}{\lVert w_j \rVert_2} \ |\ (w_i, \alpha_i)_{i=1}^{m} \in \Theta_{C},\ w_j \neq 0 \Big\},
\]
is finite, and each direction is determined by the dual optimum of the subsampled convex program.
\end{appxcorollary}
\begin{proof}
From \cite{ergen2023convex}, we know that all Clarke stationary points of \eqref{eq:nonconvex_twolayer_opt} have a corresponding subsampled convex problem. More specifically, for any $(w_i, \alpha_i)_{i=1}^{m} \in \Theta_{C}$, we have a convex program with subsampled arrangement patterns $\Tilde{D}_1, \Tilde{D}_2, \cdots \Tilde{D}_m \in \{D_i\}_{i=1}^{P}$,
\[
\min_{u_i, v_i \in \Tilde{\mathcal{K}}_i }L\Big(\sum_{i=1}^{m} \Tilde{D}_iX(u_i - v_i), y\Big) + \beta \Big(\sum_{i=1}^{m} \lVert u_i \rVert_2 + \lVert v_i \rVert_2 \Big),
\]
and a solution mapping 
\[
(w_i, \alpha_i) = \begin{cases}
(u_i/\sqrt{\lVert u_i \rVert_2},\ \sqrt{\lVert u_i \rVert_2}) \quad if \quad u_i \neq 0,\\
(v_i/\sqrt{\lVert v_i \rVert_2},\ -\sqrt{\lVert v_i \rVert_2}) \quad if \quad v_i \neq 0.
\end{cases}
\]
Hence, the set of first-layer directions of Clarke stationary points is contained in the set of optimal directions of the subsampled convex program. As there are only finitely many subsampled convex programs, and each convex program has a unique set of fixed optimal directions, we know that the set 
\[
\bigcup_{j=1}^{m} \Big\{\frac{w_j}{\lVert w_j \rVert_2} \ |\ (w_i, \alpha_i)_{i=1}^{m} \in \Theta_{C},\ w_j \neq 0 \Big\},
\]
is a finite set. Furthermore, applying \cref{t1:optpolytope} to the subsampled convex program leads to the wanted result.
\end{proof}

\newpage
\section{Proofs in \cref{3.2.STAIRCASE}}
\label{pf4}

In this section we prove \cref{t2:staircasecon}, using the cardinality-constrained optimal polytope $\mathcal{P}^{*}(m)$ defined in \eqref{eq:cardconstpolytope}. One thing to have in mind is that we are not trying to prove that $\mathcal{P}^{*}(m)$ and $\Theta^{*}(m)$, the optimal set of the original problem in \eqref{eq:nonconvex_twolayer_opt} with width $m$, are homeomorphic. Rather, we will argue that certain mappings enable us to link the connectivity behavior between $\mathcal{P}^{*}(m)$ and $\Theta^{*}(m)$ to arrive at \cref{t2:staircasecon}.

We first start by defining some relevant concepts. As a starting point, we define the cardinality of a solution.
\begin{appxdefinition}
The cardinality of a solution $(u_i, v_i)_{i=1}^{P} \in \mathcal{P}^{*}$ is defined as
\[
\mathrm{card}((u_i, v_i)_{i=1}^{P}) = \sum_{i=1}^{P} 1(u_i \neq 0) + 1(v_i \neq 0).
\]
\end{appxdefinition}

We introduce the cardinality-constrained optimal polytope again.

\begin{appxdefinition}
The cardinality constrained optimal polytope $\mathcal{P}^{*}(m)$ is defined as the set
\begin{equation}
\mathcal{P}^{*}(m) := \left\{(u_i, v_i)_{i=1}^{P} \ | \ (u_i, v_i)_{i=1}^{P} \in \mathcal{P}^{*}, \mathrm{card}((u_i, v_i)_{i=1}^{P}) \leq m \right\} \subseteq \mathbb{R}^{2dP}.
\end{equation}
\end{appxdefinition}

One remark is that the largest possible cardinality of $\mathcal{P}^{*}$ may be huge: in worst case, it could be that the largest cardinality is in a scale of $P$, which is the number of all possible arrangement patterns. In general, the number of arrangement patterns are $O(n^{d})$, hence $\mathcal{P}^{*}(m)$ consists of a very small portion of $\mathcal{P}^{*}$.

The next concept we introduce is the notion of irreducible solutions. This set can be understood as a set of minimal networks discussed in \cite{mishkin2023optimal}, and is used to define the critical widths of the staircase. 

\begin{appxdefinition}
The irreducible solution set is defined as the set
\begin{equation}
\label{eq:irrpolytope}
\mathcal{P}^{*}_{irr} = \left\{(u_i, v_i)_{i=1}^{P} \ | \ (u_i, v_i)_{i=1}^{P} \in \mathcal{P}^{*}, \ \{D_iXu_i\}_{u_i \neq 0} \cup \{ D_iXv_i\}_{v_i \neq 0} \ \mathrm{linearly~independent} \right\}.
\end{equation}
\end{appxdefinition}

One intuition of the irreducible solution set is that it is the set of ``smallest solutions \cite{mishkin2023optimal}": if the set $\{D_iXu_i\}_{u_i \neq 0} \cup \{ D_iXv_i\}_{v_i \neq 0}$ is linearly dependent we can find a strictly smaller conic combination using the vectors from $\{D_iXu_i\}_{u_i \neq 0} \cup \{ -D_iXv_i\}_{v_i \neq 0}$. The set $\mathcal{P}^{*}_{irr}$ can be understood as a collection of solutions obtained from repeating this ``pruning step" - a step that finds smaller solutions using linear dependence. A more rigorous definition of pruning is the following. Note that the existence of $m$ with a nonzero solution in $\Theta^{*}(m)$ implies $\mathcal{P}^{*}_{irr} \neq \emptyset$, which is equivalent to having a nonzero element in $\mathcal{P}^{*}$. We assume the nontrivial case where $\mathcal{P}^{*}$ has a nonzero element from now on.
\begin{appxproposition}
\label{prop:notrivcase}
The following three statements are equivalent:\\\\
i) There exists $m$ that satisfies $(w_i, \alpha_i)_{i=1}^{m} \neq 0 \in \Theta^{*}(m)$\\
ii) There exists $(u_i, v_i)_{i=1}^{P} \neq 0 \in\mathcal{P}^{*}$\\
iii) $\mathcal{P}^{*}_{irr} \neq \emptyset$
\end{appxproposition}
\begin{proof}
i) $\Rightarrow$ ii): First assume $m \geq 2P$. Consider $\Phi((w_i,\alpha_i)_{i=1}^{m}) = (u_i,v_i)_{i=1}^{P}$, where $\Phi$ is defined in \cref{def9:Defphi}. We know that $(u_i, v_i)_{i=1}^{P}$ is not a solution of zeros, as if it were the case, $\sum_{i=1}^{m}(Xw_i)_{+}\alpha_i = 0$ and $(0,0)_{i=1}^{m}$ would have a strictly smaller objective, contradicting $(w_i,\alpha_i)_{i=1}^{m} \in \Theta^{*}(m)$. Now when we write the optimal value of the nonconvex objective in \cref{eq:nonconvex_twolayer_opt} with $p_m^{*}$, and the optimal value of the convex objective in \cref{eq:convex_twolayer_opt} with $p^{*}$, $p_m^{*} \leq p^{*}$. Also, when we write $L$ as the convex objective, we know that $L((u_i, v_i)_{i=1}^{P}) = p_m^{*} \geq p^{*}$. Hence we have found a nonzero  $(u_i, v_i)_{i=1}^{P}$ that has objective value $p^{*}$, which means there is a point in $\mathcal{P}^{*}$ which is nonzero.\\
Now let $m < 2P$. Assume $\mathcal{P}^{*}= \{(0,0)_{i=1}^{P}\}$. We know that $p^{*} \leq p^{*}_m$, and $p^{*} = \frac{1}{2}\lVert y \rVert_2^2$, hence $\frac{1}{2}\lVert y \rVert_2^2 \leq p_m^{*}$. On the other hand, the value $\frac{1}{2}\lVert y \rVert_2^2$ is achievable by setting $(0,0)_{i=1}^{m}$ - which means $p_m^{*} = p^{*}$. With the same logic, $\Phi((w_i,\alpha_i)_{i=1}^{m})$ is not a solution of zeros and its objective value is same as $p_m^{*}$ which is $p^{*}$ - leading to a contradiction that $\mathcal{P}^{*}= \{(0,0)_{i=1}^{P}\}$ since $(u_i, v_i)_{i=1}^{P}$ is nonzero and in $\mathcal{P}^{*}$.\\
ii) $\Rightarrow$ iii): We use the pruning step in \cref{def:pruning} to find an element in $\mathcal{P}^{*}_{irr}$. Note that the pruning step does not end with $0$, as $(u_i,v_i)_{i=1}^{P}$ is not zero and the pruning step should not decrease the objective.\\
iii) $\Rightarrow$ ii): As $\mathcal{P}^{*}_{irr} \neq \emptyset$, there is a nonzero solution $nz \in \mathcal{P}^{*}_{irr}$. As $\mathcal{P}^{*}_{irr} \subseteq \mathcal{P}^{*}$, we know the existence of a nonzero solution in $\mathcal{P}^{*}$.\\
ii) $\Rightarrow$ i): Set $m = 2P$ and consider $\Psi((u_i,v_i)_{i=1}^{P})$, where $\Psi$ is defined in \cref{def8:Defpsi}.
\end{proof}
\begin{appxdefinition}
\label{def:pruning}
(\cite{mishkin2023optimal}) Pruning a solution $(u_i, v_i)_{i=1}^{P} \in \mathcal{P}^{*}$ means repeating:\\
1. Finding a nontrivial linear combination 
\[
\sum_{u_i \neq 0} c_iD_iXu_i + \sum_{v_i \neq 0} d_iD_iXv_i = 0,
\]
and without loss of generality assume $d_1 > 0$.\\
2. Constructing a solution with strictly less cardinality $(u'_i, v'_i)_{i=1}^{P} = ((1+c_it)u_i, (1-d_it)v_i)_{i=1}^{P}$, where $t = \min\{\min_{c_i < 0} -\frac{1}{c_i}, \min_{d_i > 0} \frac{1}{d_i}\}$, and $c_i, d_i$ are defined to be the coefficients defined in 1 when $u_i, v_i \neq 0$ and 0 otherwise.\\
until the set $\{D_iXu_i\}_{u_i \neq 0} \cup \{ D_iXv_i\}_{v_i \neq 0}$ is linearly independent.
\end{appxdefinition}

The notion of minimality gives a discrete structure in $\mathcal{P}^{*}_{irr}$, hence the phase transitional behavior follows. The two critical widths of interest are the minimum / maximum cardinality of $\mathcal{P}^{*}_{irr}$. We denote 
\[
m^{*} := \min_{(u_i, v_i)_{i=1}^{P} \in \mathcal{P}^{*}_{\mathrm{irr}}} \mathrm{card}((u_i, v_i)_{i=1}^{P}), \ \ 
M^{*} := \max_{(u_i, v_i)_{i=1}^{P} \in \mathcal{P}^{*}_{\mathrm{irr}}} \mathrm{card}((u_i, v_i)_{i=1}^{P}).
\]
\begin{appxremark}
\label{appx:remarkalgo}
These widths can be found computationally by the following scheme: for $t = 1$ to $n$, choose $t$ vectors from the set $\{D_iX\bar{u}_i\}_{i=1}^{P} \cup \{D_iX\bar{v}_i\}_{i=1}^{P}$, where $\bar{u}_i, \bar{v}_i$ are optimal directions defined in \cref{t1:optpolytope}. Check if they are linearly independent and can express $y^{*}$ as the conic combination of the $t$ vectors. The first value of $t$ that meets both criteria becomes $m^{*}$, and $M^{*}$ will be updated each time $t$ meets both criteria until $t$ becomes $n$.
\end{appxremark}

%%%%%%%%%%%%%%%%%%%%%%%%%%%%%%%%%%%%%%%%%%%%%%%%%%%%%%%%%%%%%%%%%%%%%%%%%%
%%%%%%%%%%% Text that used to describe the intuition for proof %%%%%%%%%%%

%A solution $(u_i, v_i)_{i=1}^{P} \in \mathcal{P}^{*}_{irr}$ has a property that if the solution has cardinality $m$, it will be an isolated point in $\mathcal{P}^{*}(m)$. In other words, there is no path in $\mathcal{P}^{*}(m)$ that starts from $(u_i, v_i)_{i=1}^{P}$ and ends in a different point $(u_i', v_i')_{i=1}^{P} \in \mathcal{P}^{*}(m)$. The main intuition follows from the optimal polytope characterization: every solution has a fixed direction for each $i \in [P]$. When we draw a path in $\mathcal{P}^{*}(m)$ that connects $(u_i, v_i)_{i=1}^{P}$ with $(u_i', v_i')_{i=1}^{P}$, there should be an instant where we have a different solution $(p_iu_i, q_iv_i)_{i=1}^{P}$ with $p_i, q_i > 0$, and we have contradiction that $(u_i, v_i)_{i=1}^{P} \in \mathcal{P}^{*}_{irr}$.

The two specific discontinuity results we can achieve are the following:

\begin{appxproposition}
\label{p5:Finite_m*}
$\mathcal{P}^{*}(m^{*})$ is a finite set.
\end{appxproposition}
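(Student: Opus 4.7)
The plan is to show that every element of $\mathcal{P}^{*}(m^{*})$ is forced to lie in $\mathcal{P}^{*}_{\mathrm{irr}}$ with cardinality exactly $m^{*}$, and then to count: there are only finitely many admissible supports, and for each support the coefficients are uniquely determined by a linearly independent system.

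First I would argue by contradiction that $\mathcal{P}^{*}(m^{*}) \subseteq \mathcal{P}^{*}_{\mathrm{irr}}$. Take any $(u_i, v_i)_{i=1}^{P} \in \mathcal{P}^{*}(m^{*})$ and suppose it is not irreducible. Then Definition~\ref{def:pruning} (pruning) applies and produces another element of $\mathcal{P}^{*}$ of strictly smaller cardinality. Iterating pruning must terminate, since cardinality is a nonnegative integer that strictly decreases at each step, and it can only terminate at an element of $\mathcal{P}^{*}_{\mathrm{irr}}$. The resulting irreducible solution has cardinality strictly less than $\mathrm{card}((u_i,v_i)) \le m^{*}$, contradicting the definition of $m^{*}$ as the minimum cardinality over $\mathcal{P}^{*}_{\mathrm{irr}}$. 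The same minimality argument shows that no element of $\mathcal{P}^{*}(m^{*})$ can have cardinality strictly less than $m^{*}$, so every element in fact has cardinality exactly $m^{*}$.

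Next I would invoke Theorem~\ref{t1:optpolytope}: every element of $\mathcal{P}^{*}$ has the form $(c_i \bar{u}_i, d_i \bar{v}_i)_{i=1}^{P}$ with $c_i, d_i \ge 0$, where the directions $\bar{u}_i, \bar{v}_i$ are uniquely fixed by the dual problem. Hence an element of $\mathcal{P}^{*}(m^{*})$ is determined by (a) a choice of support $S \subseteq \{1,\ldots,P\} \sqcup \{1,\ldots,P\}$ of size exactly $m^{*}$ indicating which $c_i$ and which $d_i$ are strictly positive, and (b) the values of those coefficients. Since the supports are drawn from a set of size $2P$, there are at most $\binom{2P}{m^{*}}$ possible supports, which is finite.

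Finally, for each fixed support $S$, the irreducibility condition forces $\{D_i X \bar{u}_i : i \text{ with } c_i>0\} \cup \{D_i X \bar{v}_i : i \text{ with } d_i>0\}$ to be linearly independent. Thus the linear system
\[
\sum_{c_i > 0} c_i\, D_i X \bar{u}_i \;-\; \sum_{d_i > 0} d_i\, D_i X \bar{v}_i \;=\; y^{*}
\]
has at most one solution in $(c_i, d_i)$, so at most one element of $\mathcal{P}^{*}(m^{*})$ has support $S$. Since $\mathcal{P}^{*}(m^{*})$ is covered by finitely many such singletons, it is finite. The argument is almost entirely bookkeeping; the only delicate point is verifying termination of pruning with a strict cardinality decrease, which is immediate from Definition~\ref{def:pruning}.
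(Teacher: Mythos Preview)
Your proof is correct and follows essentially the same approach as the paper's: both use pruning together with the minimality of $m^{*}$ to force linear independence of the active vectors, and then use the fixed-direction characterization from Theorem~\ref{t1:optpolytope} to conclude that each admissible support determines at most one solution. The paper organizes the argument slightly differently---it starts by assuming two distinct points share a support and derives linear dependence to feed into pruning---whereas you first establish $\mathcal{P}^{*}(m^{*}) \subseteq \mathcal{P}^{*}_{\mathrm{irr}}$ and then count, but the ingredients are identical.
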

\begin{proof}
Consider two points $(u_i, v_i)_{i=1}^{P}, (u_i', v_i')_{i=1}^{P} \in \mathcal{P}^{*}(m^{*})$. Suppose the two points have the same support, i.e. $u_i \neq 0 \Leftrightarrow u_i' \neq 0$ and $v_i \neq 0 \Leftrightarrow v_i' \neq 0$ for $i \in [P]$. We know that as $\mathcal{P}^{*}(m^{*}) \subseteq \mathcal{P}^{*}$, 
\[
\sum_{i=1}^{P} D_iX(u_i - v_i) = y^{*} = \sum_{i=1}^{P} D_iX(u_i' - v_i').
\]
Now, let's write the indices $\{i | u_i \neq 0\} = \{a_1, a_2, \cdots a_t\}$, $\{i|v_i \neq 0\} = \{b_1, b_2, \cdots b_s\}$. We have that $t + s \leq m^{*}$ as $(u_i, v_i)_{i=1}^{P} \in \mathcal{P}^{*}(m^{*})$. From \cref{t1:optpolytope}, we know the existence of $c_{a_i}, c'_{a_i} \geq 0$ for $i \in [t]$ and $d_{b_i}, d'_{b_i} \geq 0$ for $i \in [s]$ that satisfies
\[
u_{a_i} = c_{a_i}\bar{u}_{a_i}, u'_{a_i} = c'_{a_i}\bar{u}_{a_i}, \quad \forall i \in [t],
\]
\[
v_{b_i} = d_{b_i}\bar{v}_{b_i}, v'_{b_i} = d'_{b_i}\bar{v}_{b_i}, \quad \forall i \in [s].
\]
This means that 
\[
\sum_{i=1}^{t} c_{a_i}D_{a_i}X\bar{u}_{a_i} - \sum_{i=1}^{s} d_{b_i}D_{b_i}X\bar{v}_{b_i} = \sum_{i=1}^{t} c'_{a_i}D_{a_i}X\bar{u}_{a_i} - \sum_{i=1}^{s} d'_{b_i}D_{b_i}X\bar{v}_{b_i} = y^{*},
\]
and as $c_{a_i}, d_{b_i}$ s are not all the same, we have that the set
\[
\{D_{a_i}X\bar{u}_{a_i}\}_{i=1}^{t} \cup \{D_{b_i}X\bar{v}_{b_i}\}_{i=1}^{s}
\]
is linearly dependent. Now we apply pruning defined in \cref{def:pruning} to find an irreducible solution with cardinality strictly less than $t+s = m^{*}$, which is a contradiction to the minimality of $m^{*}$. This implies that two different points in $\mathcal{P}^{*}(m^{*})$ cannot have identical support, and the number of points in the set is upper bounded with the number of possible support, which is finite. More specifically, 
\[
|\mathcal{P}^{*}(m^{*})| \leq \sum_{j=1}^{m^{*}} {2P \choose j}.
\]
\end{proof}

\begin{appxproposition}
\label{p6:Disconnect_M*}
$\mathcal{P}^{*}(M^{*})$ has an isolated point, i.e. it has a point $p \in \mathcal{P}^{*}(M^{*})$ that has no path in $\mathcal{P}^{*}(M^{*})$ from $p$ to a different point $p'$.
\end{appxproposition}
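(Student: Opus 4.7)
The plan is to pick an element of $\mathcal{P}^{*}_{\mathrm{irr}}$ that achieves the maximum cardinality $M^{*}$ and show it is topologically isolated in $\mathcal{P}^{*}(M^{*})$. This is strictly stronger than the no-path condition in the statement and implies it by a standard clopen argument on $[0,1]$. Concretely, let $p = (c_i \bar{u}_i, d_i \bar{v}_i)_{i=1}^{P} \in \mathcal{P}^{*}_{\mathrm{irr}}$ with $\mathrm{card}(p) = M^{*}$, where $\bar{u}_i, \bar{v}_i$ are the optimal directions from \cref{t1:optpolytope}, and write $S_u = \{i : c_i > 0\}$, $S_v = \{i : d_i > 0\}$, so that $|S_u| + |S_v| = M^{*}$.

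The first key step is to record a uniqueness fact at the level of coefficients. Because $p \in \mathcal{P}^{*}_{\mathrm{irr}}$, the collection $\{D_iX\bar{u}_i\}_{i \in S_u} \cup \{D_iX\bar{v}_i\}_{i \in S_v}$ is linearly independent, so the fit equation
\[
\sum_{i \in S_u} c'_i\, D_iX\bar{u}_i \;-\; \sum_{i \in S_v} d'_i\, D_iX\bar{v}_i \;=\; y^{*}
\]
has a unique solution in the $c'_i, d'_i$. Combined with the rigid direction structure of \cref{t1:optpolytope}, this shows that $p$ is the unique element of $\mathcal{P}^{*}$ whose support is contained in $S_u \cup S_v$.

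The second key step is a support-rigidity argument. Since $c_i, d_i$ are strictly positive on $S_u \cup S_v$, I would choose a Euclidean neighborhood $U$ of $p$ in $\mathbb{R}^{2dP}$ so small that every point in $U$ still has non-zero entries at every index of $S_u \cup S_v$. For any $q \in U \cap \mathcal{P}^{*}(M^{*})$ this immediately gives $\mathrm{supp}(q) \supseteq S_u \cup S_v$, while the cardinality constraint $\mathrm{card}(q) \leq M^{*} = |S_u| + |S_v|$ forces equality of supports. Applying \cref{t1:optpolytope} to $q$ then rewrites the fit condition as exactly the linear system above, and uniqueness yields $q = p$. Hence $U \cap \mathcal{P}^{*}(M^{*}) = \{p\}$, so $p$ is topologically isolated. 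For the no-path conclusion, I would take any continuous $f : [0,1] \to \mathcal{P}^{*}(M^{*})$ with $f(0) = p$ and observe that $\{t : f(t) = p\}$ is closed by continuity and open by applying the neighborhood argument at each such $t$; connectedness of $[0,1]$ then gives $f \equiv p$.

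The main obstacle I expect is cleanly separating the two halves of the support-rigidity step: continuity forbids the support of nearby points from shrinking below $S_u \cup S_v$, while the width constraint in $\mathcal{P}^{*}(M^{*})$ forbids it from growing past $S_u \cup S_v$. Both halves are essential, and this is precisely why the definition of $M^{*}$ takes the maximum over $\mathcal{P}^{*}_{\mathrm{irr}}$ rather than over all of $\mathcal{P}^{*}$: linear independence is what makes the local linear system have a unique solution, and extremality of the cardinality is what freezes the support.
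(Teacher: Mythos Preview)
Your proposal is correct and takes essentially the same approach as the paper: both select a maximal-cardinality element of $\mathcal{P}^{*}_{\mathrm{irr}}$ and use the combination of continuity (which prevents the support from shrinking locally) and the cardinality cap $M^{*}$ (which prevents it from growing) to freeze the support, after which linear independence of $\{D_iX\bar{u}_i\}_{i\in S_u}\cup\{D_iX\bar{v}_i\}_{i\in S_v}$ forces uniqueness of the coefficients. Your packaging via a direct neighborhood argument and a clopen finish is somewhat cleaner than the paper's path-and-infimum presentation, but the mathematical content is the same.
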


\begin{proof}
Take the maximal-cardinality solution from $(u^{\circ}_i, v^{\circ}_i)_{i=1}^{P} \in \mathcal{P}^{*}_{irr}$, namely the solution
\[
(u^{\circ}_i, v^{\circ}_i)_{i=1}^{P} \in \mathcal{P}^{*}, \{D_iXu^{\circ}_i\}_{u^{\circ}_i \neq 0} \cup \{D_iXv^{\circ}_i\}_{v^{\circ}_i \neq 0}\ linearly \ independent,
\]
and $\mathrm{card}((u^{\circ}_i, v^{\circ}_i)_{i=1}^{P}) = M^{*}$. Assume the existence of a continuous function $f : [0,1] \rightarrow \mathcal{P}^{*}(M^{*})$ satisfying
\[
f(0) = (u^{\circ}_i, v^{\circ}_i)_{i=1}^{P},\ f(1) = (u'_i, v'_i)_{i=1}^{P},\ f(0) \neq f(1).
\]
Now, write $f(t) = (u_i(t), v_i(t))_{i=1}^{P}$ and define 
\[
c_i(t) = \begin{cases}
0 \quad if \quad \bar{u}_i = 0\\
\lVert u_i(t) \rVert_2 \quad otherwise,
\end{cases}
\]
\[
d_i(t) = \begin{cases}
0 \quad if \quad \bar{v}_i = 0\\
\lVert v_i(t) \rVert_2 \quad otherwise,
\end{cases}
\]
For definition of $\bar{u}_i, \bar{v}_i$, see \cref{t1:optpolytope}. Some things to notice are:\\
i) The functions $c_i(t), d_i(t): [0,1] \rightarrow \mathbb{R}$ are continuous.\\
ii) $f(t) = (c_i(t)\bar{u}_i, d_i(t)\bar{v}_i)_{i=1}^{P}$. This holds because if $\bar{u}_i \neq 0$, $\lVert \bar{u}_i \rVert_2 = 1$, and same for $\bar{v}_i$. \\
iii) $\sum_{i=1}^{P} 1(c_i(t) \neq 0) + 1(d_i(t) \neq 0) \leq M^{*}$, and $\sum_{i=1}^{P} 1(c_i(0) \neq 0) + 1(d_i(0) \neq 0) = M^{*}$. The former holds because $f$ is a path in $\mathcal{P}^{*}(M^{*})$, and the latter holds because $(u^{\circ}_i, v^{\circ}_i)_{i=1}^{P}$ has cardinality $M^{*}$. \\
iv) We know that there exists $t' \in [0,1]$ that satisfies $(c_i(t'), d_i(t'))_{i=1}^{P} \neq (c_i(0), d_i(0))_{i=1}^{P}$. It is because $f(0) \neq f(1)$. \\
Based on the observations, let's prove that if there exists such $f$, the set $\{D_iXu^{\circ}_i\}_{u^{\circ}_i \neq 0} \cup \{D_iXv^{\circ}_i\}_{v^{\circ}_i \neq 0}$ is linearly dependent. Thus we will arrive at a contradiction and will be able to show that there is no such $f$, and $(u^{\circ}_i, v^{\circ}_i)_{i=1}^{P}$ is isolated. \\
Let's define $t_1$ as
\[
t_1 = \inf_{t \geq 0} \left\{ t \ | \ \sum_{i=1}^{P} (c_i(0) - c_i(t))^2 + (d_i(0) - d_i(t))^2 > 0\right\}.
\]
In other words, $t_1$ is the instant where $f(0) \neq f(t)$. From observation iv), we know that $t_1 \in [0,1]$. 
Another fact that we can deduce is that 
\begin{equation}
\label{eq:fact1}
\sum_{i=1}^{P} (c_i(0) - c_i(t_1))^2 + (d_i(0) - d_i(t_1))^2 = 0.
\end{equation}
The reason is because if $\sum_{i=1}^{P} (c_i(0) - c_i(t_1))^2 + (d_i(0) - d_i(t_1))^2 > 0$, we can find some $\epsilon$ that will make $\sum_{i=1}^{P} (c_i(0) - c_i(t_1-\epsilon))^2 + (d_i(0) - d_i(t_1-\epsilon))^2 > 0$ because of continuity, which is a contradiction that $t_1$ is the infremum (because we have found a smaller $t$ that makes $\sum_{i=1}^{P} (c_i(0) - c_i(t))^2 + (d_i(0) - d_i(t))^2 > 0$). Hence, for $t \in [0, t_1]$, 
\[
\sum_{i=1}^{P} (c_i(0) - c_i(t))^2 + (d_i(0) - d_i(t))^2 = 0.
\]
At last, we know that for any $\epsilon > 0$, there exists $t_\epsilon \in (t_1, t_1 + \epsilon)$ that satisfies
\begin{equation}
\label{eq:fact2}
\sum_{i=1}^{P} (c_i(0) - c_i(t_\epsilon))^2 + (d_i(0) - d_i(t_\epsilon))^2 > 0.
\end{equation}
If there is no $t \in (t_1, t_1 + \epsilon)$ that satisfies $\sum_{i=1}^{P} (c_i(0) - c_i(t))^2 + (d_i(0) - d_i(t))^2 > 0$ for some $\epsilon > 0$, it means that for all $t \in [0, t_1 + \epsilon_2]$, $\sum_{i=1}^{P} (c_i(0) - c_i(t))^2 + (d_i(0) - d_i(t))^2 = 0$: hence, the infremum should be strictly larger than $t_1$, which is a contradiction.

Now let's prove the claim that if there exists such $f$, the set $\{D_iXu^{\circ}_i\}_{u^{\circ}_i \neq 0} \cup \{D_iXv^{\circ}_i\}_{v^{\circ}_i \neq 0}$ is linearly dependent. From \eqref{eq:fact1}, we know that 
\[
c_i(0) = c_i(t_1), \quad d_i(0) = d_i(t_1) \quad \forall i \in [P].
\]
Take $\epsilon_0 > 0$ sufficiently small so that for all $i \in [P]$ that satisfies $c_i(0) > 0$, $c_i(t) > 0$ for all $t \in [t_1 - \epsilon_0, t_1 + \epsilon_0]$, and for all $i \in [P]$ that satisfies $d_i(0) > 0$, $d_i(t) > 0$ for all $t \in [t_1 - \epsilon_0, t_1 + \epsilon_0]$. Such $\epsilon_0$ exists due to the continuity of $c_i, d_i$. Due to the definition, we know that
\[
M^{*} = \sum_{i=1}^{P} 1(c_i(0) \neq 0) + 1(d_i(0) \neq 0) \leq \sum_{i=1}^{P} 1(c_i(t) \neq 0) + 1(d_i(t) \neq 0) \leq M^{*},
\]
(see observation iii) if any confusion exists), for all $t \in [t_1 - \epsilon_0, t_1 + \epsilon_0]$, and 
\[
\sum_{i=1}^{P} 1(c_i(t) \neq 0) + 1(d_i(t) \neq 0) = M^{*}, \quad \forall t \in [t_1 - \epsilon_0, t_1 + \epsilon_0].
\]
This means that for all $t \in [t_1 - \epsilon_0, t_1 + \epsilon_0],$ we know that $c_i(0) > 0 \Leftrightarrow c_i(t) > 0$ and $d_i(0) > 0 \Leftrightarrow d_i(t) > 0$. 
For that $\epsilon_0$, we can find $t_{\epsilon_0}$ that was defined in \eqref{eq:fact2}. As $t_{\epsilon_0}$ satisfies
\[
\sum_{i=1}^{P} (c_i(0) - c_i(t_{\epsilon_0}))^2 + (d_i(0) - d_i(t_{\epsilon_0}))^2 > 0,
\]
$(c_i(0), d_i(0))_{i=1}^{P} \neq (c_i(t_{\epsilon_0}), d_i(t_{\epsilon_0}))_{i=1}^{P}$. Also, $c_i(0) > 0 \Leftrightarrow c_i(t_{\epsilon_0}) > 0$ and $d_i(0) > 0 \Leftrightarrow d_i(t_{\epsilon_0}) > 0$. Now we have found two different solutions $(c_i(0)\bar{u}_i, d_i(0)\bar{v}_i)_{i=1}^{P}$, $(c_i(t_{\epsilon_0})\bar{u}_i, d_i(t_{\epsilon_0})\bar{v}_i)_{i=1}^{P} \in \mathcal{P}^{*}(M^{*})$, which means that
\begin{equation}
\label{eq:sameindiff}
\sum_{i=1}^{P} c_i(0)D_iX\bar{u}_i - d_i(0)D_iX\bar{v}_i = y^{*} = \sum_{i=1}^{P} c_i(t_{\epsilon_0})D_iX\bar{u}_i - d_i(t_{\epsilon_0})D_iX\bar{v}_i.
\end{equation}
As $(c_i(0), d_i(0))_{i=1}^{P} \neq (c_i(t_{\epsilon_0}), d_i(t_{\epsilon_0}))_{i=1}^{P}$ and $c_i(0) > 0 \Leftrightarrow c_i(t_{\epsilon_0}) > 0, d_i(0) > 0 \Leftrightarrow d_i(t_{\epsilon_0}) > 0$, we can see that \eqref{eq:sameindiff} is two different linear combinations of the set $\{D_iXu^{\circ}_i\}_{u^{\circ}_i \neq 0} \cup \{D_iXv^{\circ}_i\}_{v^{\circ}_i \neq 0}$ - hence the set  $\{D_iXu^{\circ}_i\}_{u^{\circ}_i \neq 0} \cup \{D_iXv^{\circ}_i\}_{v^{\circ}_i \neq 0}$ is linearly dependent.

As we claimed, we have arrived at a contradiction assuming a continuous path from $(u_i^{\circ},v_i^{\circ})_{i=1}^{P}$. Hence the point is isolated.
\end{proof}

Next, we pay attention to the connectivity results of $\mathcal{P}^{*}(m)$. Our starting point will be noticing that for any $(u_i, v_i)_{i=1}^{P} \in \mathcal{P}^{*}$ which is not in $\mathcal{P}^{*}_{irr}$, the pruning mechanism defined in \cref{def:pruning} gives a continuous path into $\mathcal{P}^{*}_{irr}$. This means that if we can connect two points in $\mathcal{P}^{*}_{irr} \cap \mathcal{P}^{*}(m)$ with a continuous path in $\mathcal{P}^{*}(m)$, the set $\mathcal{P}^{*}(m)$ is connected.

\begin{appxproposition}
\label{p7:PruningLemma}
Consider $(u_i, v_i)_{i=1}^{P} \in \mathcal{P}^{*} - \mathcal{P}^{*}_{irr}$. Let $m = \mathrm{card}((u_i, v_i)_{i=1}^{P})$. Then, there exists a continuous path in $\mathcal{P}^{*}(m)$ that starts with $(u_i, v_i)_{i=1}^{P}$ and ends with a different point $(u_i', v_i')_{i=1}^{P} \in \mathcal{P}^{*}(m) \cap \mathcal{P}^{*}_{irr}$.
\end{appxproposition}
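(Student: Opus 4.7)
The plan is to construct the path as a finite concatenation of pruning moves, exactly in the spirit of Definition \ref{def:pruning}, and then verify that each segment is continuous, preserves optimality, and does not increase cardinality. Since $(u_i, v_i)_{i=1}^P \notin \mathcal{P}^{*}_{irr}$, the collection $\{D_iXu_i\}_{u_i\neq 0} \cup \{D_iXv_i\}_{v_i\neq 0}$ is linearly dependent, so there exist scalars $(c_i, d_i)$, supported on the indices where $u_i \neq 0$ or $v_i \neq 0$, not all zero, with
\[
\sum_{u_i \neq 0} c_i D_iXu_i + \sum_{v_i \neq 0} d_i D_iXv_i = 0.
\]
Using Theorem \ref{t1:optpolytope} to write $u_i = \alpha_i \bar{u}_i$, $v_i = \beta_i \bar{v}_i$ with $\alpha_i, \beta_i \geq 0$, I consider the ray
\[
\gamma(t) = \bigl((1+c_i t)u_i,\; (1-d_i t)v_i\bigr)_{i=1}^P, \qquad t \in [0, t^{*}],
\]
where $t^{*}$ is the first positive time at which some coefficient $(1+c_i t)$ or $(1-d_i t)$ hits zero. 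The direction can always be chosen so that $t^{*}$ is finite and positive: pairing with the dual optimum $\nu^{*}$ gives $(\nu^{*})^{T}D_iX\bar{u}_i = -\beta$ and $(\nu^{*})^{T}D_iX\bar{v}_i = \beta$ whenever the direction is nonzero, so the dependence forces $\sum c_i = \sum d_i$, ruling out the degenerate sign patterns (all $c_i \geq 0$ and all $d_i \leq 0$, or vice versa); after possibly negating the whole combination, a mix of signs appears and $t^{*}$ is well-defined.

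The segment $\gamma$ is then a continuous path in $\mathcal{P}^{*}(m)$: the fit is preserved because
\[
\sum_{i=1}^P D_iX\bigl((1+c_i t)u_i - (1-d_i t)v_i\bigr) = y^{*} + t\cdot 0 = y^{*},
\]
each component stays a nonnegative multiple of its fixed direction $\bar{u}_i$ or $\bar{v}_i$ (so remains in the polytope and in the cone $\mathcal{K}_i$), and cardinality is monotonically non-increasing along $\gamma$ since no coefficient changes from zero to nonzero. At $t=t^{*}$ at least one nonzero coefficient becomes zero, so $\mathrm{card}(\gamma(t^{*})) < \mathrm{card}(\gamma(0)) \leq m$, keeping us inside $\mathcal{P}^{*}(m)$. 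If $\gamma(t^{*}) \in \mathcal{P}^{*}_{irr}$ we are done; otherwise, we iterate the construction at $\gamma(t^{*})$, producing another pruning segment. Because cardinality strictly decreases at every step and is a nonnegative integer, the procedure terminates in at most $m$ iterations at some $(u_i', v_i')_{i=1}^P \in \mathcal{P}^{*}_{irr}\cap \mathcal{P}^{*}(m)$, which is automatically different from the starting point (its cardinality is strictly smaller).

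The concatenation of these finitely many continuous segments—matched at their endpoints—gives the desired continuous path in $\mathcal{P}^{*}(m)$. The only delicate point, which I consider the main obstacle, is verifying that a suitable sign direction for the pruning combination always exists so that $t^{*}$ is finite; the dual-based argument above (using that $(\nu^{*})^{T}D_iX\bar{u}_i$ and $(\nu^{*})^{T}D_iX\bar{v}_i$ have opposite signs on the active indices) takes care of this in a uniform way and replaces the ad-hoc sign choice of Definition \ref{def:pruning}.
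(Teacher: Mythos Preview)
Your proof is correct and follows essentially the same approach as the paper: iterate the pruning move of Definition~\ref{def:pruning}, observe that each segment is a continuous path inside $\mathcal{P}^{*}(m)$ (fit preserved, cone membership preserved by nonnegative scaling, cardinality non-increasing), and terminate after finitely many steps at an irreducible point of strictly smaller cardinality. The paper's own proof is a three-line invocation of Definition~\ref{def:pruning}; you have supplied the missing details and, in particular, given a clean dual justification for why $t^{*}$ is always finite, whereas the paper's Definition~\ref{def:pruning} simply asserts ``without loss of generality assume $d_1>0$'' without explanation. One small correction: pairing the dependence relation with $\nu^{*}$ yields $\sum_i c_i\lVert u_i\rVert_2 = \sum_i d_i\lVert v_i\rVert_2$, not $\sum c_i = \sum d_i$, since $u_i=\lVert u_i\rVert_2\,\bar u_i$; this does not affect your sign argument, as the norms are strictly positive on the active indices.
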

\begin{proof}
We will find such path by pruning the solution as in \cref{def:pruning}. For each iteration of the pruning step, the starting solution and the ending solution is connected by a continuous path. As we iterate, we concatenate each continuous path, hence the resulting path should be continuous. The next thing we have to check is that the path is contained in $\mathcal{P}^{*}(m)$. We can see this due to the fact that in each pruning iteration, the cardinality of the solution does not increase, and the initial cardinality of the solution is $m$. At last, when pruning ends we arrive at a irreducible solution, meaning the final solution we get from pruning is in $ \mathcal{P}^{*}(m) \cap \mathcal{P}^{*}_{irr}$.  
\end{proof}

We have two different strategies to prove the connectedness of $\mathcal{P}^{*}(m)$. One is directly interpolating the two solutions in $\mathcal{P}^{*}_{irr}$, and increasing $m$ to guarantee the validity of such interpolation. From this, we obtain one critical width $m^{*} + M^{*}$. 
\begin{appxproposition}
\label{p8:Connected_m*+M*}
The set $\mathcal{P}^{*}(m^{*} + M^{*})$ is connected.
\end{appxproposition}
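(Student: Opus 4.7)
The plan is to show that every point of $\mathcal{P}^{*}(m^{*}+M^{*})$ can be continuously joined to a single reference point inside the set, namely a fixed minimum-cardinality irreducible solution. Path-connectedness to a common point implies path-connectedness of the whole set, which coincides with connectedness in $\mathbb{R}^{2dP}$. The construction proceeds in two stages: prune an arbitrary solution down to an irreducible one, then linearly interpolate that irreducible solution with the fixed minimum-cardinality reference.

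More concretely, fix once and for all a point $p_{0} \in \mathcal{P}^{*}_{\mathrm{irr}}$ with $\mathrm{card}(p_{0}) = m^{*}$; such a $p_{0}$ exists by the definition of $m^{*}$. Now take any $p \in \mathcal{P}^{*}(m^{*}+M^{*})$. If $p \notin \mathcal{P}^{*}_{\mathrm{irr}}$, apply \cref{p7:PruningLemma} to obtain a continuous path inside $\mathcal{P}^{*}(\mathrm{card}(p)) \subseteq \mathcal{P}^{*}(m^{*}+M^{*})$ that ends at some $p' \in \mathcal{P}^{*}_{\mathrm{irr}}$; otherwise set $p' = p$ with the constant path. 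Either way $p'$ is irreducible, so by the definition of $M^{*}$ we have $\mathrm{card}(p') \leq M^{*}$.

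Next, I would consider the straight-line interpolation $\gamma(t) = (1-t)p' + tp_{0}$ for $t \in [0,1]$. Since $\mathcal{P}^{*}$ is a convex polytope by \cref{t1:optpolytope}, we have $\gamma(t) \in \mathcal{P}^{*}$ for every $t$. The directional rigidity of \cref{t1:optpolytope} is the key ingredient here: every feasible coordinate $u_i$ is a nonnegative multiple of the fixed direction $\bar{u}_i$, and similarly for $v_i$. Hence the support of $\gamma(t)$ is exactly the union of the supports of $p'$ and $p_{0}$ for all $t \in (0,1)$, because no cancellation can occur between two nonnegative scalar multiples of the same fixed direction. This yields
\[
\mathrm{card}(\gamma(t)) \leq \mathrm{card}(p') + \mathrm{card}(p_{0}) \leq M^{*} + m^{*},
\]
so $\gamma$ lies in $\mathcal{P}^{*}(m^{*}+M^{*})$. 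Concatenating the pruning path from $p$ to $p'$ with $\gamma$ produces a continuous path in $\mathcal{P}^{*}(m^{*}+M^{*})$ from $p$ to $p_{0}$, and since $p$ was arbitrary this gives the desired connectedness.

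The only delicate point I anticipate is the cardinality bound along $\gamma$: one must rule out that intermediate points activate coordinates beyond those already nonzero in $p'$ or $p_{0}$, and must also confirm that the straight line between two irreducible-type solutions actually remains feasible. Both issues are handled precisely by the cone-constrained, fixed-direction structure of $\mathcal{P}^{*}$ established in \cref{t1:optpolytope}: nonnegativity of the coefficients $c_i, d_i$ prevents new activations and prevents destructive cancellation, while convexity of $\mathcal{P}^{*}$ guarantees $\gamma(t) \in \mathcal{P}^{*}$. With these in hand, the sum $m^{*}+M^{*}$ exactly saturates the budget needed for the interpolation step, which explains why this is the critical width emerging from this argument rather than something smaller.
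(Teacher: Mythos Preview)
Your proof is correct and follows essentially the same approach as the paper: prune an arbitrary point to an irreducible one via \cref{p7:PruningLemma}, then use straight-line interpolation with a minimum-cardinality irreducible solution, bounding the cardinality along the segment by $M^{*}+m^{*}$ via convexity of $\mathcal{P}^{*}$ and the fixed-direction structure. Your version is slightly more streamlined in that you fix a single reference point $p_{0}$ and connect everything to it, whereas the paper connects each of two irreducible points $A,B$ to its own minimum-cardinality point $A_m,B_m$ and then interpolates $A_m$ with $B_m$; the underlying ideas are identical.
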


\begin{proof}
We first prove that two points $A, B \in \mathcal{P}^{*}(m^{*} + M^{*}) \cap \mathcal{P}^{*}_{irr}$ are connected with a continuous path in $\mathcal{P}^{*}(m^{*} + M^{*})$. \\
Take two points $A \neq B \in \mathcal{P}^{*}(m^{*} + M^{*}) \cap \mathcal{P}^{*}_{irr}$. One observation that we can make is that $A$ has a continuous path in $\mathcal{P}^{*}(m^{*} + M^{*})$ to a certain solution $A_m \in \mathcal{P}^{*}_{irr}$ that satisfies $\mathrm{card}(A_m) = m^{*}$. The construction of such a path is simple: interpolate $A$ and $A_m$, i.e. $f(t) = (1-t)A + tA_m$. As we know 
\[
\mathrm{card}((1-t)A + tA_m) \leq \mathrm{card}(A) + \mathrm{card}(A_m) \leq M^{*} + m^{*},
\]
the path has cardinality $\leq m^{*} + M^{*}$. The last inequality follows from the fact that $A \in \mathcal{P}^{*}_{irr}$ and $\mathrm{card}(A) \leq M^{*}$. Due to the polytope characterization, $\mathcal{P}^{*}$ is convex, and the path is in $\mathcal{P}^{*}$. Combining these two, we know the existence of a path from $A$ to $A_m$ in $\mathcal{P}^{*}(m^{*}+M^{*})$.\\
Similarly, we know the existence of a path from $B$ to $B_m$ in $\mathcal{P}^{*}(m^{*}+M^{*})$. At last, we know the existence of a path from $A_m$ to $B_m$, again by interpolating these two. Concluding, for any two $A, B \in \mathcal{P}^{*}(m^{*} + M^{*}) \cap \mathcal{P}^{*}_{irr}$, there exists a continuous path in $\mathcal{P}^{*}(m^{*} + M^{*})$. \\
Now, take any two points $A', B' \in \mathcal{P}^{*}(m^{*} + M^{*})$. From \cref{p7:PruningLemma}, there exists a continuous path in $\mathcal{P}^{*}(m^{*} + M^{*})$ that starts from $A'$ to a certain $A_{irr} \in \mathcal{P}^{*}(m^{*} + M^{*}) \cap \mathcal{P}^{*}_{irr}$, and similarly there exists a path from $B'$ to $B_{irr}$. At last, there is a continuous path from $A_{irr}$ to $B_{irr}$ in $\mathcal{P}^{*}(m^{*} + M^{*})$. Connect all paths to find a continuous path from $A'$ to $B'$ in $\mathcal{P}^{*}(m^{*} + M^{*})$.
\end{proof}

Another strategy is more involved, which is not directly interpolating two solutions $A, B$ in $\mathcal{P}^{*}_{irr}$, but repeatedly interpolating $A$ with parts of $B$ until the two are connected with a path. We start with a particular lemma.

\begin{appxlemma}
\label{appxlemma:existance}
Suppose we have two linearly independent sets $A = \{a_1, a_2, \cdots a_m\}$, $B = \{b_1, b_2, \cdots b_k\} \subseteq \mathbb{R}^{n}$ and a given subset $\mathcal{I} = \{a_{i_1}, a_{i_2}, \cdots a_{i_t}\} \subset A$. Also, 
$$
\sum_{i=1}^{m} \lambda_i a_i = \sum_{i=1}^{k} \mu_i b_i,
$$
for some $\lambda \in \mathbb{R}^{m}$ that satisfies $\sum_{j=1}^{t} \lambda_{i_j} > 0$, and $\mu \in \mathbb{R}^{k}$ that satisfies
$\mu > 0$. Then, there exists a vector $\mu^{*} \in \mathbb{R}^{k}$ that satisfies the following three properties:\\
1) $\lVert \mu^{*} \rVert_0 \leq n-m+1$.\\
2) $\mu^{*} \geq 0$.\\
3) $\sum_{i=1}^{k} \mu^{*}_ib_i \in span(\{a_1, a_2, \cdots a_m\})$ and when we express
$$
\sum_{i=1}^{k} \mu^{*}_ib_i = \sum_{i=1}^{m} \delta_i a_i,
$$
$$
\sum_{j=1}^{t} \delta_{i_j} > 0.
$$
\end{appxlemma}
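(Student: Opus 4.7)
My plan is to recast the conclusion as: find a sparse extreme ray of a pointed polyhedral cone on which a certain linear functional is strictly positive. Define the linear subspace
\[
V = \Bigl\{\eta \in \mathbb{R}^{k} \,\Big|\, \sum_{i=1}^{k}\eta_i b_i \in \mathrm{span}(A)\Bigr\}, \qquad C = V \cap \mathbb{R}^{k}_{\ge 0}.
\]
Because $A$ is linearly independent, every $\eta \in V$ admits a unique expansion $\sum_i \eta_i b_i = \sum_j \delta_j(\eta)\, a_j$ with coefficients that are linear in $\eta$, so $F(\eta) := \sum_{j=1}^{t} \delta_{i_j}(\eta)$ is a well-defined linear functional on $V$. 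The hypothesis hands us a witness $\mu \in C$ with $\mu > 0$ and $F(\mu) = \sum_{j=1}^{t}\lambda_{i_j} > 0$.

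\textbf{Dimension count.} Next I bound $\dim V$ from below. The map $\Phi : \mathbb{R}^{k} \to \mathbb{R}^{n}/\mathrm{span}(A)$ sending $\eta \mapsto \sum_i \eta_i b_i + \mathrm{span}(A)$ is linear with codomain of dimension $n-m$ and $V = \ker \Phi$, so $\dim V \ge k - (n-m) = k - n + m$, i.e.\ $V$ is cut out by at most $n-m$ independent linear equalities inside $\mathbb{R}^{k}$. This is the only role of the ambient dimension $n$ in the final sparsity bound.

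\textbf{Extracting $\mu^{*}$.} Since $C \subseteq \mathbb{R}^{k}_{\ge 0}$, the cone $C$ is pointed and polyhedral, so by the Minkowski--Weyl theorem it equals the nonnegative hull of its finitely many extreme rays $r^{(1)}, \ldots, r^{(N)}$. Decomposing $\mu = \sum_{\ell} c_\ell r^{(\ell)}$ with $c_\ell \ge 0$ and using linearity of $F$ gives $\sum_\ell c_\ell F(r^{(\ell)}) = F(\mu) > 0$, so some extreme ray $r^{*}$ satisfies $F(r^{*}) > 0$. Setting $\mu^{*} := r^{*}$ immediately gives (2) nonnegativity and (3) both $\sum_i \mu^{*}_i b_i \in \mathrm{span}(A)$ and $\sum_{j=1}^{t}\delta_{i_j}(\mu^{*}) > 0$. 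For (1), a point on an extreme ray of $C$ has rank-$(k-1)$ active constraints drawn from "$\eta \in V$" and "$\eta_i \ge 0$"; at most $k - \dim V \le n-m$ of these come from $V$'s defining equalities, so at least $(k-1) - (n-m) = k - n + m - 1$ of the sign constraints are tight, yielding $\|\mu^{*}\|_0 \le n - m + 1$.

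\textbf{Main obstacle.} The Minkowski--Weyl decomposition and the existence of a ``good'' extreme ray with $F > 0$ are standard once the setup is clean, so the delicate bookkeeping is the sparsity count: one must correctly identify the codimension of $V$ in $\mathbb{R}^{k}$ and translate the rank-$(k-1)$ characterization of extreme rays into the exact support bound $n-m+1$. If one prefers to avoid invoking Minkowski--Weyl as a black box, the same bound can be obtained by a direct pruning argument starting from $\mu$: iteratively pick a nonzero direction $\xi \in V$ with at least one negative coordinate in the support of the current iterate, travel in the direction of $\pm \xi$ chosen so that $F$ does not decrease, and stop when a coordinate hits zero; repeat until no further reduction is possible, at which point the degrees of freedom available for further reduction have been exhausted and the resulting vector is an extreme ray with the claimed support bound.
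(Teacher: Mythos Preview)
Your main argument via Minkowski--Weyl is correct and gives a clean structural proof: recast the problem as finding an extreme ray of the pointed polyhedral cone $C = V \cap \mathbb{R}^k_{\ge 0}$ on which the linear functional $F$ is positive, then bound the support of any extreme ray via the codimension of $V$. The paper takes a different, more constructive route: it iteratively prunes $\mu$ by finding at each step a nonzero direction $\mu' \in V$ satisfying the \emph{additional} linear constraint $F(\mu') = 0$ (such $\mu'$ exists whenever the current support exceeds $n-m+1$, by your dimension count plus one more equation), and then moves $\mu \to \mu + \epsilon \mu'$ until a coordinate vanishes. Because $F(\mu') = 0$, the value $F(\mu)$ is preserved exactly and either sign of $\epsilon$ is permissible, so one can always reach the boundary of the orthant. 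Your approach buys a one-shot existence argument at the cost of invoking Minkowski--Weyl as a black box; the paper's is self-contained and algorithmic, which is convenient since the surrounding connectivity proof (their Theorem on $\mathcal{P}^{*}(n+1)$) actually needs to \emph{trace a path}, not merely assert existence. One remark on your closing sketch of the pruning alternative: choosing the sign of $\xi$ merely so that $F$ does not decrease can fail --- if $\xi \le 0$ on the current support and $F(\xi) < 0$, the sign you want ($-\xi$) is nonnegative there and never zeros a coordinate. The paper's requirement $F(\xi) = 0$ is exactly what decouples the sign choice from $F$ and makes the iterative variant go through.
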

\begin{proof}
If $k \leq n - m + 1$ there is nothing to prove. Assume $k > n - m + 1$.  Showing the existence of a vector $\Tilde{\mu}$ that satisfies $\lVert \Tilde{\mu} \rVert_0 < \lVert \mu \rVert_0$, $\Tilde{\mu} \geq 0$ and 
$$
\sum_{i=1}^{k} \Tilde{\mu}_ib_i \in span(\{a_1, a_2, \cdots a_m\}), \quad \sum_{i=1}^{k} \Tilde{\mu}_ib_i = \sum_{i=1}^{m} \delta_i a_i \quad and \quad \sum_{j=1}^{t} \delta_{i_j} > 0,
$$
is enough to prove our proposed claim. That is because if the existence of such $\Tilde{\mu}$ is proved, we can apply the existence result again to $A = \{a_1, a_2, \cdots, a_m\}$ and $\Tilde{B} = \{b_i | i \in [k], \Tilde{\mu}_i \neq 0\}$ with the same $\mathcal{I}$. The premises are all satisfied: from the definition of $\Tilde{\mu}$ we know that 
\[
\sum_{i \in [k], \Tilde{\mu}_i \neq 0} \Tilde{\mu}_ib_i = \sum_{i=1}^{m} \delta_i a_i,\ \  \sum_{j=1}^{t} \delta_{i_j} > 0,
\]
and $\Tilde{\mu}_i > 0$ if $\Tilde{\mu}_i \neq 0$. Moreover, $|\Tilde{B}| < |B|$. This means if we prove the existence of such $\Tilde{\mu}$ and iteratively apply the existence result as stated above, we will arrive at a set $B^{\circ} \subseteq B$ that satisfies
\[
|B^{\circ}| \leq n - m + 1, \quad \sum_{b_i \in B^{\circ}} \mu^{\circ}_ib_i = \sum_{i=1}^{m} \delta_i^{\circ} a_i, \quad \sum_{j=1}^{t} \delta_{i_j}^{\circ} > 0, \quad \mu^{\circ}_i>0 \ \ if\ \  b_i \in B^{\circ}, \ \ 0 \ \  otherwise.
\]
The reason why $|B^{\circ}| \leq n - m + 1$ is that if $|B^{\circ}| > n - m + 1$, we can find a subset of $B^{\circ}$ with strictly less cardinality with the same property, hence it is not the terminal set. For that set, choosing $\mu^{\circ} = \mu^{*}$ gives the wanted vector.

Now we show the existence of such $\Tilde{\mu}$ when $k > n - m + 1$. We first extend $\{a_1, a_2, \cdots a_m\}$ to a basis of $\mathbb{R}^{n}$, and note it as $\{a_1, a_2, \cdots a_n\}$. Express each $b_i$ s as
$$
b_i = \sum_{j=1}^{n} \gamma_{ij}a_j.
$$
where $i \in [k]$. Now, write $\Gamma \in \mathbb{R}^{n \times k}$ as $\Gamma_{ij} = \gamma_{ji}$. What we know is the relation
\[
\Gamma \mu = \begin{bmatrix}
\lambda \\ 0_{n-m} \end{bmatrix},
\]
which is simply the coordinate representation of $\sum_{i=1}^{k} \mu_ib_i.$ Now we know the set
$$
\{\mu \in \mathbb{R}^{k} \ |\ 1^{T}\Gamma[i_1,i_2,\cdots i_t]\mu = 0, \Gamma[m+1]^{T}\mu = 0, \cdots \Gamma[n]^{T}\mu = 0 \}
$$
has dimension at least $k - n + m - 1$, as each linear constraint decreases the dimension at most 1. Here $\Gamma[p_1, p_2, \cdots p_r] \in \mathbb{R}^{r \times k}$ denotes the concatenation of $r$ rows of $\Gamma$, $\Gamma[p_1]$ to $\Gamma[p_r]$. As $k - n + m - 1 > 0$, there exists a nonzero $\mu'$ that satisfies
\[
\Gamma \mu' = \begin{bmatrix}
\lambda' \\ 0_{n-m} \end{bmatrix},\quad \sum_{j=1}^{t} \lambda'_{i_j} = 0.
\]
For that $\mu'$, consider $\mu + \epsilon \mu'$ and either increase or decrease $\epsilon$ until the cardinality of $\mu$ decreases. As $\mu > 0$, we can always find such $\epsilon$ that satisfies $\lVert \mu + \epsilon \mu' \rVert_0 < \lVert \mu \rVert_0$. As we stop when the cardinality changes, $\mu + \epsilon \mu' \geq 0$ should also hold. At last, we know that 
\[
\Gamma (\mu + \epsilon \mu') = \begin{bmatrix}
\lambda + \epsilon \lambda' \\ 0_{n-m} \end{bmatrix},
\]
and as $\sum_{j=1}^{t} \lambda'_{i_j} = 0$, $\sum_{j=1}^{t} (\lambda_{i_J} + \epsilon \lambda'_{i_j}) > 0$. As the values of $\Gamma \mu$ directly correspond to the coordinate representation of $\{a_1, a_2, \cdots a_n\}$, we know that $\mu + \epsilon \mu'$ is the $\Tilde{\mu}$ that we were looking for. This finishes the proof.
\end{proof}

The necessary width in this case is $n+1$, and we obtain the following result. 

\begin{appxtheorem}
\label{t6:Connected_n+1}
The set $\mathcal{P}^{*}(n+1)$ is connected.
\end{appxtheorem}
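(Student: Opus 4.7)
The plan is to reduce to connecting irreducible solutions and then transfer mass one active direction at a time, using Lemma~\ref{appxlemma:existance} to keep the total cardinality bounded by $n+1$ throughout.

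By Proposition~\ref{p7:PruningLemma}, every point of $\mathcal{P}^{*}(n+1)$ is connected within $\mathcal{P}^{*}(n+1)$ to some element of $\mathcal{P}^{*}(n+1) \cap \mathcal{P}^{*}_{\mathrm{irr}}$, so it suffices to connect any two irreducible solutions $A, B$. Because the active model-space vectors of an irreducible solution are linearly independent in $\mathbb{R}^n$, both $|A| = m_A$ and $|B| = m_B$ are at most $n$. Write $y^{*} = \sum_j \alpha_j a_j = \sum_\ell \beta_\ell b_\ell$ with strictly positive coefficients $\alpha,\beta$, where $a_j$ and $b_\ell$ denote the signed active directions (elements of $\{D_i X \bar{u}_i\}$ or $\{-D_i X \bar{v}_i\}$) of $A$ and $B$ respectively.

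Next I construct a single \emph{transfer segment}. Apply Lemma~\ref{appxlemma:existance} with the linearly independent sets $\{a_j\}$, $\{b_\ell\}$, taking $\lambda = \alpha$, $\mu = \beta$, and $\mathcal{I} = \{j_0\}$ for a chosen index $j_0$ with $\alpha_{j_0} > 0$. This yields $\mu^{*} \geq 0$ with $\lVert \mu^{*} \rVert_0 \leq n - m_A + 1$ and coefficients $\delta$ satisfying $\sum_\ell \mu^{*}_\ell b_\ell = \sum_j \delta_j a_j$ and $\delta_{j_0} > 0$. Consider the straight line
\[
t \mapsto \left( (\alpha_j - t\delta_j)_j,\ (t\mu^{*}_\ell)_\ell \right),
\]
which preserves the identity for $y^{*}$ at every $t$ and therefore remains in $\mathcal{P}^{*}$ as long as all coefficients are nonnegative. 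Its cardinality is at most $m_A + (n - m_A + 1) = n+1$. Running $t$ up to $T = \min\{\alpha_j/\delta_j : \delta_j > 0\}$, which is finite because $\delta_{j_0} > 0$, at least one $a_j$-coefficient vanishes at $t=T$, so the endpoint has cardinality at most $n$ and strictly fewer active $a$-directions than $A$.

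To finish, iterate: re-prune the endpoint to an irreducible solution via Proposition~\ref{p7:PruningLemma} if necessary, and apply the transfer step again with the same target $B$. Each iteration strictly decreases the number of active $a$-directions, so after at most $m_A$ rounds the support is contained in $\{b_\ell\}$. A nonnegative combination of the linearly independent set $\{b_\ell\}$ equal to $y^{*}$ is unique, so the terminal solution equals $B$, and concatenating the finitely many segments produces a continuous path from $A$ to $B$ inside $\mathcal{P}^{*}(n+1)$. The main obstacle I expect is justifying the iterative re-application of Lemma~\ref{appxlemma:existance}: after the first segment, the current state already carries some $b_\ell$'s with positive coefficient, so the premise ``$\mu > 0$'' must be set up freshly each round. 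The clean resolution is to apply the lemma against the remaining active $a$-directions on one side and the original positive $B$-representation $\sum_\ell \beta_\ell b_\ell = y^{*}$ on the other, absorbing the already-transferred $b_\ell$ mass into the ambient $y^{*}$; verifying that the sparsity bound $n - m + 1$ scales correctly as $m$ shrinks, and that every intermediate state stays within cardinality $n+1$, is the delicate technical step.
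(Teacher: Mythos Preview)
Your high-level plan—reduce to irreducible endpoints via Proposition~\ref{p7:PruningLemma}, then iteratively transfer mass from $A$'s active directions to $B$'s using Lemma~\ref{appxlemma:existance} while keeping cardinality at most $n+1$—is exactly the paper's. The gap is the termination argument. Your claim that ``each iteration strictly decreases the number of active $a$-directions'' is only justified for the \emph{first} segment, when the current state is supported purely on $a$'s. From the second iteration on, the current irreducible state $\mathcal{C}$ already contains some $b$'s; to satisfy the lemma's premise $\sum\lambda_i a_i = \sum\mu_i b_i$ you must take the lemma's ``$A$'' to be all of $\mathcal{C}$ (your suggested alternative of using only the residual $a$'s and ``absorbing'' the transferred $b$-mass into the right-hand side fails the hypothesis $\mu>0$, since $\beta_\ell-\gamma_\ell$ can be nonpositive). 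With $A=\mathcal{C}$ and $\mathcal{I}=\{j_0\}$, the returned $\delta$ is indexed over all of $\mathcal{C}$, and the coordinate that hits zero at $t=T=\min\{\mathrm{coef}_c/\delta_c:\delta_c>0\}$ may well be one of the $b$'s in $\mathcal{C}$ rather than an $a$. So the $a$-count need not drop, and your procedure can cycle.

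The paper closes this by (i) taking $\mathcal{I}$ to be \emph{all} remaining $a$-indices in $\mathcal{C}$, so Lemma~\ref{appxlemma:existance} guarantees $\sum_{a\text{-indices}}\delta_j>0$; (ii) tracking the scalar total $a$-mass $\sum_j f_j(T)$, which then strictly decreases across each transfer step and is kept nonincreasing during the subsequent pruning by orienting the linear dependence so that its $a$-part has nonnegative sum; and (iii) concluding termination because each linearly independent $\mathcal{C}$ uniquely determines its coefficient vector (hence its total $a$-mass), so the same $\mathcal{C}$ cannot recur and there are only finitely many candidates. Your single-index choice $\mathcal{I}=\{j_0\}$ gives no control over the total $a$-mass, which is why the integer-count argument cannot be completed as stated.
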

\begin{proof}
Similar to the proof of \cref{p8:Connected_m*+M*}, we show that for any two $A, B \in \mathcal{P}^{*}(n+1) \cap \mathcal{P}^{*}_{irr}$, they are connected with a continuous path in $\mathcal{P}^{*}(n+1)$. The rest will directly follow.\\
First, let's write $A = (u_i, v_i)_{i=1}^{P}, B = (u_i', v_i')_{i=1}^{P}$. Also, let's write 
\[
\mathcal{A} = \{D_iX\bar{u}_i\}_{u_i \neq 0} \cup \{-D_iX\bar{v}_i\}_{v_i \neq 0}, \quad \mathcal{B} = \{D_iX\bar{u}_i\}_{u'_i \neq 0} \cup \{-D_iX\bar{v}_i\}_{v'_i \neq 0},
\]
and note them as $\mathcal{A} = \{a_1, a_2, \cdots a_m\} \subseteq \mathbb{R}^{n}$, $\mathcal{B} = \{b_1, b_2, \cdots b_k\} \subseteq \mathbb{R}^{n}$. At last, $\lambda_1, \lambda_2, \cdots \lambda_m$, $\mu_1, \mu_2, \cdots \mu_k$ are unique nonnegative numbers that satisfy 
\[
\sum_{i=1}^{m} \lambda_i a_i = \sum_{i=1}^{k} \mu_i b_i = y^{*}.
\]
The uniqueness follows from the fact that $A, B \in \mathcal{P}^{*}(n+1) \cap \mathcal{P}^{*}_{irr}$, and the nonnegativeness follows from the optimal polytope characterization in \cref{t1:optpolytope}. Furthermore, note that $\lambda_p > 0$ for all $p \in [m]$, $\mu_q > 0$ for all $q \in [k]$. For example, if $a_p = D_iX\bar{u}_i$ for some $u_i \neq 0$, $\lambda_p = \lVert u_i \rVert_2 > 0$. The rest is similar.\\
Our main proof strategy will be finding $k+m$ continuous functions $F_1,  F_2, \cdots F_m ,G_1, G_2, \cdots,$ $ G_k : [0,1] \rightarrow \mathbb{R}$ that satisfies:\\
Property 1) $F_i(0) = \lambda_i, F_i(1) = 0, F_i(t) \geq 0 \quad \forall i \in [m], t \in [0.1]$.\\ 
Property 2) $G_j(0) = 0, G_j(1) = \mu_j, G_j(t) \geq 0 \quad \forall j \in [k], t \in [0.1]$.\\
Property 3) 
\[
\sum_{i=1}^{m} F_i(t)a_i + \sum_{j=1}^{k} G_j(t)b_j = y^{*} \quad \forall t \in [0,1].
\]
Property 4) 
\[
\sum_{i=1}^{m} 1(F_i(t) > 0) + \sum_{j=1}^{k} 1(G_j(t) > 0) \leq n+1 \quad \forall t \in [0,1].
\]
First, let's see that if we find such continuous functions that satisfy Property 1) to Property 4), we can construct a path from $A$ to $B$ in $\mathcal{P}^{*}(n+1)$. The specific path we construct is: $(u_i(t), v_i(t))_{i=1}^{P}$ given as
\[
u_i(t) = 
\begin{cases}
(F_p(t) + G_q(t)) \bar{u}_i \quad if \quad a_p = b_q = D_iX\bar{u}_i\\
F_p(t) \bar{u}_i \quad if \quad a_p = D_iX\bar{u}_i ,\ \nexists\ q \in [k]\  \ such\ that\ b_q = D_iX\bar{u}_i.\\
G_q(t) \bar{u}_i \quad if \quad b_q = D_iX\bar{u}_i,\ \nexists\ p \in [m]\  such \ that\ a_p = D_iX\bar{u}_i\\
0 \quad otherwise.
\end{cases}
\]
\[
v_i(t) = 
\begin{cases}
(F_p(t) + G_q(t)) \bar{v}_i \quad if \quad a_p = b_q = -D_iX\bar{v}_i\\
F_p(t) \bar{v}_i \quad if \quad a_p = -D_iX\bar{v}_i ,\ \nexists\ q \in [k] \ such\ that\ b_q = -D_iX\bar{v}_i.\\
G_q(t) \bar{v}_i \quad if \quad b_q = -D_iX\bar{v}_i,\ \nexists\ q \in [m] such \ that\ a_p = -D_iX\bar{v}_i\\
0 \quad otherwise.
\end{cases}
\]
Let's check that $(u_i(t), v_i(t))_{i=1}^{P}$ is a path from $A$ to $B$ in $\mathcal{P}^{*}(n+1)$. As $F_i(1) = 0$ for all $i \in [m]$, $G_j(0) = 0$ for all $j \in [k]$, we can see that $(u_i(0), v_i(0))_{i=1}^{P} = A$, $(u_i(1), v_i(1))_{i=1}^{P} = B$. Also, we can see that it is a curve in $\mathcal{P}^{*}$: all $u_i(t)$, $v_i(t)$ are nonnegative multiples of $\bar{u}_i, \bar{v}_i$, and we know that
\[
\sum_{i=1}^{P} D_iX(u_i(t) - v_i(t)) = \sum_{i=1}^{m} F_i(t)a_i + \sum_{j=1}^{k} G_j(t)b_j = y^{*}.
\]
Moreover, the cardinality of $(u_i(t), v_i(t))_{i=1}^{P}$ bounded with 
\[
\mathrm{card}((u_i(t), v_i(t))_{i=1}^{P}) \leq \sum_{i=1}^{m} 1(F_i(t) > 0) + \sum_{j=1}^{k} 1(G_j(t) > 0) \leq n+1
\]
hence the proposed path becomes a continuous path in $\mathcal{P}^{*}(n+1)$.\\
Now, we describe how we find such $m+k$ continuous functions. We do:\\
\textbf{Step 0}) Initialize $\mathcal{C} = \mathcal{A}$, $f_i(0) = \lambda_i, g_i(0) = 0$.\\
\textbf{Step 1}) While $T = 0, 1, \cdots, $ repeat:\\
\begin{itemize}
    \item If $\mathcal{C} \subseteq \mathcal{B}$, break.
    \item (Facts that hold from the previous iteration) Let's write $\mathcal{C} = \{a_{i_1}, a_{i_2}, \cdots a_{i_r}\} \cup \{b_{j_1}, b_{j_2}, \cdots b_{j_s}\}$. We inductively have:\\
    1) $\mathcal{C}$ is a linearly independent set.\\ 
    2) $f_{i}(T) \geq 0 \quad \forall i \in [m], g_j(T) \geq 0 \quad \forall j \in [k]$.\\ 
    3) $f_i(T) > 0 \Leftrightarrow i \in \{i_1,i_2, \cdots i_r\}$, $g_j(T) > 0 \Leftrightarrow j \in \{j_1, j_2, \cdots j_s\}$. \\
    4) \[
    \sum_{w=1}^{r} f_{i_w}(T)a_{i_w} + \sum_{w=1}^{s} g_{j_w}(T)b_{j_w} = y^{*}.
    \]
    \item (Applying \cref{appxlemma:existance}) We also know that $\sum_{w=1}^{k} \mu_w b_w = y^{*}$ and $\mu_w > 0$ $\forall w \in [k]$. Now we check the conditions to apply \cref{appxlemma:existance} with $A:= \mathcal{C}$, $B:= \mathcal{B}$, given subset $\{a_{i_1},a_{i_2}, \cdots, a_{i_r}\} \subseteq A$. Then we know that $\mathcal{C}, B$ are linearly independent,
    \[
    \sum_{w=1}^{r} f_{i_w}(T) a_{i_w} + \sum_{w=1}^{s} g_{j_w}(T)b_{j_w} = \sum_{w=1}^{k} \mu_wb_w = y^{*},
    \]
    and $\sum_{w=1}^{r} f_{i_w}(T) > 0$, $\mu_w > 0$ for all $w \in [k]$. Thus all conditions for \cref{appxlemma:existance} are met, and we can find $\Tilde{\lambda}_{i_w}$ for $w \in [r]$, $\Tilde{\mu}_{j_w}$ for $w \in [s]$, $\mu^{*}$ that satisfies
    \[
    \lVert \mu^{*} \rVert_0 \leq n+1-r-s, \ \  \mu^{*} \geq 0, \ \  \sum_{w=1}^{r} \Tilde{\lambda}_{i_w}a_{i_w} + \sum_{w=1}^{s} \Tilde{\mu}_{j_w}b_{j_w} = \sum_{w=1}^{k} \mu_w^{*} b_w, \ \ \sum_{w=1}^{r} \Tilde{\lambda}_{i_w} > 0.
    \]
    \item (Update 1) Now we update the values of $f_i, g_j$ as the following:\\
    \[
    f_i(t) = \begin{cases}
    f_i(T) = 0 \quad if \quad i \notin \{i_1, i_2, \cdots i_r\}\\
    f_i(T) - \alpha \Tilde{\lambda}_i(t - T) \quad if \quad i \in \{i_1, i_2, \cdots i_r\},
    \end{cases}
    t \in [T, T + 1/2],
    \]
    \[
    g_i(t) = \begin{cases}
    g_i(T) + \alpha\mu^{*}_i(t - T) = \alpha\mu^{*}_i(t - T) \quad if \quad i \notin \{j_1, j_2, \cdots j_s\}\\
    g_i(T) + \alpha\mu^{*}_i(t - T) - \alpha\Tilde{\mu}_i(t - T) \quad if \quad i \in \{j_1, j_2, \cdots j_s\},
    \end{cases}
    t \in [T, T + 1/2].
    \]
    Here, $\alpha = 2\min\{\min_{\Tilde{\lambda}_{i_w} > 0} f_{i_w}(T)/\Tilde{\lambda}_{i_w}, \min_{\Tilde{\mu}_{j_w} > \mu^{*}_{j_w}} g_{j_w}(T)/(\Tilde{\mu}_{j_w} - \mu^{*}_{j_w}) \} > 0$. \\
    Update $\mathcal{C}$ so that $f_i(T+1/2) > 0 \Leftrightarrow a_i \in \mathcal{C}$, $g_i(T+1/2) > 0 \Leftrightarrow b_i \in \mathcal{C}$.
    \item (Update 2: Pruning) After this, we initialize $r = 0$, $s_i(0) = f_i(T+1/2), i \in [m], z_j(0) = g_j(T+1/2), j \in [k]$ and repeat:\\
    (Check) If $\mathcal{C}$ is linearly independent, break\\
    (Update) Say $\mathcal{C} = \{a_{r1}, a_{r2}, \cdots a_{rx}\} \cup \{b_{s1}, b_{s2}, \cdots b_{sy}\}$. Find a nontrivial linear combination  \[
    \sum_{w=1}^{x} \eta_w a_{rw} + \sum_{w=1}^{y} \eta'_w b_{sw} = 0,
    \]
    and without loss of generality suppose $\sum_{w=1}^{x} \eta_w \geq 0$. If $\sum_{w=1}^{x} \eta_w = 0$, choose $\eta'_w$ to find at least one $\eta'_w > 0$ for $w \in [y]$. Then, write 
    \[
    s_{rw}(r+t) = s_{rw}(r) - \alpha \eta_w t, \quad z_{sw}(r+t) = z_{sw}(r) - \alpha \eta'_w t
    \]
    for $t \in [0,1]$. Here $\alpha = \min\{\min_{\eta_w > 0} s_{rw}(r)/\eta_w, \min_{\eta'_w > 0} z_{rw}(r)/\eta'_w\}$.\\
    At last, update $\mathcal{C}$ so that $s_{i}(r+1) > 0 \Leftrightarrow a_i \in \mathcal{C}$, $z_{i}(r+1) > 0 \Leftrightarrow b_i \in \mathcal{C}$. Increase $r$ by 1.\\
    \item (Construct $f_i, g_j$ for $t \in [T+1/2, T+1]$) Concatente $f_i$ and $s_i$, $g_j$ and $z_j$ for all $i \in [m]$, $j \in [k]$.
\end{itemize}
\textbf{Step 2)} Let the termination time be $T^{*}$. To obtain $F_i, G_j:[0,1] \rightarrow \mathbb{R}$, simply write $F_i(t) = f_i(T/T^{*}), G_j(t) = g_j(T/T^{*})$. 

Let's first verify that the facts that hold from the previous iteration. First, $\mathcal{C}$ is a linearly independent set at the start of each iterate because of step (Update 2: Pruning), and the first fact holds. Also, $f_i, g_j$ are updated only in steps (Update1) and (Update 2: Pruning), and we can see that for all $t \in [0, T^{*}]$ the function values are nonnegative. In update 1, we chose $\alpha$ sufficiently small and $\mu^{*} \geq 0$. In update 2, we also chose $\alpha$ sufficiently small. Hence, the second fact holds. At every update, we also update $\mathcal{C}$, and the third fact holds. At last, we add a nontrivial linear combination of $\mathcal{A} \cup \mathcal{B}$ that sums up to 0 at each update, so the sum
\[
\sum_{i=1}^{m} f_i(t)a_i + \sum_{j=1}^{k} g_j(t)b_j,
\]
is preserved to be $y^{*}$, which means that the last fact follows. 

One important argument to make is that the algorithm actually terminates, i.e. $T^{*} < \infty$. The iteration in the second update terminates eventually, because at each iteration the cardinality of $\mathcal{C}$ decreases by 1. To see the larger loop terminating, observe that 
\[
\sum_{i=1}^{m} f_i(t)
\]
is a strictly decreasing function for $t \in \mathbb{N}$. This is because in (Update 1), as $\sum_{w=1}^{r} \Tilde{\lambda}_{i_w} > 0$, the sum decreases, and in (Update 2), the sum does not increase because we suppose $\sum_{w=1}^{x} \eta_w \geq 0$. This means that at each starting step of the algorithm, identical $\mathcal{C}$ cannot appear twice: as $\mathcal{C}$ is linearly independent there exists a unique expression that gives
\[
\sum_{w=1}^{r} f_{i_w}(T) a_{i_w} + \sum_{w=1}^{s} g_{j_w}(T) b_{j_w} = y^{*},
\]
and if identical $\mathcal{C}$ appeared twice we will have the same value of $\sum_{w=1}^{r} f_{i_w}(T) = \sum_{i=1}^{m} f_i(t)$, which is contradicting the fact that $\sum_{i=1}^{m} f_i(t)$ strictly decreases.

Finally, let's check that we have found the right $F_i, G_j$s. As the algorithm terminated, $\mathcal{C} \subseteq \mathcal{B}$, and we know that $F_i(1) = 0$ for all $i \in [m]$. Also, as 
\[
\sum_{j=1}^{k} G_j(1) b_j = y^{*},
\]
and the set $\mathcal{B}$ is linearly independent, we know that $G_j(1) = \mu_j$ for all $j \in [k]$. As previously mentioned, Property 3) is guaranteed as we are adding a nontrivial linear combination that sums up to 0 at each update. To see that Property 4) is true, we see the value of $\sum_{i=1}^{m} 1(F_i(t) > 0) + \sum_{j=1}^{k} 1(G_j(t) > 0)$ for each update. In (Update 1), as $\lVert \mu^{*} \rVert \leq n + 1 - t - s$, the total cardinality does not exceed $n+1$. In (Update 2), the cardinality always decreases. Hence, $\sum_{i=1}^{m} 1(F_i(t) > 0) + \sum_{j=1}^{k} 1(G_j(t) > 0) \leq n + 1$ for all $t \in [0,1]$, and we know that we have actually found the wanted functions. This finishes the proof.
\end{proof} 

Recall that in \cite{nguyen2021note}, it is proved that the solution set is connected for $m = n+1$ in the unregularized case. The proof strategy of \cite{nguyen2021note} is first creating a zero entry in the second layer and changing the corresponding first layer weight arbitrarily. If the network is unregularized this is possible because the change in the corresponding first layer weight where the second layer weight is 0 will not change the model fit, hence the optimality. However, when we have regularization, such transformation is not possible as the first and second layer weights are tied together. This is why we need to use the characterization in \cref{t1:optpolytope} and \cref{appxlemma:existance} to prove \cref{t6:Connected_n+1}. Overall, our result is a nontrivial extension of \cite{nguyen2021note} to regularized networks.

At last, from \cref{cp1:connectedlarger}, we know that $\mathcal{P}^{*}(m)$ is connected when $m \geq \min\{n+1, m^{*}+M^{*}\}$.

\begin{appxproposition}
\label{cp1:connectedlarger}
Suppose $\mathcal{P}^{*}(m')$ is connected and $m' \geq M^{*}$. Then $\mathcal{P}^{*}(m)$ is connected for all $m \geq m'$.
\end{appxproposition}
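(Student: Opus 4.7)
The plan is to show that every point of $\mathcal{P}^{*}(m)$ is connected through a path in $\mathcal{P}^{*}(m)$ to some point of the smaller set $\mathcal{P}^{*}(m')$, and then invoke the hypothesized connectedness of $\mathcal{P}^{*}(m')$. First I would record the trivial monotonicity $\mathcal{P}^{*}(m') \subseteq \mathcal{P}^{*}(m)$, since the cardinality constraint in \eqref{eq:cardconstpolytope} becomes strictly weaker as $m$ grows. Given this inclusion, connectedness of $\mathcal{P}^{*}(m)$ reduces to producing, for each $A \in \mathcal{P}^{*}(m)$, a continuous path in $\mathcal{P}^{*}(m)$ from $A$ into $\mathcal{P}^{*}(m')$.

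The key step is to exploit the pruning mechanism of \cref{p7:PruningLemma}. For any $A \in \mathcal{P}^{*}(m)$ I would split into two cases. If $A \in \mathcal{P}^{*}_{\mathrm{irr}}$, then $\mathrm{card}(A) \leq M^{*} \leq m'$ by the definition of $M^{*}$ and the hypothesis on $m'$, so $A$ already lies in $\mathcal{P}^{*}(m')$ and nothing more is required. Otherwise, $A \in \mathcal{P}^{*} \setminus \mathcal{P}^{*}_{\mathrm{irr}}$, and \cref{p7:PruningLemma} produces a continuous path entirely inside $\mathcal{P}^{*}(\mathrm{card}(A))$ from $A$ to some $A' \in \mathcal{P}^{*}_{\mathrm{irr}}$. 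Since $\mathrm{card}(A) \leq m$, this path lies in $\mathcal{P}^{*}(m)$, and its endpoint $A'$ satisfies $\mathrm{card}(A') \leq M^{*} \leq m'$, so $A' \in \mathcal{P}^{*}(m')$.

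Combining the two cases, every point of $\mathcal{P}^{*}(m)$ is in the same path component of $\mathcal{P}^{*}(m)$ as some point of $\mathcal{P}^{*}(m')$. Since $\mathcal{P}^{*}(m')$ is connected by hypothesis and all of its points are mutually connected inside $\mathcal{P}^{*}(m') \subseteq \mathcal{P}^{*}(m)$, concatenating paths shows that any two points of $\mathcal{P}^{*}(m)$ can be joined by a continuous curve in $\mathcal{P}^{*}(m)$.

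The only subtlety I anticipate is checking that the pruning path does not inflate the cardinality beyond $m$: but this is exactly the content of \cref{p7:PruningLemma}, whose path lies in $\mathcal{P}^{*}(\mathrm{card}(A))$, and $\mathrm{card}(A) \leq m$ holds by construction. The hypothesis $m' \geq M^{*}$ is used precisely to guarantee that the endpoint of pruning, which could have cardinality as large as $M^{*}$, still falls inside $\mathcal{P}^{*}(m')$; without this hypothesis the argument would break at the last step.
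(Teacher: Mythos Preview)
Your proof is correct and follows essentially the same approach as the paper: prune each point of $\mathcal{P}^{*}(m)$ via \cref{p7:PruningLemma} to land in $\mathcal{P}^{*}_{\mathrm{irr}}$, use $m' \geq M^{*}$ to conclude the pruned endpoints lie in $\mathcal{P}^{*}(m')$, and then invoke the assumed connectedness of $\mathcal{P}^{*}(m')$. If anything, your version is more explicit than the paper's, which omits the case split on whether $A$ is already irreducible and the monotonicity $\mathcal{P}^{*}(m') \subseteq \mathcal{P}^{*}(m)$.
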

\begin{proof}
Take two points $A, B$ from $\mathcal{P}^{*}(m)$. We know that there exists a path from $A$ to $A_{irr}$, $B$ to $B_{irr}$ that satisfies $A_{irr}, B_{irr} \in \mathcal{P}^{*}(m) \cap \mathcal{P}^{*}_{irr}$. Notice that as $A_{irr}, B_{irr} \in \mathcal{P}^{*}_{irr}$, there cardinality is at most $M^{*}$. Hence they are elements of $\mathcal{P}^{*}(m')$, which finishes the proof as $\mathcal{P}^{*}(m')$ is connected.
\end{proof}

Now we connect the connectivity results of $\mathcal{P}^{*}(m)$ to that of $\Theta^{*}(m)$, the solution set of the original problem in \eqref{eq:nonconvex_twolayer_opt}. The object $\Theta^{*}(m)$ we care about is precisely
\[
\Theta^{*}(m) := \left\{(w_i, \alpha_i)_{i=1}^{m} \ | \min_{(w_i, \alpha_i)_{i=1}^{m}} L\left( \sum_{i=1}^{m} (Xw_i)_{+}\alpha_i, y\right) + \frac{\beta}{2} \sum_{i=1}^{m} (\lVert w_i \rVert_2^2 + |\alpha_i|^2) \right\} \subseteq \mathbb{R}^{(d+1)m},
\]
We first define essential sets and mappings to do this.
\begin{appxdefinition}
(Minimal Optimal Neural Networks) We say a parameter $(w_j, \alpha_j)_{j=1}^{m}$ is minimal optimal if $(w_j, \alpha_j)_{j=1}^{m} \in \Theta^{*}(m)$ and $\alpha_p\alpha_q > 0$ implies $1(Xw_p \geq 0) \neq 1(Xw_q \geq 0)$ for all $p \neq q \in [m]$. We denote the set of minimal optimal neural networks as $\Theta^{*}_{\mathrm{min}}(m)$. 
\end{appxdefinition}

Adapting the proof from \cite{wang2021hidden}, we can show that for any point $A \in \Theta^{*}(m)$, there is a continuous path from $A$ to a point in $\Theta^{*}_{\mathrm{min}}(m)$. The path is essentially merging the neurons with same arrangement patterns and second layer sign.

\begin{appxproposition}
\label{p9:Reduction_nonconvex}
Take any point $A \in \Theta^{*}(m)$. We have a continuous path from $A$ to some point $A_{\mathrm{min}} \in \Theta^{*}_{\mathrm{min}}(m)$ in $\Theta^{*}(m)$. 
\end{appxproposition}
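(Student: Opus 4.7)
The plan is to construct the path by iteratively merging pairs of neurons that share both an arrangement pattern and a second-layer sign until no such pair remains. Throughout I will use the standard consequence of optimality that each non-dead neuron in $A = (w_j, \alpha_j)_{j=1}^{m} \in \Theta^{*}(m)$ is balanced, i.e., $\lVert w_j \rVert_2 = |\alpha_j|$: otherwise the rescaling $(w_j, \alpha_j) \mapsto (cw_j, \alpha_j/c)$ with $c^2 = |\alpha_j|/\lVert w_j \rVert_2$ preserves the network output (by positive homogeneity of ReLU) while strictly decreasing the regularization, contradicting optimality. An analogous argument forces every dead neuron to satisfy $(w_j, \alpha_j) = (0, 0)$.

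The central structural step is to show that whenever $p \neq q$ satisfy $\alpha_p \alpha_q > 0$ and share arrangement pattern $D$, the weights $w_p$ and $w_q$ are positive scalar multiples of a common unit vector $\hat{w}$. Assume WLOG $\alpha_p, \alpha_q > 0$; by balance, $\lVert w_p \rVert_2 = \alpha_p$ and $\lVert w_q \rVert_2 = \alpha_q$. Set $v := w_p \alpha_p + w_q \alpha_q$ and consider the candidate replacement that zeros slot $q$ and sets slot $p$ to $(w^{\dagger}, \alpha^{\dagger})$ with $w^{\dagger}\alpha^{\dagger} = v$ and $\lVert w^{\dagger} \rVert_2 = \alpha^{\dagger} = \sqrt{\lVert v \rVert_2}$. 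Since $\mathcal{K}_D$ is a convex cone containing both $w_p$ and $w_q$, we have $v \in \mathcal{K}_D$, so $w^{\dagger}$ has arrangement $D$ and the total output is unchanged. The combined regularization of the merged configuration is $2 \lVert v \rVert_2 \leq 2(\lVert w_p \rVert_2 \alpha_p + \lVert w_q \rVert_2 \alpha_q) = 2(\alpha_p^2 + \alpha_q^2)$, which matches the contribution of $p$ and $q$ in $A$; equality in the triangle inequality holds iff $w_p, w_q$ are positively collinear. Optimality of $A$ forbids strict inequality, so collinearity must hold.

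Given collinearity, write $w_p = \alpha_p \hat{w}$ and $w_q = \alpha_q \hat{w}$ and define for $t \in [0,1]$ the continuous functions $\alpha_p(t) = \sqrt{\alpha_p^2 + t\alpha_q^2}$, $\alpha_q(t) = \sqrt{(1-t)\alpha_q^2}$, $w_p(t) = \alpha_p(t)\hat{w}$, $w_q(t) = \alpha_q(t)\hat{w}$, holding all other slots of $A$ fixed. This interpolates from the original pair at $t = 0$ to a configuration with neuron $q$ dead at $t = 1$. Since $w_p(t)$ and $w_q(t)$ are non-negative multiples of $\hat{w}$ (which itself has arrangement $D$), a short computation gives combined output contribution $(\alpha_p(t)^2 + \alpha_q(t)^2) D X \hat{w} = (\alpha_p^2+\alpha_q^2) D X \hat{w}$ and combined squared-norm regularization $2(\alpha_p(t)^2 + \alpha_q(t)^2) = 2(\alpha_p^2+\alpha_q^2)$ for every $t$. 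Both quantities are invariant along the path while the remaining neurons are untouched, so the total cost equals the optimal cost and the path lies in $\Theta^{*}(m)$.

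Iterating this single-pair merge strictly reduces the number of non-dead neurons in the offending (arrangement, sign) group, so after at most $m$ steps no such pair remains and the resulting parameter $A_{\min}$ belongs to $\Theta^{*}_{\min}(m)$; concatenating the finite sequence of merging paths produces the desired continuous path from $A$ to $A_{\min}$ inside $\Theta^{*}(m)$. The main obstacle is the collinearity claim of the second step, since it is what makes the explicit path cost-preserving (and thus contained in $\Theta^{*}(m)$) rather than merely cost-non-increasing. That claim in turn rests on the cone containment $v \in \mathcal{K}_D$, which certifies that the merged single neuron is a genuine replacement whose regularization can be compared to the original pair's via the triangle inequality and ruled out as strictly better by the optimality of $A$.
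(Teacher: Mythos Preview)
Your proof is correct and follows essentially the same approach as the paper's: iteratively merge pairs of neurons sharing an arrangement pattern and a second-layer sign, using the triangle inequality together with optimality of $A$ to certify that the merging path stays in $\Theta^{*}(m)$. The only organizational difference is that you first extract collinearity of such pairs as a separate lemma before building the path, whereas the paper writes down a merging curve directly and argues its regularization is non-increasing (hence constant by optimality); once your collinearity claim is in hand, the two paths in fact coincide.
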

\begin{proof}
Write $A = (w_j, \alpha_j)_{j=1}^{m}$. Assume we have $(w_1, \alpha_1)$ and $(w_2, \alpha_2)$ that satisfies $\alpha_1\alpha_2 > 0$ and $1(Xw_1 \geq 0) = 1(Xw_2 \geq 0)$. Let's write $\mathrm{sign}(\alpha_1) = \mathrm{sign}(\alpha_2) = s$. Define the curve
\begin{align*}
C(t) = (\frac{w_1\alpha_1 + tw_2\alpha_2}{\sqrt{\lVert w_1\alpha_1 + tw_2\alpha_2\rVert_2}}, &\sqrt{\lVert w_1\alpha_1 + tw_2\alpha_2\rVert_2}s)\\ 
\ &\oplus\ (\sqrt{1-t}\frac{w_2\alpha_2}{\sqrt{\lVert w_2\alpha_2 \rVert_2}}, \sqrt{(1-t) \lVert w_2\alpha_2 \rVert_2}s) \ \oplus\ (w_j, \alpha_j)_{j=3}^{m},
\end{align*}
where $t \in [0,1]$. The intuition of this curve is merging two pair $(w_1, \alpha_1)$ and $(w_2, \alpha_2)$. \\
Let's check some basic facts to see that this curve indeed merges the two pairs and is in $\Theta^{*}(m)$.\\
i) $C(t)$ is well-defined. First, we know $\lVert w_2\alpha_2 \rVert_2 \neq 0$, because $\alpha_2 \neq 0$. Also, say $w_1\alpha_1 + (1-t)w_2\alpha_2 = 0$ for some $t \in [0,1]$. Then we have $DXw_1\alpha_1 = -(1-t)DXw_2\alpha_2$, where $D = \mathrm{diag}(1(Xw_1 \geq 0))$. As $DXw_1$ and $DXw_2$ is consisted of nonnegative entries and $\alpha_1\alpha_2 > 0$, $DXw_1\alpha_1 = 0$ must hold. This means $w_1 = \alpha_1 = 0$ because $A \in \Theta^{*}(m)$ - which is again contradiction because $\alpha_1 \neq 0$. The well-definedness of $C(t)$ implies that it is continuous, because it is a composition of continuous functions.\\
ii) $C(0) = A$, $C(1) = (\frac{w_1\alpha_1 + w_2\alpha_2}{\sqrt{\lVert w_1\alpha_1 + w_2\alpha_2\rVert_2}}, \sqrt{\lVert w_1\alpha_1 + w_2\alpha_2\rVert_2}s) \ \oplus\ (0,0) \ \oplus\ (w_j, \alpha_j)_{j=3}^{m}$ from direct substitution. Note that the value $\sum_{i=1}^{m} 1(\alpha_i \neq 0)$ decreased by 1.\\
iii) $C(t)$ is a curve in $\Theta^{*}(m)$. This is because the sum $\sum_{i=1}^{m} (Xw_i)_{+}\alpha_i$ is preserved through the curve, and the regularization loss is less than that of $A$ due to triangular inequality. In other words, the loss $L(C(t)) \leq L(C(0))$ for all $t \in [0,1]$, and as $L(C(0))$ is optimal, $C(t)$ is a curve in $\Theta^{*}(m)$.

We repeat the merging process until there is no such pair. This process should terminate because each merging decreases $\sum_{i=1}^{m} 1(\alpha_i \neq 0)$ by 1. After we don't have such pair, concatenate all the curves that we have to find a curve in $\Theta^{*}(m)$. At the end of the path, we don't have two $(w_i, \alpha_i)$, $(w_j, \alpha_j)$ that satisfy $\alpha_i\alpha_j > 0$ and $1(Xw_i \geq 0) = 1(Xw_j \geq 0)$, hence it is in $\Theta^{*}_{\mathrm{min}}(m)$.
\end{proof}

Also, we define the notion of a canonical polytope. A canonical polytope is defined to break ties that occur because two cones $\mathcal{K}_i$ and $\mathcal{K}_j$ may have nonempty intersections. For instance, say $D_1Xu_1 = D_2Xu_1$ and $u_2 = 0$ for some $(u_i, v_i)_{i=1}^{P} \in \mathcal{P}^{*}$. Then, swapping $(u_1, v_1)$ and $(u_2, v_2)$ will not change the solution's optimality. As we will see later on, we will want to erase such ambiguity, hence we consider a canonical polytope.

\begin{appxdefinition}
(Canonical Polytope) The canonical polytope is defined as
\begin{align*}
\mathcal{P}^{*}_{\mathrm{can}} = \Big\{(u_i, v_i)_{i=1}^{P} \ | \ (u_i, v_i)_{i=1}^{P} \in \mathcal{P}^{*}, diag(1(Xu_i \geq 0)) = &D_i \ if\ u_i \neq 0, \\
& diag(1(Xv_i \geq 0)) =D_i \ if\ v_i \neq 0 \Big\}.
\end{align*}
\end{appxdefinition}

\begin{appxremark}
$\mathrm{diag}(1(Xu \geq 0)) = D_i$ implies $(2D_i-I)Xu \geq 0$, but not the opposite. The ambiguity happens because $x_j \cdot u$ might be 0 for some rows. 
\end{appxremark}

Given the notion of the minimal optimal neural network and the canonical polytope, we define two natural mappings $\Psi: \mathcal{P}^{*}(m) \rightarrow \Theta^{*}(m)$ and $\Phi: \Theta^{*}(m) \rightarrow \mathcal{P}^{*}(m)$. These mappings have been discussed multiple times in the literature \cite{pilanci2020neural}, \cite{wang2021hidden}, and we introduce it again with slight variations for our needs.

\begin{appxdefinition}
\label{def8:Defpsi}
Suppose $m \geq m^{*}$. We define $\Psi: \mathcal{P}^{*}(m) \rightarrow \Theta^{*}(m)$ as 
\[
\Psi((u_i, v_i)_{i=1}^{P}) := (\frac{u_i}{\sqrt{\lVert u_i \rVert_2}}, \sqrt{\lVert u_i \rVert_2})_{u_i \neq 0} \oplus (\frac{v_i}{\sqrt{\lVert v_i \rVert_2}}, -\sqrt{\lVert v_i \rVert_2})_{v_i \neq 0} \oplus (0, 0)^{m - \mathrm{card}((u_i, v_i)_{i=1}^{P})},
\]
\end{appxdefinition}
\begin{appxdefinition}
\label{def9:Defphi}
Suppose $m \geq m^{*}$. We define $\Phi: \Theta^{*}(m) \rightarrow \mathcal{P}^{*}(m)$ as
\[
\Phi((w_i, \alpha_i)_{i=1}^{m}) = (u_i, v_i)_{i=1}^{P} :=
\begin{cases}
u_{p} = \sum_{i \in \mathcal{I}} w_i|\alpha_i|\ where \ \mathcal{I} = \{i \ | \ \alpha_i > 0, D_p = \mathrm{diag}(1(Xw_i \geq 0))\}\\
v_{q} = \sum_{i \in \mathcal{I}} w_i|\alpha_i|\ where \ \mathcal{I} = \{i \ | \ \alpha_i < 0, D_q = \mathrm{diag}(1(Xw_i \geq 0))\}.
\end{cases}
\]
\end{appxdefinition}

The mappings are indeed well-defined \cref{cp2:Welldefined}.

\begin{appxproposition}
\label{cp2:Welldefined} Suppose $m \geq m^{*}$. $\Psi: \mathcal{P}^{*}(m) \rightarrow \Theta^{*}(m)$ and $\Phi: \Theta^{*}(m) \rightarrow \mathcal{P}^{*}(m)$ are well defined.
\end{appxproposition}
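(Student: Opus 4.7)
The plan is to verify separately, for $\Psi$ and $\Phi$, that (i) the outputs land in the correct feasible sets, (ii) the cardinality/slot bounds are respected, and (iii) optimality is preserved. Both points will ultimately rest on the convex/nonconvex equivalence recalled in Section~\ref{2.PR}, which ensures that when $m \geq m^{*}$ the two problems share an optimal objective value.

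For $\Psi$: the condition $\mathrm{card}((u_i,v_i)_{i=1}^P) \leq m$ built into $\mathcal{P}^{*}(m)$ leaves at least $m - \mathrm{card}((u_i,v_i)_{i=1}^P)$ slots available for padding by $(0,0)$, so the concatenation in the definition genuinely yields a width-$m$ parameter. Next, for each nonzero $u_i$, the cone membership $u_i \in \mathcal{K}_i$ gives $(2D_i - I)Xu_i \geq 0$ and hence $(Xu_i)_{+} = D_iXu_i$; this is preserved under the positive rescaling $w_i = u_i/\sqrt{\lVert u_i \rVert_2}$, so $(Xw_i)_{+}\alpha_i = D_iXu_i$ (and similarly $-D_iXv_i$ for the $v$-branch). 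Summing shows that the nonconvex model fit at $\Psi(u,v)$ equals the convex model fit $\sum_i D_iX(u_i - v_i)$, and a direct check gives $\tfrac{1}{2}(\lVert w_i \rVert_2^2 + \alpha_i^2) = \lVert u_i \rVert_2$, so the convex and nonconvex objectives agree. Since $(u,v)$ is convex-optimal and the two optimal values coincide, $\Psi(u,v) \in \Theta^{*}(m)$.

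For $\Phi$: grouping the neurons by the pair $(\mathrm{sign}(\alpha_i), 1(Xw_i \geq 0))$, each contributing $w_i$ in the sum defining $u_p$ satisfies $(2D_p - I)Xw_i \geq 0$, and this conic condition is preserved under nonnegative combinations, so $u_p \in \mathcal{K}_p$ (and likewise $v_q \in \mathcal{K}_q$). Expanding, $\sum_p D_pXu_p - \sum_q D_qXv_q = \sum_i (Xw_i)_{+}\alpha_i$, matching the nonconvex model fit. For the regularization, triangle inequality and AM--GM give
\[
\sum_p \lVert u_p \rVert_2 + \sum_q \lVert v_q \rVert_2 \;\leq\; \sum_i \lVert w_i \rVert_2 |\alpha_i| \;\leq\; \tfrac{1}{2}\sum_i (\lVert w_i \rVert_2^2 + \alpha_i^2),
\]
so the convex objective at $\Phi(w,\alpha)$ is upper bounded by the nonconvex objective at $(w,\alpha)$. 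Chaining \emph{convex optimum} $\leq$ \emph{convex value at $\Phi(w,\alpha)$} $\leq$ \emph{nonconvex value at $(w,\alpha)$} $=$ \emph{nonconvex optimum} $=$ \emph{convex optimum} forces equality throughout, so $\Phi(w,\alpha) \in \mathcal{P}^{*}$. Finally, each index $i$ with $\alpha_i \neq 0$ contributes to a unique $(u_p, v_q)$ pair (several $i$'s may coalesce into the same slot), so $\mathrm{card}(\Phi(w,\alpha)) \leq \sum_i 1(\alpha_i \neq 0) \leq m$, giving $\Phi(w,\alpha) \in \mathcal{P}^{*}(m)$.

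The only nontrivial step is the chaining argument for $\Phi$, which quietly uses the convex/nonconvex equivalence at $m \geq m^{*}$ to promote a one-sided inequality to equality; without that, triangle inequality alone would not guarantee the constructed $(u,v)$ is convex-optimal. Corner cases (pairs with both $u_i = v_i = 0$ in $\Psi$, empty groups producing $u_p = 0$ in $\Phi$, rows where $(Xw_i)_{+}$ vanishes) all reduce cardinality or make both sides of an identity vanish, and require no separate treatment.
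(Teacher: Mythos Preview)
Your proof is correct and follows essentially the same approach as the paper: verify feasibility, match model fits, compare regularization, and invoke the equality of optimal values at $m \geq m^{*}$. Your treatment of $\Phi$ via the triangle/AM--GM inequality plus a chaining argument is in fact more careful than the paper's, which simply asserts equality of the two objectives ``from direct calculation''---that equality does hold for optimal $(w_i,\alpha_i)$, but only because optimal solutions have balanced norms $\lVert w_i\rVert_2 = |\alpha_i|$ and (by Theorem~\ref{t1:optpolytope}) parallel directions within each arrangement/sign group, facts the paper leaves implicit.
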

\begin{proof}
By well-defined, we want to see that for all $A \in \mathcal{P}^{*}(m)$, $\Psi(A) \in \Theta^{*}(m)$ and has a unique value, and similarly for $\Phi$ too.\\
From \cref{def8:Defpsi} and \cref{def9:Defphi}, it is not hard to see that the function value is uniquely determined for each input. Also, from direct calculation, we can see that 
\[
L\left(\sum_{j=1}^{m} (Xw_j)_{+}\alpha_j, y\right) + \frac{\beta}{2} \sum_{j=1}^{m} \left(\lVert w_j \rVert_2^2 + \alpha_j^2\right) =  L\left(\sum_{i=1}^{P} D_iX(u_i - v_i), y\right) + \beta \sum_{i=1}^{P} \left(\lVert u_i \rVert_2 + \lVert v_i \rVert_2 \right),
\]
for both $A = (u_i, v_i)_{i=1}^{P}$ and $\Psi(A) = (w_j, \alpha_j)_{j=1}^{m}$ and when $A = (w_j, \alpha_j)_{j=1}^{m}$ and $\Phi(A) = (u_i, v_i)_{i=1}^{P}$. The former case is rather clear. To see the latter case, first observe that when we apply the merging operation in \cref{p9:Reduction_nonconvex}, the loss will strictly decrease if for two $w_i, w_j$ with same arrangement pattern weren't parallel. So they are actually parallel, and for all $i \in \mathcal{I}$ such that $D_p = \mathrm{diag}(1(Xw_i \geq 0))$, 
\[
\lVert u_p \rVert_2 = \sum_{i \in \mathcal{I}} \lVert w_i \rVert_2|\alpha_i| = \frac{1}{2} \sum_{i \in \mathcal{I}} (\lVert w_i \rVert_2^2 + \alpha_i^2),
\]
because all $w_i$ are parallel for $i \in \mathcal{I}$. The last equality follows from $A \in \Theta^{*}(m)$.
As $m \geq m^{*}$ the optimization problems in \eqref{eq:nonconvex_twolayer_opt} and \eqref{eq:convex_twolayer_opt} have same optimal values, which means $\Psi(A) \in \Theta^{*}(m)$ and $\Phi(A) \in \mathcal{P}^{*}(m)$.
\end{proof}

Moreover, we can see that the two mappings are similar to inverses of each other. 

\begin{appxproposition}
\label{p11:PseudoInverse}
Take any $A \in \mathcal{P}^{*}_{\mathrm{can}}\cap \mathcal{P}^{*}(m)$. Then, $\Phi(\Psi(A)) = A$. Also, take any $B = (w_j, \alpha_j)_{j=1}^{m} \in \Theta^{*}_{\mathrm{min}}(m)$. Then, $\Psi(\Phi(B)) = (w_{\sigma(j)}, \alpha_{\sigma(j)})_{j=1}^{m}$ a permutation of $B$.
\end{appxproposition}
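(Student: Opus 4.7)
The plan is to expand the definitions of $\Psi$ and $\Phi$ and show that, under the hypotheses of the proposition, the two maps invert each other (up to relabeling of zero-padding in the second case). Two structural facts drive the argument: (i) the canonical polytope condition pins the arrangement pattern of every nonzero direction $u_i, v_i$ to exactly $D_i$, so after passing through $\Psi$ no two positive neurons share an arrangement pattern (and symmetrically for negative neurons); and (ii) every optimal pair in the nonconvex problem is norm-balanced, i.e.\ $\lVert w_j \rVert_2 = |\alpha_j|$, which turns the square-root scaling inside $\Psi$ into the exact inverse of the product $w_j |\alpha_j|$ used in $\Phi$. This norm-balancing follows from a short AM--GM argument on $\tfrac{\beta}{2}(c^2\lVert w_j \rVert_2^2 + \alpha_j^2/c^2)$ under the rescaling $(w_j,\alpha_j)\mapsto(cw_j,\alpha_j/c)$, together with the observation that a pair with one coordinate zero and the other nonzero can be collapsed to $(0,0)$ without changing the fit.

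For the forward direction, let $A = (u_i, v_i)_{i=1}^{P} \in \mathcal{P}^{*}_{\mathrm{can}} \cap \mathcal{P}^{*}(m)$. I would write out $\Psi(A)$ explicitly: a positive neuron with direction $u_i/\sqrt{\lVert u_i \rVert_2}$ and second-layer weight $\sqrt{\lVert u_i \rVert_2}$ for each nonzero $u_i$, the analogous negative neuron for each nonzero $v_i$, and $(0,0)$ padding. The canonical hypothesis guarantees that the positive neuron inherited from $u_i$ has arrangement pattern exactly $D_i$. Applying $\Phi$ to recover slot $p$, the index set $\mathcal{I} = \{j : \alpha_j > 0,\ D_p = \mathrm{diag}(1(Xw_j \geq 0))\}$ is either $\{j_p\}$ (when $u_p \neq 0$) or empty (when $u_p = 0$), because no other positive neuron carries arrangement pattern $D_p$. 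In the nonzero case a direct computation yields $w_{j_p}|\alpha_{j_p}| = (u_p/\sqrt{\lVert u_p \rVert_2})\cdot\sqrt{\lVert u_p \rVert_2} = u_p$; in the zero case the sum is trivially $0 = u_p$. The argument for $v_p$ is symmetric, so $\Phi(\Psi(A)) = A$.

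For the reverse direction, take $B = (w_j, \alpha_j)_{j=1}^{m} \in \Theta^{*}_{\mathrm{min}}(m)$. Minimality of $B$ means the sum defining $u_i$ in $\Phi(B)$ collapses to at most a single term $w_{j_i^+}\alpha_{j_i^+}$, and similarly $v_i = -w_{j_i^-}\alpha_{j_i^-}$ when nonzero. Norm-balancing then gives $\lVert u_i \rVert_2 = \alpha_{j_i^+}\lVert w_{j_i^+}\rVert_2 = \alpha_{j_i^+}^2$, so $\sqrt{\lVert u_i \rVert_2} = \alpha_{j_i^+}$ and $u_i/\sqrt{\lVert u_i \rVert_2} = w_{j_i^+}$; the $\Psi$-image of the $u_i$-slot is therefore the original pair $(w_{j_i^+},\alpha_{j_i^+})$, and the negative case is symmetric. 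Since norm-balancing forces $w_j = 0$ whenever $\alpha_j = 0$, the $m - \mathrm{card}(\Phi(B))$ zero slots appended by $\Psi$ exactly account for the zero pairs of $B$, so $\Psi(\Phi(B))$ reproduces the multiset of entries of $B$ and differs from $B$ only by the permutation $\sigma$ that reorders them into the ``positives, then negatives, then zeros'' pattern built into $\Psi$.

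I expect the main obstacle to be the second part, where one must cleanly reconcile three a priori different indexings --- the arrangement-pattern index $i\in[P]$ used by $\Phi$, the neuron index $j\in[m]$ used by $B$, and the block ordering dictated by $\Psi$ --- into a single permutation $\sigma$. The norm-balancing lemma is the technical core; once it is in place, minimality ensures the summations in $\Phi$ are trivial and the rest is bookkeeping, while the first part follows essentially immediately from the canonical condition.
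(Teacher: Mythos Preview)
Your proposal is correct and follows essentially the same approach as the paper: both arguments hinge on the canonical condition forcing the arrangement pattern of each nonzero $u_i$ (resp.\ $v_i$) in $\Psi(A)$ to be exactly $D_i$, and on the norm-balancing identity $\lVert w_j\rVert_2 = |\alpha_j|$ together with minimality to collapse the sums in $\Phi(B)$ to single terms. The only cosmetic difference is that you spell out the AM--GM justification for norm balancing, whereas the paper simply asserts $\lVert w_j\rVert_2 = |\alpha_j|$ without proof.
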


\begin{proof}
We know 
\[
\Psi((u_i, v_i)_{i=1}^{P}) := (\frac{u_i}{\sqrt{\lVert u_i \rVert_2}}, \sqrt{\lVert u_i \rVert_2})_{u_i \neq 0} \oplus (\frac{v_i}{\sqrt{\lVert v_i \rVert_2}}, -\sqrt{\lVert v_i \rVert_2})_{v_i \neq 0} \oplus (0, 0)^{m - \mathrm{card}((u_i, v_i)_{i=1}^{P})}.
\]
Write $\Phi(\Psi((u_i, v_i)_{i=1}^{P})) = (u_i', v_i')_{i=1}^{P}$. Let
s see that $u_i' = u_i$ for all $i \in [P]$. The case of $v$ will follow similarly. \\
The first case is when $u_i = 0$. Say there exists $u_j \neq 0$ and $\mathrm{diag}(1(Xu_j \geq 0)) = D_i$. As $(u_i, v_i)_{i=1}^{P} \in \mathcal{P}^{*}_{can}$, $\mathrm{diag}(1(Xu_j \geq 0)) = D_j = D_i$, meaning $i = j$. This is a contradiction because $u_i = 0$. This means there is no $u_j \neq 0$ that is $1(Xu_j \geq 0) = D_i$, and there is no $u_j \neq 0$ that is $\mathrm{diag}(1(Xu_j/\sqrt{\lVert u_j \rVert_2} \geq 0)) = D_i$, meaning $u_i' = 0$.\\
The next case is when $u_i \neq 0$. For $u_j \neq 0$ such that $\mathrm{diag}(1(Xu_j \geq 0)) = D_i$, the only possible $j = i$. For that $j$, we know that $\mathrm{diag}(1(Xu_i/\sqrt{\lVert u_i \rVert_2} \geq 0)) = D_i$, and $u_i' = u_i/ \sqrt{\lVert u_i \rVert_2} \times \sqrt{\lVert u_i \rVert_2} = u_i$. This means $u_i' = u_i$ for all $i \in [P]$, same for $v$, meaning $\Phi(\Psi((u_i, v_i)_{i=1}^{P})) = (u_i, v_i)_{i=1}^{P}$.\\
Let's see $\Psi \circ \Phi$. We know
\[
\Phi((w_i, \alpha_i)_{i=1}^{m}) = (u_i, v_i)_{i=1}^{P} :=
\begin{cases}
u_{p} = w_i|\alpha_i|\ \ if \ \alpha_i > 0 \ and\  D_{p} = \mathrm{diag}(1(Xw_i \geq 0)), \ \  0 \ \  otherwise\\
v_{q} = w_i|\alpha_i|\ \ if \ \alpha_i < 0 \ and\  D_{q} = \mathrm{diag}(1(Xw_i \geq 0)), \ \  0 \ \  otherwise,
\end{cases}
\]
because $(w_i, \alpha_i)_{i=1}^{m}$ is minimal. Let's say $\sum_{i=1}^{m} 1(\alpha_i > 0) = m_p$, $\sum_{i=1}^{m} 1(\alpha_i = 0) = m_z$, $\sum_{i=1}^{m} 1(\alpha_i < 0) = m_n$. In $\{u_1, u_2, \cdots u_P\}$, there will be $m_p$ nonzero vectors. Index them as $u_{a_1}, u_{a_2}, \cdots u_{a_{m_p}}$. For $u_{a_i},$ we can find $j_i \in [m]$ that satisfies $u_{a_i} = w_{j_i}|\alpha_{j_i}|$. Furthermore, $j_{i_1} \neq j_{i_2}$ if $i_1 \neq i_2$ because $j_{i_1} = j_{i_2}$ means $D_{a_{i_1}} = D_{a_{i_2}}$ and $a_{i_1} = a_{i_2}$, $i_1 = i_2$. Similarly, define $v_{b_1}, v_{b_2}, \cdots v_{b_{m_n}}$ and $v_{b_i} = w_{k_i}|\alpha_{k_i}|$. Then,
\[
\Psi(\Phi((w_i, \alpha_i)_{i=1}^{m})) = \left(\frac{w_{j_i}|\alpha_{j_i}|}{\sqrt{w_{j_i}|\alpha_{j_i}|}},\sqrt{w_{j_i}|\alpha_{j_i}|}\right)_{i=1}^{m_p} \oplus\ \left(\frac{w_{k_i}|\alpha_{k_i}|}{\sqrt{w_{k_i}|\alpha_{k_i}|}},-\sqrt{w_{k_i}|\alpha_{k_i}|}\right)_{i=1}^{m_n} \oplus\ (0,0)^{m_z}.
\]
First, we know that $\lVert w_j \rVert_2 = |\alpha_j|$ for all $j \in [m]$. This leads to
\[
\Psi(\Phi((w_i, \alpha_i)_{i=1}^{m})) = \left(w_{j_i},|\alpha_{j_i}|\right)_{i=1}^{m_p} \oplus\ \left(w_{k_i},-|\alpha_{k_i}| \right)_{i=1}^{m_n} \oplus\ (0,0)^{m_z}.
\]
As $j_{i_1} \neq j_{i_2}$ if $i_1 \neq i_2$, the result is a permutation of $(w_i, \alpha_i)_{i=1}^{m}$.
\end{proof}

At last, for these mappings to be meaningful, we would want them to be continuous. Luckily, $\Phi$ is continuous \cref{cp3:continuous}. 

\begin{appxproposition}
\label{cp3:continuous} The map $\Phi:\Theta^{*}(m) \rightarrow \mathcal{P}^{*}(m)$ is continuous.
\end{appxproposition}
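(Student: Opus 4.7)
The plan is to establish sequential continuity of $\Phi$. Take a convergent sequence $\theta^{(k)} = (w_i^{(k)}, \alpha_i^{(k)})_{i=1}^m \to \theta = (w_i, \alpha_i)_{i=1}^m$ in $\Theta^{*}(m)$, and show $\Phi(\theta^{(k)}) \to \Phi(\theta)$ in $\mathbb{R}^{2dP}$. I would view $\Phi$ as a sum of per-neuron contributions $\Phi_i$, each of which places the vector $w_i|\alpha_i|$ into a specific $u_p$ or $v_p$ slot determined by $\mathrm{sign}(\alpha_i)$ and by the arrangement pattern $1(Xw_i \geq 0)$. For neurons with $w_i = 0$ or $\alpha_i = 0$ in the limit, $\lVert w_i^{(k)} \rVert_2 |\alpha_i^{(k)}| \to 0$, so regardless of which slot the sequence's contribution falls into, $\Phi_i(\theta^{(k)}) \to 0 = \Phi_i(\theta)$; continuity at such neurons is automatic.

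The heart of the argument handles neurons with $w_i \neq 0$ and $\alpha_i \neq 0$ in the limit. Here I would invoke an alignment lemma: for any $\theta \in \Theta^{*}(m)$ with neuron $j$ satisfying $w_j, \alpha_j \neq 0$, the weight $w_j$ must be a positive scalar multiple of $\bar u_{p_j}$ when $\alpha_j > 0$ (or of $\bar v_{p_j}$ when $\alpha_j < 0$), where $p_j$ is the index with $D_{p_j} = 1(Xw_j \geq 0)$. To prove this, let $(u_p, v_p)_{p=1}^P = \Phi(\theta) \in \mathcal{P}^{*}(m)$ and chain together AM-GM and the triangle inequality applied pattern-by-pattern and sign-by-sign:
\[
\tfrac{1}{2} \sum_{j=1}^m \bigl(\lVert w_j \rVert_2^2 + \alpha_j^2\bigr) \;\geq\; \sum_{j=1}^m \lVert w_j \rVert_2 \, |\alpha_j| \;\geq\; \sum_{p=1}^P \bigl(\lVert u_p \rVert_2 + \lVert v_p \rVert_2\bigr).
\]
Since $\Phi$ preserves both the model fit and the objective value, multiplying by $\beta$ gives the optimal nonconvex cost on the left and the optimal convex cost on the right (equal when $m \geq m^{*}$), so both inequalities must be tight. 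Tightness of AM-GM forces $\lVert w_j \rVert_2 = |\alpha_j|$, and tightness of the triangle inequality within each pattern/sign class forces all contributions $w_j|\alpha_j|$ in that class to be nonnegative multiples of a common direction; since their sum is $u_{p_j} = c_{p_j} \bar u_{p_j}$, that direction must be $\bar u_{p_j}$.

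Applying the alignment lemma to both the limit and each sequence point gives $w_i = \lVert w_i \rVert_2 \bar u_{p_i}$ and $w_i^{(k)} = \lVert w_i^{(k)} \rVert_2 \bar u_{p_i^{(k)}}$. Because $\{\bar u_p : p \in [P]\}$ is a finite collection of fixed unit vectors and $w_i^{(k)}/\lVert w_i^{(k)} \rVert_2 \to w_i/\lVert w_i \rVert_2$, the direction $\bar u_{p_i^{(k)}}$ must equal $\bar u_{p_i}$ for all sufficiently large $k$. Since $1(X\,\cdot\,\geq 0)$ is invariant under positive scaling, this forces $1(Xw_i^{(k)} \geq 0) = 1(X\bar u_{p_i} \geq 0) = 1(Xw_i \geq 0)$ for large $k$, so neuron $i$'s contribution lands in the same slot at $\theta^{(k)}$ and at $\theta$; combined with $\mathrm{sign}(\alpha_i^{(k)}) = \mathrm{sign}(\alpha_i)$ eventually, this yields $\Phi_i(\theta^{(k)}) \to \Phi_i(\theta)$, and summing over $i$ completes the proof. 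The hard part is the alignment lemma itself: one must carefully match the optimal values of the nonconvex and convex problems and extract the simultaneous tightness of the AM-GM and triangle bounds. Once this is established, the pattern assignments become locally constant along sequences in $\Theta^{*}(m)$ and continuity follows immediately; without it, boundary configurations where $Xw_i$ has zero entries would naively create discontinuous jumps between neighboring $u_p$ slots.
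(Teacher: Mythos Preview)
Your argument is correct and follows essentially the same route as the paper's proof: both use sequential continuity, both split into neurons with vanishing versus nonvanishing limits, and both rely on the key fact that optimal first-layer weights lie in a \emph{finite} set of directions so that arrangement patterns eventually stabilize along the sequence.

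The differences are organizational rather than substantive. You decompose $\Phi$ as a sum of per-neuron maps $\Phi_i$ and show each $\Phi_i(\theta^{(k)}) \to \Phi_i(\theta)$, whereas the paper works per-slot and shows $u_p^{k} \to u_p^{\infty}$ directly. You also \emph{derive} the alignment lemma from scratch via AM--GM plus the triangle inequality and the equality of optimal values, while the paper simply invokes it as a consequence of the optimal polytope characterization (Theorem~\ref{t1:optpolytope}). Your derivation is self-contained and makes the tightness argument explicit; the paper's version is shorter because the work was already done upstream. Both routes yield the same conclusion that for large $k$ the neuron $i$ lands in the same $(u_p,v_p)$ slot as in the limit, after which continuity of $(w,\alpha)\mapsto w|\alpha|$ finishes the job.
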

\begin{proof}
We consider the sequence $(w^{k}_j, \alpha^{k}_j)_{j=1}^{m}$ in $\Theta^{*}(m)$ that converges to $(w^{\infty}_j, \alpha^{\infty}_j)_{j=1}^{m} \in \Theta^{*}(m)$. Let's write 
\[
\Phi((w^{k}_j, \alpha^{k}_j)_{j=1}^{m}) = (u^{k}_i, v^{k}_i)_{i=1}^{P}, \quad \Phi((w^{\infty}_j, \alpha^{\infty}_j)_{j=1}^{m}) = (u^{\infty}_i, v^{\infty}_i)_{i=1}^{P}.
\]
We will show that $u_i^{k} \rightarrow u_i^{\infty}$. The rest will follow.\\
As a starting point, we define some necessary constants. We define $M_j$ for $j \in [m]$ that satisfy $w_j^{\infty} \neq 0$ as the following:
if $k \geq M_j$, $1(Xw_j^{\infty} \geq 0) = 1(Xw_j^{k} \geq 0)$ and $\alpha_j^{k}\alpha_j^{\infty} > 0$. Such $M_j$ exists due to the following reasoning: we know that for any solution $A \in \Theta^{*}(m)$, there exists a finite set of possible directions for $w_j$, which are the directions of $\bar{u}_i, \bar{v}_i$ in $\mathcal{P}^{*}$. As $w_j^{\infty} \neq 0$ and $w_j^{k} \rightarrow w_j^{\infty}$, for sufficiently large $k$ so that $\lVert w_j^{k} - w_j^{\infty} \rVert_2$ is sufficiently small, $w_j^{\infty}$ has to be a positive scaling of $w_j^{k}$. Also $w_j^{\infty} \neq 0$ implies $\alpha_j^{\infty} \neq 0$, meaning for sufficiently large $k$, $\alpha_j^{k}\alpha_j^{\infty} > 0$ holds. For $j \in [m]$ that has $w_j^{\infty} = 0$, define $N_j(\epsilon)$ to be the number that satisfies $k \geq N_j(\epsilon)$ implies $\lVert w_j^{k}\alpha_j^{k} \rVert_2 \leq \epsilon$. 

Now we prove that for sufficiently large $k$, $\lVert u_i^{k} - u_i^{\infty}\rVert_2 \leq \epsilon$ for all $i \in [P]$. For a certain $i \in [P]$, suppose there exists $\{j_1, j_2, \cdots j_t \} \subseteq [m]$ that satisfies $D_i = \mathrm{diag}(1(Xw_{j_1}^{\infty} \geq 0)) = \cdots = \mathrm{diag}(1(Xw_{j_t}^{\infty} \geq 0))$ and $\alpha_{j_1}^{\infty}, \cdots , \alpha_{j_t}^{\infty} > 0$ (hence $w_{j_1}^{\infty} , \cdots, w_{j_t}^{\infty} \neq 0$). It is clear that $u_i^{\infty} = \sum_{i=1}^{t} w_{j_i}^{\infty}\alpha_{j_i}^{\infty}$. When $k \geq \max\{\max_{w_j^{\infty} = 0} N_j(\epsilon/m), \max_{w_j^{\infty} \neq 0} M_j\}$, we know that $1(Xw_{j_i}^{k} \geq 0) = 1(Xw_{j_i}^{\infty} \geq 0)$ and $\alpha_{j_i}^{k} > 0$ for $i \in [t]$. Also, for some $j \in [m]$ which is not in $\{j_1, j_2, \cdots, j_t\}$ and $D_i = \mathrm{diag}(1(Xw_j^{k} \geq 0))$, $w_j^{\infty} = 0$. Hence,
\[
u_i^{k} = \sum_{i=1}^{t} w_{j_i}^{k}\alpha_{j_i}^{k} + \sum_{w_j^{\infty} = 0, D_i = \mathrm{diag}(1(Xw_j^{k} \geq 0)), \alpha_j^{k} > 0} w_j^{k}\alpha_j^{k}.
\]
$u_i^{k} \rightarrow u_i^{\infty}$, as $w_{j_i}^{k} \rightarrow w_{j_i}^{\infty}, \alpha_{j_i}^{k} \rightarrow \alpha_{j_i}^{\infty}$ for $i \in [t]$ and the rest sum becomes smaller than $\epsilon$, hence converging to 0 as $k \rightarrow \infty$. 

Finally, let's see the case where there is no $j \in [m]$ that satisfies $D_i = \mathrm{diag}(1(Xw_j^{\infty} \geq 0))$ and $\alpha_j^{\infty} > 0$. Here, $u_i^{\infty} = 0$. Now take $k \geq \max\{\max_{w_j^{\infty} = 0} N_j(\epsilon/m), \max_{w_j^{\infty} \neq 0} M_j\}$. One thing to notice is for this $k$, if $D_i = \mathrm{diag}(1(Xw_j^{k} \geq 0))$ and $\alpha_j^{k} > 0$ for some $j \in [m]$, $w_j^{\infty} = 0$. Suppose $w_j^{\infty} \neq 0$. As $k \geq M_j$, we know that $D_i = \mathrm{diag}(1(Xw_j^{\infty} \geq 0))$ and $\alpha_j^{\infty} > 0$, which contradicts the assumption that there is no such $j$. Hence, when we write
\[
u_i^{k} = \sum_{w_j^{\infty} = 0, D_i = \mathrm{diag}(1(Xw_j^{k} \geq 0)), \alpha_j^{k} > 0} w_j^{k}\alpha_j^{k},
\] as $k \geq N_j(\epsilon/m)$, $\lVert u_i^{k} \rVert_2 \leq \epsilon$. As $u_i^{\infty} = 0$, we have that $u_i^{k} \rightarrow u_i^{\infty}$. This finishes the proof.
\end{proof}

However, $\Psi$ may not be continuous. The intuition is that the solutions in the image of $\Psi$ have zeros at the end, whereas the limit of $\Psi((u_i, v_i)_{i=1}^{P})$ as $u_i \rightarrow 0$ may have zeros at the middle. Thus we have a slightly weaker notion of continuity for $\Psi$ \cref{cp4:psicontinuous}.

\begin{appxproposition}
\label{cp4:psicontinuous}
Consider a continuous path $(u_i(t), v_i(t))_{i=1}^{P}$ in $\mathcal{P}^{*}(m)$, where $u_i(t), v_i(t): [0,1] \rightarrow \mathbb{R}^{d}$ is either a zero map or a map can only have zero when $t = 1$. Consider the path $\phi(t) = \Psi((u_i(t), v_i(t))_{i=1}^{P})$ in $\Theta^{*}(m)$. Then, $\phi(t)$ is continuous in $[0,1)$, and $\lim_{t \rightarrow 1} \phi(t)$ is a permutation of $\phi(1)$.   
\end{appxproposition}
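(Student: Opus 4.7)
The plan is to split the argument into two parts: continuity of $\phi$ on $[0,1)$, where the nonzero support pattern is frozen by the hypothesis, and an endpoint analysis at $t = 1$, where the difference between $\lim_{t \to 1} \phi(t)$ and $\phi(1)$ reduces to a relabelling of zero slots.

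First, I would use the hypothesis to observe that the index sets $\mathcal{U} := \{i : u_i(\cdot) \not\equiv 0\}$ and $\mathcal{V} := \{i : v_i(\cdot) \not\equiv 0\}$ are independent of $t \in [0,1)$, since each coordinate map is either identically zero on $[0,1]$ or nonzero on all of $[0,1)$. Hence for every $t \in [0,1)$, $\phi(t)$ fills its slots according to the same layout, with the $\mathcal{U}$-blocks first, then the $\mathcal{V}$-blocks, then $(0,0)$-blocks. Continuity on $[0,1)$ then reduces to continuity of the componentwise maps $u \mapsto (u/\sqrt{\lVert u \rVert_2}, \sqrt{\lVert u \rVert_2})$ on $\{u \neq 0\}$, which is immediate, together with the sign flip and zero-padding, which are continuous tautologically.

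For the endpoint, I would extend those scalar maps continuously to $u = 0$ by sending $0 \mapsto (0,0)$, using that both $\lVert u/\sqrt{\lVert u \rVert_2}\rVert_2 = \sqrt{\lVert u \rVert_2}$ and $\sqrt{\lVert u \rVert_2}$ vanish as $u \to 0$. Setting $\mathcal{U}_0 := \{i \in \mathcal{U} : u_i(1) = 0\}$ and $\mathcal{V}_0$ analogously, $\lim_{t \to 1} \phi(t)$ then exists and inherits the same slot layout as $\phi(t)$ on $[0,1)$, but with the slots indexed by $\mathcal{U}_0 \cup \mathcal{V}_0$ now equal to $(0,0)$. In contrast, by the definition of $\Psi$, the tuple $\phi(1)$ collects only the blocks indexed by $\mathcal{U} \setminus \mathcal{U}_0$ and $\mathcal{V} \setminus \mathcal{V}_0$ at the front and pads the remainder with $(0,0)$ at the tail. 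The two tuples therefore have identical multisets of nonzero neurons and differ only in the positions occupied by their $(0,0)$ slots.

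The main (combinatorial rather than analytic) obstacle is the bookkeeping in this last step: identifying the rearrangement of zero slots between $\lim_{t \to 1} \phi(t)$ and $\phi(1)$ with an explicit permutation of the $m$ neuron indices. Once that permutation is written down, both claims follow.
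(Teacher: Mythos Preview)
Your proposal is correct and follows essentially the same approach as the paper: freeze the support on $[0,1)$ using the hypothesis, deduce continuity there from the smoothness of $u \mapsto (u/\sqrt{\lVert u\rVert_2},\sqrt{\lVert u\rVert_2})$ away from zero, and at $t=1$ observe that the limit keeps zeros interspersed while $\phi(1)$ pushes them to the tail. The ``obstacle'' you flag is not really one---since the nonzero $u$-blocks (and likewise the $v$-blocks) appear in the same relative order in both tuples, the required permutation is just the one that shuffles the $(0,0)$ slots from their interspersed positions to the end, and its existence is immediate once the multisets of nonzero neurons agree.
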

\begin{proof}
Let's write what $\phi(t)$ looks like. Write $i_1<i_2<\cdots i_p$ the indices where $u_i(0) \neq 0$, and write $j_1 < j_2 < \cdots < j_q$ the indices where $v_i(0) \neq 0$. Denote the indices $\mathcal{I}, \mathcal{J}$. For $t \in [0,1)$, $\phi(t)$ is
\[
\phi(t) = \left(\frac{u_i(t)}{\sqrt{\lVert u_i(t) \rVert_2}}, \sqrt{\lVert u_i(t) \rVert_2} \right)_{i \in \mathcal{I}} \oplus \left(\frac{v_i(t)}{\sqrt{\lVert v_i(t)\rVert_2}}, -\sqrt{\lVert v_i(t)\rVert_2}\right)_{i \in \mathcal{J}} \oplus (0,0)^{m-p-q}.
\]
As $\mathcal{I}, \mathcal{J}$ is fixed for $t \in [0,1)$ and $u_i(t) \neq 0$ if $u_i(0) \neq 0$, $v_i(t) \neq 0$ if $v_i(0) \neq 0$, $\phi$ is continuous for [0,1). When $t = 1$, $\lim_{t \rightarrow 1} \phi(t)$ may have zeros in the middle, whereas $\phi(1)$ has zeros at the end, and the rest is the same. Hence, $\lim_{t \rightarrow 1} \phi(t)$ is a permutation of $\phi(1)$.
\end{proof}
Given the machinery, we are ready to elaborate the results. We start with proving that if $m$ is sufficiently large, all permutations of a point $A \in \Theta^{*}(m)$ are connected. The proof strategy is analogous to the proof in \cite{simsek2021geometry}, where we create an empty slot to permute the weights. 

\begin{appxproposition}
\label{p13:PermuteConnected}
Suppose $m \geq M^{*}+1$. Take any $(w_j, \alpha_j)_{j=1}^{m} \in \Theta^{*}(m)$. There exists a continuous path from $(w_j, \alpha_j)_{j=1}^{m}$ to an arbitrary permutation $(w_{\sigma(j)}, \alpha_{\sigma(j)})_{j=1}^{m}$.
\end{appxproposition}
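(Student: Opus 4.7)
The strategy is to exploit the hypothesis $m \geq M^*+1$ to guarantee the existence of a zero neuron slot after a suitable reduction, and then use that empty slot as a buffer to realize arbitrary permutations of the neurons. Since transpositions generate $S_m$, it suffices to connect $A$ to $A^{(i,j)}$ for a single transposition, and both the reduction and the buffered swap must be carried out entirely inside $\Theta^*(m)$, not merely in the feasible set.

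First I would reduce $A = (w_j,\alpha_j)_{j=1}^m$ to a configuration with at least one inactive neuron. By \cref{p9:Reduction_nonconvex} there is a continuous path in $\Theta^*(m)$ from $A$ to some $A_{\min} \in \Theta^*_{\min}(m)$, after which every active neuron has a distinct arrangement--sign pair. If $A_{\min}$ already contains a zero slot I stop; otherwise all $m$ slots are active and the convex image $\Phi(A_{\min}) \in \mathcal{P}^*(m)$ has cardinality $m > M^*$, so by the maximality of $M^*$ in $\mathcal{P}^*_{\mathrm{irr}}$ the vectors $\{\mathrm{sign}(\alpha_j)\,D_j X \hat{w}_j\}_{j=1}^m$ must be linearly dependent. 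I pick a nontrivial dependence $\lambda \in \mathbb{R}^m$ and propagate it into the nonconvex picture via the balanced parametrization $w_j(t) = \sqrt{s_j + t\lambda_j}\,\hat{w}_j,\ \alpha_j(t) = \mathrm{sign}(\alpha_j)\sqrt{s_j + t\lambda_j}$, where $s_j := |\alpha_j|^2 = \|w_j\|_2^2$ by the standard optimality balance. This curve preserves the model fit exactly, and optimality of $A_{\min}$ forces $\sum_j \lambda_j = 0$, because otherwise a small perturbation in the sign lowering $\sum_j (s_j+t\lambda_j)$ would strictly decrease the regularizer without changing the output, contradicting optimality. Hence the curve lies in $\Theta^*(m)$, and driving $t$ to the first value at which some $s_j + t\lambda_j$ vanishes produces a minimal optimal network with one fewer active slot. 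Iterating at most $m - M^*$ times yields some $A' \in \Theta^*(m)$ with at least one zero slot.

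Second, from $A'$ I would realize any transposition $(i,j)$ of its slots. If one of slots $i,j$ is the zero slot, the split--transport curve that replaces $(w_i,\alpha_i),(0,0)$ by $(\sqrt{1-t}\,w_i,\sqrt{1-t}\,\alpha_i),(\sqrt{t}\,w_i,\sqrt{t}\,\alpha_i)$ for $t\in[0,1]$ performs the swap; positive homogeneity of the ReLU gives that $\sum_k (Xw_k)_+\alpha_k$ is preserved pointwise, and a direct expansion shows that $\sum_k (\|w_k\|_2^2+\alpha_k^2)$ is preserved as well, so the curve stays in $\Theta^*(m)$. When both $i,j$ are active, I route through a designated zero slot $k$ of $A'$ with three consecutive split--transport moves ($i\leftrightarrow k,\ i\leftrightarrow j,\ j\leftrightarrow k$), which altogether swap $i$ and $j$ and restore $k$ to zero.

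Finally, I concatenate: the reduction gives a path $A \to A'$ in $\Theta^*(m)$; applying $\sigma$ coordinate-wise to this path (admissible because $\Theta^*(m)$ is invariant under neuron permutation) yields a continuous path $A_\sigma \to (A')_\sigma$; and Step 2, applied to an arbitrary decomposition of $\sigma$ into transpositions, supplies $A' \to (A')_\sigma$. Chaining these three pieces produces the desired path from $A$ to $A_\sigma$. The main technical obstacle is the pruning step: the real content there is the optimality-based identity $\sum_j \lambda_j = 0$, without which the curve would only preserve the fit but strictly alter the regularizer and thus leave $\Theta^*(m)$.
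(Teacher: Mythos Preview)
Your proposal is correct and follows the same overall architecture as the paper: reduce to a minimal configuration via \cref{p9:Reduction_nonconvex}, prune one more neuron to create a zero slot, use that slot as a buffer to realize transpositions via the $\sqrt{t}$/$\sqrt{1-t}$ split--transport curves, and concatenate with the permuted reverse of the reduction path.

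The one substantive difference lies in how you justify that the pruning curve stays optimal. The paper invokes the dual optimum: from $(\nu^{*})^{T}(Xw_j^{\circ})_{+} = -\beta\,\lVert w_j^{\circ}\rVert_2\,\mathrm{sign}(\alpha_j^{\circ})$ one multiplies the dependence relation $\sum_i c_i(Xw_i^{\circ})_{+}=0$ by $(\nu^{*})^{T}$ to obtain $\sum_i c_i\lVert w_i^{\circ}\rVert_2\,\mathrm{sign}(\alpha_i^{\circ})=0$, which is exactly the statement that the regularizer is constant along the curve. You instead argue directly from primal optimality: since the curve preserves the fit, any dependence $\lambda$ with $\sum_j\lambda_j\neq 0$ would let you strictly decrease $\sum_j(\lVert w_j\rVert_2^2+\alpha_j^2)=2\sum_j(s_j+t\lambda_j)$ for small $t$ of the right sign, contradicting that $A_{\min}\in\Theta^{*}(m)$; hence $\sum_j\lambda_j=0$. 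Your argument is more elementary in that it avoids the dual characterization entirely, while the paper's version makes the mechanism (KKT-type tightness at every active neuron) more explicit and reusable elsewhere. Both are valid; your contradiction argument is in fact the primal shadow of the paper's dual identity. A minor remark: one pruning step already produces a zero slot, so ``iterating at most $m-M^{*}$ times'' is correct but more than needed.
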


\begin{proof}
Our proof will start from showing that for any $A=(w_j, \alpha_j)_{j=1}^{m} \in \Theta^{*}(m)$, we can find a continuous path $A'=(w_j', \alpha_j')_{j=1}^{m} \in \Theta^{*}(m)$ that satisfies $\sum_{j=1}^{m} 1(\alpha_j' \neq 0) < m$. First, use \cref{p9:Reduction_nonconvex} to find a continuous path from $A$ to some $A_{\mathrm{min}}= (w_j^{\circ},\alpha_j^{\circ})\in \Theta^{*}_{\mathrm{min}}(m)$. If $\sum_{j=1}^{m} 1(\alpha_j^{\circ} \neq 0) < m$, we have found such path.\\
If not, let's show that $\{(Xw_j^{\circ})_{+}\}_{j=1}^{m}$ is linearly dependent. As all $\alpha_j^{\circ} \neq 0$, all $w_j^{\circ} \neq 0$. Now think of $\Phi(A_{\mathrm{min}}) = (u_i, v_i)_{i=1}^{P}$. We can easily see that
\[
\{(Xw_j^{\circ})_{+}\alpha_j^{\circ}\}_{j=1}^{m} = \{D_iXu_i\}_{u_i \neq 0} \cup \{-D_iXv_i\}_{v_i \neq 0}.
\]
As the latter set has $m > M^{*}$ elements, it should be linearly dependent. If not, it is a contradiction to the fact that the maximal cardinality of the element in $\mathcal{P}^{*}_{\mathrm{irr}}$ is $M^{*}$. Hence, the set $\{(Xw_j^{\circ})_{+}\alpha_j^{\circ}\}_{j=1}^{m}$ is linearly dependent, and as all $\alpha_j^{\circ}$ is nonzero, the set $\{(Xw_j^{\circ})_{+}\}_{j=1}^{m}$ is linearly dependent.\\
Now consider a nontrivial linear combination,
\[
\sum_{i=1}^{m} c_i(Xw_i^{\circ})_{+} = 0.
\]
Without loss of generality say $\alpha_1^{\circ}c_1 < 0$. Define 
\[
t_m = \min_{\alpha_i^{\circ}c_i < 0} -\frac{\alpha_i^{\circ}}{c_i},
\]
and for $t \in [0, t_m]$ define
\[
\Tilde{w}_i(t) = w_i^{\circ} \sqrt{\frac{|\alpha_i^{\circ} + tc_i|}{\lVert w_i^{\circ} \rVert_2}}, \quad \Tilde{\alpha}_i(t) = \sqrt{\lVert w_i^{\circ} \rVert_2 |\alpha_i^{\circ} + tc_i|}\ sign(\alpha_i^{\circ}).
\]
From the definition of $t_m$, $sign(\alpha_i^{\circ} + tc_i) = sign(\alpha_i^{\circ})$ for $t \in [0, t_m]$. Also,
\[
\sum_{i=1}^{m} (X\Tilde{w}_i(t))_{+}\Tilde{\alpha}_i(t) = \sum_{i=1}^{m} (Xw_i^{\circ})_{+}(\alpha_i^{\circ} + tc_i) = \sum_{i=1}^{m} (Xw_i^{\circ})_{+}\alpha_i^{\circ},
\]
and
\[
\frac{1}{2} \sum_{i=1}^{m} \lVert \Tilde{w}_i(t) \rVert_2^2 + |\Tilde{\alpha}_i(t)|^2 = \sum_{i=1}^{m} \lVert \Tilde{w}_i(t) \rVert_2|\Tilde{\alpha}_i(t)| = \sum_{i=1}^{m} \lVert w_i^{\circ} \rVert_2 |\alpha_i^{\circ}| + \lVert w_i^{\circ} \rVert_2 tc_i sign(\alpha_i^{\circ}).
\]
At last, we know that for the dual optimum $\nu^{*}$ defined in \cref{t1:optpolytope}, 
\[
(\nu^{*})^{T}(Xw_j^{\circ})_{+} = -\beta \lVert w_j^{\circ} \rVert_2 sign(\alpha_j^{\circ}),
\]
for all $j \in [m]$. This is obtained by using the fact that $\Phi(A_{\mathrm{min}}) \in \mathcal{P}^{*}$. Hence, if $\sum_{i=1}^{m} c_i(Xw_i^{\circ})_{+} = 0$, multiplying $(\nu^{*})^{T}$ on both sides leads
\[
\sum_{i=1}^{m}\lVert w_i^{\circ} \rVert_2 tc_i sign(\alpha_i^{\circ}) = 0,
\]
and
\[
\frac{1}{2} \sum_{i=1}^{m} \lVert \Tilde{w}_i(t) \rVert_2^2 + |\Tilde{\alpha}_i(t)|^2  = \sum_{i=1}^{m} \lVert w_i^{\circ} \rVert_2 |\alpha_i^{\circ}| = \frac{1}{2} \sum_{i=1}^{m} \lVert w_i^{\circ} \rVert_2^2 + |\alpha_i^{\circ}|^2.
\]
Hence the objective is preserved throughout the curve, meaning the curve is in $\Theta^{*}(m)$. The cardinality decreased by at least 1 at the end due to the definition of $t_m$.\\ 
Now that we can find a continuous path from $A$ to $A' = (w_j', \alpha_j')_{j=1}^{m} \in \Theta^{*}(m)$ where $\sum_{j=1}^{m} 1(\alpha_j' \neq 0) < m$, we will find a path from $A'$ to any permutation of $A'$, namely $(w_{\sigma(j)}', \alpha_{\sigma(j)}')_{j=1}^{m}$ for some permutation $\sigma: [m] \rightarrow [m]$. A simple path construction is as follows: we know that at least one $\alpha_i' = 0$. Let that $i = m$ without loss of generality. Starting from $i_0 = 1$, we do the following: if $w_{i_0}' = w_{\sigma(i_0)}'$, we do nothing. If $w_{i_0}' \neq w_{\sigma(i_0)}'$, we first write
$$
w_{i_0}'(t) = w_{i_0}'\sqrt{1-t}, \alpha_{i_0}(t) = \alpha_{i_0}'\sqrt{1-t}, w_m'(t) = w_{i_0}'\sqrt{t}, \alpha_m'(t) = \alpha_{i_0}'\sqrt{t},
$$
for $t \in [0,1]$, which intuitively 'moves' $w_{i_0}'$ to the empty space $w_m$ and making $w_{i_0}' = 0$. Next we move $w_{\sigma(i_0)}'$ to $w_i'$ with
$$
w_{i_0}'(t) = w_{\sigma(i_0)}'\sqrt{t}, \alpha_{i_0}(t) = \alpha_{\sigma(i_0)}'\sqrt{t}, w_{\sigma(i_0)}'(t) = w_{\sigma(i_0)}'\sqrt{1-t}, \alpha_{\sigma(i_0)}'(t) = \alpha_{\sigma(i_0)}'\sqrt{1-t},
$$
for $t \in [0,1]$, which intuitively 'moves' $w_{\sigma(i_0)}'$ to the empty space $w_{i_0}$ and making $w_{\sigma(i_0)} = 0$. At last, we make $w_m$ empty by using
$$
w_{\sigma(i_0)}'(t) = w_{i_0}'\sqrt{t}, \alpha_{\sigma(i_0)}(t) = \alpha_{i_0}'\sqrt{t}, w_m'(t) = w_{i_0}'\sqrt{1-t}, \alpha_m'(t) = \alpha_{i_0}'\sqrt{1-t}.
$$
To wrap up, we may swap the element in $(w_i, \alpha_i)$ and $(w_{\sigma(i)}, \alpha_{\sigma(i)})$ by first moving $w_i$ to $w_m$, then moving $w_\sigma(i)$ to $w_i$, and at last moving $w_m$ to $w_{\sigma(i)}$.\\
Until here we connected $A = (w_j, \alpha_j)_{j=1}^{m}$ with $A'$, and then $A'$ with a permutation of $A'$. To connect $A$ with $(w_{\sigma(j)}, \alpha_{\sigma(j)})$, simply run the path $A \rightarrow A'$ backwards to obtain $(w_{\sigma(j)}, \alpha_{\sigma(j)})_{j=1}^{m}$. 
\end{proof}

\cref{p13:PermuteConnected} enables us to connect two different permutations. Even though $\Psi$ is not essentially continuous, the fact that two permutations are connected will allow us to construct paths in $\Theta^{*}(m)$ from paths in $\mathcal{P}^{*}(m)$.

\begin{appxproposition}
\label{p14:Connected_M*+1} Suppose $m \geq M^{*}+1$. If any two points $A, B \in \mathcal{P}^{*}(m)$ are connected with a path with finite cardinality changes, $\Theta^{*}(m)$ is connected.
\end{appxproposition}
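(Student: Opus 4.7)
The plan is to take two arbitrary points $\theta_A, \theta_B \in \Theta^{*}(m)$ and build a continuous path between them in $\Theta^{*}(m)$ by routing through the convex-side path assumed to exist. First I would reduce both endpoints into the minimal-optimal subset $\Theta^{*}_{\mathrm{min}}(m)$ using \cref{p9:Reduction_nonconvex}, obtaining $\theta_A^{\mathrm{min}}, \theta_B^{\mathrm{min}}$, so that \cref{p11:PseudoInverse} applies and $\Psi(\Phi(\theta_\bullet^{\mathrm{min}}))$ is a permutation of $\theta_\bullet^{\mathrm{min}}$. Writing $A = \Phi(\theta_A^{\mathrm{min}})$, $B = \Phi(\theta_B^{\mathrm{min}})$, the hypothesis furnishes a continuous path $\gamma:[0,1] \to \mathcal{P}^{*}(m)$ from $A$ to $B$ with only finitely many cardinality changes, and the goal becomes lifting $\gamma$ through $\Psi$ into $\Theta^{*}(m)$.

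The lifting is carried out piecewise. Because $\gamma$ has only finitely many cardinality changes, I can pick $0 = t_0 < t_1 < \cdots < t_N = 1$ so that on each closed subinterval $[t_{j-1}, t_j]$, every coordinate $u_i(t), v_i(t)$ of $\gamma$ is either identically zero or nonzero except possibly at the right endpoint $t_j$. This puts each segment into the scope of \cref{cp4:psicontinuous}, so $\Psi \circ \gamma$ is continuous on $[t_{j-1}, t_j)$ and $\lim_{t \uparrow t_j} \Psi(\gamma(t))$ is a permutation of $\Psi(\gamma(t_j))$. The lifted path is therefore continuous on each half-open segment, but can suffer a permutation jump at every transition $t_j$, and analogous permutation mismatches occur at the two ends when stitching $\theta_A^{\mathrm{min}}$ to $\Psi(A)$ and $\Psi(B)$ to $\theta_B^{\mathrm{min}}$.

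Each permutation jump is then repaired with \cref{p13:PermuteConnected}, which uses the hypothesis $m \geq M^{*}+1$ to connect any two permutations of the same optimal parameter by a continuous path inside $\Theta^{*}(m)$. Concatenating the finitely many pieces --- the reductions $\theta_A \to \theta_A^{\mathrm{min}}$ and $\theta_B^{\mathrm{min}} \to \theta_B$ from \cref{p9:Reduction_nonconvex}, the permutation bridges matching endpoints to $\Psi \circ \Phi$ images, the $\Psi$-images of each segment $[t_{j-1}, t_j)$, and the permutation bridges at each interior $t_j$ --- produces the desired continuous path from $\theta_A$ to $\theta_B$ in $\Theta^{*}(m)$, so $\Theta^{*}(m)$ is connected.

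The main obstacle is the failure of $\Psi$ to be globally continuous: coordinates that become zero or resurrect along $\gamma$ change which ``active slots'' of the lifted parameter they occupy, so a naive one-shot lift of $\gamma$ jumps. This is precisely what forces the segment-wise lifting plus permutation-bridging scheme, and explains the role of the threshold $m \geq M^{*}+1$: a free slot is required at every jump so that the permutation repairs of \cref{p13:PermuteConnected} can be carried out inside $\Theta^{*}(m)$. A minor bookkeeping point to verify is that the partition $\{t_j\}$ can genuinely be chosen so that vanishing of each coordinate occurs only at endpoints --- this follows from the finite cardinality changes hypothesis by placing a partition point at each support-change time.
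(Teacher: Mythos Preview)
Your proposal is correct and follows essentially the same route as the paper: reduce the endpoints to $\Theta^{*}_{\mathrm{min}}(m)$ via \cref{p9:Reduction_nonconvex}, lift the convex-side path through $\Psi$ piecewise using \cref{cp4:psicontinuous} on the constant-support subintervals, and patch the permutation discontinuities at each partition point and at the two ends with \cref{p13:PermuteConnected} and \cref{p11:PseudoInverse}. The only minor wording issue is that a coordinate turning \emph{on} at $t_{j-1}$ would be zero at the \emph{left} endpoint rather than the right, so strictly speaking you apply \cref{cp4:psicontinuous} to the open interval and handle both one-sided limits (by time-reversal if needed); the paper phrases this with open intervals $(t_{j-1},t_j)$ and both limits, but the substance is identical.
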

\begin{proof}
Take two points $A, B \in \Theta^{*}(m)$. First use \cref{p9:Reduction_nonconvex} to find path from $A$ to $A_{\mathrm{min}} \in \Theta^{*}_{\mathrm{min}}(m)$ and $B$ to $B_{\mathrm{min}} \in \Theta^{*}_{\mathrm{min}}(m)$. Our main goal will be connecting $A_{\mathrm{min}}$ and $B_{\mathrm{min}}$ by using the path from $\Phi(A_{\mathrm{min}})$ to $\Phi(B_{\mathrm{min}})$. \\
Consider the continuous path from $\Phi(A_{\mathrm{min}})$ to $\Phi(B_{\mathrm{min}})$, namely $f: [0,1] \rightarrow \mathcal{P}^{*}(m)$ satisfying $f(0) = \Phi(A_{\mathrm{min}})$, $f(1) = \Phi(B_{\mathrm{min}})$. Write $f(t) = (u_i(t), v_i(t))_{i=1}^{P}$. Divide $[0,1]$ to times $(t_0 = 0,t_1)$, $(t_1, t_2) \cdots (t_{k-1}, t_k = 1)$, where in each time interval either each $u_i, v_i$ are either always zero or always nonzero. We have finitely many such $t_i$ s because we assume that the cardinality change is finite. From \cref{cp4:psicontinuous}, we can see that the path $\Psi \circ f(t)$ is continuous at each interval. However, as we saw in \cref{cp4:psicontinuous}, $\Psi \circ f(t_i)$, $\lim_{t \rightarrow t_i^{-}} \Psi \circ f(t)$, $\lim_{t \rightarrow t_i^{+}} \Psi \circ f(t)$ are all permutations of each other. \\
We construct a path from $\Psi \circ f(0)$ to $\Psi \circ f(1)$ as following: First, for each $p = 0, 1, \cdots, k-1$, construct a path from $\lim_{t \rightarrow t_p^{+}} \Psi \circ f(t)$ to $\lim_{t \rightarrow t_{p+1}^{-}} \Psi \circ f(t)$ by defining
\[
g(t) = \begin{cases}
\lim_{t \rightarrow t_p^{+}} \Psi \circ f(t) \quad if \quad t = t_p\\
\Psi \circ f(t) \quad if \quad t \in (t_p, t_{p+1}) \\
\lim_{t \rightarrow t_{p+1}^{-}} \Psi \circ f(t) \quad if \quad t = t_{p+1},
\end{cases}
\]
for $t \in [t_p, t_{p+1}]$. It is clear that $g$ is continuous. Moreover, we can connect each $\Psi \circ f(t_p)$ with $\lim_{t \rightarrow t_p^{+}} \Psi \circ f(t)$ and $\lim_{t \rightarrow t_p^{-}} \Psi \circ f(t)$, because from \cref{p13:PermuteConnected}, we know that when $m \geq M^{*}+1$, two permutations are connected in $\Theta^{*}(m)$, and from \cref{cp4:psicontinuous}, from this construction, a one-sided limit is a permutation of the image. Hence, a connection from $\Psi \circ f(0)$ to $\Psi \circ f(1)$ is possible, by connecting permutations at boundaries of each interval $t_0, t_1, \cdots, t_k$, and moving with $\Psi \circ f$ inside the intervals.\\
Hence, $\Psi \circ \Phi (A_{\mathrm{min}})$ and $\Psi \circ \Phi (B_{\mathrm{min}})$ are connected. From \cref{p11:PseudoInverse}, we know that $\Psi \circ \Phi (A_{\mathrm{min}})$ is a permutation of $A_{\mathrm{min}}$, and as $m \geq M^{*} + 1$ we know that they are connected. Same holds for $\Psi \circ \Phi (B_{\mathrm{min}})$. This means that we have found a continuous path from $A_{\mathrm{min}}$ to $B_{\mathrm{min}}$. At the beginning we connected $A$ with $A_{\mathrm{min}}$, $B$ with $B_{\mathrm{min}}$, which finishes the proof.
\end{proof}

For discontinuity, we use the property of isolated points $A \in \mathcal{P}^{*}_{\mathrm{can}} \cap \mathcal{P}^{*}(m)$ with cardinality $m$.

\begin{appxproposition}
\label{p15:Isolated_m}  Suppose $m \geq m^{*}$. If $A \in \mathcal{P}^{*}(m) \cap \mathcal{P}^{*}_{\mathrm{can}}$ is an isolated point in $\mathcal{P}^{*}(m)$ and has $\mathrm{card}(A) = m$, $\Psi(A)$ is an isolated point in $\Theta^{*}(m)$.
\end{appxproposition}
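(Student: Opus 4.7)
The plan is to argue by contradiction, leveraging the near-inverse relationship between $\Psi$ and $\Phi$ from \cref{p11:PseudoInverse} together with the continuity of $\Phi$ from \cref{cp3:continuous}. Suppose $\Psi(A)$ is not isolated in $\Theta^{*}(m)$; then there exists a continuous path $\theta : [0,1] \to \Theta^{*}(m)$ with $\theta(0) = \Psi(A)$ and $\theta(1) \neq \Psi(A)$. Composing with the continuous map $\Phi$ yields a continuous path $\Phi \circ \theta$ in $\mathcal{P}^{*}(m)$ starting at $\Phi(\Psi(A)) = A$, where this last equality uses $A \in \mathcal{P}^{*}_{\mathrm{can}} \cap \mathcal{P}^{*}(m)$. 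Because $A$ is isolated in $\mathcal{P}^{*}(m)$, there exists $\delta > 0$ with $\Phi(\theta(t)) = A$ for all $t \in [0,\delta]$.

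Next I would unpack what $\Phi(\theta(t)) = A$ combined with $\mathrm{card}(A) = m$ rigidly imposes on $\theta(t)$. Write $\theta(t) = (w_i(t), \alpha_i(t))_{i=1}^{m}$, $A = (u_i^{*}, v_i^{*})_{i=1}^{P}$, and let $\mathcal{I}_{+} = \{p : u_p^{*} \neq 0\}$, $\mathcal{I}_{-} = \{q : v_q^{*} \neq 0\}$, so that $|\mathcal{I}_{+}| + |\mathcal{I}_{-}| = m$. A standard rescaling argument on $(w_i, \alpha_i) \mapsto (t w_i, \alpha_i/t)$ shows that every $\theta \in \Theta^{*}(m)$ must satisfy $\lVert w_i \rVert_2 = |\alpha_i|$ for all $i$. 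Since $\Phi$ groups each neuron's contribution by arrangement pattern and sign of $\alpha$, every nonzero coordinate of $A$ must receive a contribution from at least one neuron; with only $m$ neurons and exactly $m$ nonzero coordinates to cover, the assignment is forced to be a bijection $\sigma_t : [m] \to \mathcal{I}_{+} \cup \mathcal{I}_{-}$. The hypothesis $A \in \mathcal{P}^{*}_{\mathrm{can}}$ guarantees $1(Xu_p^{*} \geq 0) = D_p$ for $p \in \mathcal{I}_{+}$ and $1(Xv_q^{*} \geq 0) = D_q$ for $q \in \mathcal{I}_{-}$, so the arrangement pattern of $w_i(t)$ and the sign of $\alpha_i(t)$ are determined by $\sigma_t(i)$. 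Together with $\lVert w_i(t)\rVert_2 = |\alpha_i(t)|$ and $w_i(t)|\alpha_i(t)| = u_{\sigma_t(i)}^{*}$ (or $v_{\sigma_t(i)}^{*}$), the pair $(w_i(t), \alpha_i(t))$ is pinned to a single entry of the tuple $\Psi(A)$.

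Hence on $[0,\delta]$ the path $\theta(t)$ takes values in the finite set of permutations of $\Psi(A)$. Continuity forces $\sigma_t$ to be constant, and matching the boundary condition $\theta(0) = \Psi(A)$ fixes $\sigma_t$ to be the identity; therefore $\theta(t) \equiv \Psi(A)$ on $[0,\delta]$. Finally, I would bootstrap: the set $\{t \in [0,1] : \theta(t) = \Psi(A)\}$ is closed by continuity of $\theta$ and, by repeating the above local argument at any interior point of the set, also open, so it equals $[0,1]$, contradicting $\theta(1) \neq \Psi(A)$.

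The main obstacle I expect is rigorously establishing the bijection $\sigma_t$ and its local constancy. This really exploits the tight equality $\mathrm{card}(A) = m$: with any slack, a single neuron could split its mass across two nonzero coordinates of $A$ without violating $\Phi(\theta(t)) = A$, destroying the rigidity. The canonical condition $A \in \mathcal{P}^{*}_{\mathrm{can}}$ is the other essential input, since otherwise the same $w_i(t)$ could legitimately be assigned to multiple cones $\mathcal{K}_i$ sharing a boundary direction, making the assignment genuinely ambiguous. Once these two ingredients collapse $\theta(t)$ onto a finite discrete orbit, the continuity conclusion is immediate.
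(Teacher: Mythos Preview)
Your proposal is correct and follows essentially the same route as the paper: push the path through $\Phi$ (using \cref{cp3:continuous} and \cref{p11:PseudoInverse}), use the isolation of $A$ in $\mathcal{P}^{*}(m)$ to force $\Phi(\theta(t))\equiv A$, and then argue that $\mathrm{card}(A)=m$ together with $A\in\mathcal{P}^{*}_{\mathrm{can}}$ makes $\Phi^{-1}(A)$ a finite set of permutations of $\Psi(A)$, so the continuous path must be constant. The only difference is cosmetic: the paper concludes $\Phi(\theta(t))\equiv A$ on all of $[0,1]$ in one step and then shows $\Phi^{-1}(A)$ is finite via minimality and \cref{p11:PseudoInverse}, whereas you work locally on $[0,\delta]$ and bootstrap, and spell out the bijection directly---either way the argument is the same.
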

\begin{proof}
Assume the existence of a continuous function $f:[0,1] \rightarrow \Theta^{*}(m)$ that satisfies $\Psi(A) = f(0)$, $\Phi \circ f(1) \neq A$. Consider the path $\Phi \circ f(t)$ in $\mathcal{P}^{*}(m)$. As $A \in \mathcal{P}^{*}(m) \cap \mathcal{P}^{*}_{\mathrm{can}}$, $\Phi(\Psi(A)) = A \neq \Phi \circ f(1)$, which is a contradiction that $A$ is an isolated point in $\mathcal{P}^{*}(m)$. Hence, $\Psi(A)$ does not have a path into $\Theta^{*}(m) - \Phi^{-1}(A)$.\\
To finish the proof, we show that $\Phi^{-1}(A)$ is finite. If $\Phi^{-1}(A)$ is finite, we will not be able to move from $\Psi(A)$ to a point in $\Phi^{-1}(A) - \{\Psi(A)\}$ only using points in $\Phi^{-1}(A) - \{\Psi(A)\}$, meaning we do not have a path from $\Psi(A)$ to $\Theta^{*}(m) - \{\Psi(A)\}$, proving our claim. \\
Suppose $\Phi((w_i, \alpha_i)_{i=1}^{m}) = A$. If $\mathrm{diag}(1(Xw_i \geq 0)) = \mathrm{diag}(1(Xw_j \geq 0)) = D_p$ for some $\alpha_i\alpha_j > 0$, the two indices $i$ and $j$ will either correspond to the same $u_p$ or $v_p$, and $\mathrm{card}(\Phi((w_i, \alpha_i)_{i=1}^{m})) < m$, which is a contradiction. Hence, $(w_i, \alpha_i)_{i=1}^{m} \in \Theta^{*}_{\mathrm{min}}(m)$. From \cref{p11:PseudoInverse}, we know that $\Psi(\Phi((w_i, \alpha_i)_{i=1}^{m})) = \Psi(A)$ is a permutation of $(w_i, \alpha_i)_{i=1}^{m}$. This means $\Phi^{-1}(A)$ is contained in a set of permutation of $\Psi(A)$, which is finite. 
\end{proof}

\cref{p14:Connected_M*+1} and \cref{p15:Isolated_m} enables us to discuss connectivity of $\Theta^{*}(m)$ with the connectivity of $\mathcal{P}^{*}(m)$. With the results that we obtained for $\mathcal{P}^{*}(m)$, more specifically \cref{p5:Finite_m*}, \cref{p6:Disconnect_M*}, \cref{p8:Connected_m*+M*}, \cref{t6:Connected_n+1}, and applying \cref{p14:Connected_M*+1} and \cref{p15:Isolated_m} appropriately, we arrive at the staircase of connectivity defined in \cref{t2:staircasecon}. 

\begin{appxtheorem}
\label{appx:formal_staircasecon}
(\cref{t2:staircasecon} of the paper) (The staircase of connectivity) Denote the optimal solution set of \eqref{eq:nonconvex_twolayer_opt} in parameter space as $\Theta^{*}(m) \subseteq \mathbb{R}^{(d+1)m}$. Suppose $L$ is a strictly convex loss function and there exists $(w_i, \alpha_i)_{i=1}^{m} \neq (0,0)_{i=1}^{m} \in \Theta^{*}(m)$ for some $m$. Let $m^{*}, M^{*}$ be two critical values defined in \cref{t2:staircasecon}. \\
As $m$ changes, we have that when
\begin{enumerate}[(i)]
\item $m = m^{*}$, $\Theta^{*}(m)$ is a finite set. Hence, for any two optimal points $A \neq A' \in \Theta^{*}(m)$, there is no path from $A$ to $A'$ inside $\Theta^{*}(m)$.
\item $m \geq m^{*}+1$, there exists optimal points $A, A' \in \Theta^{*}(m)$ and a path in $\Theta^{*}(m)$ connecting them.
\item $m = M^{*}$, $\Theta^{*}(m)$ is not a connected set. Moreover, there exists $A \in \Theta^{*}(m)$ which is an isolated point, i.e. there is no path in $\Theta^{*}(m)$ that connects $A$ with $A' \neq A \in \Theta^{*}(m)$.
\item $m \geq M^{*}+1$, permutations of the solution are connected. Hence, for all $A \in \Theta^{*}(m)$, there exists $A' \neq A$ in $\Theta^{*}(m)$ and a path in $\Theta^{*}(m)$ that connects $A$ and $A'$.
\item $m \geq \min\{m^{*}+M^{*}, n+1\}$, the set $\Theta^{*}(m)$ is connected, i.e. for any two optimal points $A \neq A' \in \Theta^{*}(m)$, there exists a continuous path from $A$ to $A'$.
\end{enumerate}
\end{appxtheorem}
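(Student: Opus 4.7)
The plan is to reduce everything to the cardinality-constrained optimal polytope $\mathcal{P}^{*}(m)$ on the convex side, prove the analogous topological statements there, and then transfer back to $\Theta^{*}(m)$ through the solution maps $\Psi$ and $\Phi$ between the convex polytope and the nonconvex parameter set. The polytope characterization in Theorem 1 is the key enabling tool: it lets me parametrize any point in $\mathcal{P}^{*}$ by a pair of nonnegative coefficient vectors $(c_i,d_i)$ times the fixed optimal directions $(\bar u_i,\bar v_i)$, so questions of paths and isolated points become questions about a finite-dimensional conic-combination polytope intersected with a cardinality constraint.

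On the convex side I would establish five facts in order. For (i), two points of $\mathcal{P}^{*}(m^{*})$ with the same support would yield two conic representations of $y^{*}$ along the same finite subset, hence linear dependence, contradicting minimality of $m^{*}$; so supports are distinct and $\mathcal{P}^{*}(m^{*})$ is finite. For (ii), starting from any maximal-cardinality point in $\mathcal{P}^{*}_{\mathrm{irr}}$ and splitting a single nonzero coordinate $c_i\bar u_i$ into two colinear pieces gives a one-parameter family of convex optima with cardinality $m^{*}+1$. For (iii), the isolated-point argument uses continuity of the coefficients along any alleged path: if a maximal-cardinality irreducible point $A^{\circ}$ were non-isolated, supports would be preserved in a neighborhood, giving two distinct conic representations of $y^{*}$ using the same linearly independent vectors, which is impossible. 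For the connectedness statements (v), with budget $m^{*}+M^{*}$ I would just interpolate any irreducible point with a minimum-cardinality witness; with budget $n+1$ I would iteratively swap contributions from the source support to the target support, invoking a Carathéodory-type lemma (Lemma~\ref{appxlemma:existance}) that any nonneg combination in $\mathbb{R}^n$ can be rewritten with at most $n-m+1$ atoms while preserving a positive coefficient on a chosen subset; combined with the pruning lemma this keeps the path inside $\mathcal{P}^{*}(n+1)$.

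Transferring to $\Theta^{*}(m)$ uses the two maps $\Psi:\mathcal{P}^{*}(m)\to\Theta^{*}(m)$ (square-root splitting plus zero-padding) and $\Phi:\Theta^{*}(m)\to\mathcal{P}^{*}(m)$ (merging neurons with identical arrangement pattern and sign). A separate preliminary lemma shows that from any $\theta\in\Theta^{*}(m)$ one can continuously merge colliding neurons to reach a minimal optimum, so it suffices to move between minimal optima. For (i) finiteness transfers since $\Psi$ and $\Phi$ are mutual inverses up to permutation on canonical minimal solutions, and the permutation group is finite. For (ii) a path in $\mathcal{P}^{*}(m^{*}+1)$ pushes forward through $\Psi$. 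For (iii) the isolated witness $A^{\circ}$ maps to an isolated $\Psi(A^{\circ})$, because any escape path in $\Theta^{*}(M^{*})$ would push forward through the continuous map $\Phi$ to an escape path in $\mathcal{P}^{*}(M^{*})$. For (iv) I would use the $M^{*}+1$ budget to first prune one neuron to zero and then do the standard three-step ``park, swap, unpark'' permutation construction in the empty slot. For (v) I push forward the convex connectivity path via $\Psi$, handle the finitely many support-change times (where $\Psi$ fails continuity) by invoking (iv) to bridge the permutation gap between one-sided limits and the actual image.

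The main obstacle is exactly this failure of continuity of $\Psi$ at support-change events: whenever a coordinate $u_i(t)$ hits zero along a convex path, the image under $\Psi$ jumps because zeros are placed at the tail rather than inline. The remedy is the observation that the two one-sided limits of $\Psi\circ f$ at such a moment differ only by a permutation of neurons, so as long as we have enough width to freely permute (precisely $m\geq M^{*}+1$, which is implied by $m\geq\min\{m^{*}+M^{*},n+1\}$ given $M^{*}\leq n$), the discontinuities can each be stitched up by a permutation-connecting subpath from (iv). A secondary subtlety is ensuring that our convex-side path has only finitely many support changes; this is automatic for the explicit interpolation and swap-based constructions used to prove the convex connectedness, so the stitching is a finite concatenation rather than a transfinite one.
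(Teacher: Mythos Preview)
Your proposal is correct and follows essentially the same route as the paper: prove the analogous facts on $\mathcal{P}^{*}(m)$ using the polytope characterization, pruning, and the Carath\'eodory-type Lemma~\ref{appxlemma:existance}, then transfer via $\Psi,\Phi$ and the merge-to-minimal reduction, patching the finitely many $\Psi$-discontinuities with permutation paths once $m\geq M^{*}+1$. One small slip: the splitting in (ii) cannot happen inside $\mathcal{P}^{*}$ (each index $i$ carries a single slot $u_i$, so colinearly splitting $c_i\bar u_i$ does not raise cardinality there); the paper handles (ii) directly on the $\Theta^{*}$ side by using the extra zero slot in $\Theta^{*}(m^{*}+1)$ to continuously duplicate a neuron.
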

\begin{proof}
Proof of i) starts by observing that $A \in \Theta^{*}_{\mathrm{min}}(m^{*})$ if $A \in \Theta^{*}(m^{*})$. If not, we can find a solution $A' \in \Theta^{*}_{\mathrm{min}}(m^{*}-1)$ using \cref{p9:Reduction_nonconvex}, and its image $\Phi(A') \in \mathcal{P}^{*}(m^{*} - 1)$. At last, $\Phi(A')$ is connected with a point in $\mathcal{P}^{*}(m^{*} - 1) \cap \mathcal{P}^{*}_{\mathrm{irr}}$ using \cref{p7:PruningLemma}, which contradicts the minimality of $m^{*}$. Now, for any $A \in \Theta^{*}(m^{*})$, $\Phi(A) = B \in \mathcal{P}^{*}(m^{*})$ satisfies that $\Psi(B)$ is a permutation of $A$. Hence, $\Theta^{*}(m^{*})$ is contained in the set of permutations of $\Psi(\mathcal{P}^{*}(m^{*}))$, and as $\mathcal{P}^{*}(m^{*})$ is finite from \cref{p5:Finite_m*}, we know that $\Theta^{*}(m^{*})$ is also finite.\\
Proof of ii) is rather simple: we know that the solution in $\Theta^{*}(m^{*})$ will have a zero slot in $\Theta^{*}(m^{*} + 1)$, and by using the moving operation in the proof of \cref{p13:PermuteConnected}, we can show that we can move a neuron to the zero slot.\\
Proof of iii) follows from \cref{p6:Disconnect_M*} and \cref{p15:Isolated_m}. We can find an isolated point with cardinality $M^{*}$ and is also canonical: the isolated point in \cref{p6:Disconnect_M*} has cardinality $M^{*}$ and is in $\mathcal{P}^{*}_{\mathrm{irr}}$. Say the isolated point is $(u_i, v_i)_{i=1}^{P}$. If $\mathrm{diag}(1(Xu_i \geq 0)) = \mathrm{diag}(1(Xu_j \geq 0)) = D_p$ for some $i \neq j$ and $u_i \neq 0, u_j \neq 0$, $D_pXu_i$ and $D_pXu_j$ are colinear, which leads to a contradiction. Hence all patterns are different for $u, v$, and by appropriate rearrangement, we can find a canonical solution with cardinality $M^{*}$. Say that solution is $P_{\mathrm{can}}$ Now we apply \cref{p15:Isolated_m} to see that $\Psi(P_{\mathrm{can}})$ is isolated in $\Theta^{*}(M^{*})$.\\
Proof of iv) almost directly follows from \cref{p13:PermuteConnected}. Note that when $w_1 = w_2 = \cdots w_m$, we can prune the solution to connect it with a different solution, hence there is no isolated point.\\
Proof of v) directly follows from \cref{p8:Connected_m*+M*}, \cref{t6:Connected_n+1} and \cref{p14:Connected_M*+1}. Note that the paths constructed in \cref{p8:Connected_m*+M*} and \cref{t6:Connected_n+1} has only finitely many cardinality changes, hence we can apply \cref{p14:Connected_M*+1}. The solution set is not a singleton because $m \geq m^{*}+1$.
\end{proof}

\begin{appxcorollary}
(\cref{c1:valley_landscape} of the paper) Consider the optimization problem in \eqref{eq:nonconvex_twolayer_opt}. Suppose $m \geq n + 1$ and denote the optimal set of \eqref{eq:nonconvex_twolayer_opt} as $\Theta^{*}(m)$. For any $\theta := (w_i, \alpha_i)_{i=1}^{m} \in \mathbb{R}^{(d+1)m}$, there exists a continuous path from $\theta$ to any point $\theta^{*} \in \Theta^{*}(m)$ with nonincreasing loss.
\end{appxcorollary}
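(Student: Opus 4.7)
The proof proceeds by concatenating two segments. Segment one takes the arbitrary initial parameter $\theta$ to some global minimizer $\theta^{**}$ via a descent trajectory; segment two then moves from $\theta^{**}$ to the prescribed target $\theta^{*}$ through the connected set $\Theta^{*}(m)$. Write $F(\theta) = L(\sum_j (Xw_j)_+ \alpha_j, y) + (\beta/2)\sum_j (\lVert w_j\rVert_2^2 + \alpha_j^2)$ for the objective of \eqref{eq:nonconvex_twolayer_opt}.

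For segment one, the key ingredients are coercivity of $F$ (immediate from the quadratic weight-decay penalty, which forces all sublevel sets to be bounded) and Theorem 2 of \cite{haeffele2017global}, which states that every local minimum of $F$ is global whenever $m \geq n+1$. Following a Clarke subgradient flow from $\theta$ produces a continuous curve along which $F$ is nonincreasing, and by coercivity this curve has a limit point $\theta^{**}$ that is Clarke stationary. Invoking the absence of spurious local minima, one then argues that without loss of generality $\theta^{**}$ is a global minimizer: if the stationary point reached is a saddle, one perturbs along a descent direction of $F$ and re-applies the flow, iterating until a local (hence, by Haeffele, global) minimum is obtained.

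For segment two, I would appeal directly to \cref{t6:Connected_n+1}: since $m \geq n+1$, the set $\Theta^{*}(m)$ is path-connected, so there is a continuous path in $\Theta^{*}(m)$ from $\theta^{**}$ to $\theta^{*}$. The loss is constant (equal to the optimal value) along this path and is in particular nonincreasing. Concatenating the two segments yields the desired path from $\theta$ to $\theta^{*}$ with nonincreasing loss.

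The main obstacle is segment one, for two related reasons. First, $F$ is nonsmooth because of ReLU, so "gradient flow" must be formalized as a Clarke subgradient flow (or as the limit of a smoothed flow), and convergence to a stationary point requires care. Second, Haeffele's theorem compares local minima with global minima, not Clarke stationary points with global minima, so the escape-from-saddle step is nontrivial. A cleaner alternative, which sidesteps dynamics entirely, would be to show that every sublevel set $\{F \leq F(\theta)\}$ contains a path-connected component with a global minimizer when $m \geq n+1$---i.e., that $F$ has no spurious sublevel components---by combining coercivity with the polytope characterization of \cref{t1:optpolytope} in the same spirit as the constructions underlying \cref{t2:staircasecon}. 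This would give the descent path explicitly, without relying on the convergence of any flow.
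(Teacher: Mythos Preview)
Your proposal is correct and follows essentially the same two-segment argument as the paper: descend from $\theta$ to some global minimizer via \cite{haeffele2017global}, then traverse the connected optimal set using \cref{t6:Connected_n+1}. The paper's own proof is in fact sketchier than yours on segment one---it simply asserts ``we know the existence of a continuous path with nonincreasing loss from any point to any local minimum'' without addressing the nonsmoothness or saddle-escape issues you correctly flag.
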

\begin{proof}
The proof almost directly follows from \cite{haeffele2017global} and \cref{t6:Connected_n+1}. First, we know the existence of a continuous path with nonincreasing loss from any point to any local minimum. We know that the local minimum is actually global from \cite{haeffele2017global}. Now, we know that the set $\Theta^{*}(m)$ is connected: hence, we can construct a continuous path from that global minimum to any global minimum we want. Note that we can apply the result of Haeffele and Vidal (\cite{grohs2022mathematical}, chapter4) because $\phi(X, w, \alpha) = (Xw)_{+}\alpha$ and $\theta(w, \alpha) = \frac{\lVert w \rVert_2^2 + |\alpha|^2}{2}$ are nondegenerate pairs.
\end{proof}

\begin{appxcorollary}
\label{appx:connected_sublevel_sets}
(\cref{c2:connected_sublevel_sets} of the paper) Consider the optimization problem in \eqref{eq:nonconvex_twolayer_opt}. Suppose $m \geq n+1$ and denote the objective in \eqref{eq:nonconvex_twolayer_opt} as $\mathcal{L}(\theta)$, where $\theta := (w_i, \alpha_i)_{i=1}^{m} \in \mathbb{R}^{(d+1)m}$. Let the optimal value of \eqref{eq:nonconvex_twolayer_opt} as $p^{*}$. For any $\lambda$ greater than or equal to $p^{*}$, we have that the sublevel set $\{\theta \ | \ \mathcal{L}(\theta) \leq \lambda \}$ is connected.
\end{appxcorollary}
\begin{proof}
Take two points $\theta_1, \theta_2$ that satisfies $\mathcal{L}(\theta_1), \mathcal{L}(\theta_2) \leq \lambda$. Fix an arbitrary $\theta^{*}$ from the optimal set $\Theta^{*}$. From \cref{c1:valley_landscape}, we know the existence of a path with nonincreasing loss from $\theta_1$ to $\theta^{*}$, and $\theta_2$ to $\theta^{*}$. Hence we found a path inside the sublevel set $\{\theta \ | \ \mathcal{L}(\theta) \leq \lambda \}$ that connects $\theta_1$ and $\theta_2$. This means that the sublevel set is connected.
\end{proof}

\newpage
\section{Proofs in \cref{3.3.Nonunique}}
\label{pf5}

In this section, we give the examples constructed in \cref{3.3.Nonunique} and their rigorous proof.

The specific examples we present are the following:
\begin{appxproposition}
\label{p14:ce1}
Suppose $n=3$, input data is given as $$\{(x_{1i}, x_{2i}, y_i)\}_{i=1}^{3} = \{(1,0,1/6),(-1/2,\sqrt{3}/2,2/3),(-1/2,-\sqrt{3}/2,1/6)\},$$
$X = \begin{bmatrix}x_1 x_2\end{bmatrix} \in \mathbb{R}^{3 \times 2}$. Then, the minimization problem in \eqref{eq:freeskip_bias} with free skip connections and without bias, namely the SNB problem, has at least two different solutions in $\mathcal{F}^{*}$.
\end{appxproposition}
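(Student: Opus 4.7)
The plan is to use the free-skip version of the optimal-polytope characterization (Section 4.1, with the extra dual constraint $X^T\nu^{*} = 0$) to enumerate the possible optimal first-layer directions, then exhibit two distinct optimal interpolators.

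First I would set up the dual of the SNB interpolation problem. Because the skip $a_0$ is unregularized, a maximization over $a_0$ in the Lagrangian forces the constraint $X^T\nu = 0$; for the given $X$ whose three rows $(1,0)$, $(-1/2,\sqrt{3}/2)$, $(-1/2,-\sqrt{3}/2)$ lie at angles $0, 2\pi/3, 4\pi/3$ on the unit circle and hence sum to zero, this pins $\nu$ to $\mathrm{span}(\mathbf{1})$. Writing $\nu = c\mathbf{1}$, the remaining dual constraint $|\nu^T(Xu)_+| \leq \|u\|_2$ reduces to $|c|\cdot \max_{\|u\|=1}\mathbf{1}^T(Xu)_+ \leq 1$. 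A direct piecewise-cosine computation (splitting the circle into the six arcs where a fixed subset of the rows of $X$ is active) gives $\max \mathbf{1}^T(Xu)_+ = 1$, attained exactly at the six unit vectors $u_\phi$ for $\phi \in \{0,\pi/3,2\pi/3,\pi,4\pi/3,5\pi/3\}$. Since $\nu^T y = c(1/6+2/3+1/6) = c$, the dual optimum is $\nu^{*} = \mathbf{1}$ with value $1$, and strong duality (via the cone-constrained group LASSO reformulation of Section 4.1) gives primal optimum $2$.

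Next I would apply the polytope characterization: every optimal interpolator has the form $f(x) = a_0^T x + \sum_j \hat\theta_j (u_{\phi_j}^T x)_+$ with the $u_{\phi_j}$ drawn from the six optimal directions and $\hat\theta_j \geq 0$. Because $\mathbf{1}\perp \mathrm{col}(X)$ and $\mathbf{1}^T(Xu_\phi)_+ = 1$ at each optimal direction, the interpolation constraint decouples: $\sum_j \hat\theta_j = \mathbf{1}^T y = 1$ is fixed, while $a_0$ is uniquely determined from $\hat\theta$ as the $X$-preimage of $y - \sum_j \hat\theta_j (Xu_{\phi_j})_+ \in \mathbf{1}^\perp = \mathrm{col}(X)$. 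The cost of any such parameterization equals $2\sum_j\hat\theta_j = 2$, matching the primal optimum.

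Then I would construct two explicit solutions. Computing the six active patterns gives $(Xu_0)_+ = e_1$, $(Xu_{2\pi/3})_+ = e_2$, $(Xu_{4\pi/3})_+ = e_3$, together with $(Xu_{\pi/3})_+ = (1/2,1/2,0)^T$, $(Xu_\pi)_+ = (0,1/2,1/2)^T$, $(Xu_{5\pi/3})_+ = (1/2,0,1/2)^T$. The symmetric choice $a_0 = 0$ with $(\hat\theta_0, \hat\theta_{2\pi/3}, \hat\theta_{4\pi/3}) = (1/6, 2/3, 1/6)$ directly returns $y$ and yields an interpolator $f_1$. The asymmetric choice $a_0 = (-1/3,0)^T$ with a single neuron at angle $\pi/3$ of weight $1$ yields $Xa_0 + (Xu_{\pi/3})_+ = (-1/3 + 1/2,\ 1/6 + 1/2,\ 1/6 + 0) = y$ and thus an interpolator $f_2$. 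Both have cost $2$. To establish $f_1 \neq f_2$ as functions, I evaluate off the training set at $x = (0,1)$: $f_1(0,1) = (2/3)(\sqrt{3}/2) = \sqrt{3}/3$ while $f_2(0,1) = \sqrt{3}/2$, so $f_1, f_2$ are distinct elements of $\mathcal{F}^{*}$. The main obstacle lies in Steps 1--2, namely carefully deriving the free-skip constraint $X^T\nu^{*} = 0$ and verifying that the six angles exhaust the maximizers of $\mathbf{1}^T(Xu)_+$; once these are in place the construction and verification are routine.
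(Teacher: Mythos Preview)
Your approach is correct and essentially identical to the paper's: both identify the dual certificate $\nu^{*}=\mathbf{1}$ (forced by the free-skip constraint $X^{T}\nu=0$ since the three data rows sum to zero), verify $\max_{\|u\|\le 1}\mathbf{1}^{T}(Xu)_{+}=1$, and then exhibit two explicit interpolators achieving the bound. The only cosmetic differences are that you route the argument through the Section~4.1 polytope framework and choose different witnesses (a symmetric three-neuron solution with $a_0=0$ and a single-neuron solution, versus the paper's two two-neuron solutions), and you certify $f_1\neq f_2$ by point evaluation at $(0,1)$ rather than by comparing breaklines.
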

\begin{proof}
Let's consider the set $\mathcal{Q}_X = \{(Xu)_{+}\ |\ \lVert u \rVert_2 \leq 1\}$. The six possible hyperplane arrangement patterns $1(Xu \geq 0)$ are $(001),(010),(100),(011),(101),(110),$ and when we draw the set $\mathcal{Q}_X$ we get the following shape in \cref{fig3:2dnonbias}. 
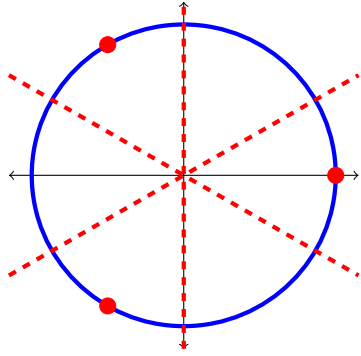
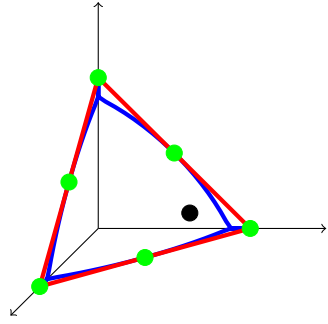
\begin{figure}[H]

\centering
\begin{subfigure}{0.48\textwidth}
\centering
\begin{tikzpicture}
\draw[<->] (-2.3, 0) -- (2.3, 0);
\draw[<->] (0, -2.3) -- (0, 2.3);
\draw[blue,ultra thick] ellipse (2cm and 2cm);
\draw[fill=red, color=red] (2,0) circle (3pt);
\draw[fill=red, color=red] (-1, 1.732) circle (3pt);
\draw[fill=red, color=red] (-1, -1.732) circle (3pt);
\draw[color=red, dashed, ultra thick, domain=-2.3:2.3, samples=100] plot(0,\x);
\draw[color=red, dashed, ultra thick, domain=-2.3:2.3, samples=100] plot(\x,{\x/1.732});
\draw[color=red, dashed, ultra thick, domain=-2.3:2.3, samples=100] plot(\x,{-\x/1.732});
\end{tikzpicture}
\caption{Six linear regions of $(Xu)_{+}$.}
\end{subfigure}
\begin{subfigure}{0.48\textwidth}
\centering
\begin{tikzpicture}[
declare function={relu(\x) = (\x + abs(\x));
                  relux(\x) = relu(cos(180*\x/3.14));
                  reluy(\x) = relu(-0.5*cos(180*\x/3.14)+0.86602*sin(180*\x/3.14));
                  reluz(\x) = relu(-0.5*cos(180*\x/3.14)-0.86602*sin(180*\x/3.14));
}
,]
\draw[->] (0,0,0) -- (3,0,0);
\draw[->] (0,0,0) -- (0,3,0);
\draw[->] (0,0,0) -- (0,0,3);
\draw[color = blue, ultra thick, domain=0:6.28, samples=100] plot({relux(\x)},{reluy(\x)},{reluz(\x)});  
\draw[color = red, ultra thick, domain= 0:2, samples=100] plot(\x,2-\x,0);
\draw[color = red, ultra thick, domain= 0:2, samples=100] plot(\x,0,2-\x);
\draw[color = red, ultra thick, domain= 0:2, samples=100] plot(0,\x,2-\x);
\draw[fill=red, color=green] (2,0,0) circle (3pt);
\draw[fill=red, color=green] (0,2,0) circle (3pt);
\draw[fill=red, color=green] (0,0,2) circle (3pt);
\draw[fill=red, color=green] (1,1,0) circle (3pt);
\draw[fill=red, color=green] (1,0,1) circle (3pt);
\draw[fill=red, color=green] (0,1,1) circle (3pt);
\draw[fill=black, color=black] (1.333,0.333,0.333) circle (3pt);
\end{tikzpicture}
\caption{The set $\mathrm{Conv}(\mathcal{Q}_{X} \cup -\mathcal{Q}_{X})$. $y$ is denoted with the black point.}
\end{subfigure}
\caption{\label{fig3:2dnonbias} The shape of $\mathcal{Q}_X$ for a certain 2d data. The input space is split into six regions, and each region becomes either a 1d or a 2d object in a 3-dimensional space. Observe that $\mathcal{Q}_X$ meets with $x+y+z=1$ with six points.}
\end{figure}
With some scaling trick, we can see that the problem is further equivalent to 
$$
\mathrm{minimize}_{m, w_0, \{w_i, \alpha_i\}_{i=1}^{m}} \quad \sum_{i=1}^{m} |\alpha_i|, \quad \mathrm{subject}\ \mathrm{to} \quad Xw_0 + \sum_{i=1}^{m} (Xw_i)_{+}\alpha_i = y,\ \lVert w_i \rVert_2 \leq 1 \quad \forall i \in [m].
$$
Now, choose $\nu = [1,1,1]^{T}$. For any optimal $w_0, w_1, w_2, \alpha_0, \alpha_1$, we know that
$$
\nu^{T}Xw_0 + \sum_{i=1}^{m} \nu^{T}(Xw_i)_{+}\alpha_i = \sum_{i=1}^{m} \nu^{T}(Xw_i)_{+}\alpha_i = \langle \nu, y \rangle,
$$
and
$$
\langle \nu, y \rangle \leq \sum_{i=1}^{m} |\nu^{T}(Xw_i)_{+}||\alpha_i| \leq \sum_{i=1}^{m} |\alpha_i|,
$$
which means that the objective value is lower bounded by $\langle \nu, y \rangle$ = 1. At last, we have two different models that have different breaklines and have objective value 1. The two models are:
$$
w_0 = -\frac{1}{3}\begin{bmatrix}1\\0\end{bmatrix}, w_1 = \frac{1}{\sqrt{2}}\begin{bmatrix}1\\0\end{bmatrix}, w_2 = \frac{1}{\sqrt{2}}\begin{bmatrix}-1/2\\\sqrt{3}/2\end{bmatrix}, \alpha_1 = \frac{1}{\sqrt{2}}, \alpha_2 = \frac{1}{\sqrt{2}},
$$
and
$$
w_0 = -\frac{1}{3}\begin{bmatrix}-1/2\\-\sqrt{3}/2\end{bmatrix}, w_1 = \frac{1}{\sqrt{2}}\begin{bmatrix}-1/2\\-\sqrt{3}/2\end{bmatrix}, w_2 = \frac{1}{\sqrt{2}}\begin{bmatrix}-1/2\\\sqrt{3}/2\end{bmatrix}, \alpha_1 = \frac{1}{\sqrt{2}}, \alpha_2 = \frac{1}{\sqrt{2}}.
$$
With direct substitution, we can see that they are both valid interpolators. A direct calculation shows that both have objective value of 1. At last, the breaklines differ, as the breaklines directly correspond to the weight vectors $w_1, w_2$: this means that the two optimal functions are different.
\end{proof}

\begin{appxproposition}
\label{p15:ce2}
Suppose $n = 4$, input data is given as
$$
\{(x_{1i},x_{2i},y_i)\}_{i=1}^{4} = \{(1,0,1),(0,1,-1),(-1,0,1),(0,-1,-1)\},
$$
$X = [x_1 x_2] \in \mathbb{R}^{n \times 2}$. Then, the minimization problem in \eqref{eq:freeskip_bias}, namely the SB problem, has at least two different solutions in $\mathcal{F}^{*}$.
\end{appxproposition}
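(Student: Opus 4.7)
The plan is to mirror \cref{p14:ce1}: use the dual formulation of \eqref{eq:freeskip_bias} both to lower-bound the optimal cost and to characterize the set of optimal neuron directions, and then exhibit two distinct elements of $\mathcal{F}^{*}$ via \cref{cp7:Freeskip_interp_polytope}. After the standard AM--GM rescaling that replaces $\lVert a_i \rVert_2^2 + b_i^2 + \theta_i^2$ by its tight lower bound $2\lVert (a_i, b_i) \rVert_2 |\theta_i|$, the primal becomes $\min 2\sum_i |\theta_i|$ over unit-norm $(a_i, b_i)$ subject to $Xa_0 + \sum_i (Xa_i + b_i\mathbf{1})_+\theta_i = y$; the unregularized skip $a_0$ converts into the dual constraint $X^T\nu = 0$, alongside the usual bound $|\nu^T(Xu + b\mathbf{1})_+| \leq 2\sqrt{\lVert u \rVert_2^2 + b^2}$ for all $(u, b)$.

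The natural dual candidate, dictated by the fourfold symmetry of the data and the alternating labels, is $\nu^{*} = 2(1, -1, 1, -1)^T$. The identity $X^T\nu^{*} = 0$ is immediate from $\nu^*_1 = \nu^*_3$ and $\nu^*_2 = \nu^*_4$. The inner-product bound can be checked by introducing $g(t, b) := (t+b)_+ + (-t+b)_+$ (an even-in-$t$ ReLU pair), observing that $(\nu^{*})^T(Xu + b\mathbf{1})_+ = 2[g(u_1, b) - g(u_2, b)]$, and running a short case analysis on the signs of $b \pm |u_i|$ to bound $g(u_1, b) - g(u_2, b)$ by $\sqrt{u_1^2 + u_2^2 + b^2}$. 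The candidate primal $f_0(x) := |x_1| - |x_2|$, built from four unit-norm axis-aligned ReLUs, then matches the dual value $\langle \nu^{*}, y\rangle = 8$, certifying both dual and primal optimality.

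Given $\nu^{*}$ and its active extremal set, \cref{cp7:Freeskip_interp_polytope} parameterizes the optimal set in parameter space as nonnegative conic combinations of ReLUs in the dual-active directions together with the free skip $a_0 \in \mathbb{R}^{2}$, subject to the four interpolation equations. Solving this underdetermined system exhibits a positive-dimensional polytope of optimal parameterizations; selecting two points of the polytope whose induced functions disagree at some off-sample input then yields two distinct elements of $\mathcal{F}^{*}$.

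The main obstacle is that the simplest dual-active catalog (axis-aligned with $b = 0$) collapses under the identity $(t)_+ - (-t)_+ = t$ into the single function $|x_1| - |x_2|$, so to separate parameter-level from function-level nonuniqueness the extremal-direction analysis must locate a saturating direction with $b \neq 0$ (or with $u$ off an axis) whose inclusion breaks the collapse. Explicitly identifying such a saturating direction, constructing the corresponding second interpolator, and verifying both that its cost equals $8$ and that it disagrees with $f_0$ at some off-training point, is the technical heart of the argument.
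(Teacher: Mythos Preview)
Your dual variable and the lower bound are correct, and your diagnosis that the four axis-aligned ReLU directions, combined with a purely linear skip $Xa_0$ with $a_0\in\mathbb{R}^{2}$, collapse to the single function $|x_1|-|x_2|$ is also correct. The gap is in the proposed resolution. You plan to locate a saturating neuron direction with nonzero bias $b$, but no such direction exists. Writing $w=(u_1,u_2,b)$ and $P(t):=(t+b)_+ + (-t+b)_+$, one has $\tfrac{1}{2}(\nu^{*})^{T}(\bar{X}w)_+ = P(u_1)-P(u_2)$, and the very case analysis you sketch for the bound shows that $|P(u_1)-P(u_2)|$ attains its maximum on the unit sphere only at $b=0$ with $(u_1,u_2)\in\{(\pm1,0),(0,\pm1)\}$. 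So the step you label the ``technical heart'' cannot be carried out.

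The paper breaks the collapse differently: it lets the free skip act through $\bar{X}=[X\mid 1]$, i.e., it takes $a_0\in\mathbb{R}^{3}$ so the skip is affine and can supply a constant at zero cost (note $\mathbf{1}^{T}\nu^{*}=0$, so the extra dual constraint from the bias coordinate is already satisfied). With that reading the proof is immediate: one writes down
\[
f_1(x)=1-2(x_2)_+-2(-x_2)_+=1-2|x_2|,\qquad f_2(x)=-1+2(x_1)_++2(-x_1)_+=-1+2|x_1|,
\]
each using two axis-aligned ReLUs and the skip supplying the constant $\pm 1$; both interpolate, both meet the dual bound, and they differ off-sample (e.g.\ at $(1,1)$). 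The extra degree of freedom you are looking for lives in the skip's bias component, not in a new ReLU direction; once you allow that, no search for a nontrivial saturating direction is needed and the two explicit witnesses above finish the proof.
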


\begin{proof}
We give a similar proof strategy as that in \cref{p14:ce1}, writing $\bar{X} = [X \ | \ 1] \in \mathbb{R}^{n \times 3}$ and bounding $|\nu^{T}(\bar{X}u)_{+}|$ for $\nu = [1,-1,1,-1]^{T}$ and $\lVert u \rVert \leq 1$. Note that $\bar{X}^{T}\nu = 0$. It is not easy to visualize the shape of $\{(\bar{X}u)_{+}|\lVert u \rVert_2 \leq 1\}$ as in \cref{fig3:2dnonbias}, but we can solve the optimization problem by splitting the input domain into 14 regions where the function $(\bar{X}u)_{+}$ is linear. There are 14 such regions due to the classical result of \cite{cover1965geometrical}. A simple convex optimization for these 14 liner regions yields that 
$$
|\nu^{T}(\bar{X}u)_{+}| \leq 1 \quad \forall \lVert u \rVert_2 \leq 1.
$$
Another way to see this is writing $u = [a,b,c]$ and $\nu^{T}(\bar{X}u)_{+}$ as
\[
\frac{1}{2}(|a+c|+|a-c|-|b+c|-|b-c|),
\]
and with the constraint $a^2+b^2+c^2 \leq 1$, we have that the above formula is bounded between $-1$ and 1.
Using the same scaling trick, we see that the lower bound of the objective is $\langle \nu, y \rangle$ as in \cref{p14:ce1}, which is 4. At last, choose two models as
$$
w_0 = \begin{bmatrix}0\\0\\1\end{bmatrix}, w_1 =  \begin{bmatrix}0\\\sqrt{2}\\0\end{bmatrix}, w_2 =  \begin{bmatrix}0\\-\sqrt{2}\\0\end{bmatrix}, \alpha_1 = -\sqrt{2},\alpha_2 = -\sqrt{2},
$$
and
$$
w_0 = \begin{bmatrix}0\\0\\-1\end{bmatrix}, w_1 =  \begin{bmatrix}\sqrt{2}\\0\\0\end{bmatrix}, w_2 =  \begin{bmatrix}-\sqrt{2}\\0\\0\end{bmatrix}, \alpha_1 = \sqrt{2},\alpha_2 = \sqrt{2}.
$$
Both solutions have cost 4 and interpolates the data. With some simplification, we can see that the first solution gives $f(x,y) = 1 - 2(y)_{+} - 2(-y)_{+}$, whereas the second solution gives $f(x,y) = -1 + 2(x)_{+} + 2(-x)_{+}$. Apparently we have two different minimum-norm interpolators. A visualization (and the symmetry behind it) can be found in \cref{fig4:2dbias_example}.
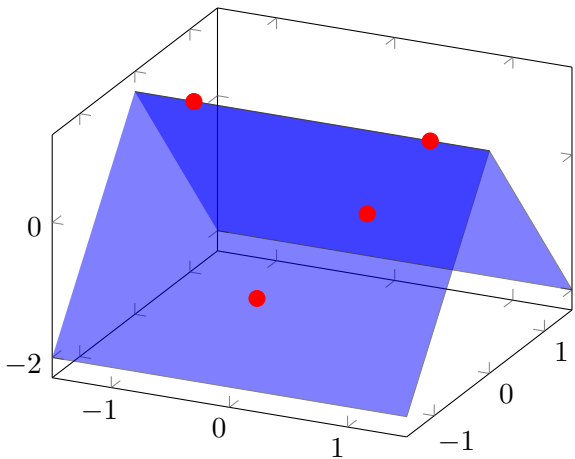
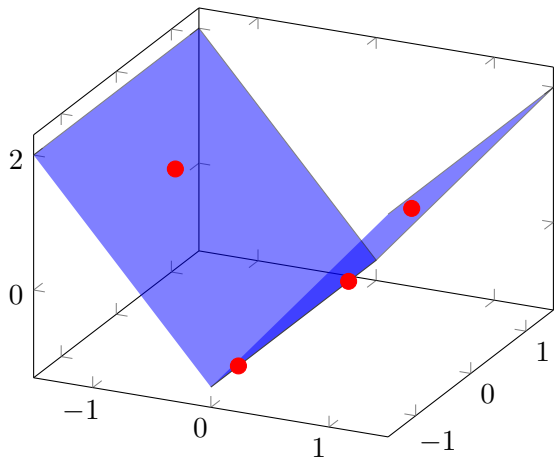
\begin{figure}[H]

\begin{subfigure}{0.55\textwidth}
\begin{tikzpicture}
\begin{axis}
\addplot3[fill=blue, opacity=0.5]
coordinates{
(-1.5,-1.5,-2)
(1.5,-1.5,-2)
(1.5,0,1)
(-1.5,0,1)
};
\addplot3[fill=blue, opacity=0.5]
coordinates{
(-1.5,1.5,-2)
(1.5,1.5,-2)
(1.5,0,1)
(-1.5,0,1)
};
\addplot3[fill=red, color=red] (1,0,1) circle (3pt);
\addplot3[fill=red, color=red] (0,1,-1) circle (3pt);
\addplot3[fill=red, color=red] (-1,0,1) circle (3pt);
\addplot3[fill=red, color=red] (0,-1,-1) circle (3pt);
\end{axis}
\end{tikzpicture}
\caption{Interpolator $f(x,y) = 1 - 2(y)_{+} - 2(-y)_{+}$}
\end{subfigure}
\begin{subfigure}{0.55\textwidth}
\begin{tikzpicture}
\begin{axis}
\addplot3[fill=blue, opacity=0.5]
coordinates{
(1.5,-1.5,2)
(1.5,1.5,2)
(0,1.5,-1)
(0,-1.5,-1)
};
\addplot3[fill=blue, opacity=0.5]
coordinates{
(-1.5,-1.5,2)
(-1.5,1.5,2)
(0,1.5,-1)
(0,-1.5,-1)
};
\addplot3[fill=red, color=red] (1,0,1) circle (3pt);
\addplot3[fill=red, color=red] (0,1,-1) circle (3pt);
\addplot3[fill=red, color=red] (-1,0,1) circle (3pt);
\addplot3[fill=red, color=red] (0,-1,-1) circle (3pt);
\end{axis}
\end{tikzpicture}
\caption{Interpolator $f(x,y) = -1 + 2(x)_{+} + 2(-x)_{+}$}
\end{subfigure}
\caption{\label{fig4:2dbias_example} Two different minimum-norm interpolators. We can see that the V shape is the minimum-norm interpolator, and one is the rotation of the other.}
\end{figure}
\end{proof}

The two propositions \cref{p14:ce1} and \cref{p15:ce2} gives \cref{p1:Nonunique}.

\begin{appxproposition}
(\cref{p1:Nonunique} of the paper)
Consider the minimum-norm interpolation problem
\[
\min_{m, u, \{w_i, \alpha_i\}_{i=1}^{m}} \frac{1}{2} \sum_{i=1}^{m} \left(\lVert w_i \rVert_2^2 + \alpha_i^2\right), \quad \mathrm{subject} \ \mathrm{to} \quad Xu + \sum_{i=1}^{m} (Xw_i)_{+}\alpha_i = y,
\]
where for input $X_{in}$, $X = X_{in}$ in the unbiased case and $X = [X_{in} | 1]$ in the biased case. Note that $m$ is also an optimization variable that is not fixed. When the input is 2-dimensional, we have a non-unique minimum-norm interpolator regardless of bias. 
\end{appxproposition}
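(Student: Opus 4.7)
The plan is to prove \cref{p1:Nonunique} by combining the two explicit constructions already set up in \cref{p14:ce1} (no bias) and \cref{p15:ce2} (with bias). Each provides a concrete two-dimensional dataset whose minimum-norm interpolation problem admits at least two distinct optimal functions in $\mathcal{F}^{*}$, directly witnessing non-uniqueness in both regimes of the statement.

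The general strategy behind both counterexamples is to exploit the dual problem and search for datasets where the dual feasibility constraint $|\nu^{T}(Xu)_{+}| \leq \beta \lVert u \rVert_2$ (with the additional $\bar{X}^{T}\nu = 0$ constraint in the biased case, coming from the free skip connection as noted in \cref{4.1.MINNORM_SOL}) is tight at multiple distinct directions $u$. By \cref{t1:optpolytope} (or its interpolation analogue), every tight direction is an admissible support for an optimal interpolator, so a degenerate dual forces multiple optimal conic combinations realizing the same model fit, hence multiple optimal functions. For the unbiased case I would place three points evenly spaced on the unit circle and take $\nu = [1,1,1]^{T}$; the rotational symmetry makes the dual bound tight along an entire hexagonal family of directions. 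For the biased case I would place four points at $\pm e_1, \pm e_2$ with alternating labels and take $\nu = [1,-1,1,-1]^{T}$, which both satisfies the skip-connection constraint $\bar{X}^{T}\nu = 0$ and produces two symmetric tight families aligned with the coordinate axes.

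Given these setups the proof reduces to exhibiting two concrete optimal interpolators per case and checking that each attains the dual lower bound $\langle \nu, y \rangle$, which pins down optimality by weak duality. In the unbiased case the two interpolators use different pairs of breakline directions drawn from the hexagon; in the biased case they correspond to the two V-shaped functions $1 - 2\lvert y \rvert$ and $-1 + 2\lvert x \rvert$, which are rotations of one another and thus clearly distinct as elements of $\mathcal{F}^{*}$.

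The main obstacle is verifying dual feasibility, especially in the biased case, where one must show $|\nu^{T}(\bar{X}u)_{+}| \leq 1$ on the whole unit ball of $\mathbb{R}^{3}$. The cleanest route is to rewrite $\nu^{T}(\bar{X}u)_{+}$ in closed form as $\tfrac{1}{2}(\lvert a+c\rvert + \lvert a-c\rvert - \lvert b+c\rvert - \lvert b-c\rvert)$ for $u = (a,b,c)^{T}$ and bound this expression directly on $a^2 + b^2 + c^2 \leq 1$; alternatively one can enumerate the $14$ linear regions cut out by the four hyperplanes (the count following from the Cover–Winder formula) and solve a convex program on each. Once dual feasibility is established, matching the cost of each candidate interpolator to $\langle \nu, y \rangle$ is a short direct calculation, and \cref{p1:Nonunique} follows immediately from the two auxiliary propositions.
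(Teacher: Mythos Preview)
Your proposal is correct and matches the paper's approach exactly: the paper's proof of \cref{p1:Nonunique} is the single line ``This is a direct consequence of \cref{p14:ce1} and \cref{p15:ce2}.'' Your additional exposition of the dual-variable strategy, the specific data placements, and the closed-form verification of dual feasibility in the biased case all faithfully mirror the arguments inside those two auxiliary propositions, so there is nothing to add or correct.
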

\begin{proof}
This is a direct consequence of \cref{p14:ce1} and \cref{p15:ce2}.
\end{proof}

\begin{appxproposition}
\label{appx:Class_Nonunique}
(\cref{p2:Class_Nonunique} of the paper)
Consider the minimum-norm interpolation problem without free skip connections and regularized bias, namely the NSB problem. 
Take $n$ vectors in $\mathbb{R}^{2}$ that satisfy
\[
v_n = [\frac{\sqrt{3}}{2}, \frac{1}{2}]^{T},\ \lVert s_k \rVert_2 = 1, s_k > 0 \ \ \forall k \in [n-1],\  s_n = [0, 1]^{T}, \ v_{i,2} > 0 \ \ \forall i \in [n].
\]
where we write $\sum_{i=1}^{k} v_{n-i+1} = s_k$. Now, choose $x_i = v_{i, 1} / v_{i, 2}$ and choose $y$ as any conic combination of $n+1$ vectors
\[
(s_{i,1}x + s_{i,2}1)_{+} \quad \forall i \in [n], \quad ((s_n - v_n)_{1}x + (s_n - v_n)_2 1)_{+},
\]
with positive weights. Then, there exist infinitely many minimum-norm interpolators.
\end{appxproposition}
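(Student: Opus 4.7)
The plan is to exhibit an explicit dual-optimal variable for the NSB interpolation problem whose dual constraint is tight at precisely the $n+1$ unit directions $s_1,\dots,s_n$ and $\tilde s := s_n - v_n$, and then invoke \cref{t1:optpolytope} together with a dimension count. A key preliminary fact is that $\tilde s$ is itself a unit vector: since $v_n = s_1 = (\sqrt{3}/2,\,1/2)$ and $s_n = (0,1)$ both lie on the unit circle with angular separation $60^\circ$, we have $\|s_n - s_1\|_2^2 = 2 - 2\cos 60^\circ = 1$. After encoding the bias by the augmented data matrix $X_{\mathrm{ext}} = [x \mid \mathbf{1}] \in \mathbb{R}^{n\times 2}$, the dual of the norm-minimization reformulation reads $\max_\nu \langle \nu, y\rangle$ subject to $|\nu^T (X_{\mathrm{ext}} u)_+| \leq \|u\|_2$ for every $u \in \mathbb{R}^2$.

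My candidate dual variable is $\nu^* = (v_{1,2}, v_{2,2}, \dots, v_{n,2})^T$, motivated by the identity $v_{j,2}(x_j u_1 + u_2) = v_j \cdot u$, which recasts $\nu^{*T}(X_{\mathrm{ext}} u)_+$ as $w(u)\cdot u$ with $w(u) := \sum_{j : v_j \cdot u > 0} v_j$. Because each chord $v_j = s_{n-j+1} - s_{n-j}$ is perpendicular to its midpoint radius, a short polar-coordinate computation shows $v_j \cdot s_k > 0$ if and only if $j + k \geq n+1$, and the telescoping identity $\sum_{j=n-k+1}^n v_j = s_k$ then yields $\nu^{*T}(X_{\mathrm{ext}} s_k)_+ = \|s_k\|_2^2 = 1$ for every $k \in [n]$. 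The analogous argument with $\sum_{j=1}^{n-1} v_j = s_n - v_n = \tilde s$ gives $\nu^{*T}(X_{\mathrm{ext}} \tilde s)_+ = 1$.

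The hard part is verifying dual feasibility $\nu^{*T}(X_{\mathrm{ext}} u)_+ \leq \|u\|_2$ for every other $u$. The vector-valued function $w(u)$ is piecewise constant as $u$ sweeps the unit circle, changing only at the $2n$ perpendiculars to the $v_j$'s; enumerating the resulting sectors shows that $w(u)$ takes only the values $s_1, \dots, s_n, \tilde s$, the ``reflected'' partial sums $s_n - s_k$ for $k \geq 2$, and $0$ (together with their antipodal counterparts obtained from $w(-u) = s_n - w(u)$). Each of these has Euclidean norm at most $1$: for $k \geq 2$, $\|s_n - s_k\|_2^2 = 2 - 2\cos(\alpha_n - \alpha_k) \leq 1$, which follows from $\alpha_k > 30^\circ$, itself guaranteed by $s_k > 0$ entrywise combined with $\alpha_n = 90^\circ$. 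Cauchy--Schwarz together with the trivial lower bound $\nu^{*T}(X_{\mathrm{ext}} u)_+ \geq 0$ (since $\nu^*$ has positive entries and $(X_{\mathrm{ext}} u)_+$ nonnegative ones) then give $0 \leq \nu^{*T}(X_{\mathrm{ext}} u)_+ \leq 1$ on $\|u\|_2 \leq 1$, with equality on the upper side precisely at $u \in \{s_1, \dots, s_n, \tilde s\}$.

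With this dual optimum in hand, \cref{t1:optpolytope} identifies the optimal set of the convex reformulation with $\{(\theta_i)_{i=1}^{n+1} \in \mathbb{R}_{\geq 0}^{n+1} : \sum_{i=1}^n \theta_i (X_{\mathrm{ext}} s_i)_+ + \theta_{n+1}(X_{\mathrm{ext}} \tilde s)_+ = y\}$. The defining matrix is $n \times (n+1)$, so its kernel has dimension at least one. The prescribed construction of $y$ as a strictly positive conic combination supplies a $\theta$ in the interior of $\mathbb{R}_{\geq 0}^{n+1}$; perturbing along any nonzero kernel direction preserves strict positivity over an open interval. Finally, because the $n+1$ neurons $(s_i \cdot [x,1])_+$ and $(\tilde s \cdot [x,1])_+$ have pairwise distinct breakpoints (their directions in $\mathbb{R}^2$ are pairwise distinct), they are linearly independent as functions of $x \in \mathbb{R}$, so distinct $\theta$'s yield distinct interpolating functions and there are infinitely many minimum-norm interpolators.
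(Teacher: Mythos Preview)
Your approach is essentially the paper's: the same dual certificate $\nu^* = (v_{1,2},\dots,v_{n,2})$, the same verification that $\max_{\|u\|\le 1}\nu^{*T}(\bar X u)_+ = 1$ with equality exactly at $s_1,\dots,s_n,\tilde s$, and the same polytope/dimension count to produce a continuum of optimal conic combinations. Your packaging via the identity $\nu^{*T}(\bar X u)_+ = w(u)\cdot u$ with $w(u)=\sum_{v_j\cdot u>0}v_j$ is cleaner than the paper's region-by-region arrangement-pattern analysis, but the underlying argument is the same.

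Two small points to tighten. First, the step ``$\alpha_k>30^\circ$ for $k\ge 2$, guaranteed by $s_k>0$ entrywise combined with $\alpha_n=90^\circ$'' is not quite the right justification: $s_k>0$ only gives $\alpha_k\in(0^\circ,90^\circ)$. What you actually need is that the $\alpha_k$ are strictly increasing, which follows from the hypothesis $v_{i,2}>0$ (each successive $s_k$ has larger second coordinate on the unit circle in the first quadrant), so $\alpha_k>\alpha_1=30^\circ$. Second, \cref{t1:optpolytope} as stated assumes a strictly convex loss; for the interpolation problem you should either cite its interpolation analogue (the paper's \cref{cp5:Interp_polytope}) or, more simply, note that strong duality plus your tightness computation already shows every nonnegative $\theta$ with $\sum\theta_i(\bar X w_i)_+=y$ achieves the optimal value $\langle\nu^*,y\rangle$, which is all you use.
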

\begin{proof}
Let's choose $x$ as proposed in \cref{p2:Class_Nonunique}. Also, let's choose $\nu \in \mathbb{R}^{n}$ as $\nu_i = v_{i,2}$. Write $\bar{X} = [x | 1] \in \mathbb{R}^{n \times 2}$. Then the NSB problem is written as
\[
\min_{m, \{w_i, \alpha_i\}_{i=1}^{m}} \sum_{i=1}^{m} \lVert w_i \rVert_2^2 + |\alpha_i|^2 \quad \mathrm{subject}\ \ \mathrm{to} \quad \sum_{i=1}^{m} (\bar{X}w_i)_{+}\alpha_i = y.
\]
We first show that 
\begin{equation}
\label{eq:dualprobagain}
\max_{\lVert u \rVert_2 \leq 1} |\nu^{T}(\bar{X}u)_{+}| = 1,
\end{equation}
and the solutions are $s_1, s_2, \cdots s_n, s_n - v_n$. \\
The first thing to observe is that $x_1 < x_2 < \cdots < x_n$. To prove this, let's see that for $i = 2, 3, \cdots n-1$, 
\[
x_{i-1} < -\frac{s_{n-i+1,2}}{s_{n-i+1,1}} < x_i.
\] 
The reason is the following: for $i = 2, 3, \cdots n-1$, we know $\lVert s_{n-i+1} + v_{i-1} \rVert_2 = 1$, and as $\lVert s_{n-i+1} \rVert_2 = 1$ we know $s_{n-i+1} \cdot v_{i-1} = -1/2 \cdot \lVert v_{i-1} \rVert_2^2 < 0$. Hence, $s_{n-i+1,1}v_{i-1,1} + s_{n-i+1,2}v_{i-1,2} < 0$, and as $s_{n-i+1,1}, s_{n-i+1,2}, v_{i-1, 2} >0$, we have $s_{n-i+1,2}/s_{n-i+1,1} < -v_{i-1,1}/v_{i-1,2} = -x_{i-1}$. Similarly, $\lVert s_{n-i+1} - v_i \rVert_2 = 1$, and as $s_{n-i+1} \cdot v_i > 0$, we have $s_{n-i+1,2}/s_{n-i+1,1} > -v_{i,1}/v_{i,2} = -x_i$. This means for $i = 2, 3, \cdots n-1$, $x_{i-1} < x_i$, and $x_1 < x_2 < \cdots < x_{n-1}$. At last, we have $v_{n-1} \cdot v_n < 0$ because $\lVert v_n \rVert_2 = \lVert v_n + v_{n-1} \rVert_2 = 1$, meaning $x_{n-1} < 0$, whereas $x_n = \sqrt{3} > 0$, meaning $x_1 < x_2 < \cdots < x_n$.  

Now we consider the possible arrangement patterns $\mathrm{diag}(1(\bar{X}u \geq 0))$. We can see that the possible patterns are $\mathrm{diag}([0,0,\cdots,0])$, $\mathrm{diag}([0,0,\cdots,0,1])$, $\mathrm{diag}([0,0,\cdots,1,1])$, $\cdots \mathrm{diag}([0,1,\cdots, 1,1])$, $\mathrm{diag}([1,1,\cdots, 1,1])$, $\cdots \mathrm{diag}([1,0, \cdots, 0, 0])$. In other words, starting from the $n$-th entry, 0 turns to 1 in reverse order, then we have all ones, then 1s become 0s at starting from the $n$-th entry. Let's denote the diagonal matrices $D_1, D_2, \cdots D_{2n}$. 

Solving \eqref{eq:dualprobagain} is equivalent to solving
\[
\max_{(2D_i - I)\bar{X}u \geq 0,\ \lVert u \rVert_2 \leq 1} \nu^{T}D_i\bar{X}u.
\]
The absolute value function is erased as $\nu > 0$. 

For $D_1$, the objective is 0. 
For $D_2$ to $D_{n+2}$, we first know that $\lVert \nu^{T}D_i\bar{X} \rVert_2 = 1$. To see this, observe that $\lVert \nu^{T}D_2\bar{X} \rVert_2 = \lVert \nu_n[x_n, 1] \rVert_2 = \lVert v_n \rVert_2 = 1$, $\lVert \nu^{T}D_3\bar{X} \rVert_2 = \lVert \nu_n[x_n, 1] + \nu_{n-1}[x_{n-1}, 1]\rVert_2 = \lVert v_n + v_{n-1} \rVert_2 = 1$, $\cdots, \lVert \nu^{T}D_{n+1}\bar{X} \rVert_2 = \lVert \sum_{i=1}^{n} v_i \rVert_2 = 1, \lVert \nu^{T}D_{n+2}\bar{X} \rVert_2 = \lVert \sum_{i=1}^{n-1} v_i \rVert_2 = \lVert [-\frac{\sqrt{3}}{2}, \frac{1}{2}] \rVert_2 = 1$. For $D_{n+3}$ to $D_{2n}$, we can also see that $\lVert \nu^{T}D_i\bar{X} \rVert_2 < 1$. That is because $\nu^{T}D_{n+k}\bar{X} = s_n - s_{k-1}$ for $k \geq 2$. $\lVert s_n - s_{k-1} \rVert_2 = 2 - 2s_n \cdot s_{k-1}$, and as we know $1/2 = s_n \cdot s_1 < s_n \cdot s_2 < \cdots s_n \cdot s_{n-1}$ (which is the sum of $v_{i,2}$, and as $v_{i,2} > 0$ we have the property), $\lVert s_n - s_{k-1} \rVert_2 < 1$ for $k = 3, 4, \cdots n$. 

We can then see that $\max_{(2D_i - I)\bar{X}u \geq 0,\ \lVert u \rVert_2 \leq 1} \nu^{T}D_i\bar{X}u \leq \max_{i \in [2n]} \lVert \nu^{T}D_i\bar{X} \rVert_2 = 1$. The last thing to check is that $s_1, s_2, \cdots s_n, s_n - v_n$ are actual solutions. For $i = 2, 3, \cdots n-1$, we know that
\[
x_1 < x_2 < \cdots < x_{i-1} < -\frac{s_{n-i+1,2}}{s_{n-i+1,1}} < x_i < x_{i+1} < \cdots < x_n,
\]
and when we write $k_i = [x_i, 1]^{T}$, we have that $k_n \cdot s_{n-i+1} > 0, \cdots k_i \cdot s_{n-i+1} > 0$, $k_{i-1} \cdot s_{n-i+1} < 0, \cdots k_1 \cdots s_{n-i+1} < 0$ for all $i = 2, 3, \cdots n-1$. Hence, for $s_2, s_3, \cdots s_{n-1}$, $(2D_{i+1} - I)\bar{X}s_i \geq 0$. As $s_i = \bar{X}^{T}D_{i+1}\nu$, for these $s_i$s, $\nu^{T}(\bar{X}s_i)_{+} = 1$. 

The three cases we have to check are when $i = 1, n$, and $s_n - v_n$. When $i = n$, $s_n = [0,1]^{T}$ and as all $k_i$s have positive y values, $s_n \cdot k_i > 0$ for all $i \in [n]$ and indeed, $s_n$ becomes a solution. Also, we know that $\lVert s_1 + v_{n-1} \rVert_2 = 1$, meaning $x_1 < x_2 < \cdots < x_{n-1} < -s_{1,2}/s_{1,1}$. Hence, $k_{n-1} \cdot s_1 < 0, \cdots k_1 \cdot s_1 < 0$. As $s_1$ and $k_n$ are parallel, we know $k_n \cdot s_1 > 0$. Same with other cases, as $\lVert s_1 \rVert_2 = 1$, $s_1$ is a solution. At last, let's check that $(2D_{n+2} - I)\bar{X}(s_n - v_n) \geq 0$. As $v_n = [\sqrt{3}/2, 1/2]^{T}$, $s_n - v_n = [-\sqrt{3}/2, 1/2]^{T}$, $v_n \cdot(s_n - v_n) < 0$ and $k_n \cdot (s_n - v_n) < 0$. For $i \in [n-1]$, we know $x_i < 0$. Hence, $-\sqrt{3}x_i + 1 > 0$. This means $(2D_{n+2} - I)\bar{X}(s_n - v_n) > 0$ and as $\lVert s_n - v_n \rVert_2 = 1$, we have $s_n - v_n$ as a solution too.

Now that we have found $n+1$ different solutions to problem in \eqref{eq:dualprobagain}, let's note them $w_1, w_2, \cdots w_{n+1}$. $y$ is chosen as any conic combination
\begin{equation}
\label{eq:conic}
y = \sum_{i=1}^{n+1} c_i(\bar{X}w_i)_{+},
\end{equation}
where $c_i > 0$. We know that the interpolation problem is equivalent to 
\[
\min\ t \quad \mathrm{subject} \ \ \mathrm{to} \quad y \in t\mathrm{Conv}(\mathcal{Q}_{\bar{X}} \cup -\mathcal{Q}_{\bar{X}}),
\]
where $\mathcal{Q}_{\bar{X}} = \{(\bar{X}u)_{+} \ | \ \lVert u \rVert_2 \leq 1\}$ \cite{pilanci2020neural}. In other words, the minimum-norm interpolation problem without free skip connections and regularized bias is equivalent to 
\[
\min_{m, (z_i, d_i)_{i=1}^{m}} \sum_{i=1}^{m} |d_i|, \quad y = \sum_{i=1}^{m} d_i(\bar{X}z_i)_{+},
\]
for some $\lVert z_i\rVert_2 \leq 1$, $i \in [m]$. For any $d_i, z_i$ that satisfies $\lVert z_i \rVert_2 \leq 1$ and
\[
y = \sum_{i=1}^{m} d_i(\bar{X}z_i)_{+},
\]
we have that 
\[
\langle \nu, y \rangle = \sum_{i=1}^{n+1} c_i = \sum_{i=1}^{m} d_i \nu^{T}(\bar{X}z_i)_{+} \leq \sum_{i=1}^{m} |d_i \nu^{T}(\bar{X}z_i)_{+}| \leq \sum_{i=1}^{m} |d_i|,
\]
meaning $\langle \nu, y \rangle$ is the optimal value, and any conic combination of $\{(\bar{X}w_i)_{+}\}_{i=1}^{n+1}$ yields a solution.\\

Let's write $w_i = [a_i, b_i]^{T}$. The optimal interpolator then becomes
\[
f(\bar{X}) = \sum_{i=1}^{n+1} c_i(a_ix + b_i)_{+},
\]
for $c_i$ s satisfying \eqref{eq:conic}. The last thing to check is that there are infinitely many such interpolators. This is slightly different from $y$ having infinitely many different conic combination expressions of $\{(\bar{X}w_i)_{+}\}_{i=1}^{n+1}$, because different $c_i$ may correspond to the same function. 

As a final step, we show that we indeed have infinitely many different interpolators. Recall that $\{s_1, s_2, \cdots s_n, s_n - v_n\} = \{w_1, w_2, \cdots w_{n+1}\}$. Note that $s_{1,1}, s_{2,1}, \cdots s_{n,1} \geq 0$ and $s_{n,1} - v_{n,1} < 0$. Without loss of generality let $s_i = w_i$ for $i \in [n]$ and $s_n - v_n = w_{n+1}$. As $x \rightarrow -\infty$, the slope will be $c_{n+1}a_{n+1}$. Showing the different conic representations of $y$ have different $c_{n+1}$ values is enough. The interesting observation is that the vectors $\{(\bar{X}w_i)_{+}\}_{i=1}^{n}$ is actually linearly independent, because each vector has $D_2, D_3, \cdots D_{n+1}$ as arrangement patterns and for each $s_1, s_2, \cdots s_n$, strict inequality holds. Hence, for each different conic combination of $y$, $c_{n+1}$ should be different. This means the slope at $x \rightarrow -\infty$ is different, and we indeed have infinitely many optimal interpolators. 
\end{proof}

\newpage
\section{Proofs in \cref{4.Gen}}
\label{pf6}

In this section, we prove how our results can be generalized to different architectures and setups. We begin by describing a general solution set.

\begin{appxtheorem}
\label{t3:GenOpt}
Consider the cone-constrained group LASSO with regularization $\mathcal{R}_i$,
\begin{equation}
\label{eq:general_cone_LASSO_again}
    \min_{\theta_i \in \mathcal{C}_i \cap \mathcal{V}_i, s_i \in \mathcal{D}_i} L(\sum_{i=1}^{P} A_i\theta_i + \sum_{i=1}^{Q} B_is_i, y) + \beta \sum_{i=1}^{P} \mathcal{R}_i(\theta_i).
\end{equation}
Assume $\theta_i, s_j \in \mathbb{R}^{d}$, $A_i, B_j \in \mathbb{R}^{n \times d}$, $\mathcal{R}_i: \mathcal{V}_i \rightarrow \mathbb{R}$ are norms, $\mathcal{C}_i, \mathcal{D}_j \subseteq \mathbb{R}^{d}$ are proper cones for $i \in [P], j \in [Q]$, $\mathcal{C}_i \cap \mathcal{V}_i \neq \emptyset$ for $i \in [P]$ and $\beta > 0$. The optimal set $\mathcal{P}^{*}_{\mathrm{gen}}$ is given as
\begin{equation}
\label{eq:optimalsetdescription_general}
\begin{split}
\mathcal{P}^{*}_{\mathrm{gen}} = \Big\{(c_i\bar{\theta}_i)_{i=1}^{P} \oplus (s_i)_{i=1}^{Q} |\ c_i \geq 0, &\sum_{i=1}^{P} c_iA_i\bar{\theta}_i + \sum_{i=1}^{Q} B_is_i \in \mathcal{C}_y,\\
&\bar{\theta}_i \in Zer(F(\mathcal{S}_i, A_i^{T}\nu^{*}, -\beta, \langle , \rangle)), \langle B_i^{T}\nu^{*}, s_i \rangle = 0, s_i \in \mathcal{D}_i\Big\},
\end{split}
\end{equation}
where $\nu^{*}$ is any vector that minimizes $f^{*}(\nu)$
subject to the constraint
$$
\min_{u \in \mathcal{C}_i \cap \mathcal{V}_i} \langle A_i^{T}\nu, u \rangle + \beta\mathcal{R}_i(u) = 0, \quad \min_{s \in \mathcal{D}_j} \langle B_j^{T}\nu, s \rangle = 0,
$$
for all $i \in [P]$, $j \in [Q]$, $F(\mathcal{S}, v, -\beta, \langle , \rangle) = \{u \ | \ u \in \mathcal{S}, \langle v, u \rangle = -\beta\},$ $\mathcal{S}_i = \mathcal{C}_i \cap \{u \ | \ \mathcal{R}_i(u) \leq 1\}$ and $Zer(S) = \{0\}$ if $S = \emptyset$, $S$ otherwise. Here, $f(\cdot) = L(\cdot, y)$ and $f^{*}$ denotes the Fenchel conjugate of $f$.
\end{appxtheorem}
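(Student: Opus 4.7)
The plan is to reuse the Lagrangian/strong-duality machinery of the proof of \cref{t1:optpolytope}, but adapted to cover general cones, arbitrary norm regularizers $\mathcal{R}_i$, and unregularized cone-constrained variables $s_j$. The key reason this generalization goes through is that both $\mathcal{R}_i(\theta_i) + \langle A_i^{T}\nu, \theta_i\rangle$ and $\langle B_j^{T}\nu, s_j\rangle$ are positively homogeneous of degree one on the cones $\mathcal{C}_i$, $\mathcal{D}_j$, so the inner minimization over primal variables is either $0$ or $-\infty$.

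First, I would introduce the model fit $z = \sum_{i=1}^{P}A_i\theta_i + \sum_{j=1}^{Q}B_j s_j$ as an explicit equality-constrained variable, producing the Lagrangian
\begin{equation*}
\mathcal{L}\bigl((\theta_i,s_j),z,\nu\bigr) = L(z,y) - \nu^{T}z + \sum_{i=1}^{P}\bigl(\beta\mathcal{R}_i(\theta_i) + \langle A_i^{T}\nu,\theta_i\rangle\bigr) + \sum_{j=1}^{Q}\langle B_j^{T}\nu, s_j\rangle.
\end{equation*}
Strong duality follows from the separating hyperplane argument already spelled out in the proof of \cref{t1:optpolytope}: the set of achievable $(z - \sum A_i\theta_i - \sum B_j s_j, t)$ with objective $\leq t$ is convex and disjoint from $\{(0,s):s<p^{*}\}$, and the separator cannot be horizontal because $t\to\infty$ is feasible. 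Minimizing $\mathcal{L}$ over $z$ produces $-L^{*}(\nu)$, while minimizing over $\theta_i\in\mathcal{C}_i$ and $s_j\in\mathcal{D}_j$ gives $-\infty$ unless the stated constraints $\min_{u\in\mathcal{C}_i}\langle A_i^{T}\nu,u\rangle + \beta\mathcal{R}_i(u) = 0$ and $\min_{s\in\mathcal{D}_j}\langle B_j^{T}\nu, s\rangle = 0$ hold, which pins down the dual feasible set.

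Next, for the forward inclusion, I fix a dual optimum $\nu^{*}$ and any primal optimum $(\theta_i^{*},s_j^{*})$. By strong duality, each $\theta_i^{*}$ must minimize $\beta\mathcal{R}_i(\theta) + \langle A_i^{T}\nu^{*},\theta\rangle$ on $\mathcal{C}_i$ with minimum value $0$. If there is no nonzero minimizer, positive homogeneity forces $\theta_i^{*}=0$, which matches the convention $\mathrm{Zer}(\emptyset)=\{0\}$. Otherwise, setting $\bar\theta_i = \theta_i^{*}/\mathcal{R}_i(\theta_i^{*})$ gives $\mathcal{R}_i(\bar\theta_i)=1$, $\bar\theta_i\in\mathcal{C}_i$, and $\langle A_i^{T}\nu^{*},\bar\theta_i\rangle = -\beta$, i.e.\ $\bar\theta_i\in F(\mathcal{S}_i,A_i^{T}\nu^{*},-\beta,\langle,\rangle)$, with $c_i := \mathcal{R}_i(\theta_i^{*})\geq 0$. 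Analogously, each $s_j^{*}\in\mathcal{D}_j$ must saturate $\langle B_j^{T}\nu^{*}, s_j\rangle = 0$. The model-fit condition $\sum A_i\theta_i^{*}+\sum B_j s_j^{*}\in\mathcal{C}_y$ holds by definition of $\mathcal{C}_y$, so $(\theta_i^{*},s_j^{*})\in\mathcal{P}^{*}_{\mathrm{gen}}$.

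For the reverse inclusion, I take any $(c_i\bar\theta_i)\oplus(s_j)\in\mathcal{P}^{*}_{\mathrm{gen}}$ with model fit $z\in\mathcal{C}_y$ and compute its primal objective: $L(z,y) + \beta\sum_i c_i\mathcal{R}_i(\bar\theta_i) = L(z,y) + \beta\sum_{\bar\theta_i\neq 0} c_i$. Using $\langle A_i^{T}\nu^{*},\bar\theta_i\rangle=-\beta$ whenever $\bar\theta_i\neq 0$ and $\langle B_j^{T}\nu^{*},s_j\rangle=0$, one gets $\langle\nu^{*},z\rangle = -\beta\sum_{\bar\theta_i\neq 0}c_i$, so the primal cost collapses to $L(z,y) - \langle\nu^{*},z\rangle$. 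Since $z\in\mathcal{C}_y$, this equals $-L^{*}(\nu^{*})$, matching the dual optimum and certifying optimality.

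The main obstacle I anticipate is bookkeeping rather than conceptual: handling the degenerate case where $F(\mathcal{S}_i,A_i^{T}\nu^{*},-\beta,\langle,\rangle)=\emptyset$ (cleanly covered by the $\mathrm{Zer}$ convention), verifying that the dual constraints derived from inner minimization indeed take the form stated (requires positive homogeneity of $\mathcal{R}_i$ and cone structure of $\mathcal{C}_i$, $\mathcal{D}_j$), and confirming that $\mathcal{P}^{*}_{\mathrm{gen}}$ is independent of the choice of $\nu^{*}$ — the latter follows because any two dual optima yield the same optimal value $L(z,y)-\langle\nu^{*},z\rangle$ for every point in $\mathcal{P}^{*}_{\mathrm{gen}}$ with $z\in\mathcal{C}_y$, so the membership condition is intrinsic.
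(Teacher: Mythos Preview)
Your proposal is correct and follows essentially the same route as the paper: form the Lagrangian with the model fit as an equality-constrained variable, invoke the separating-hyperplane strong-duality argument from the proof of \cref{t1:optpolytope}, use positive homogeneity to derive the dual feasible set, then do the two inclusions by normalizing $\theta_i^{*}$ by $\mathcal{R}_i(\theta_i^{*})$ (forward) and collapsing the primal cost to $L(z,y)-\langle\nu^{*},z\rangle$ (reverse). The only cosmetic difference is that the paper spells out the final equality $L(w',y)-\langle\nu^{*},w'\rangle = p^{*}$ by picking an explicit primal optimum with fit $w'$, whereas you shortcut via $-L^{*}(\nu^{*})$; both are valid since $z\in\mathcal{C}_y$ guarantees $z$ minimizes $L(w,y)-\langle\nu^{*},w\rangle$.
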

\begin{proof}
Suppose the optimal set of the problem in \eqref{eq:general_cone_LASSO_again} is $\Theta^{*}_{\mathrm{gen}}$. We show that $\Theta^{*}_{\mathrm{gen}} \subseteq \mathcal{P}^{*}_{\mathrm{gen}}$ and vice versa. Suppose $(\theta^{*}, s^{*}) = (\theta^{*}_i)_{i=1}^{P} \oplus (s^{*}_i)_{i=1}^{Q} \in \Theta^{*}_{\mathrm{gen}}$. We know that $\sum_{i=1}^{P} A_i\theta^{*}_i + \sum_{i=1}^{Q} B_is_i^{*} \in \mathcal{C}_y$, hence it satisfies the second condition for $w^{*} = \sum_{i=1}^{P} A_i\theta^{*}_i + \sum_{i=1}^{Q} B_is_i^{*}$. Also, consider the convex optimization problem
$$
\min_{w, \theta_i \in \mathcal{C}_i \cap \mathcal{V}_i, s_i \in \mathcal{D}_i} L(w, y) + \beta \sum_{i=1}^{P} \mathcal{R}(\theta_i) \ \ \mathrm{subject} \ \mathrm{to}\ \sum_{i=1}^{P} A_i\theta_i + \sum_{i=1}^{Q} B_is_i = w,
$$
and its Lagrangian
\begin{equation}
\label{eq:Lagrangian_simpleified}
\mathcal{L}(w, \theta, s, \nu) = L(w,y)-\nu^{T}w + \sum_{i=1}^{P} (\langle A_i^{T}\nu, \theta_i \rangle + \beta \mathcal{R}_i(\theta_i)) + \sum_{i=1}^{Q} \langle B_i^{T}\nu, s_i \rangle.
\end{equation}
The strong duality argument is essentially the same as that with the proof in \cref{t1:optpolytope}. Moreover, for the dual problem
$$
\max_{\nu}\min_{w, \theta_i \in \mathcal{C}_i \cap \mathcal{V}_i, s_i \in \mathcal{D}_i} \mathcal{L}(w, \theta, s, \nu),
$$
if $\min_{u \in \mathcal{C}_i \cap \mathcal{V}_i} \langle A_i^{T}\nu, u\rangle + \beta \mathcal{R}_i(u) < 0$, we can scale $u$ infinitely large to attain the minimum $-\infty$. Same holds when $\min_{u \in \mathcal{D}_i} \langle B_i^{T}\nu, u\rangle < 0$. Hence, these cases cannot maximize the dual objective, and the dual problem can be written as
$$
\max_{\substack{\min_{u \in \mathcal{C}_i \cap \mathcal{V}_i} \langle A_i^{T}\nu, u\rangle + \beta \mathcal{R}_i(u) = 0 \\ \min_{u \in \mathcal{D}_i} \langle B_i^{T}\nu, u\rangle = 0}} \min_{w} L(w, y) - \nu^{T}w = \max_{\substack{\min_{u \in \mathcal{C}_i \cap \mathcal{V}_i} \langle A_i^{T}\nu, u\rangle + \beta \mathcal{R}_i(u) = 0 \\ \min_{u \in \mathcal{D}_i} \langle B_i^{T}\nu, u\rangle = 0}} -f^{*}(\nu),
$$
meaning $\nu^{*}$ is the dual optimal point. When strong duality holds, for any primal optimal point $(w^{*}, \theta^{*}, s^{*})$ and the dual optimal point $\nu^{*}$, the Lagrangian $\mathcal{L}(w, \theta, s, \nu^{*})$ attains minimum at $(w^{*}, \theta^{*}, s^{*})$. Substitute $\nu^{*}$ in \eqref{eq:Lagrangian_simpleified} to see that each $\theta_i^{*}$ is a minimizer of the problem
$$
\min\  \langle A_i^{T}\nu^{*}, u \rangle + \beta \mathcal{R}(u) \ \ \mathrm{subject} \ \mathrm{to}\ u \in \mathcal{C}_i \cap \mathcal{V}_i.
$$
One thing to notice is that the value $\langle A_i^{T}\nu^{*}, \theta_i^{*} \rangle + \beta \mathcal{R}_i(\theta_i^{*}) = 0$, because if it is strictly smaller than 0 we can strictly decrease the objective $\langle A_i^{T}\nu^{*}, u \rangle + \beta \mathcal{R}_i(u)$ with $u = 2\theta_i^{*}$. 

If $\theta_i^{*} = 0$, we can choose $c_i = 0$ to find $c_i, \bar{\theta}_i \in Zer(F(\mathcal{S}_i, A_i^{T}\nu^{*}, -\beta))$. If $\theta_i^{*} \neq 0$, we know that $\mathcal{R}_i(\theta_i^{*}) \neq 0$, and the vector $\theta_i^{*}/\mathcal{R}_i(\theta_i^{*})$ satisfies $\theta_i^{*}/\mathcal{R}_i(\theta_i^{*}) \in \mathcal{S}_i$ and $\langle A_i^{T}\nu^{*}, \theta_i^{*}/\mathcal{R}_i(\theta_i^{*}) \rangle = -\beta$. Choose $c_i = \mathcal{R}_i(\theta_i^{*})$, $\bar{\theta}_i = \theta_i^{*}/\mathcal{R}_i(\theta_i^{*})$ to find $c_i, \bar{\theta}_i \in Zer(F(\mathcal{S}_i, A_i^{T}\nu^{*}, -\beta))$. 

For $s_i^{*}$ s, we know that each $s_i^{*}$ s are the minimizer of the problem
$$
\min \langle B_i^{T}\nu^{*}, u \rangle \ \ \mathrm{subject}\ \mathrm{to}\ \ u \in \mathcal{D}_i,
$$ hence it should be in $\mathcal{D}_i$ and the value $\langle B_i^{T}\nu^{*}, s_i^{*} \rangle = 0$. 

Concluding, for any $(\theta^{*}, s^{*})$, clearly $\sum_{i=1}^{P} A_i\theta_i^{*} + \sum_{j=1}^{Q} B_js_j^{*} \in \mathcal{C}_y$ and $s_i^{*} \in \mathcal{D}_i, \langle B_i^{T}\nu^{*}, s_i^{*} \rangle = 0$, choose $c_i = 0$ when $\theta_i^{*} = 0$, $c_i = \mathcal{R}(\theta_i^{*})$, $\bar{\theta}_i = \theta_i^{*}/\mathcal{R}(\theta_i^{*})$ otherwise to see that $(\theta^{*},s^{*}) \in \mathcal{P}_{\mathrm{gen}}^{*}$, and
$$
\Theta^{*}_{\mathrm{gen}} \subseteq \mathcal{P}_{\mathrm{gen}}^{*}.
$$
Now, let's take an element $(\theta, s) \in \mathcal{P}_{\mathrm{gen}}^{*}$. We know that $\theta \in \mathcal{C}_i \cap \mathcal{V}_i$ and $s \in \mathcal{D}_i$.
If $\bar{\theta}_i \neq 0$, we know that $\langle \nu^{*}, A_i\bar{\theta}_i \rangle = -\beta$ as $\bar{\theta}_i \in F(\mathcal{S}_i, A_i^{T}\nu^{*}, -\beta)$. Moreover, $\bar{\theta}_i$ is the solution to
$$
\min_{u \in \mathcal{C}_i \cap \mathcal{V}_i, \mathcal{R}_i(u) \leq 1} \langle A_i^{T}\nu^{*}, u \rangle,
$$
because for all $u \in \mathcal{S}_i$, $\langle A_i^{T}\nu^{*}, u \rangle \geq -\beta \mathcal{R}_i(u) \geq -\beta$ holds. This means $\mathcal{R}_i(\bar{\theta}_i) = 1$ for all $\bar{\theta}_i \neq 0$, as the minimum will be attained at a nonzero point, hence the boundary where $\mathcal{R}_i(u) = 1$. Using $\langle \nu^{*}, A_i\bar{\theta}_i \rangle = -\beta$ and $\langle B_i^{T}\nu^{*}, s_i \rangle = 0$, we get
$$
\langle \nu^{*}, w' \rangle = \langle \nu^{*}, \sum_{i=1}^{P} c_iA_i\bar{\theta}_i + \sum_{i=1}^{Q} B_is_i \rangle = -\beta \sum_{\bar{\theta}_i \neq 0} c_i,
$$
for some $w' \in \mathcal{C}_y$. On the other hand, from $\mathcal{R}_i(\bar{\theta}_i) = 1$ for all $i \in [P]$, we know that
$$
\sum_{i=1}^{P} \mathcal{R}_i(c_i\bar{\theta}_i) = \sum_{\bar{\theta}_i \neq 0} c_i.
$$
This leads to the fact that for $(\theta,s)$,
$$
L(\sum_{i=1}^{P} A_i\theta_i + \sum_{i=1}^{Q} B_is_i, y) + \beta \sum_{i=1}^{P} \mathcal{R}_i(\theta_i) = L(w', y) + \beta \sum_{\bar{\theta}_i \neq 0} c_i = L(w',  y) - \langle \nu^{*}, w' \rangle.
$$
At last, we show that for all $w' \in \mathcal{C}_y$, 
$$
L(w', y) - \langle \nu^{*}, w' \rangle = \min_{\theta_i \in \mathcal{C}_i \cap \mathcal{V}_i, s_i \in \mathcal{D}_i} L(\sum_{i=1}^{P} A_i\theta_i + \sum_{i=1}^{Q} B_is_i, y) + \beta \sum_{i=1}^{P} \mathcal{R}_i(\theta_i). 
$$
The fact follows when we use the fact that for $(\theta',s') \in \Theta^{*}_{\mathrm{gen}}$ that satisfies $w' = \sum_{i=1}^{P} A_i\theta_i'+ \sum_{i=1}^{Q} B_is_i'$, the point $(w', \theta', s')$ becomes a minimizer of $\mathcal{L}(w, \theta, s, \nu^{*})$.
Hence, each minimizer $\theta_i'$ is a minimizer of the problem
$$
\min \langle A_i^{T} \nu^{*}, u \rangle + \beta \mathcal{R}_i(u)\ \ \mathrm{subject}\ \mathrm{to}\ u \in \mathcal{C}_i \cap \mathcal{V}_i,
$$
which means that $\beta \mathcal{R}_i(\theta'_i) = -\langle \nu^{*}, A_i\theta'_i \rangle$ for all $i \in [P]$, as $\nu^{*}$ satisfies 
$$
\min_{u \in \mathcal{C}_i \cap \mathcal{V}_i} \langle A_i^{T}\nu^{*}, u \rangle + \beta\mathcal{R}_i(u) = 0.
$$
Also, $\langle \nu^{*}, B_is_i' \rangle = 0$ as $s_i'$ minimizes $\langle B_i^{T}\nu^{*}, s \rangle \ \ \mathrm{subject} \ \mathrm{to} \ s \in \mathcal{D}_i$, and we see that
$$
\beta \sum_{i=1}^{P} \mathcal{R}_i(\theta'_i) = -\langle \nu^{*}, w' \rangle, 
$$
and
\begin{align*}
\min_{\theta_i \in \mathcal{C}_i \cap \mathcal{V}_i, s_i \in \mathcal{D}_i} L(\sum_{i=1}^{P} A_i\theta_i + \sum_{i=1}^{Q} B_is_i, y) + \beta \sum_{i=1}^{P} \mathcal{R}(\theta_i) &= L(\sum_{i=1}^{P} A_i\theta_i'+ \sum_{i=1}^{Q} B_is_i', y) + \beta \sum_{i=1}^{P} \mathcal{R}(\theta_i')\\
&= L(w',y) - \langle \nu^{*}, w' \rangle,
\end{align*}
meaning $(\theta,s) \in \Theta^{*}_{\mathrm{gen}}$ because
$$
L(\sum_{i=1}^{P} A_i\theta_i+ \sum_{i=1}^{Q} B_is_i, y) + \beta \sum_{i=1}^{P} \mathcal{R}(\theta_i) = L(w',  y) - \langle \nu^{*}, w' \rangle.
$$
This means $$\mathcal{P}_{\mathrm{gen}}^{*} \subseteq \Theta^{*}_{\mathrm{gen}},$$ 
and finishes the proof. 
\end{proof}

One application of the theorem is characterizing the optimal set of the interpolation problem. This leads to the staircase of connectivity for interpolation problems.

\begin{appxproposition}
\label{cp5:Interp_polytope}
The solution set of the optimization problem
\[
\min_{u_i, v_i \in \mathcal{K}_i} \quad \sum_{i=1}^{P} \lVert u_i \rVert_2 + \lVert v_i \rVert_2
\]
subject to
\[
\sum_{i=1}^{P} D_iX(u_i - v_i) = y,
\]
is given as 
\[
\mathcal{P}^{*} := \left\{(c_i\bar{u}_i, d_i\bar{v}_i)_{i=1}^{P} \ | \ c_i, d_i \geq 0 \quad \forall i \in [P], \quad \sum_{i=1}^{P} D_iX\bar{u}_i c_i - D_iX\bar{v}_i d_i = y \right\} \subseteq \mathbb{R}^{2dP},
\]
where $\bar{u}_i, \bar{v}_i$ are fixed directions found by solving the optimization problem
\[
\bar{u}_i = 
\argmin_{u \in \mathcal{S}_i} {\nu^{*}}^{T}D_iXu\ \ \mathrm{if} \ \  \min_{u \in \mathcal{S}_i} {\nu^{*}}^{T}D_iXu = -1,\ \  0 \ \   otherwise,
\]
\[
\bar{v}_i = 
\argmin_{v \in \mathcal{S}_i} -{\nu^{*}}^{T}D_iXv\ \ \mathrm{if} \ \  \min_{v \in \mathcal{S}_i} -{\nu^{*}}^{T}D_iXv = -1,\ \ 0 \ \   otherwise.
\]
and $\nu^{*}$ is any dual optimum that satisfies
\[
\nu^{*} = \argmin \langle \nu, y \rangle \ \ \mathrm{subject} \ \ \mathrm{to}\ \  |\nu^{T}D_iXu| \leq \lVert u \rVert_2 \ \ \forall u \in \mathcal{K}_i, i \in [P].
\]
Here, $\mathcal{S}_i = \mathcal{K}_i \cap \{u\ | \ \lVert u \rVert_2 \leq 1\}$.
\end{appxproposition}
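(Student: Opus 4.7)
The plan is to recognize the interpolation problem as a special instance of the general cone-constrained group LASSO in \eqref{eq:general_cone_LASSO_again} and apply \cref{t3:GenOpt}. Specifically, take the loss $L(\cdot, y)$ to be the indicator function $\iota_{\{y\}}$ of the singleton $\{y\}$, whose Fenchel conjugate is $f^{*}(\nu) = \langle \nu, y \rangle$. There are $2P$ regularized variables $\theta_i$ (one block for each $u_i$, one block for each $v_i$), with matrices $A_i = D_iX$ for the $u$-blocks and $A_i = -D_iX$ for the $v$-blocks, cones $\mathcal{C}_i = \mathcal{K}_i$, and regularizer $\mathcal{R}_i(\cdot) = \lVert \cdot \rVert_2$. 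There are no unregularized variables, so the $B_i, s_i$ terms drop out. Setting the regularization parameter to $\beta = 1$ matches the objective $\sum_i (\lVert u_i \rVert_2 + \lVert v_i \rVert_2)$.

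Next, I would unpack what \cref{t3:GenOpt} says in this setting. The dual constraint $\min_{u \in \mathcal{C}_i} \langle A_i^{T}\nu, u \rangle + \beta \mathcal{R}_i(u) = 0$ translates exactly to $\nu^{T}D_iXu + \lVert u \rVert_2 \geq 0$ and $-\nu^{T}D_iXu + \lVert u \rVert_2 \geq 0$ for all $u \in \mathcal{K}_i$, i.e., to $|\nu^{T}D_iXu| \leq \lVert u \rVert_2$ for all $u \in \mathcal{K}_i$. Minimizing $f^{*}(\nu) = \langle \nu, y \rangle$ under these constraints is precisely the dual problem stated in the proposition. Moreover, the optimal model fit set $\mathcal{C}_y$ collapses to the singleton $\{y\}$ by feasibility, so the model-fit constraint reduces to $\sum_i D_iX(c_i\bar{u}_i - d_i\bar{v}_i) = y$.

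It then remains to identify the ``fixed directions'' $\bar{u}_i, \bar{v}_i$ with the set $Zer(F(\mathcal{S}_i, A_i^{T}\nu^{*}, -\beta))$ in \eqref{eq:optimalsetdescription_general}. By the same uniqueness-of-direction argument used in \cref{t1:optpolytope} (essentially \cref{p4:OptimalDirections}: a linear objective with nonzero minimum over the intersection of a cone with the unit ball has a unique minimizer), whenever $\min_{u \in \mathcal{S}_i} \nu^{*T}D_iXu = -1$ the set $F(\mathcal{S}_i, A_i^{T}\nu^{*}, -1)$ is a singleton and equals $\argmin_{u \in \mathcal{S}_i} \nu^{*T}D_iXu$; if $\min_{u \in \mathcal{S}_i} \nu^{*T}D_iXu > -1$ (necessarily $= 0$ from the feasibility condition), the set is empty and $Zer$ forces $\bar{u}_i = 0$. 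The same reasoning applies to $\bar{v}_i$ with $-\nu^{*T}D_iX$ in place of $\nu^{*T}D_iX$.

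The main obstacle, and the only point requiring care, is verifying that strong duality actually holds for the interpolation problem, since the loss is now an indicator function (not strictly convex, not even real-valued) and \cref{t3:GenOpt} was proved under a generic convex $L$. This reduces to showing that the primal is feasible with a point in the relative interior of the constraint, or equivalently, that the set $\mathcal{A}$ in the Lagrangian/separating-hyperplane argument of \cref{t1:optpolytope} is nonempty and that the separating hyperplane is non-vertical (i.e.\ $\tilde{\mu} > 0$). Feasibility is standard under the usual assumption (made implicitly throughout the paper) that there exists at least one interpolating two-layer ReLU network; non-verticality follows because $y$ lies in the image of the convex map, yielding an interior direction in $\mathcal{A}$. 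With strong duality in hand, the remainder of the argument is a direct specialization of \cref{t3:GenOpt}, and the stated description of $\mathcal{P}^{*}$ follows.
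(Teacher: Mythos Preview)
Your approach is correct and essentially identical to the paper's: both apply \cref{t3:GenOpt} with $L$ the indicator of $\{y\}$ (so $f^{*}(\nu)=\langle\nu,y\rangle$ and $\mathcal{C}_y=\{y\}$), take $\beta=1$, rewrite the dual constraint as $|\nu^{T}D_iXu|\le\lVert u\rVert_2$, and invoke \cref{p4:OptimalDirections} for uniqueness of the directions. One small inaccuracy that does not affect the argument: when $\min_{u\in\mathcal{S}_i}{\nu^{*}}^{T}D_iXu>-1$ the minimum need not equal $0$ (it can be any value in $(-1,0]$); what matters is only that $F(\mathcal{S}_i,X^{T}D_i\nu^{*},-1)$ is then empty, which you state correctly.
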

\begin{proof}
Let's apply \cref{t3:GenOpt} to the problem. Note that we can set $\beta = 1$. In fact, $\beta$ can be arbitrary, and scaling $\nu^{*}$ to make $\beta = 1$ will lead to the same result.

When we apply \cref{t3:GenOpt}, we have that
\begin{align*}
\mathcal{P}^{*}_{\mathrm{gen}} = \Big\{\ (c_i\bar{u}_i, d_i\bar{v}_i)_{i=1}^{m} \ | \ &c_i, d_i \geq 0, \sum_{i=1}^{P} D_iX\bar{u}_ic_i - D_iX\bar{v}_id_i = y, 
\\ &\bar{u}_i \in Zer(F(\mathcal{S}_i, X^{T}D_i\nu^{*}, -1)), \bar{v}_i \in Zer(F(\mathcal{S}_i, -X^{T}D_i\nu^{*}, -1)) \Big\},
\end{align*}
where $\nu^{*}$ is the dual optimum that minimizes $L^{*}(\cdot, y)$ subject to the constraint
\begin{equation}
\label{eq:interp_gen}
\min_{u \in \mathcal{K}_i} \langle X^{T}D_i\nu, u \rangle + \lVert u \rVert_2 = 0, \quad \min_{u \in \mathcal{K}_i} \langle -X^{T}D_i\nu, u \rangle + \lVert u \rVert_2 = 0,  
\end{equation}
for all $i \in [P]$. We know that $L^{*}(\nu) = \langle \nu, y \rangle$, and \eqref{eq:interp_gen} can be rewritten to $|\nu^{T}D_iXu| \leq \lVert u \rVert_2$. \\
Also, $F(\mathcal{S}_i, X^{T}D_i\nu^{*}, -1) = \{0\}$ if there is no $u$ such that ${\nu^{*}}^{T}D_iXu = -1$, and is exactly that vector if exists. Note that as $\min_{u \in \mathcal{K}_i} \langle X^{T}D_i\nu, u \rangle + \lVert u \rVert_2 = 0$, we have a unique minimum for the optimal direction \cref{p4:OptimalDirections}. Same holds for $\bar{v}_i$.
\end{proof}

\begin{appxproposition}
\label{cp6:staircase_connectivity_interpolation}
(The staircase of connectivity for minimum-norm interpolation problem) Write the solution set of the optimization problem
\[
\min_{(w_j, \alpha_j)_{j=1}^{m}} \quad \frac{1}{2} \sum_{i=1}^{m} \lVert w_i \rVert_2^2 + |\alpha_i|^2, 
\]
subject to
\[
\sum_{i=1}^{m} (Xw_i)_{+}\alpha_i = y,
\]
as $\Theta^{*}(m)$. Suppose $y \neq 0$. As $m$ changes, we have that
\begin{enumerate}[(i)]
\item $m = m^{*}$, $\Theta^{*}(m)$ is a finite set. Hence, for any two optimal points $A \neq A' \in \Theta^{*}(m)$, there is no path from $A$ to $A'$ inside $\Theta^{*}(m)$.
\item $m \geq m^{*}+1$, there exists optimal points $A, A' \in \Theta^{*}(m)$ and a path in $\Theta^{*}(m)$ connecting them.
\item $m = M^{*}$, $\Theta^{*}(m)$ is not a connected set. Moreover, there exists $A \in \Theta^{*}(m)$ which is an isolated point, i.e. there is no path in $\Theta^{*}(m)$ that connects $A$ with $A' \neq A \in \Theta^{*}(m)$.
\item $m \geq M^{*}+1$, permutations of the solution are connected. Hence, for all $A \in \Theta^{*}(m)$, there exists $A' \neq A$ in $\Theta^{*}(m)$ and a path in $\Theta^{*}(m)$ that connects $A$ and $A'$.
\item $m \geq \min\{m^{*}+M^{*}, n+1\}$, the set $\Theta^{*}(m)$ is connected, i.e. for any two optimal points $A \neq A' \in \Theta^{*}(m)$, there exists a continuous path from $A$ to $A'$.
\end{enumerate}
\end{appxproposition}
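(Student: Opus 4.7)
The plan is to mirror the proof strategy of \cref{t2:staircasecon} almost verbatim, leveraging the optimal polytope characterization for interpolation provided in \cref{cp5:Interp_polytope}. The key observation is that the convex reformulation of the interpolation problem is structurally identical to \eqref{eq:convex_twolayer_opt}: an objective $\sum_{i=1}^{P}(\lVert u_i\rVert_2 + \lVert v_i\rVert_2)$ over $u_i, v_i \in \mathcal{K}_i$ subject to the linear constraint $\sum_{i=1}^{P} D_iX(u_i - v_i) = y$. Moreover, the ``optimal model fit'' is trivially unique (it equals $y$ by definition of interpolation), so the analogue of \cref{p3:uniquefit} is automatic and does not require a strictly convex loss.

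First, I would define the interpolation analogues of the key objects: the polytope $\mathcal{P}^{*}$ from \cref{cp5:Interp_polytope}, the irreducible subset $\mathcal{P}^{*}_{\mathrm{irr}}$ using linear independence of $\{D_iX\bar{u}_i\}_{u_i \neq 0} \cup \{D_iX\bar{v}_i\}_{v_i \neq 0}$, and the cardinality-constrained set $\mathcal{P}^{*}(m)$, with critical widths $m^{*}, M^{*}$ defined as the minimum and maximum cardinalities over $\mathcal{P}^{*}_{\mathrm{irr}}$. I would then transfer the combinatorial results from \cref{pf4} directly: \cref{p5:Finite_m*} (finiteness at $m^{*}$), \cref{p6:Disconnect_M*} (isolated point at $M^{*}$), \cref{p7:PruningLemma} (continuous pruning), \cref{p8:Connected_m*+M*} (connectedness via interpolation within the polytope), and \cref{t6:Connected_n+1} (connectedness via \cref{appxlemma:existance}). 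Each of these proofs depends only on the polytope structure and uniqueness of optimal directions, both of which hold here.

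Next, I would construct the solution maps $\Psi:\mathcal{P}^{*}(m) \to \Theta^{*}(m)$ and $\Phi:\Theta^{*}(m) \to \mathcal{P}^{*}(m)$ exactly as in \cref{def8:Defpsi} and \cref{def9:Defphi}. A minor check is needed to confirm they are well-defined: for $m \geq m^{*}$, the optimal values of the convex and nonconvex interpolation problems coincide, and the constraint $\sum_{i=1}^{m}(Xw_i)_{+}\alpha_i = y$ is preserved under $\Psi$ since it translates to $\sum_i D_iX(u_i - v_i) = y$. The merging lemma \cref{p9:Reduction_nonconvex}, the pseudo-inverse property \cref{p11:PseudoInverse}, continuity of $\Phi$ (\cref{cp3:continuous}), and one-sided continuity of $\Psi$ (\cref{cp4:psicontinuous}) all carry over unchanged, as they are purely structural statements about the solution mappings.

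The main obstacle, which I expect to be only a bookkeeping issue, lies in verifying that the dual characterization of \cref{cp5:Interp_polytope} yields the same ``fixed direction'' identity $(\nu^{*})^{T}(Xw_j^{\circ})_{+} = -\lVert w_j^{\circ}\rVert_2\,\mathrm{sign}(\alpha_j^{\circ})$ that powers the permutation argument of \cref{p13:PermuteConnected}. Since this identity follows from membership of $\Phi((w_j,\alpha_j))$ in $\mathcal{P}^{*}$ and the constraint $|\nu^{T}D_iXu| \leq \lVert u\rVert_2$ on the dual, the merging-and-swapping construction goes through without modification. Assembling the pieces via \cref{p14:Connected_M*+1} (for statements (iv) and (v)) and \cref{p15:Isolated_m} (for statement (iii)), while (i) and (ii) follow from the combinatorial finiteness at $m^{*}$ and the trivial slot-creation argument, then yields all five claims of \cref{cp6:staircase_connectivity_interpolation}.
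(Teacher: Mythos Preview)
Your proposal is correct and follows essentially the same approach as the paper's own proof, which simply observes that \cref{p5:Finite_m*}, \cref{p6:Disconnect_M*}, \cref{p7:PruningLemma}, \cref{p8:Connected_m*+M*}, \cref{t6:Connected_n+1}, \cref{p9:Reduction_nonconvex}, \cref{p13:PermuteConnected}, \cref{p14:Connected_M*+1}, and \cref{p15:Isolated_m} carry over to the interpolation setting because the polytope description from \cref{cp5:Interp_polytope} is structurally identical and the solution maps preserve both fit and regularization. If anything, your write-up is more explicit than the paper's: you single out the automatic uniqueness of the model fit (replacing strict convexity) and the dual identity $(\nu^{*})^{T}(Xw_j^{\circ})_{+} = -\lVert w_j^{\circ}\rVert_2\,\mathrm{sign}(\alpha_j^{\circ})$ needed in \cref{p13:PermuteConnected}, both of which the paper leaves implicit.
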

\begin{proof}
The proof follows from observing that \cref{p5:Finite_m*}, \cref{p6:Disconnect_M*}, \cref{p7:PruningLemma}, \cref{p8:Connected_m*+M*}, \cref{t6:Connected_n+1}, \cref{p9:Reduction_nonconvex}, \cref{p13:PermuteConnected}, \cref{p14:Connected_M*+1}, \cref{p15:Isolated_m} holds for interpolation problems too. We can apply the same proof strategy for \cref{p5:Finite_m*}, \cref{p6:Disconnect_M*}, \cref{p7:PruningLemma}, \cref{p8:Connected_m*+M*}, \cref{t6:Connected_n+1}, because the description of the optimal polytope is identical except for which directions the solutions are fixed at. The same solution map can be applied because it preserves both the fit and the regularization. The continuity is preserved, and we have \cref{p13:PermuteConnected}, \cref{p14:Connected_M*+1}, \cref{p15:Isolated_m}. The mapping in \cref{p9:Reduction_nonconvex} can also be applied here.
\end{proof}

Another implication of the theorem is that for free skip connections, the dual variable has to satisfy $X^{T}\nu = 0$. The existence of free skip connections constrain freedom on $\nu$, which brings qualitative difference to the uniqueness of the solution set.

\begin{appxproposition}
\label{cp7:Freeskip_interp_polytope}
The solution set of the optimization problem
\[
\min_{u_i, v_i \in \mathcal{K}_i} \quad \sum_{i=1}^{P} \lVert u_i \rVert_2 + \lVert v_i \rVert_2
\]
subject to
\[
Xu_0 + \sum_{i=1}^{P} D_iX(u_i - v_i) = y,
\]
is given as 
\[
\mathcal{P}^{*} := \left\{u_0 \oplus (c_i\bar{u}_i, d_i\bar{v}_i)_{i=1}^{P} \ | \ c_i, d_i \geq 0 \ \forall i \in [P],\ Xu_0 + \sum_{i=1}^{P} D_iX\bar{u}_i c_i - D_iX\bar{v}_i d_i = y \right\} \subseteq \mathbb{R}^{2dP},
\]
where $\bar{u}_i, \bar{v}_i$ are fixed directions found by solving the optimization problem
\[
\bar{u}_i = 
\argmin_{u \in \mathcal{S}_i} {\nu^{*}}^{T}D_iXu\ \ \mathrm{if} \ \  \min_{u \in \mathcal{S}_i} {\nu^{*}}^{T}D_iXu = -1,\ \  0 \ \   otherwise,
\]
\[
\bar{v}_i = 
\argmin_{v \in \mathcal{S}_i} -{\nu^{*}}^{T}D_iXv\ \ \mathrm{if} \ \  \min_{v \in \mathcal{S}_i} -{\nu^{*}}^{T}D_iXv = -1,\ \ 0 \ \   otherwise.
\]
where $\nu^{*}$ is the dual optimum that satisfies
\[
\nu^{*} = \argmin \langle \nu, y \rangle \ \ \mathrm{subject} \ \ \mathrm{to}\ \  |\nu^{T}D_iXu| \leq \lVert u \rVert_2 \ \ \forall u \in \mathcal{K}_i, i \in [P], \quad X^{T}\nu = 0.
\]
Here, $\mathcal{S}_i = \mathcal{K}_i \cap \{u\ | \ \lVert u \rVert_2 \leq 1\}$.
\end{appxproposition}
\begin{proof}
Note that if we apply \cref{t3:GenOpt} to the given problem, we have almost an identical characterization from \cref{cp5:Interp_polytope}, except for the free skip connection. For the skip connection $u_0 \in \mathbb{R}^{d}$, we know that $\min_{u_0 \in \mathbb{R}^{d}}\langle X^{T}\nu^{*}, u_0 \rangle = 0$ for all $u_0 \in \mathbb{R}^{d}$ because $u_0$ is unconstrained. This means $X^{T}\nu^{*} = 0$.
\end{proof}

Next we give applications of \cref{t3:GenOpt} to different architectures. We start by characterizing the optimal set of a vector-valued neural network with weight decay.

\begin{appxproposition}
\label{cp8:Vector_valued_cvx_optset}
The solution set of the convex reformulation of the vector-valued problem given as
\begin{equation}
\label{eq:vectorconvex}
\min_{V_i} \frac{1}{2} \lVert \sum_{i=1}^{P} D_iXV_i - Y \rVert_2^2 + \beta \sum_{i=1}^{P} \lVert V_i \rVert_{\mathcal{K}_i,*},
\end{equation}
where the norm $\lVert V \rVert_{\mathcal{K}_i,*}$ is defined as
$$
\min t \geq 0 \ \ such \ that \ \ V \in t\mathcal{K}_i,
$$
for $\mathcal{K}_i = \mathrm{conv}\{ug^{T}\ | (2D_i - I)Xu \geq 0, \lVert ug^{T} \rVert_{*} \leq 1\}$ defined in $\mathcal{V}_i = \mathrm{span}\{ug^{T}\ | (2D_i - I)Xu \geq 0, \ g \in \mathbb{R}^{c}\}$. The optimal solution set of \eqref{eq:vectorconvex} is given as
\[
\mathcal{P}^{*}_{\mathrm{vec}} = \Big\{(c_i\bar{V}_i)_{i=1}^{P}|\ c_i \geq 0, \sum_{i=1}^{P} c_iD_iX\bar{V}_i = Y^{*}, \bar{V}_i \in Zer(\mathrm{F}(\mathcal{K}_i, X^{T}D_iN^{*}, -\beta, \langle , \rangle_{M}))\Big\},
\]
where $N^{*} = Y^{*} - Y$ is the dual optimum that minimizes $\lVert N + Y \rVert_F^2$ subject to
\[
\langle N, D_iXA \rangle + \beta \lVert A \rVert_{\mathcal{K}_i, *} \geq 0 \quad \forall A \in \mathcal{V}_i, \quad i \in [P]. 
\]
\end{appxproposition}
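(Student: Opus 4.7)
The plan is to recognize \eqref{eq:vectorconvex} as a direct instance of the cone-constrained group LASSO \eqref{eq:general_cone_LASSO_again} and then invoke Theorem~\ref{t3:GenOpt} to read off the optimal set. The substantive work is setting up the correspondence between the matrix variable $V_i \in \mathbb{R}^{d \times c}$ and the generic vector $\theta_i$, and verifying that the data of the general theorem behaves correctly in the matrix setting.

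First, I would make the following identifications: set $Q = 0$ (there are no unregularized variables), view each $V_i$ as an element of $(\mathbb{R}^{d \times c}, \langle \cdot, \cdot \rangle_M)$, let $A_i$ be the linear map $V \mapsto D_i X V$, let $\mathcal{C}_i$ be the conic hull of $\mathcal{K}_i$, and let $\mathcal{R}_i(\cdot) = \lVert \cdot \rVert_{\mathcal{K}_i, *}$. I would then check two sanity items before invoking the theorem: (i) $\mathcal{K}_i$ is symmetric about the origin because $u g^T$ has $g$ of arbitrary sign, so the gauge $\lVert \cdot \rVert_{\mathcal{K}_i, *}$ is a genuine norm; and (ii) since $L(Z, Y) = \tfrac{1}{2}\lVert Z - Y \rVert_F^2$ is strictly convex, the averaging argument from Proposition~\ref{p3:uniquefit} carries over verbatim with $\lVert \cdot \rVert_F$ in place of the scalar norm, showing that the optimal model fit $Y^* = \sum_i D_i X V_i^*$ is unique across minimizers.

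Second, I would compute the Fenchel conjugate of the squared loss, obtaining $L^*(N) = \tfrac{1}{2}\lVert N \rVert_F^2 + \langle N, Y \rangle_M$ up to an additive constant, so that maximizing $-L^*$ over the dual-feasible set is equivalent to minimizing $\lVert N + Y \rVert_F^2$, as claimed. The general dual-feasibility condition $\min_{V \in \mathcal{C}_i}\langle X^T D_i N, V \rangle_M + \beta \lVert V \rVert_{\mathcal{K}_i, *} = 0$ supplied by Theorem~\ref{t3:GenOpt} rewrites equivalently as $\langle N, D_i X A \rangle_M + \beta \lVert A \rVert_{\mathcal{K}_i, *} \geq 0$ for all $A \in \mathbb{R}^{d \times c}$, which is precisely the inequality constraint stated in the proposition. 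KKT stationarity of the squared-error loss at the unique fit then forces $N^* = Y^* - Y$. Plugging all of this into \eqref{eq:optimalsetdescription_general} yields the stated $\mathcal{P}^*_{\mathrm{vec}}$, with the directions $\bar{V}_i$ cut out by the same ``optimal value equals $-\beta$ versus $0$'' case distinction used in Theorem~\ref{t1:optpolytope}.

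The main obstacle I anticipate is confirming that $\lVert \cdot \rVert_{\mathcal{K}_i, *}$ interacts with the matrix inner product $\langle \cdot, \cdot \rangle_M$ the way the proof of Theorem~\ref{t3:GenOpt} requires: in particular, that the gauge duality $\sup_{V \in \mathcal{K}_i} \langle W, V \rangle_M \leq \beta$ is equivalent to $\langle W, V \rangle_M \geq -\beta \lVert V \rVert_{\mathcal{K}_i, *}$ on the conic hull, and that the boundary case $\bar V_i \in \mathrm{F}(\mathcal{K}_i, X^T D_i N^*, -\beta, \langle \cdot, \cdot \rangle_M)$ is correctly distinguished from $\bar V_i = 0$. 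Once this compatibility is verified, the rest is bookkeeping and the characterization follows directly.
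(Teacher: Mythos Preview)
Your proposal is correct and takes essentially the same route as the paper: both identify \eqref{eq:vectorconvex} as an instance of the general cone-constrained group LASSO and read off the solution set from Theorem~\ref{t3:GenOpt}. The only presentational difference is that the paper, rather than working directly in $(\mathbb{R}^{d\times c},\langle\cdot,\cdot\rangle_M)$, explicitly flattens each $V_i$ to a vector in $\mathbb{R}^{dc}$ (and $Y$ to $\mathbb{R}^{nc}$) via maps $Fl_{dc},Fl_{nc}$, applies Theorem~\ref{t3:GenOpt} verbatim to the flattened problem, and then unflattens at the end---this sidesteps exactly the ``obstacle'' you anticipate about whether the gauge norm and matrix inner product interact correctly, since after flattening everything is literally the Euclidean setting of the theorem.
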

\begin{proof}
Let's define $A_i$ as
$$
A_i = \begin{bmatrix} D_iX & & \\ & \ddots & \\ & & D_iX \end{bmatrix},
$$
which is a block matrix in $A_i \in \mathbb{R}^{nc \times dc}$. Also, define the flattening operation $Fl_{dc}: \mathbb{R}^{d \times c} \rightarrow \mathbb{R}^{dc}$ and $Fl_{nc}: \mathbb{R}^{n \times c} \rightarrow \mathbb{R}^{nc}$. For optimization variables $\theta_i \in \mathbb{R}^{dc}$, we have the equivalent problem
$$
\min_{\theta_i} \frac{1}{2} \lVert \sum_{i=1}^{P} A_i\theta_i -  Fl_{nc}(Y) \rVert_2^2 + \beta \sum_{i=1}^{P} \lVert \theta_i \rVert_{Fl_{dc}(\mathcal{K}_i),*}.
$$
Here, we are merely flattening each $V_i$ s to make it into a vector-input optimization problem. When we apply \cref{t3:GenOpt} to the flattened problem, we have the optimal set
$$
\mathcal{P}^{*}_{\mathrm{flat}} = \Big\{(c_i\bar{\theta}_i)_{i=1}^{P}|\ c_i\geq 0, \sum_{i=1}^{P} c_iA_i\bar{\theta}_i = Fl_{nc}(Y^{*}), \bar{\theta}_i \in Zer(\mathrm{F}(S_i,A_i^{T}\nu^{*},-\beta, \langle, \rangle)) \Big\},
$$
where 
$$
S_i = \{u |\ \lVert u \rVert_{Fl_{dc}(\mathcal{K}_i), *} \leq 1\} = Fl_{dc}(\mathcal{K}_i),
$$
$\nu^{*}$ being the minimizer of $f^{*}(\nu)$ where $f = L_F(\ \cdot\ , Fl_{nc}(Y))$, subject to
$\langle A_i^{T}\nu^{*}, s \rangle + \beta \lVert s \rVert_{Fl_{dc}(\mathcal{K}_i), *} \geq 0$.
We know that $\nu^{*} = Fl_{nc}(Y^{*}) - Fl_{nc}(Y)$ for the optimal model fit $Y^{*}$. Write $N^{*} = Fl^{-1}_{nc}(\nu^{*})$.

Now we use $Fl_{dc}^{-1}, Fl_{nc}^{-1}$ to go back to the original solution space and recover $\mathcal{P}_{\mathrm{vec}}^{*}$. First, we know that $A_i\bar{\theta}_i = Fl_{nc}(D_iXFl_{dc}^{-1}(\bar{\theta}_i))$. Hence, the constraint $\sum_{i=1}^{P} c_iA_i\bar{\theta}_i = Fl_{nc}(Y^{*})$ is equivalent to
\begin{equation}
\label{eq:optreg}
\sum_{i=1}^{P} c_iD_iXFl_{dc}^{-1}(\bar{\theta}_i) = Y^{*}.
\end{equation}
Also, consider the set 
$$
\mathrm{F}(Fl_{dc}(\mathcal{K}_i), A_i^{T}\nu^{*}, -\beta, \langle,\rangle) = Fl_{dc}(\mathcal{K}_i) \cap \{u\ |\ \langle A_i^{T}\nu^{*}, u \rangle = -\beta \}.
$$
When we use block notation $(\nu^{*})^{T} = [(\nu_1^{*})^{T}, (\nu_2^{*})^{T}, \cdots (\nu_c^{*})^{T}]$, $u^{T} = [(u_1)^{T}, (u_2)^{T}, \cdots, (u_c)^{T}]$ for $\nu^{*} \in \mathbb{R}^{nc}$, $u \in \mathbb{R}^{dc}$, we can see that
$$
(\nu^{*})^{T} A_iu = \sum_{j=1}^{c} (\nu_j^{*})^{T}D_iXu_j = \langle Fl_{nc}^{-1}(\nu^{*}), D_iXFl_{dc}^{-1}(u) \rangle_{M},
$$
using notations for matrix inner product. Hence, we can see that
\begin{align*}
Fl_{dc}^{-1}(\mathrm{F}(Fl_{dc}(\mathcal{K}_i), A_i^{T}\nu^{*}, -\beta, \langle , \rangle_M)) &= \mathcal{K}_i \cap \{U\ | \ \langle Fl_{nc}^{-1}(\nu^{*}), D_iXU \rangle = -\beta\}\\
&= \mathrm{F}(\mathcal{K}_i,X^{T}D_iN^{*},-\beta, \langle , \rangle_M),
\end{align*}
and for $(\theta_i)_{i=1}^{P} \in \mathcal{P}^{*}_{\mathrm{flat}}$, $Fl_{dc}^{-1}(\theta_i)$ satisfies \eqref{eq:optreg} and we also have the fact that  $Fl_{dc}^{-1}(\bar{\theta}_i) \in $$ \mathrm{F}(\mathcal{K}_i,X^{T}D_iN^{*},-\beta, \langle , \rangle_M)$. Hence we arrive at the desired result.
\end{proof}

\begin{appxproposition}
\label{cp9:vecval_optimalset}
Assume $m \geq m^{*}$ so that the nonconvex problem in \eqref{eq:vectoropt} and its convex reformulation in \eqref{eq:vectorconvex} are equivalent. The solution set of the vector-valued problem
\[
\min_{\{w_i, z_i\}_{i=1}^{m}} \frac{1}{2} \lVert \sum_{i=1}^{m} (Xw_i)_{+}z_i^{T} - Y \rVert_2^2 + \frac{\beta}{2} \sum_{i=1}^{m} \lVert w_i \rVert_2^2 + \lVert z_i \rVert_2^2, 
\]
where $w_i \in \mathbb{R}^{d \times 1}, z_i \in \mathbb{R}^{c \times 1}$ is given as 
\begin{align*}
\mathcal{S} = \Big\{(w_i, z_i)_{i=1}^{m} \ | \ \phi((w_i, z_i)_{i=1}^{m}) \in \mathcal{P}^{*}_{\mathrm{vec}},\mathcal{R}((w_i, z_i)_{i=1}^{m}) = \lVert &\phi((w_i, z_i)_{i=1}^{m}) \rVert_{\mathcal{K}_i, *}, 
\\
&\lVert w_i \rVert_2 = \lVert z_i \rVert_2, \ \ \forall i \in [m] \Big\},
\end{align*}
where 
\[
\phi((w_i, z_i)_{i=1}^{m}) = (V_i)_{i=1}^{P} := V_p = 
\begin{cases}
0 \ \  if \ \  \nexists \ \  w_i \ \  s.t. \ \  D_p = \mathrm{diag}(1(Xw_i \geq 0))\\
\sum_{j=1}^{t_p} w_{a_j}z_{a_j}^{T} \ \  if \ \  D_p = \mathrm{diag}(1(Xw_{a_j} \geq 0)) \ \  for \ \  j \in [t_p],
\end{cases}
\]
\[
\mathcal{R}((w_i, z_i)_{i=1}^{m}) = (R_i)_{i=1}^{P} := R_p = 
\begin{cases}
0 \ \  if \ \  \nexists \ \  w_i \ \  s.t. \ \  D_p = \mathrm{diag}(1(Xw_i \geq 0))\\
\sum_{j=1}^{t_p} \lVert w_{a_j} \rVert_2 \lVert z_{a_j} \rVert_2 \ \  if \ \  D_p = \mathrm{diag}(1(Xw_{a_j} \geq 0)) \ \  for \ \  j \in [t_p],
\end{cases}
\]
and $\mathcal{P}^{*}_{\mathrm{vec}}$ is defined in \cref{cp8:Vector_valued_cvx_optset}.
\end{appxproposition}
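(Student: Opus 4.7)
The plan is to prove $\Theta^*(m) = \mathcal{S}$ by double inclusion, leveraging the equivalence of nonconvex and convex objectives when $m \geq m^*$, together with the characterization of $\mathcal{P}^{*}_{\mathrm{vec}}$ from \cref{cp8:Vector_valued_cvx_optset}. The central technical ingredient is the atomic-decomposition identity
\[
\lVert V \rVert_{\mathcal{K}_i, *} = \inf\Big\{\sum_{j} \lVert u_j \rVert_2 \lVert g_j \rVert_2 \;:\; V = \sum_{j} u_j g_j^T,\ (2D_i - I)Xu_j \geq 0\Big\},
\]
which follows by unfolding $\mathcal{K}_i = \mathrm{conv}\{ug^T : (2D_i - I)Xu \geq 0,\ \lVert ug^T \rVert_* \leq 1\}$ and recalling that $\lVert ug^T \rVert_* = \lVert u\rVert_2 \lVert g\rVert_2$ for rank-one matrices. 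I will also use the AM-GM inequality $\tfrac{1}{2}(\lVert w\rVert_2^2 + \lVert z\rVert_2^2) \geq \lVert w\rVert_2 \lVert z\rVert_2$, with equality iff $\lVert w\rVert_2 = \lVert z\rVert_2$, and the scaling invariance $(w_i, z_i) \mapsto (\alpha w_i, z_i/\alpha)$ with $\alpha > 0$, which preserves $(Xw_i)_{+} z_i^T$.

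For the inclusion $\Theta^*(m) \subseteq \mathcal{S}$: take an optimizer $(w_i, z_i)_{i=1}^m$. If some pair has $\lVert w_i \rVert_2 \neq \lVert z_i \rVert_2$ and both are nonzero, rescaling strictly decreases the regularization without changing the fit, contradicting optimality; pairs with $w_i = 0$ or $z_i = 0$ contribute nothing and can be equalized trivially. Thus condition~(3) holds. Setting $(V_p)_{p=1}^P = \phi((w_i, z_i)_{i=1}^m)$, one checks $\sum_{i=1}^m (Xw_i)_{+} z_i^T = \sum_{p=1}^P D_p X V_p$ by grouping neurons by arrangement pattern. The nonconvex objective then equals $\tfrac{1}{2}\lVert \sum_p D_p X V_p - Y\rVert_F^2 + \beta \sum_p R_p$, while the atomic-decomposition identity yields $R_p \geq \lVert V_p \rVert_{\mathcal{K}_p, *}$. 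Hence the convex objective at $(V_p)$ is no larger than the nonconvex optimum; since the two minima coincide whenever $m \geq m^*$, equality propagates, forcing $R_p = \lVert V_p \rVert_{\mathcal{K}_p, *}$ and $(V_p) \in \mathcal{P}^{*}_{\mathrm{vec}}$, i.e., conditions~(1) and~(2).

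For the reverse inclusion $\mathcal{S} \subseteq \Theta^*(m)$: given $(w_i, z_i)$ satisfying (1)--(3), substituting~(3) into the nonconvex objective yields $\tfrac{1}{2}\lVert \sum_p D_p X V_p - Y\rVert_F^2 + \beta \sum_p R_p$, and then~(2) identifies this with the convex objective at $\phi((w_i, z_i)) \in \mathcal{P}^{*}_{\mathrm{vec}}$, which equals the shared minimum, proving optimality.

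The main obstacle is the atomic-decomposition identity for $\lVert \cdot \rVert_{\mathcal{K}_i, *}$: one direction (upper bound on the gauge via a decomposition) is immediate from the definition, while the other direction requires a Carath\'eodory-style argument to pass from a convex combination $V/t = \sum_j \lambda_j u_j g_j^T$ with $\lambda_j \geq 0$, $\sum_j \lambda_j = 1$, $\lVert u_j \rVert_2 \lVert g_j \rVert_2 \leq 1$, to a decomposition $V = \sum_j (t\lambda_j u_j) g_j^T$ of atomic weight at most $t$ compatible with $(2D_i - I)Xu_j \geq 0$. A secondary technical point is bookkeeping in the rescaling step, namely that degenerate neurons ($w_i = 0$ or $z_i = 0$) can be set to zero without cost and absorbed into the null slots $V_p = 0$, so that condition~(3) can be attained simultaneously at all indices while preserving both the fit and optimality.
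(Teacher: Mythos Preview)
Your proposal is correct and follows the same double-inclusion strategy as the paper, which compares the nonconvex and convex objectives via the chain $\lVert V_p\rVert_{\mathcal{K}_p,*} \leq R_p \leq \tfrac{1}{2}\sum_j(\lVert w_{a_j}\rVert_2^2+\lVert z_{a_j}\rVert_2^2)$ and forces equality throughout at optimality (the paper derives condition~(3) implicitly from the AM--GM equality rather than via your explicit rescaling, but the content is identical). Note that only the easy direction of your atomic-decomposition identity, namely $\lVert V_p\rVert_{\mathcal{K}_p,*} \leq R_p$, is actually required for either inclusion, so the Carath\'eodory-style argument you flag as the main obstacle can be dropped.
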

\begin{proof}    

Let's note the solution set of \eqref{eq:vectoropt} as $\Theta^{*}$. We will prove that $\Theta^{*} = \mathcal{S}$. First, find a point $(w_i^{*}, z_i^{*})_{i=1}^{m}$ in $\Theta^{*}$. When $\phi((w_i^{*}, z_i^{*})_{i=1}^{m}) = (V_i^{*})_{i=1}^{P}$, we know that 
\[
\sum_{i=1}^{m} (Xw_i^{*})_{+}(z_i^{*})^{T} = \sum_{i=1}^{P} D_iXV_i^{*},
\]
hence the $l_2$ error is the same for both parameters. Also, we have that $\sum_{i=1}^{P} \lVert V_i^{*} \rVert_{\mathcal{K}_i, *} \leq \sum_{i=1}^{m} \lVert w_i^{*} \rVert_2 \lVert z_i^{*} \rVert_2 = \frac{1}{2} \sum_{i=1}^{m} \lVert w_i^{*} \rVert_2^2 + \lVert z_i^{*} \rVert_2^2$, Thus, when we note $L_{\mathrm{noncvx}}$ as the loss function of \eqref{eq:vectoropt} and note $L_{\mathrm{cvx}}$ as the loss function of \eqref{eq:vectorconvex}, we have that
\begin{equation}
\label{eq:weakdual?}
L_{\mathrm{noncvx}}((w_i^{*}, z_i^{*})_{i=1}^{m}) \geq L_{\mathrm{cvx}}(\phi((w_i^{*}, z_i^{*})_{i=1}^{m})),
\end{equation}
holds in general. As the minimal value of $L_{\mathrm{noncvx}}$ and $L_{\mathrm{cvx}}$ is the same, we have that $\phi((w_i^{*}, z_i^{*})_{i=1}^{m}) \in \mathcal{P}^{*}_{\mathrm{vec}}$. Also, the inequality in \eqref{eq:weakdual?} is actually an equality, and we have $\mathcal{R}((w_i, z_i)_{i=1}^{m}) = (\lVert V_i \rVert_{\mathcal{K}_i, *})_{i=1}^{P}$. 

Now we take a point $(w_i, z_i)_{i=1}^{m}$ in $\mathcal{S}$. We know that $L_{\mathrm{cvx}}(\phi((w_i, z_i)_{i=1}^{m}))$ is the optimal value. Also, we know that $L_{\mathrm{noncvx}}((w_i, z_i)_{i=1}^{m}) = L_{\mathrm{cvx}}(\phi((w_i, z_i)_{i=1}^{m}))$ because $\mathcal{R}((w_i, z_i)_{i=1}^{m}) = (\lVert \phi((w_i, z_i)_{i=1}^{m}) \rVert_{\mathcal{K}_i, *})_{i=1}^{P}$ and $\lVert w_i \rVert_2 = \lVert z_i \rVert_2 \forall i \in [m]$. At last, the fact that as $m \geq m^{*}$ and the two optimal values are the same implies that $(w_i, z_i)_{i=1}^{m} \in \Theta^{*}$.
\end{proof}

\begin{appxtheorem}
\label{appx_t7}
Consider a $L$ - layer neural network
\[
f_{\theta}(X) = ((((XW_1)_{+}W_2)_{+} \cdots)W_{L-1})_{+}W_L
\]
where $W_i \in \mathbb{R}^{d_{i-1} \times d_i}$, $d_0 = d$ and $\theta = (W_i)_{i=1}^{L}$. Consider the training problem
\[
\min_{\theta} L(f_{\theta}(X), y) + \frac{\beta}{2} \sum_{i=1}^{L} \lVert W_i \rVert_F^2,
\]
and denote its optimal set as $\Theta^{*}$. We can characterize a subset of $\Theta^{*},$ namely the set
\begin{align*}
\Theta_{k-1, k}^{*}(Y', &W_1', W_2', \cdots,W_{k-2}', W_{k+1}', \cdots W_L') \\
&:= \Big\{\theta = (W_i')_{i=1}^{k-2} \oplus (W_{k-1}, W_k) \oplus (W_i')_{i=k+1}^{L} \ | \ \theta \in \Theta^{*}, \ (\Tilde{X}W_{k-1})_{+}W_{k} = Y' \Big\}.
\end{align*}
Here, $\Tilde{X} = ((((XW_1')_{+}W_2')_{+})\cdots W_{k-2}')_{+}$.

The expression of $\Theta_{k-1, k}^{*}(Y', W_1', W_2', \cdots,W_{k-2}', W_{k+1}', \cdots W_L')$ is given as
\begin{align*}
\Big\{\theta = &(W_i')_{i=1}^{k-2} \oplus (W_{k-1}, W_k) \oplus (W_i')_{i=k+1}^{L} \ | \ \theta \in \Theta^{*}, \ \phi_{d_{k}}(W_{k-1}, W_k) \in \mathcal{P}^{*}_{\mathrm{vec, intp}},
\\ &\mathcal{R}_{d_k} (W_{k-1}, W_k) = \lVert \phi_{d_k} (W_{k-1}, W_k) \rVert_{\mathcal{K}_i, *}, \lVert (W_{k-1})_{\cdot, i}\rVert_2 = \lVert (W_k)_{i, \cdot} \rVert_2 \ \forall i \in [d_k] \Big\},
\end{align*}
where $\phi_m(A, B) = \phi((A_{\cdot,i}, B_{i,\cdot})_{i=1}^{m}), \mathcal{R}_m(A, B) = \mathcal{R}((A_{\cdot,i}, B_{i,\cdot})_{i=1}^{m})$ for $\phi$ defined in \cref{cp9:vecval_optimalset}, and $\mathcal{P}^{*}_{\mathrm{vec, intp}}$ is defined as
\[
\mathcal{P}^{*}_{\mathrm{vec, intp}} = \Big\{(c_i\bar{V}_i)_{i=1}^{P}|\ c_i \geq 0, \sum_{i=1}^{P} c_iD_iX\bar{V}_i = Y', \bar{V}_i \in Zer(\mathrm{F}(\mathcal{K}_i, X^{T}D_iN^{*}, -1, \langle , \rangle_{M}))\Big\},
\]
for the dual optimum $N^{*} \in \mathbb{R}^{n \times c}$ that minimizes $\langle N, Y \rangle_M$ subject to
\[
\langle N, D_iXA \rangle + \beta \lVert A \rVert_{\mathcal{K}_i, *} \geq 0 \quad \forall A \in \mathbb{R}^{d \times c}, \quad i \in [P]. 
\]
Here $\mathcal{K}_i = \mathrm{conv}\{ug^{T} \ | \ (2D_i - I)Xu \geq 0,\lVert ug^{T}\rVert_{*} \leq 1\}$, where $D_i$ denotes all possible arrangements $\mathrm{diag}(1(Xh \geq 0))$. 
\end{appxtheorem}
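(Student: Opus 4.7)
}
The plan is to view the slice $\Theta^{*}_{k-1,k}(Y', W_1', \ldots, W_{k-2}', W_{k+1}', \ldots, W_L')$ as the solution set of a \emph{vector-valued minimum-norm interpolation} problem whose input is $\tilde{X}=((((XW_1')_{+}W_2')_{+})\cdots W_{k-2}')_{+}$ and whose target is $Y'$, and then to invoke the characterization obtained in \cref{cp9:vecval_optimalset} (adapted to the interpolation setting via \cref{cp5:Interp_polytope}). First I would reduce the global optimality condition on $\theta$ to a local one on the pair $(W_{k-1}, W_k)$: since $W_1',\ldots,W_{k-2}',W_{k+1}',\ldots,W_L'$ are fixed and the constraint $(\tilde{X}W_{k-1})_{+}W_{k}=Y'$ pins down everything that the pair contributes to the network output, the objective reduces to $L(f_\theta(X),y)+\frac{\beta}{2}\sum_{i\neq k-1,k}\lVert W_i'\rVert_F^2$ plus $\frac{\beta}{2}(\lVert W_{k-1}\rVert_F^2+\lVert W_k\rVert_F^2)$, where the first part is a constant. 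Hence $\theta\in\Theta^{*}_{k-1,k}$ if and only if $(W_{k-1},W_k)$ minimizes
\[
\tfrac{1}{2}\bigl(\lVert W_{k-1}\rVert_F^2+\lVert W_k\rVert_F^2\bigr)\quad\text{subject to}\quad (\tilde{X}W_{k-1})_{+}W_k=Y',
\]
provided the resulting $\theta$ actually lies in $\Theta^{*}$; this latter feasibility is precisely what being in the slice encodes.

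Next I would apply the convex reformulation of the vector-valued neural network from \cite{sahiner2020vector} to the above min-norm interpolation problem, with data $\tilde{X}$ and target $Y'$. This gives a cone-constrained group LASSO of the form analyzed in \eqref{eq:general_cone_LASSO_again}, where the loss $L$ is replaced by the indicator of $\{Z=Y'\}$. Plugging into \cref{t3:GenOpt}, the dual variable becomes unconstrained by any data-fit penalty — its Fenchel conjugate contribution becomes $\langle N,Y'\rangle_M$ — and we recover exactly the set $\mathcal{P}^{*}_{\mathrm{vec,intp}}$ stated in the theorem, with the dual $N^{*}$ minimizing $\langle N,Y'\rangle_M$ subject to the same cone constraints as in \cref{cp8:Vector_valued_cvx_optset}.

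The final step is the standard passage from the convex solution set back to nonconvex parameters via the maps $\phi_{d_k}$ and $\mathcal{R}_{d_k}$ that collect columns of $W_{k-1}$ and rows of $W_k$ into the $\mathcal{K}_i$-factorized variables. Following \cref{cp9:vecval_optimalset} verbatim (with $X\mapsto\tilde{X}$, $Y\mapsto Y'$, $m\mapsto d_k$, $c\mapsto d_k$, and the interpolation replacement of the data-fit term) shows that $(W_{k-1},W_k)$ is optimal for the local min-norm problem if and only if $\phi_{d_k}(W_{k-1},W_k)\in\mathcal{P}^{*}_{\mathrm{vec,intp}}$, the equalities $\lVert (W_{k-1})_{\cdot i}\rVert_2=\lVert (W_k)_{i\cdot}\rVert_2$ hold for every $i\in[d_k]$, and $\mathcal{R}_{d_k}(W_{k-1},W_k)$ coincides with the nuclear-type norms $\lVert\phi_{d_k}(W_{k-1},W_k)\rVert_{\mathcal{K}_i,*}$ coordinate-wise. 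Combining this with the reduction in the first paragraph yields both inclusions and hence the claimed exact characterization of the slice.

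The main obstacle I anticipate is the ``hidden'' assumption of sufficient width at layer $k$: \cref{cp9:vecval_optimalset} requires $d_k\geq m^{*}$ so that the vector-valued nonconvex problem is equivalent to its convex reformulation, and one must verify that this critical width — computed for the \emph{interpolation} variant with data $\tilde{X}$ and target $Y'$ — is compatible with the chosen $d_k$; otherwise only one inclusion survives. A secondary technical point is showing that the minimum-norm interpolation problem is feasible (so that $\Theta^{*}_{k-1,k}$ is nonempty and the dual program is bounded): this follows as soon as there exists at least one $\theta\in\Theta^{*}$ realizing the prescribed intermediate fit $Y'$, which is exactly the defining assumption of the slice. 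Everything else — strong duality, the polytope form, and the solution map $\phi_{d_k}$ — transfers from the earlier results without modification.
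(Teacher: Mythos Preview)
Your proposal is correct and follows essentially the same route as the paper: the paper's proof is a two-line sketch that says to apply \cref{t3:GenOpt} to the vector-valued interpolation problem $\min \sum_{i=1}^{d_k}\lVert u_i\rVert_2^2+\lVert v_i\rVert_2^2$ subject to $\sum_{i=1}^{d_k}(\tilde X u_i)_+v_i^T=Y'$, and then to invoke \cref{cp9:vecval_optimalset}. Your write-up simply expands the reduction step (why fixing the other layers and the intermediate fit $Y'$ turns global optimality on the slice into the local min-norm interpolation condition) and flags the implicit width hypothesis $d_k\geq m^{*}$, which the paper also leaves tacit.
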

\begin{proof}
The result is an application of \cref{t3:GenOpt} to the vector-valued interpolation problem
\[
\sum_{i=1}^{d_k} \lVert u_i \rVert_2^2 + \lVert v_i \rVert_2^2,
\]
subject to
\[
\sum_{i=1}^{d_k} (Xu_i)_{+}v_i^{T} = Y',
\]
where $u_i \in \mathbb{R}^{d_{k-1} \times 1}, v_i \in \mathbb{R}^{d_{k+1} \times 1}$, and then applying \cref{cp9:vecval_optimalset}.
\end{proof}

The characterization enables us to extend the connectivity result to vector-valued networks.

\begin{appxtheorem}
\label{t4:VecConn} Consider the optimization problem 
\begin{equation}
\label{eq:vecopt_again_again}
    \min_{\{w_i, z_i\}_{i=1}^{m}} \frac{1}{2} \lVert \sum_{i=1}^{m} (Xw_i)_{+}z_i^{T} - Y \rVert_2^2 + \frac{\beta}{2} \sum_{i=1}^{m} \lVert w_i \rVert_2^2 + \lVert z_i \rVert_2^2,
\end{equation}
where $w_i \in \mathbb{R}^{d}, z_i \in \mathbb{R}^{c}$ for $i \in [m]$, and $Y \in \mathbb{R}^{n \times c}$. If $m \geq nc+1$, the solution set in parameter space $\Theta^{*} \subseteq \mathbb{R}^{m(d+c)}$ is connected.
\end{appxtheorem}
\begin{proof}
Let's take two solutions $(w_i, z_i)_{i=1}^{m}, (w_i', z_i')_{i=1}^{m} \in \Theta^{*}$. We write $\bar{w}$ as the direction of $w$, i.e. $w / \lVert w \rVert_2$ for $w \neq 0$. \\
The first claim we prove is that for given $\{(X\bar{w}_{a_i})_{+}\bar{z}_{a_i}^{T}\}_{i=1}^{m_1}$ and $\{(X\bar{w'}_{b_i})_{+}\bar{z'}_{b_i}^{T}\}_{i=1}^{m_2}$, consider the conic combination that satisfies
\[
\sum_{i=1}^{m_1} c_i(X\bar{w}_{a_i})_{+}\bar{z}_{a_i}^{T} + \sum_{i=1}^{m_2} d_i(X\bar{w'}_{b_i})_{+}\bar{z'}_{b_i}^{T} = Y^{*},
\]
for the optimal model fit $Y^{*}$. Then $(\sqrt{c_i}\bar{w}_{a_i}, \sqrt{c_i}\bar{z}_{a_i})_{i=1}^{m_1} \oplus (\sqrt{d_i}\bar{w'}_{b_i}, \sqrt{d_i}\bar{z'}_{b_i})_{i=1}^{m_2} \oplus (0,0)^{m-m_1-m_2}$ is an optimal solution of \eqref{eq:vecopt_again_again} when $m_1 + m_2 \leq m$, given that $w_{a_i}, w'_{b_i} \neq 0$. 

To see this, we first see that for the dual variable $N^{*}$, $\langle N^{*}, (X\bar{w}_i)_{+}\bar{z}_i^{T} \rangle = -\beta$ for all $w_i \neq 0$. Suppose $D_p = \mathrm{diag}(1(Xw_{a_i} \geq 0))$ for $i \in [t_p]$, the same notation as in the statement of \cref{cp9:vecval_optimalset}, and without loss of generality assume $a_1 = i$. As $(w_i, z_i)_{i=1}^{m} \in \mathcal{S}$, we know $\mathcal{R}((w_i, z_i)_{i=1}^{m}) = (\lVert V_i \rVert_{\mathcal{K}_i, *})_{i=1}^{P}$. Hence, when we write $V_p = c_p\bar{V}_p$ for some $\bar{V}_p \in F(\mathcal{K}_p, X^{T}D_pN^{*}, -\beta, \langle , \rangle_M)$, we first know that $V_p = \sum_{j=1}^{t_p} \lVert w_{a_j} \rVert_2 \lVert z_{a_j} \rVert_2 \bar{w}_{a_j} \bar{z}_{a_j}^{T}$. We can find such $V_p$ because if $V_p = 0$, we could set all $w_{a_i} = 0$ and it will strictly decrease the objective. Note that $\lVert \bar{V}_p \rVert_{\mathcal{K}_p, *} = 1$, yielding $c_p = \lVert V_p \rVert_{\mathcal{K}_p, *} = \sum_{j=1}^{t_p} \lVert w_{a_j} \rVert_2 \lVert z_{a_j} \rVert_2$, and $\bar{V}_p$ is a convex combination of $\bar{w}_{a_j}\bar{z}_{a_j}^{T}$. Now, let 
\[
\bar{V}_p = \sum_{j=1}^{t_p} \lambda_j \bar{w}_{a_j}\bar{z}_{a_j}^{T},
\]
where $\lambda_j$ s sum up to 1. We know that $\langle N^{*}, D_pX\bar{V}_p \rangle = -\beta$ and $N^{*}$ satisfy 
\[
\min_{A \in \mathcal{K}_p} \langle N^{*}, D_pXA \rangle \geq -\beta.
\]
Hence, for all $\bar{w}_{a_j}\bar{z}_{a_j}^{T}$, we have that
\[
\langle N^{*}, D_pX\bar{w}_{a_j}\bar{z}_{a_j}^{T} \rangle = -\beta,
\]
for $j \in [t_p]$. This implies for all $i \in [m]$, we have that when $w_i \neq 0$,
\[
\langle N^{*}, (X\bar{w}_i)_{+}\bar{z}_i^{T} \rangle = -\beta,
\]
and same for $w_i' \neq 0$.
Now we are ready to prove the claim. We first know that the regression error is the same, as we have the same model fit $Y^{*}$. The regularization error is given as
\[
\beta \Big(\sum_{i=1}^{m_1} c_i + \sum_{i=1}^{m_2} d_i\Big) = -\langle N^{*}, Y^{*} \rangle.
\]
Hence, the cost of the problem is the same for any choice of the conic combination, and $(\sqrt{c_i}\bar{w}_{a_i}, \sqrt{c_i}\bar{z}_{a_i})_{i=1}^{m_1} \oplus (\sqrt{d_i}\bar{w'}_{b_i}, \sqrt{d_i}\bar{z'}_{b_i})_{i=1}^{m_2}$ is optimal when $m_1 + m_2 \leq m$.

At last, suppose $m \geq nc + 1$. Note that the vectors $\{(Xw_i)_{+}z_i^{T}\}_{i=1}^{m}$ and $\{(Xw_i')_{+}z_i'^{T}\}_{i=1}^{m} $are matrices in $nc$ - dimensional subspace. As any conic combination that sums up to $Y^{*}$ makes a solution, we can first prune both solutions to make them linearly independent, and then connect the two using the same idea introduced in \cref{t6:Connected_n+1}. 
\end{proof}

\begin{appxcorollary}
(\cref{c3:valley_landscape_vector} of the paper) Consider the optimization problem in \eqref{eq:vectoropt}. Suppose $m \geq nc + 1$ and denote the optimal set of \eqref{eq:vectoropt} as $\Theta^{*}(m)$. For any $\theta := (w_i, z_i)_{i=1}^{m} \in \mathbb{R}^{(d+c)m}$, there exists a continuous path from $\theta$ to any point $\theta^{*} \in \Theta^{*}(m)$ with nonincreasing loss.
\end{appxcorollary}
\begin{proof}
The proof is identical to that of \cref{c1:valley_landscape}. From \cite{haeffele2017global}, we know that when $m \geq nc+1$, the vector-valued training problem in \eqref{eq:vectoropt} has no strict local minimum, i.e. all local minima are global. Now from any $\theta$, move to a local minimum using a path with nonincreasing loss, then the local minimum is global. As $\Theta^{*}(m)$ is connected, we know that we can arrive at any global minimum using a path with nonincreasing loss.
\end{proof}

Finally, we extend our theory to parallel neural networks with depth 3. We have an optimal polytope characterization that states the first-layer weights have a finite set of fixed possible directions.

\begin{appxtheorem}
(\cref{t5:DeepOptPol} of the paper) Consider the training problem
\[
\min_{m, \{W_{1i}, w_{2i}, \alpha_i\}_{i=1}^{m}} \frac{1}{3} \left(\sum_{i=1}^{m} \lVert W_{1i} \rVert_F^3 + \lVert w_{2i} \rVert_2^3 + |\alpha_i|^3\right)
\]
subject to
\[
\sum_{i=1}^{m} ((XW_{1i})_{+}w_{2i})_{+}\alpha_i = y.
\]
Here, $W_{1i} \in \mathbb{R}^{d \times m_1}, w_{2i} \in \mathbb{R}^{m_1}$ and $\alpha_i \in \mathbb{R}$ for $i \in [m]$. Then, there are only finitely many possible values of the direction of the columns of $W_{1i}^{*}$. Moreover, when $y \neq 0$, the directions are determined by solving the dual problem
\[
\max_{\lVert W_1 \rVert_F \leq 1, \lVert w_2 \rVert_2 \leq 1} |(\nu^{*})^{T}((XW_1)_{+}w_2)_{+}|
\]
\end{appxtheorem}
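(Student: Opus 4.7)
The goal is to recast the parallel three-layer problem as an instance of the cone-constrained group LASSO in \cref{eq:general_cone_LASSO_again}, and then apply \cref{t3:GenOpt} to read off the fixed directions. The first step is to use scale invariance of $((XW_{1i})_+ w_{2i})_+ \alpha_i$ under $(W_{1i}, w_{2i}, \alpha_i) \mapsto (tW_{1i}, sw_{2i}, \alpha_i/(ts))$ together with the AM-GM inequality $\tfrac{1}{3}(a^3 + b^3 + c^3) \geq abc$ (equality iff $a=b=c$) to rewrite the regularizer as $\sum_i \lVert W_{1i}\rVert_F \lVert w_{2i}\rVert_2 |\alpha_i|$. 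After writing $\alpha_i = c_i - d_i$ with $c_i, d_i \geq 0$ and normalizing $\lVert W_{1i}\rVert_F = \lVert w_{2i}\rVert_2 = 1$, the problem becomes an equivalent minimum-norm interpolation over the convex hull of the lifted atoms $\pm((XW_{1i})_+ w_{2i})_+$.

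Second, I would enumerate the joint activation patterns. For each first-layer pattern $D^{(1)} = \mathrm{diag}(1(XH \geq 0))$ with $H \in \mathbb{R}^{d \times m_1}$ and each subsequent pattern $D^{(2)} = \mathrm{diag}(1(D^{(1)} X H w \geq 0))$, there is a matrix cone $\mathcal{K}_{D^{(1)}, D^{(2)}}$ of compatible $(W_1, w_2)$, and the model fit becomes linear in this cell. There are finitely many such cells, since each is determined by sign patterns of finitely many half-spaces. Taking $A_i$ and $\mathcal{R}_i$ to encode $D^{(2)} D^{(1)} X W_1 w_2$ and the product norm $\lVert W_1 \rVert_F \lVert w_2 \rVert_2$ respectively, the reformulated problem sits inside the template of \cref{eq:general_cone_LASSO_again}. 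This reduction is essentially the convex reformulation of \cite{ergen2021global}, which I would cite to avoid redoing the bookkeeping.

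Third, I would invoke \cref{t3:GenOpt}. The dual constraint $\min_{u \in \mathcal{C}_i} \langle A_i^T \nu, u \rangle + \beta \mathcal{R}_i(u) = 0$, when translated back to the original $(W_1, w_2)$ variables and aggregated over all cells, is exactly
\[
|\nu^T ((XW_1)_+ w_2)_+ | \leq \lVert W_1 \rVert_F \lVert w_2 \rVert_2 \quad \forall W_1, w_2,
\]
which is equivalent to the displayed dual constraint in the theorem. By \cref{t3:GenOpt}, any optimal $\bar{\theta}_i \neq 0$ must lie in the slice $\{u \in \mathcal{S}_i : \langle A_i^T \nu^*, u \rangle = -\beta\}$; unpacking the slice says that each nonzero column of $W_{1i}^*$ (paired with its corresponding $w_{2i}^*$) achieves the maximum in $\max_{\lVert W_1\rVert_F \leq 1, \lVert w_2\rVert_2 \leq 1} |(\nu^*)^T ((XW_1)_+ w_2)_+|$. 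Since there are finitely many cells and within each cell the maximizing slice is cut out by a linear equation on a compact convex set, projecting to the column directions yields finitely many possibilities.

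The main obstacle will be Step 2: verifying that the product norm $\lVert W_1\rVert_F \lVert w_2\rVert_2$ and the bilinear map $(W_1, w_2) \mapsto ((XW_1)_+ w_2)_+$ really do fit the affine/norm structure required by \cref{t3:GenOpt}, which is stated for linear maps $A_i \theta_i$ and a single norm $\mathcal{R}_i(\theta_i)$. The trick is to treat $(W_1, w_2)$ jointly as a single variable living in the conic hull of rank-one atoms $W_1 w_2^T$ (or an analogous tensor), endowed with the nuclear-style norm induced by the product of Frobenius and $\ell_2$ norms; this mirrors the treatment of the vector-valued case in \cref{cp8:Vector_valued_cvx_optset}. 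Once this bookkeeping is in place, finiteness and the explicit dual characterization follow directly from the general theorem.
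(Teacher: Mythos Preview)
Your high-level plan (AM--GM rescaling, enumerate joint activation cells, then read off directions from the dual) is the same skeleton the paper uses. Two concrete issues separate your proposal from a complete argument.

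First, the paper does not route through \cref{t3:GenOpt}. It works directly with the gauge formulation $\min\{t : y \in t\,\mathrm{Conv}(\mathcal{Q}_X \cup -\mathcal{Q}_X)\}$ and obtains $\nu^*$ as the normal of a supporting hyperplane at $y/t^*$; this avoids having to shoehorn the bilinear structure into the linear template of \cref{eq:general_cone_LASSO_again}.

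Second, and this is the real gap, your proposed lifting is not the right object. The expression ``rank-one atoms $W_1 w_2^T$'' is dimensionally malformed ($W_1$ is $d\times m_1$ and $w_2^T$ is $1\times m_1$), and the analogy with the nuclear norm in \cref{cp8:Vector_valued_cvx_optset} does not carry over: there the model $(Xw)_+ z^T$ is genuinely linear in the rank-one matrix $wz^T$, whereas here the inner two layers are a two-layer scalar network in disguise. The paper's key move is the \emph{column-wise} change of variables $v_k = (W_1)_{\cdot,k}\,(w_2)_k \in \mathbb{R}^d$, together with the Cauchy--Schwarz identity
\[
\bigl\{\,((W_1)_{\cdot,k}(w_2)_k)_{k=1}^{m_1} : \lVert W_1\rVert_F \le 1,\ \lVert w_2\rVert_2 \le 1 \,\bigr\} \;=\; \bigl\{\,(v_k)_{k=1}^{m_1} : \textstyle\sum_k \lVert v_k\rVert_2 \le 1\,\bigr\}.
\]
After this substitution the model output in each cell is \emph{linear} in $(v_k)_k$ under an $\ell_1$-of-$\ell_2$ constraint, so the problem collapses to the two-layer template; no tensor or nuclear-norm machinery is needed.

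Finally, your finiteness claim (``the maximizing slice is cut out by a linear equation on a compact convex set, so projecting to column directions yields finitely many possibilities'') does not follow as stated: such a slice can be a positive-dimensional face. The paper closes this by an averaging-and-rescaling argument inside each cell, using that each $v_k$ lives in a Euclidean ball, to force uniqueness of the per-column direction. That strict-convexity step, not the mere finiteness of cells, is what pins down the column directions of $W_{1i}^*$.
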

\begin{proof}

We can see the problem is equivalent to
\[
\min_{m, \{\Vert W_{1i}\rVert_2 \leq 1, \lVert w_{2i}\rVert_2 \leq 1, \alpha_i\}_{i=1}^{m}} \  \sum_{i=1}^{m} |\alpha_i|,
\]
subject to
\[
\sum_{i=1}^{m} ((XW_{1i})_{+}w_{2i})_{+}\alpha_i = y,
\]
from \cite{wang2021parallel}.
Furthermore, when we write 
\[\mathcal{Q}_{X} = \Big\{((XW_1)_{+}w_2)_{+} \ | \ W_1 \in \mathbb{R}^{d \times m_1}, w_2 \in \mathbb{R}^{m_1}, \lVert W_1 \rVert_F \leq 1, \lVert w_2 \rVert_2 \leq 1 \Big\},
\] the problem is equivalent to
\[
\min\ \  t \geq 0 \quad \mathrm{subject} \ \ \mathrm{to} \quad y \in t\ \mathrm{Conv}(\mathcal{Q}_{X} \cup -\mathcal{Q}_{X}).
\]
Now we find a cone-constrained linear expression of $((XW_1)_{+}w_2)_{+}$. Let's denote $\mathcal{D} = \{D_i\}_{i=1}^{P_1}$ as the set of all possible arrangement patterns $\mathrm{diag}(1(Xh \geq 0))$ and $\mathcal{D}(m_1)$ denote all possible $P_1 \choose m_1$ size $m_1$ tuples of elements in $\mathcal{D}$. Let's note $D_i(m_1) = (D_{i1}, D_{i2}, \cdots D_{im_1})$, where $D_{i1}, D_{i2}, \cdots D_{im_1} \in \mathcal{D}$. $i$ runs from 1 to $P_1 \choose m_1$. Also, let's fix $s \in \{1, -1\}^{m_1}$.

Given $D_i(m_1), s$, we define the set $\mathcal{D'} = \{D_j'\}_{j=1}^{P_2(i)}$ as the set of all possible arrangements of $\mathrm{diag}(1(\Tilde{X}h \geq 0))$, where $\Tilde{X} = [D_{i1}X | D_{i2}X | \cdots | D_{im_1}X] \in \mathbb{R}^{n \times m_1 d}$. \\

When $D_i(m_1), s, D_j'$ are fixed, and $(W_1)_{\cdot,i}$ (which denote the $i$ - th column of $W_1$), $w_{2i}$ are fixed in sets:
\[
(2D_{ik} - I)X(W_1)_{\cdot,k} \geq 0, \quad s_kw_{2k} \geq 0 \quad \forall k \in [m_1], 
\]
\[
(2D_j' - I)(\sum_{k=1}^{m_1} D_{ik}X(W_1)_{\cdot,k}w_{2k}) \geq 0,
\]
the ReLU expression becomes
\[
((XW_1)_{+}W_2)_{+} = \sum_{k=1}^{m_1} D_j'D_{ik}X(W_1)_{\cdot,k}w_{2k}.
\]
In other words, when we denote $\mathcal{K}(D_i(m_1), s, D_j')$ as
\begin{align*}
\mathcal{K}(D_i(m_1), s, D_j') = \Big\{(W_1, w_2) \ | \ (2D_{ik} - I)X(W_1)_{\cdot,k} &\geq 0, \ \ s_kw_{2k} \geq 0 \quad \forall k \in [m_1],
\\
&(2D_j' - I)(\sum_{k=1}^{m_1} D_{ik}X(W_1)_{\cdot,k}w_{2k}) \geq 0 \Big\},
\end{align*}
\begin{align*}
\mathcal{Q}_X = \bigcup_{i=1}^{P_1 \choose m_1} \bigcup_{s \in \{-1, 1\}^{m_1}} \bigcup_{j=1}^{P_2(i)}  \Big\{\sum_{k=1}^{m_1} D_j'D_{ik}X(W_1)_{\cdot,k}w_{2k} \ | \ (W_1, w_2) \in &\mathcal{K}(D_i(m_1), s, D_j'),\\
&\lVert W_1 \rVert_F \leq 1, \lVert w_2 \rVert \leq 1 \Big\}.
\end{align*}
Now we consider the change of variables, where we write $(W_1)_{\cdot,k}w_{2k} = v_k \in \mathbb{R}^{d}$. The norm constraint becomes $\sum_{k=1}^{m_1} \lVert v_k \rVert_2 \leq 1$. To show this, we show that
\[
\{((W_1)_{\cdot,k}w_{2k})_{k=1}^{m_1} \ | \ \lVert W_1 \rVert_F \leq 1, \lVert w_2 \rVert_2 \leq 1 \} = \{(v_k)_{k=1}^{m_1} \ | \ \sum_{k=1}^{m_1} \lVert v_k \rVert_2 \leq 1\}.
\]
First, for $\lVert W_1 \rVert_F \leq 1, \lVert w_2 \rVert_2 \leq 1$, assume the column weights are $a_1, a_2, \cdots a_{m_1}$. Then we have $a_1^2 + \cdots a_{m_1}^2 \leq 1$, $w_{21}^2 + w_{22}^2 + \cdots + w_{2m_1}^2 \leq 1$, and use Cauchy-Schwartz to see that $\sum_{k=1}^{m_1} a_k|w_{2k}| \leq 1$. To prove the latter, choose $W_1 = [v_1/\sqrt{\lVert v_1 \rVert_2} | \cdots | v_{m_1}/\sqrt{\lVert v_{m_1} \rVert_2}], w_2 = [\sqrt{\lVert v_1 \rVert_2}, \cdots, \sqrt{\lVert v_{m_1} \rVert_2}]^{T}$. 

With change of variables, the cones are written as
\begin{align*}
\mathcal{K}_{v}(D_i(m_1), s, D_j') = \Big\{(v_k)_{k=1}^{m_1} \ | \ (2D_{ik} - I)s_kXv_k &\geq 0, \ \ \forall k \in [m_1],\\
&(2D_j' - I)\sum_{k=1}^{m_1} D_{ik}Xv_k \geq 0 \Big\},
\end{align*}
which is a fixed cone in $\mathbb{R}^{d}$. Hence, $\mathcal{Q}_{X}$ can be rewritten as
\begin{align*}
\mathcal{Q}_X = \bigcup_{i=1}^{P_1 \choose m_1} \bigcup_{s \in \{-1, 1\}^{m_1}} \bigcup_{j=1}^{P_2(i)} \Big\{\sum_{k=1}^{m_1} D_j'D_{ik}Xv_k \ | \ (v_k)_{k=1}^{m_1} \in &\mathcal{K}_{v}(D_i(m_1), s, D_j'), \sum_{k=1}^{m_1} \lVert v_k \rVert_2 \leq 1 \Big\}.
\end{align*}

As a result, we have found a piecewise linear expression of $\mathcal{Q}_X$. When $y = 0$, we know that the optimal weights are all zeros. If not, we know that the problem
\[
\mathrm{minimize} \ t \geq 0  \quad \mathrm{subject} \ \ \mathrm{to} \quad y \in tC
\]
has a dual variable $\nu^{*}$ that satisfies: if $y = t^{*}(\sum_{i=1}^{m} \lambda_ic_i)$ for some $c_i \in C$, all $c_i$ s are minimizers of $(\nu^{*})^{T}c$ subject to $c \in C$. To see this fact, consider the supporting hyperplane on $y$. We can find a vector that satisfies $\langle \nu^{*}, y \rangle \leq \langle \nu^{*}, t^{*}c \rangle$ for all $c \in C$ and $\langle \nu^{*}, y \rangle/t^{*} \leq \langle \nu^{*}, c \rangle$ for all $c \in C$. Write $y = t^{*}(\sum_{i=1}^{m} \lambda_ic_i)$ and apply inner product with $\nu^{*}$ to see the wanted result. More specifically, we have that $\lambda_i \langle \nu^{*}, y \rangle \leq \lambda_i \langle \nu^{*}, t^{*}c_i \rangle$ for all $i \in [m]$, and add them to see that the inequalities are actually an equality, and $\langle \nu^{*}, y \rangle = \langle \nu^{*}, t^{*}c_i \rangle$ for all $i \in [m]$.

Hence, noting $C$ as $\mathrm{Conv}(Q_{X} \cup -Q_X)$, there exists a dual variable $\nu^{*}$ where the optimal $(W_1, w_2)$ must lie in the set $\argmax_{\lVert W_1 \rVert_F \leq 1, \lVert w_2 \rVert_2 \leq 1} |(\nu^{*})^{T}((XW_1)_{+}w_2)_{+}|$. For each constraint set
\[
(v_k)_{k=1}^{m_1} \in \mathcal{K}_{v}(D_i(m_1), s, D_j'), \quad \sum_{k=1}^{m_1} \lVert v_k \rVert_2 \leq 1,
\]
we are optimizing a linear function over this set (as the ReLU expression is a linear function of $(v_k)_{k=1}^{m_1}$). If there exists two different maximizers of the problem $(v_k)_{k=1}^{m_1}$, $(v_k')_{k=1}^{m_1}$, the average of the two will still be in the cone and satisfy the norm constraint strictly. Say $(v_k'')_{k=1}^{m_1}$ is the average of the two solutions - the cost function (which is either $(\nu^{*})^{T}\sum_{k=1}^{m_1} D_j'D_{i1}Xv_k$ or its negation) value will be the same, but $\sum_{k=1}^{m_1} \lVert v_k'' \rVert_2 < 1$. Multiplying $1/\sum_{k=1}^{m_1} \lVert v_k'' \rVert_2$ leads to a contradiction in the optimality. Hence, for fixed cone $\mathcal{K}_{v}(D_i(m_1), s, D_j')$, the optimal $(v_k)_{k=1}^{m_1}$ are fixed. As $v_k = (W_1)_{\cdot,k}w_{2k}$, the direction of the columns of $W_1$ are fixed to a finite set of values.
\end{proof}

\newpage
\section{Additional Discussions}
\label{8.AD}
In this section, we discuss the geometrical intuition of the dual optimum, non-unique solutions, and also explain why the assumption in \cite{simsek2021geometry} might not hold in our case.

The specific problem of interest is interpolating the dataset $\{(-\sqrt{3}, 1), (\sqrt{3}, 1)\}$ with a two-layer neural network with bias. We want to find a minimum-norm interpolator, where the cost function also includes regularizing the bias. We can write the problem as
\[
\min_{m, \{w_i, \alpha_i\}_{i=1}^{m}} \frac{1}{2} \Big(\sum_{i=1}^{m} \lVert w_i \rVert_2^2 + |\alpha_i|^2\Big)
\]
subject to
\[
\sum_{i=1}^{m} (\bar{X}w_i)_{+}\alpha_i = y.
\]
Here, $\bar{X} = \begin{bmatrix}-\sqrt{3} &1\\ \sqrt{3} &1\end{bmatrix}$ and $y = [1,1]^{T}$. The last column of $\bar{X}$ denotes the bias term. 

The problem is equivalent to
\[
\min_{m, \{w_i, \alpha_i\}_{i=1}^{m}} \sum_{i=1}^{m} |\alpha_i|,
\]
subject to
\[
\sum_{i=1}^{m} (\bar{X}w_i)_{+}\alpha_i = y, \quad \lVert w_i \rVert_2 \leq 1.
\]
See \cite{pilanci2020neural} for a similar ``scaling trick". 

In other words, when we denote $\mathcal{Q}_{\bar{X}} = \{(\bar{X}u)_{+} \ | \ \lVert u \rVert_2 \leq 1\}$, the problem becomes \cite{pilanci2020neural}
\[
\min t \geq 0 \quad \mathrm{subject} \ \mathrm{to} \quad y \in t\mathrm{Conv}(\mathcal{Q}_{\bar{X}} \cup -\mathcal{Q}_{\bar{X}}).
\]

\cref{fig9:1dbias_example} shows the shape of $\mathcal{Q}_{\bar{X}}$ and its convex hull. 

\begin{figure}[H]

\centering
\begin{subfigure}{0.32\textwidth}
\begin{tikzpicture}
\draw [help lines] (-2.3, -2.3) grid (2.3, 2.3);
\draw[rotate = -45,fill = blue,color = blue, ultra thick] (0,0) ellipse (2.449cm and 1.414cm);
\draw[red,very thick,domain=-2.3:2.3, samples=100] plot(\x,2);
\draw[red,very thick,domain=-0.3:2.3, samples=100] plot(\x,2-\x);
\draw[red,very thick,domain=-2.3:2.3, samples=100] plot(2,\x);
\draw[<->] (-2.3, 0) -- (2.3, 0);
\draw[<->] (0, -2.3) -- (0, 2.3);
\end{tikzpicture}
\caption{$\{Xu\ |\ \lVert u \rVert_2 \leq 1\}$}
\end{subfigure}
\begin{subfigure}{0.32\textwidth}
\begin{tikzpicture}
\draw [help lines] (-2.3, -2.3) grid (2.3, 2.3);
\draw[<->] (-2.3, 0) -- (2.3, 0);
\draw[<->] (0, -2.3) -- (0, 2.3);
\draw[blue,ultra thick,domain=0:2, samples=100] plot(\x,0);
\draw[blue,ultra thick,domain=0:2, samples=100] plot(0,\x);
\draw[red,very thick,domain=-0.3:2.3, samples=100] plot(\x,2-\x);
\begin{scope}
\clip(0,0)rectangle(2,2);
\draw[rotate = -45,fill = blue,color = blue, ultra thick] (0,0) ellipse (2.449cm and 1.414cm);
\end{scope}
\draw[fill=green,color=green] (1.02,1.02) circle (2pt);
\draw[fill=green,color=green] (0,2) circle (2pt);
\draw[fill=green,color=green] (2,0) circle (2pt);
\end{tikzpicture}
\caption{\label{RectElipse}$\{(Xu)_{+}|\ \lVert u \rVert_2 \leq 1\}$}
\end{subfigure}
\begin{subfigure}{0.32\textwidth}
\begin{tikzpicture}
\draw [help lines] (-2.3, -2.3) grid (2.3, 2.3);
\draw[<->] (-2.3, 0) -- (2.3, 0);
\draw[<->] (0, -2.3) -- (0, 2.3);
\draw[blue,ultra thick,domain=0:2, samples=100] plot(\x,0);
\draw[blue,ultra thick,domain=0:2, samples=100] plot(0,\x);
\draw[blue,ultra thick,domain=-2:0, samples=100] plot(\x,0);
\draw[blue,ultra thick,domain=-2:0, samples=100] plot(0,\x);
\draw[red,very thick,domain=0:2, samples=100] plot(\x,2-\x);
\draw[red,very thick,domain=-2:0, samples=100] plot(\x,-2-\x);
\begin{scope}
\clip(0,0)rectangle(2,2);
\draw[rotate = -45,fill = blue,color = blue, ultra thick] (0,0) ellipse (2.449cm and 1.414cm);
\end{scope}
\begin{scope}
\clip(-2,-2)rectangle(0,0);
\draw[rotate = -45,fill = blue,color = blue, ultra thick] (0,0) ellipse (2.449cm and 1.414cm);
\end{scope}
\draw[red,very thick,domain=-0:2, samples=100] plot(\x,\x-2);
\draw[red,very thick,domain=-2:0, samples=100] plot(\x,2+\x);
\end{tikzpicture}
\caption{$\mathrm{Conv}(\mathcal{Q}_X \cup -\mathcal{Q}_X)$}
\end{subfigure}
\caption{\label{fig9:1dbias_example} The shape of $\mathrm{Conv}(\mathcal{Q}_X \cup -\mathcal{Q}_X)$. We can see that the line $x+y=2$ is tangent to the set $\{Xu\ |\ \lVert u \rVert_2 \leq 1\}$, and meets with two points $(2,0)$, $(0,2)$ on the set $\mathcal{Q}_X$. Hence, $\mathrm{Conv}(\mathcal{Q}_X \cup -\mathcal{Q}_X)$ is exactly the diamond $|x|+|y| \leq 2$.}
\end{figure}
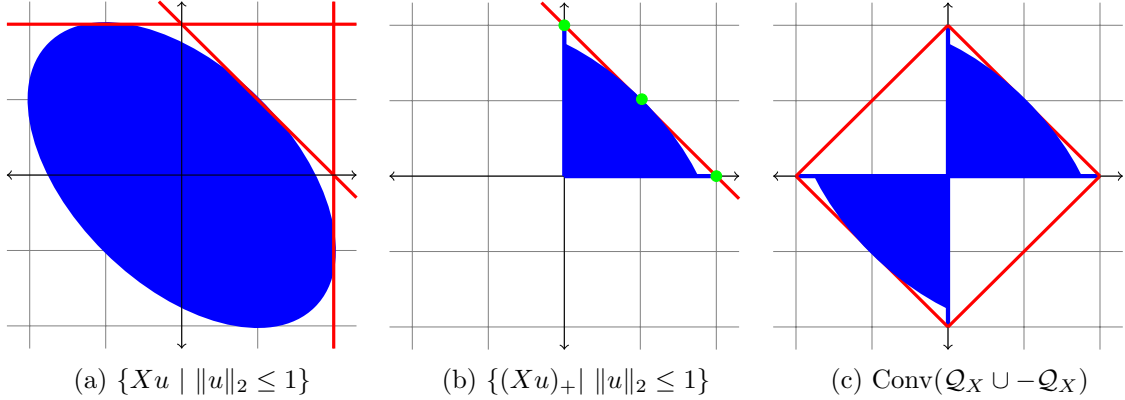

One thing to notice is that in \cref{RectElipse}, the line $x+y = 2$ meets with $\mathcal{Q}_{\bar{X}}$ with three points, and the convex hull $\mathrm{Conv}(\mathcal{Q}_{\bar{X}} \cup -\mathcal{Q}_{\bar{X}})$ is a diamond. The intuition of the dual variable is that it is the normal vector of a face where the optimal fit exists. In our case, $y = [1,1]^{T}$ lies on the exact line $x+y = 2$. Hence the dual optimum is $\nu^{*} = [1,1]^{T}$. We can also construct different minimum-norm interpolators by linear combinations of the three green points in \cref{RectElipse}: we can express $y$ by only using the middle point $(1,1)$ - here, the interpolator becomes $y=1$. We can use two points $(2,0)$ and $(0,2)$ to express $(1,1)$ - here, we have another interpolator that has two breakpoints. We can use three points - where there will be infinitely many ways to express $(1,1)$, that leads to a continuum of interpolators.

The assumption in \cite{simsek2021geometry} that there exists a unique model with zero loss and minimal width does not work here. We can adapt it to the regularized case, and assume that there exists a unique interpolator with minimal width and a solution to 
\[
\min_{m, \{w_i, \alpha_i\}_{i=1}^{m}} \frac{1}{2} \Big(\sum_{i=1}^{m} \lVert w_i \rVert_2^2 + |\alpha_i|^2\Big)
\]
subject to
\[
\sum_{i=1}^{m} (\bar{X}w_i)_{+}\alpha_i = y.
\]
Here, $\bar{X} = [x \ | \ 1] \in \mathbb{R}^{n \times 2}$. Now choose $x = [-\sqrt{3}, \sqrt{3}]$ as before, but choose $y = [1/2, 3/2]$. Then, there exist two ways to express $y$ as a conic combination of $(2,0), (1,1),$ and $(0,2)$ with two points. As $y$ is not parallel to $[2,0], [1,1], [0,2]$, we can see that $m^{*} = 2$ is minimal. Hence we don't have uniqueness of the smallest model in this case, and the results in \cite{simsek2021geometry} will not apply in general. 
\end{document}